\theoremstyle{plain}
\newtheorem{theorem}{Theorem}
\newtheorem{corollary}[theorem]{Corollary}
\newtheorem{lemma}[theorem]{Lemma}
\newtheorem{assumption}[theorem]{Assumption}
\newtheorem{definition}{Definition}
\newtheorem*{remark}{Remark}
\newcommand{\bfm}[1]{\textnormal{\textbf{#1}}}
\newcommand{\tensor}[1] {{\underline{\bfm #1}}}
\def\ba{\bfm a}     \def\bA{\bfm A}          
     \def\bB{\bfm B}
               \def\calE{{\cal  E}}
\def\bff{\bfm f}              
\def\bg{\bfm g}               
\def\bh{\bfm h}               \def\calH{{\cal  H}}
     \def\bI{\bfm I}
     \def\bM{\bfm M}
\def\br{\bfm r}               
     \def\bS{\bfm S}          
     \def\bT{\bfm T}          
     \def\bU{\bfm U}          
\def\bv{\bfm v}     \def\bV{\bfm V}          
\def\bw{\bfm w}     \def\bW{\bfm W}          
\def\bx{\bfm x}               \def\calX{{\cal  X}}
\def\by{\bfm y}               
\def\bz{\bfm z}               \def\calZ{{\cal  Z}}
\def\bzero{\bfm 0}
\newcommand{\tA}{\tensor{A}}
\newcommand{\tB}{\tensor{B}}
\newcommand{\tC}{\tensor{C}}
\newcommand{\tH}{\tensor{H}}
\newcommand{\tI}{\tensor{I}}
\newcommand{\tQ}{\tensor{Q}}
\newcommand{\tS}{\tensor{S}}
\newcommand{\tT}{\tensor{T}}
\newcommand{\tU}{\tensor{U}}
\newcommand{\tV}{\tensor{V}}
\newcommand{\tW}{\tensor{W}}
\newcommand{\tX}{\tensor{X}}
\def\Dot#1#2{\left\langle #1,#2\right\rangle}
\newcommand{\beq}  {\begin{equation}}
	\newcommand{\eeq}  {\end{equation}}
\newcommand{\baln}{\begin{aligned}}
	\newcommand{\ealn}{\end{aligned}}
\newcommand{\bea}{\beq \baln~}
\newcommand{\eea}{\ealn \eeq}
\newcommand{\norm}[1]{\left\|{#1}\right\|}
\newcommand{\tnorm}[1]{\left\|{#1}\right\|}
\newcommand{\tensorSpecNorm}[1]{\tnorm{#1}_{\textnormal{sp}}}
\newcommand{\tensorFNorm}[1]{\tnorm{#1}_{\textnormal{F}}}
\newcommand{\tensorLpNorm}[1]{\perNorm(#1)}
\newcommand{\twonorm}[1]{\norm{#1}_{2}}
\newcommand{\specnorm}[1]{\norm{#1}}
\newcommand{\R}[1]{\mathbb{R}^{#1}}
\DeclareMathOperator{\Argmin}{\textnormal{argmin}}
\DeclareMathOperator{\Arginf}{\textnormal{arginf} }
\DeclareMathOperator*{\rank}{\textnormal{rank}}
\newcommand{\argmin}[1]{\mathop{\Argmin}_{#1}}
\newcommand{\arginf}[1]{\Arginf_{#1}}
\renewcommand{\Re}{\mathbb R}
\def \opTUnfold {\textnormal{\mcode{unfold}}}
\def \opTBdiag {\textnormal{\mcode{bdiag}}}
\def \opVec {\textnormal{\mcode{vec}}}
\def \sign {\textnormal{\mcode{sgn}}}
\renewcommand{\mathbf}{\boldsymbol}
\renewcommand{\P}{\mathbb{P}}
\newcommand{\E}{\mathbb{E}}
\newcommand{\Var}{\mcode{Var}}
\newcommand{\iid}{{\textit{ i.i.d.~}}}
\def \symbolDiff {{\sf d}}
\def \ClarkeDiff {\partial^{\circ}}
\newcommand{\Set}[1]{\mathbf{\mathsf{#1}}}
\def \coverNum {{\sf  N}}
\newcommand{\tvec}[1]{\underline{#1}}
\def \tw {\tvec{\bw}}
\def \tx {\tvec{\bx}}
\def \tz {\tvec{\bz}}
\def \perNorm {R_{\textnormal{a}}}
\def \pertNormCompConstant {{\sf C}_{\perNorm}}
\def \boundAdv {\xi}
\def \boundSignalError {B_{x,\perNorm,\boundAdv}}
\def \pertAdv {\underline{\mathbf{\delta}}}
\newcommand{\pertSample}[1] {\tx_{#1}+\pertAdv_{#1}(\allW)}
\def \nChannel {{\sf c}}
\def \sampleSet {S}
\newcommand{\LBar}[1]{\widetilde{#1}_{M}}
\def \loss {\ell}
\def \lossAdv {\tilde{\loss}}
\def \pRiskStd {\mathcal{L}}
\def \eRiskStd {\hat{\mathcal{L}}}
\def \pRiskAdv {\mathcal{L}^{\textnormal{adv}}}
\def \eRiskAdv {\hat{\mathcal{L}}^{\textnormal{adv}}}
\def \lipLoss {L_{\loss}}
\def \boundLoss {B}
\def \tprod {*_{M}}
\def \tRank {r_{\textnormal{t}}}
\newcommand{\opOfWeight}[1]{\textnormal{\mcode{op}}(#1)}
\def \opStableRank {r_\textnormal{{stb}}}
\newcommand{\tWeightOfLayer}[1]{\tW^{(#1)}}
\newcommand{\weightNormBoundOfLayer}[1]{B_{#1}}
\def \allW {\tW}
\def \actFunc {\sigma}
\def \lipAct {L_{\actFunc}}
\def \symbolWeightRanks {\br}
\def \boundNormProduct {B_{\allW}}
\def \radComplexityAverage {\bar{R}}
\def \radComplexityEmp {\hat{R}}
\def \radComplexityLocal {\dot{R}}
\def \radComplexityLocalEmp {\hat{R}}
\def \symbolRadius {{\mathfrak r}}
\def \radiusLocalRC  {\dot{\symbolRadius}}
\def \radiusMinkowEmp {\hat{\symbolRadius}}
\def \radiusFixedPointPhi {\symbolRadius_*}
\def \calX {\mathcal{X}}
\def \calZ {\mathcal{Z}}
\def \frakF {\mathcal{F}}
\def \hypoClassTnn {\mathfrak{F}}
\def \hypoClassAdv {\mathfrak{F}^{\textnormal{adv}}}
\def \hypoClassTnnLR {\mathfrak{F}_{\symbolWeightRanks}}
\def \hypoClassAdvLR {\mathfrak{F}^{\textnormal{adv}}_{\symbolWeightRanks}}
\def \hypoClassTnnApxLR {\mathfrak{F}_{\delta,\br}}
\def \hypoClassAdvApxLR {\mathfrak{F}^{\textnormal{adv}}_{\delta,\br}}
\def \minkowDiffHypoClassAdv {\hypoClassAdvApxLR-\hypoClassAdvLR}
\def  \diameterHypoClassAdv {{D}_{\tilde{f}}}
\def \boundFAdv {B_{\tilde{f}}}
\newcommand{\funcLTwoNormExp}[1]{\norm{#1}_{L_2}}
\newcommand{\funcLTwoNormEmp}[1]{\norm{#1}_{\sampleSet}}
\def \margin {\tilde{q}}
\def \marginNormal {\hat{q}}
\def \allWNormal {\hat{\allW}}
\def \ibf  {{\mathfrak f}}
\def \ibg {{\mathfrak g}}
\def \ibfb {b_{\ibf}}
\def \ibgb {b_{\ibg}}
\def \marginSmooth {\tilde{\gamma}}
\newcommand{\dXdT}[1]{{\frac{\symbolDiff {#1}}{\symbolDiff t}}}
\newcommand{\pXpW}[1]{{\frac{\partial {#1}}{\partial \tW}}}
\newcommand{\pXpX}[2]{{\frac{\partial {#1}}{\partial #2}}}
\def \lgiL {\log\frac{1}{N\eRiskAdv}}
\def \ddT {\frac{\symbolDiff }{\symbolDiff t}}
\def \veloW {\dot{\tW}}
\title{Transformed Low-Rank Parameterization Can Help Robust Generalization for Tensor Neural Networks}
\author{%
  Andong Wang\\
  RIKEN AIP\\
  \texttt{andong.wang@riken.jp}
  \And
  Chao Li\\
   RIKEN AIP\\
  \texttt{chao.li@riken.jp}
  \And
  Mingyuan Bai
  \\
   RIKEN AIP\\
  \texttt{mingyuan.bai@riken.jp}
  \And
  Zhong Jin
  \\
   China University of Petroleum-Beijing at Karamay\\
  \texttt{zhongjin@cupk.edu.cn}  
  \And
  Guoxu Zhou$^*$\\
  Guangdong University of Technology\\
  \texttt{gx.zhou@gdut.edu.cn}
  \And
  Qibin Zhao\thanks{Qibin Zhao and Guoxu Zhou are the corresponding authors.}
  \\
  RIKEN AIP
  \\
  \texttt{qibin.zhao@riken.jp}
}
\begin{document}

\maketitle

\begin{abstract}
Multi-channel learning has gained significant attention in recent applications, where neural networks with t-product layers (t-NNs) have shown promising performance through novel feature mapping in the transformed domain. 
However, despite the practical success of t-NNs, the theoretical analysis of their generalization remains unexplored. We address this gap by deriving upper bounds on the generalization error of t-NNs in both standard and adversarial settings. Notably, it reveals that t-NNs compressed with exact transformed low-rank parameterization can achieve tighter adversarial generalization bounds compared to non-compressed models. While exact transformed low-rank weights are rare in practice, the analysis demonstrates that through adversarial training with gradient flow, highly over-parameterized t-NNs with the ReLU activation can be implicitly regularized towards a transformed low-rank parameterization under certain conditions. Moreover, this paper establishes sharp adversarial generalization bounds for t-NNs with approximately transformed low-rank weights. Our analysis highlights the potential of transformed low-rank parameterization in enhancing the robust generalization of t-NNs, offering valuable insights for further research and development.
\end{abstract}

\section{Introduction}
Multi-channel learning~is a task to extract representations from the data with multiple channels, such as multispectral images, time series, and multi-view videos, in an efficient and robust manner~\cite{LiuXY_2020_TIT_LowTubalRankAltMin,ZhangXJ2021_PAMI,ZhangXJ2022_TIT_SparseNTF,QianFeng2022_TGARS_t-NN-Seismic}.
Among the methods tackling this task, neural networks with t-product layers (t-NNs, see Eq.~(\ref{eq:binaryClassificationModel}) for a typical example) \cite{Newman2018_StabletDNNs,Horesh2022_Patent_t-SVD} came to the stage very recently with remarkable efficiency and robustness in various applications such as graph learning, remote sensing, and more \cite{Malik2021_SDM_DynamictGNN,WuZB2022_KDD_RobustTGNN,QianFeng2022_TGARS_t-NN-Seismic,QianFeng2021_TGARS_Dtae_tSVD,HuangZC2020_tNNs4Graph,YangJia2023_ACMSEM_tNNs4CodeEmbedding,AnJY2022_NN_dginet_TNNs4trajectory,HanFY2023_NPL_tNNs4ImageClassification}. 
What distinguishes t-NNs from other networks is the inclusion of t-product layers, founded on the algebraic framework of tensor singular value decomposition (t-SVD) \citep{Kilmer2013_SiamMatAA_ThirdOrderTensor, ZhangXJ2021_PAMI, ZhangXJ2022_TIT_SparseNTF, QiuHQ2022_ICML_tRPCA}.
Unlike traditional tensor decompositions, t-SVD explores the transformed low-rankness, i.e., the low-rank structure of a tensor in the transformed domain under an invertible linear transform \cite{Kernfeld2015_LAA_LinearTransofrm}.
The imposed transform in t-product layers provides additional expressivity to neural networks, while the controllable transformed low-rank structure in t-NNs enables a flexible balance between model accuracy and robustness \citep{WuZB2022_KDD_RobustTGNN,Newman2018_StabletDNNs,QianFeng2021_TGARS_Dtae_tSVD}.

Despite the impressive empirical performance of t-NNs, the theoretical foundations behind their success remain unclear. 
The lack of systematic theoretical analysis hinders deeper comprehension and exploration of more effective applications and robust performance of t-NNs. 
Furthermore, the inclusion of the additional transform in t-NNs renders the theoretical analysis more technically challenging compared to existing work on general neural networks~\citep{Neyshabur2015_COLT_NormBasedControl,XiaoJC2022_AdvRadComplexity,LyuBC2022_ICLR_ImplicitBiasAdvTraining,Suzuki2020_ICLR_CompressionBounds4NoncompressedModel}. To address this challenge, we establish for the first time a theoretical framework for t-NNs to understand both the standard and
robust generalization behaviors, 
providing both theoretical insights and practical guidance for the efficient and robust utilization of t-NNs.
Specifically, we address the following fundamental  questions:
\begin{itemize}[left=0.1em]
\setlength{\itemsep}{0pt}
\item \textit{Can we theoretically characterize the generalization behavior of general t-NNs?} Yes. We derive the upper bounds on the generalization error for \textit{general} t-NNs in both standard and adversarial settings in Sec.~\ref{sec:T-NNs}.
\item \textit{How does exact transformed low-rankness  influence the robust generalization of t-NNs?} 
In Sec.~\ref{subsec:ExactLTR}, our analysis shows that t-NNs with \textit{exactly} transformed low-rank weights exhibit lower adversarial generalization bounds and require fewer samples, highlighting the benefits of  transformed low-rank weights in t-NNs for improved robustness and efficiency.

\item \textit{How does adversarial learning of t-NNs affect the transformed ranks of their weight tensors?} 
In Sec.~\ref{subsec:ImplicitLTR}, we deduce that weight tensors tend to be of transformed low-rankness \textit{approximately}  for highly over-parameterized t-NNs with ReLU activation under adversarial training with gradient flow. 

\item \textit{How is robust generalization impacted by approximately transformed low-rank weight tensors in t-NNs?}
In Sec.~\ref{subsec:AprxLTR},  we establish sharp adversarial generalization bounds for t-NNs with \textit{approximately} transformed low-rank weights by carefully bridging the gap with exact transformed low-rank parameterization. This finding again underscores the importance of incorporating transformed low-rank weights as a means to enhance the robustness of t-NNs.
\end{itemize}

\section{Notations and Preliminaries}
\label{sec:Notations}
In this section, we introduce the notations and provide a concise overview of t-SVD and t-NNs, which play a central role in the subsequent analysis.

\textbf{Notations.} 
We use lowercase, lowercase boldface, and uppercase boldface letters to denote scalars, \textit{e.g.}, $a\in\mathbb{R}$, vectors, \textit{e.g.}, $\ba\in\mathbb{R}^m$, and matrices, \textit{e.g.}, $\bA\in\mathbb{R}^{m\times n}$, respectively. Following the standard notations in Ref.~\citep{Kilmer2013_SiamMatAA_ThirdOrderTensor}, 
a 3-way tensor of size $d\times 1\times \nChannel$ is also called a \emph{t-vector}  and denoted by underlined lowercase, \textit{e.g.}, $\tx$, whereas a 3-way tensor of size $m\times n\times \nChannel$ is also called a \emph{t-matrix} and denoted by underlined uppercase, \textit{e.g.}, $\tX$.  
We use a t-vector $\tx\in\R{d\times 1\times \nChannel}$ to represent a multi-channel example, where $\nChannel$ denotes the number of channels and $d$ is the number of features for each channel. 

Given a matrix $\bA\in\R{m\times n}$,  its Frobenius norm (F-norm) and spectral norm are defined as $\tensorFNorm{\bA}:=\sqrt{\sum_{i=1}^{\min\{m,n\}}\sigma^2_i}$ and $\specnorm{\bA}:=\max_i\sigma_i$, respectively, where
$\sigma_i,\,i=1,\cdots, \min\{m,n\}$
are its singular values. 
The \emph{stable rank} of a non-zero matrix $\bA$ is defined as the squared ratio of its  F-norm and spectral norm $\opStableRank(\bA):=\tensorFNorm{\bA}^2/\specnorm{\bA}^{2}$. 
Given a tensor 
$\tT$, define its $l_p$-norm and F-norm respectively as $\tnorm{\tT}_{l_p}:=\norm{\opVec(\tT)}_{l_p}$, and $\tensorFNorm{\tT}:=\norm{\opVec (\tT)}_{l_2}$,
where $\opVec (\cdot)$ denotes the vectorization operation of a tensor 
\citep{Kolda2009_SiamRev_TensorDecompositionApp}. 
Given $\tT \in\R{m\times n\times \nChannel}$, let $\tT_{:,:,i}$ denote its $i^{\textnormal{th}}$ frontal slice. 
The inner
product between two tensors $\tA,\tB$ is 
defined as
$
\Dot{\tA}{\tB}:={\mcode{vec}}(\tA)^{\top}{\mcode{vec}}(\tB).
$
The frontal-slice-wise product of two tensors $\tA,\tB$, denoted by $\tA\odot\tB$, equals a tensor $\tT$ such that 
$
\tT_{:,:,i}= \tA_{:,:,i}\tB_{:,:,i}, ~ i=1,\cdots,\nChannel
$ \citep{Kilmer2013_SiamMatAA_ThirdOrderTensor}. We use $|\cdot|$ as the absolute value for a scalar and cardinality for a set. We use $\circ$ to denote the function composition operation.
Additional notations will be introduced upon their first occurrence.
\subsection{Tensor Singular Value Decomposition}
The framework of tensor singular value decomposition (t-SVD) is based on the t-product under an invertible linear transform $M$ \citep{Kernfeld2015_LAA_LinearTransofrm}. 
In recent studies, the transformation matrix $\bM$ defining the transform $M$ is \textit{restricted to be orthogonal}   \citep{WangAD2021_STSP_SpecK4TC} for better properties, which is also followed in this paper. 
Given any \textit{orthogonal matrix}
$\bM\in\R{\nChannel\times \nChannel}$,  define the associated linear transform $M(\cdot)$ with its inverse $M^{-1}(\cdot)$ on any $\tT\in\R{m\times n\times \nChannel}$ as
\bea
\label{eq:defL}
M(\tT):=\tT\times_3 \bM,~~\textnormal{and} \quad M^{-1}(\tT) := \tT\times_3 \bM^{-1},
\eea 
where $\times_3$ denotes the tensor matrix product on mode-$3$  \citep{Kernfeld2015_LAA_LinearTransofrm}. 
\begin{definition} [t-product \citep{Kernfeld2015_LAA_LinearTransofrm}]
The  t-product of any $\tA\in\R{m\times n\times \nChannel}$ and $\tB\in\mathbb{R}^{n\times k\times \nChannel}$ under  
transform $M$ in Eq.~(\ref{eq:defL}) is denoted and defined as $\tA\tprod\tB$ = $\tC\in\R{m\times k \times \nChannel}$ such that $M(\tC) = M(\tA) \odot M(\tB)$ in the transformed domain. Equivalently, we have $\tC = M^{-1}( M(\tA) \odot M(\tB))$ in the original domain. 
\end{definition}

\begin{definition}[$M$-block-diagonal matrix] 
	\label{def:L-blockDiagonalMatrix}
	The  $M$-block-diagonal matrix of any $\tT\in\R{m\times n\times \nChannel}$, denoted by $\LBar{\bT}$, is the block diagonal matrix whose diagonal blocks are the frontal slices of $M(\tT)$:
	\bea
	\nonumber
	 \LBar{\bT}:= \textnormal{\mcode{bdiag}}(M(\tT) ) :=
	{\small \begin{bmatrix}
		 M(\tT)_{:,:,1}& & & \\
		& M(\tT)_{:,:,2} & &\\
		& & \ddots & \\
		&  & &  M(\tT)_{:,:,\nChannel}
	\end{bmatrix}} \in\R{m\nChannel\times n\nChannel}.
	\eea 
\end{definition}

In this paper, we also follow the definition of t-transpose, t-identity tensor, t-orthogonal tensor, and f-diagonal tensor given by Ref.~\cite{Kernfeld2015_LAA_LinearTransofrm}, and thus the t-SVD is introduced as follows.
\begin{definition} [t-SVD, tubal rank \citep{Kernfeld2015_LAA_LinearTransofrm}]
	\label{theorem:LSVD}
	Tensor Singular Value Decomposition (t-SVD) of $\tT\in\R{m\times n\times \nChannel}$  under the invertible linear transform $M$ in Eq.~(\ref{eq:defL}) is  given as follows
	\bea
	\label{eq:LSVD}	\tT=\tU*_M\tS*_M\tV^\top,
	\eea
	where $\tU\in \mathbb{R}^{m\times m\times \nChannel}$ and $\tV\in\mathbb{R}^{n\times n\times \nChannel}$ are  t-orthogonal, and $\tS\in\mathbb{R}^{m\times n\times \nChannel}$ is f-diagonal. 
 The tubal rank of~ $\tT$ is
	defined as the number of non-zero tubes of 
	$\tS$ in its
	t-SVD in Eq.~(\ref{eq:LSVD}), 
	i.e.,
	$ 
	\tRank(\tT):=|\{i~|~\tS(i,i,:)\ne 
	\bzero, i \le \min\{m,n\}\}|.
	$
\end{definition}

For any $\tT\in\R{m\times n\times \nChannel}$ with the tubal rank $r_t(\tT)$, we have following relationship between its t-SVD and the matrix SVD of its $M$-block-diagonal matrix \citep{WangAD2021_STSP_SpecK4TC,LuCY2021_ICCV_LowTransformRank}:
\bea
~\tT=\tU\tprod\tS\tprod\tV^{\top} ~\Leftrightarrow ~\LBar{\bT}=\LBar{\bU}\cdot\LBar{\bS}\cdot\LBar{\bV}^{\top},
\quad\textnormal{and}\qquad\nChannel\cdot\tRank(\tT)\ge  \rank(\LBar{\bT}).
\eea
As the $M$-block-diagonal matrix $\LBar{\bT}$ is defined after transforming tensor $\tT$ from the original domain to the transformed domain, the relationship  $\nChannel\cdot\tRank(\tT)\ge  \rank(\LBar{\bT})$ indicates that the tubal rank can be chosen as a measure of transformed low-rankness~\citep{WangAD2021_STSP_SpecK4TC,LuCY2021_ICCV_LowTransformRank}.


\subsection{Neural Networks with t-Product Layer (t-NNs)}
In this subsection, we will  introduce the formulation of the t-product layer in t-NNs, which is designed for multi-channel feature learning. 

\textbf{Multi-channel feature learning via t-product.} 
Suppose we have a multi-channel example represented by a t-vector $\tx\in\R{d\times 1\times \nChannel}$, where $\nChannel$ is the number of channels and $d$ is the number of features. We define an $L$-layer t-NN feature extractor $\bff(\tx)$, to extract $d_L$ features for each channel of $\tx$:
\bea
\label{eq:deepFeatureExactionModule}
\bff(\tx)=\bff^{(L)}(\tx); \quad
\bff^{(l)}(\tx)=\sigma(\tWeightOfLayer{l}\tprod \bff^{(l-1)}(\tx)),~ l=1,\cdots, L;\quad \bff^{(0)}(\tx)=\tx,
\eea 
where the $l$-th layer $\bff^{(l)}$ first conducts t-product with weight tensor (t-matrix) $\tWeightOfLayer{l}\in\R{d_l\times d_{l-1}\times \nChannel}$ on the output of the $(l-1)$-th layer as multi-channel features\footnote{For simplicity, let $d_0=d$ by treating the input example $\tx$ as the $0$-th layer $\bff^{(0)}$.} $\bff^{(l-1)}(\tx)\in\R{d_{l-1}\times1\times \nChannel}$ to obtain a ($d_{l}\times1\times \nChannel$)-dimensional representation and then uses the entry-wisely ReLU activation\footnote{Although we consider ReLU activation in this paper, most of the main theoretical results (\textit{e.g.}, Theorems \ref{theorem:SGP-full}, \ref{theorem:AGP-full}, \ref{theorem:AGP-Low-rank}, \ref{theorem:ApxLR:AGP:GeneralCase}, and \ref{theorem:ApxLR:AGP:SVDecaying}) can be generalized to general Lipschitz activations with slight modifications in the proof.}   $\sigma(x)=\max\{x,0\}$ for nonlinearity. 
\begin{remark}
	Unlike Refs.~\cite{Newman2018_StabletDNNs}, \cite{Malik2021_SDM_DynamictGNN} and \cite{WuZB2022_KDD_RobustTGNN} whose nonlinear activation is performed in the transformed domain, the t-NN  model in Eq.~(\ref{eq:deepFeatureExactionModule}) considers the nonlinear activation in the original domain and hence is consistent with traditional neural networks. 
\end{remark}

By adding a linear classification module with weight $\bw\in\R{cd_L}$ after the feature exaction module in Eq.~(\ref{eq:deepFeatureExactionModule}), we consider the  following  t-NN predictor whose sign can be utilized for binary classification:
\bea
\label{eq:binaryClassificationModel}
f(\tx;\tW):=\bw^{\top}\opVec(\bff^{(L)}(\tx))\in\Re.
\eea
Let $\allW:=\{\tWeightOfLayer{1},\cdots,\tWeightOfLayer{L},\bw\}$ be the collection of all the weights\footnote{Here for the ease of notation presentation, we use the tensor notation $\tW$ instead of the set notation $\mathcal{W}$.}. With a slight abuse of notation, let  $\tensorFNorm{\tW}:=\sqrt{\twonorm{\bw}^2+\sum_{l=1}^L\|\tWeightOfLayer{l}\|_{\textnormal{F}}^2}$ denote the Euclidean norm of all the weights. 
The function class of general t-NNs  whose weights  are bounded in the Euclidean norm is defined as
\bea
\label{eq:hypoClassTnn}
\hypoClassTnn:=\left\{f(\bx;\allW)~\textnormal{  }~\big|~\twonorm{\bw}\le B_w, \quad \|\tWeightOfLayer{l}\|_{\textnormal{F}}\le B_l, \quad  l=1,\cdots,L\right\},
\eea
with positive constants $B_w$ and $B_l,~l=1,\cdots,L$. 
Let $\boundNormProduct:=B_w\prod_{l=1}^L B_l$ for simplicity.
\section{Standard and Robust Generalization Bounds for t-NNs}
\label{sec:T-NNs}
This section establishes both the standard and robust generalization bounds for any t-NN $f\in\hypoClassTnn$.
\subsection{Standard Generalization for General t-NNs}
Suppose we are given a training multi-channel dataset $S$ consisting of $N$ example-label pairs  $\{(\tx_i,y_i)\}^N_{i=1}\subset\R{d\times1\times\nChannel}\times \{\pm1\}$ \iid drawn from an underlying data distribution $P_{\tx,y}$. 
\begin{assumption}	\label{assumption:GAP:InputDataBoundness}
Every input example $\tx\in\R{d\times 1\times c}$ has an upper bounded 
	F-norm, i.e., $\tensorFNorm{\tx}\le B_{x}$, where $B_x$ is a positive constant. 
\end{assumption}

When a loss function $\loss(f(\tx_i), y_i)$ is considered as the measure of the classification quality, we define the empirical and population risk for any  $f\in\hypoClassTnn$ as 
$
\eRiskStd(f):=N^{-1}\sum_{i=1}^{N}\loss(f(\tx_i),y_i)$ and $
\pRiskStd(f):=\mathbb{E}_{P_{(\tx,y)}}\left[\loss(f(\tx),y)\right],
$
respectively.
Similar to Ref.~\citep{LyuKaifeng2020_ICLR_GradientDescentMaximizesMargionOfHomogenousNet},  we make  assumptions on the loss as follows. 
\begin{assumption}
\label{assumption:ImplicitBias:Loss}
The loss  $\loss(h(\tx),y)$ can be expressed as $\loss(h(\tx),y)=\exp(-\ibf(y h(\tx))$ for any t-NN  $h\in\hypoClassTnn$,  such that: 
\begin{itemize}[left=1.7em]
\item[\textbf{\textit{(A.1)}}]  the range of loss $\loss(\cdot,\cdot)$ is $[0,B]$, where $B$ is a positive constant;
\item[\textbf{(A.2)}] function $\ibf:\Re\rightarrow\Re$ is $C^1$-smooth;
\item[\textbf{(A.3)}] $\ibf'(x)\ge 0$ for any $x\in\Re$;
\item[\textbf{(A.4)}] there exists $\ibfb\ge0$ such that $x\ibf'(x)$ is non-decreasing for $x\in(\ibfb,+\infty)$, and the derivative $x\ibf'(x)\rightarrow +\infty$ as $x\rightarrow+\infty$;
\item[\textbf{(A.5)}] let $\ibg:[\ibf(\ibfb),+\infty)\rightarrow [\ibfb,+\infty)$ be the inverse function of $\ibf$ on the domain $[\ibfb,+\infty)$. There exist $\ibgb\ge \max\{2\ibf(\ibfb),\ibf(2\ibfb)\}$ and $K\ge 1$, such that 
$\ibg'(x)\le K\ibg'(\theta x)$ and $\ibf'(y)\le K\ibf'(\theta y)$ for any $x\in(\ibgb,+\infty), y\in(\ibg(\ibgb),+\infty)$ and $\theta\in[1/2,1)$.
\end{itemize}
\end{assumption}
Assumption \textit{\textbf{(A.1)}} is a natural assumption in generalization analysis \citep{YinDong2019_ICML_AdversarialRademacher,Awasthi2020_ICML_AdvLearningGuarantee}, and Assumptions \textbf{\textit{(A.2)-(A.5)}} are the same as Assumption (B3) in Ref.~\cite{LyuKaifeng2020_ICLR_GradientDescentMaximizesMargionOfHomogenousNet}.
According to Assumption \textbf{\textit{(A.2)}}, the loss function $\loss(\cdot,\cdot)$ satisfies the $\lipLoss$-Lipschitz continuity
\bea
\label{eq:lossIsLipsch}
|\loss(h(\tx_1),y_1)-\loss(h(\tx_2),y_2)| \le \lipLoss|y_1h(\tx_1)-y_2(\tx_2)|,\quad\textnormal{with}~ \lipLoss=\sup\nolimits_{|q|\le \boundFAdv}\ibf'(q)e^{-\ibf(q)},
\eea 
where $\boundFAdv$ is an upper bound on the output of any t-NN $h\in \hypoClassTnn$.
The Lipschitz continuity is also widely assumed for generalization analysis of DNNs \citep{YinDong2019_ICML_AdversarialRademacher,XiaoJC2022_AdvRadComplexity}.
Assumption \ref{assumption:ImplicitBias:Loss}  is satisfied by commonly used loss functions such as the logistic loss and the exponential loss.

The generalization gap $\pRiskStd(f)- \eRiskStd(f)$ of any function  $f\in\hypoClassTnn$ can be bounded as follows.
\begin{lemma}[Generalization bound for t-NNs] 
\label{theorem:SGP-full}
Under Assumptions \ref{assumption:GAP:InputDataBoundness} and \ref{assumption:ImplicitBias:Loss}, 
it holds for any $f\in\hypoClassTnn$  that 
	\bea
	\pRiskStd(f)-\eRiskStd(f)\le 
	 \frac{\lipLoss B_x\boundNormProduct}{\sqrt{N}}
(\sqrt{2\log(2(L+1))}+1)
	+3B\sqrt{\frac{t}{2N}},
	\eea 
 with probability at least $1-2e^{-t}$ for any $t>0$.
\end{lemma}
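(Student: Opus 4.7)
The plan is to proceed by the canonical Rademacher-complexity route, with the only non-standard ingredient being the treatment of the t-product layers. I would combine (i) a concentration step that reduces the generalization gap to the Rademacher complexity of the loss class, (ii) a contraction step using the Lipschitz property of the loss, and (iii) a peeling bound on the Rademacher complexity of $\hypoClassTnn$ that exploits orthogonality of the transform $M$.

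For step (i), since the loss is bounded in $[0,B]$ by Assumption \textit{\textbf{(A.1)}}, the standard symmetrization $+$ McDiarmid argument yields, with probability at least $1-2e^{-t}$,
\begin{equation*}
\pRiskStd(f)-\eRiskStd(f) \;\le\; 2\,\radComplexityEmp(\loss\circ\hypoClassTnn) \;+\; 3B\sqrt{t/(2N)}.
\end{equation*}
By Eq.~\eqref{eq:lossIsLipsch}, $\loss(\cdot,y)$ is $\lipLoss$-Lipschitz in its first argument, so Talagrand's contraction lemma gives $\radComplexityEmp(\loss\circ\hypoClassTnn)\le \lipLoss\cdot \radComplexityEmp(\hypoClassTnn)$. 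It then suffices to show
\begin{equation*}
\radComplexityEmp(\hypoClassTnn) \;\le\; \frac{B_x\,\boundNormProduct}{\sqrt{N}}\bigl(\sqrt{2\log(2(L+1))}+1\bigr).
\end{equation*}

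For this Rademacher bound, the key observation is that each t-product layer, when vectorized, is a matrix multiplication conjugated by the orthogonal lift of $\bM$ to the vectorized feature space. Concretely, writing $\opVec(\bff^{(l)}(\tx)) = \sigma(\bA^{(l)}\opVec(\bff^{(l-1)}(\tx)))$ with $\bA^{(l)} := \tilde{\bM}^{-1}\LBar{\bW^{(l)}}\tilde{\bM}$ where $\tilde{\bM}$ is the Kronecker lift of $\bM$, orthogonality of $\bM$ (hence of $\tilde{\bM}$) together with invariance of the Frobenius norm under orthogonal conjugation gives $\fnorm{\bA^{(l)}} = \fnorm{\LBar{\bW^{(l)}}} = \fnorm{\tWeightOfLayer{l}} \le B_l$, while the input norm is also preserved, $\twonorm{\opVec(\tx)} = \tensorFNorm{\tx} \le B_x$. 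Thus $\hypoClassTnn$ is realized as a class of $L$-layer, ReLU, fully connected networks with F-norm-bounded weights followed by a linear predictor of bounded $\ell_2$ norm, acting on inputs of bounded $\ell_2$ norm. A Golowich--Rakhlin--Shamir-style peeling argument (bound the MGF of the output norm, peel layers off one at a time using positive homogeneity of ReLU plus a Talagrand-contraction step, and optimize the free MGF parameter), applied to the $L+1$ effective linear maps (the $L$ t-product layers plus the top linear classifier $\bw$), then delivers the claimed factor $(\sqrt{2\log(2(L+1))}+1)$ with multiplicative prefactor $B_x\boundNormProduct/\sqrt{N}$. Chaining the three steps completes the proof.

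The main obstacle in executing step (iii) is conceptual rather than computational: although $\bM$ is orthogonal and preserves F-norms, the ReLU in t-NNs is applied in the \emph{original} domain rather than the block-diagonal domain, so the reduction to a standard DNN is not literally a change of basis that commutes with the nonlinearity. The care required is to verify that the vectorized operator actually seen by the ReLU is the conjugated block-diagonal matrix $\bA^{(l)}$, and that the F-norm bounds $\fnorm{\bA^{(l)}}\le B_l$ (which are the invariants one really wants, being preserved under orthogonal conjugation) are precisely what feeds into the MGF/peeling recursion; once this identification is made, every subsequent step reduces to a routine application of existing Rademacher-complexity tools.
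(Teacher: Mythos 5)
Your proposal follows essentially the same route as the paper's proof: reduce the gap to $\radComplexityEmp_S(\loss\circ\hypoClassTnn)$ via the bounded loss, apply Talagrand contraction with $\lipLoss$, and then run a Golowich--Rakhlin--Shamir peeling over the t-product layers, using exactly the identification you describe --- that a t-product layer acts as a (conjugated) block-diagonal linear map whose Frobenius norm equals $\tensorFNorm{\tWeightOfLayer{l}}$ by orthogonality of $\bM$ --- which is what the paper encodes in its operator reformulation of the t-product and its t-product peeling lemma. The only cosmetic difference is that the paper carries out the peeling and the MGF/sub-Gaussian optimization directly in tensor notation (handling the top layer $\bw$ by Cauchy--Schwarz) rather than first rewriting the network as an F-norm-bounded FNN.
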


\begin{remark}
	When the input example has channel number $\nChannel=1$, the  generalization bound in Theorem \ref{theorem:SGP-full} is consistent with  the F-norm-based bound in  Ref.~\citep{Golowich2018_COLT_SizeIndependentSampleComplexity}.
\end{remark}


\subsection{Robust Generalization  for General t-NNs}
We study the adversarial generalization behavior of t-NNs in this section. We first make the following assumption on the adversarial perturbations.
\begin{assumption}
\label{assumption:AdvAttack}
Given an input example $\tx$, the adversarial perturbation is chosen within a radius-$\xi$  ball of norm $\perNorm(\cdot)$ with compatibility constant \cite{Negahban2009_NIPS_UnifiedMEstimator} defined as  
$\pertNormCompConstant:=\sup_{\tx\ne\bzero} \perNorm(\tx)/\tensorFNorm{\tx}$.
\end{assumption}
The assumption allows for much broader adversary classes than the commonly considered $l_p$-attacks \citep{XiaDong2017_PolynomialTime_Exact_TC,XiaoJC2022_AdvRadComplexity}. For example, if one treats the multi-channel data $\tx\in\R{d\times 1\times \nChannel}$ as a matrix of dimensionality $d\times \nChannel$ and attacks it with nuclear norm attacks \citep{Kazemi2020_arXiv_NuclearNormAttack}, then the constant $\pertNormCompConstant=\sqrt{\min\{d,\nChannel\}}$.

Given an example-label pair $(\tx,y)$,  the adversarial loss 
for any predictor $f$ is defined as 
$
\lossAdv(f(\tx),y)=\max_{\tensorLpNorm{\tx'-\tx}\le\xi}\loss(f(\tx'),y).
$
The empirical and population adversarial risks are thus defined
as 
$
\eRiskAdv(f):=N^{-1}\sum_{i=1}^{N}
\lossAdv(f(\tx_i),y_i)
$ and $
\pRiskAdv(f):=\E_{P_{(\tx,y)}}[\lossAdv(f(\tx),y)]$,
respectively.
The adversarial generalization performance is measured by the adversarial generalization gap (AGP) defined as
$\pRiskAdv(f)-\eRiskAdv(f)$. 
Let  $\boundFAdv:=(B_x+\xi\pertNormCompConstant)\boundNormProduct$. For any   $f\in\hypoClassTnn$, its AGP is  bounded as follows.



\begin{theorem} [Adversarial generalization bound for t-NNs]
\label{theorem:AGP-full}
Under Assumptions \ref{assumption:GAP:InputDataBoundness}, \ref{assumption:ImplicitBias:Loss}, and \ref{assumption:AdvAttack}, there exists a constant $C$ such that  
for any $f\in\hypoClassTnn$, it holds with probability at least $1-2e^{-t}~(\forall t>0)$:
\bea 
\pRiskAdv(f)
-
\eRiskAdv(f)\le
&\frac{C\lipLoss\boundFAdv}{\sqrt{N}}\sqrt{\nChannel\sum_{l=1}^{L}d_{l-1}d_{l}\log(3(L+1))}+3B\sqrt{\frac{t}{2N}}.
\eea 
\end{theorem}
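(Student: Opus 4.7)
The plan is to follow a Rademacher-complexity route refined by a covering-number argument tailored to t-product layers. First, I would apply McDiarmid's bounded differences inequality to $\eRiskAdv$ viewed as a function of the $N$ i.i.d.\ samples (using Assumption \textit{\textbf{(A.1)}} that $\loss\in[0,B]$ to control the bounded-difference constant) and then symmetrize to obtain, uniformly over $f\in\hypoClassTnn$ with probability at least $1-2e^{-t}$,
\begin{equation*}
\pRiskAdv(f)-\eRiskAdv(f)\le 2\,\mathbb{E}\,\radComplexityEmp(\lossAdv\circ\hypoClassTnn) + 3B\sqrt{\tfrac{t}{2N}}.
\end{equation*}
This already yields the tail term in the theorem, so the remaining work is to bound the empirical Rademacher complexity of the adversarial loss class.

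Second, I would peel off both the loss and the inner adversary. The $\lipLoss$-Lipschitz continuity of $\loss$ from (\ref{eq:lossIsLipsch}) combined with the pointwise estimate $|\lossAdv(f,\tx,y)-\lossAdv(g,\tx,y)|\le \lipLoss\sup_{\perNorm(\tx'-\tx)\le\boundAdv}|f(\tx')-g(\tx')|$ allows Talagrand's contraction to be pushed through the inner $\max$, bounding $\radComplexityEmp(\lossAdv\circ\hypoClassTnn)$ by $\lipLoss$ times the empirical Rademacher complexity of $\hypoClassTnn$ measured under the \emph{adversarial seminorm} $\|h\|_{\textnormal{adv}}:=\max_i\sup_{\perNorm(\tx'-\tx_i)\le\boundAdv}|h(\tx')|$. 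Assumptions \ref{assumption:GAP:InputDataBoundness} and \ref{assumption:AdvAttack} then imply $\tensorFNorm{\tx'}\le B_x+\xi\pertNormCompConstant$ for every such perturbed input, so it suffices to bound a worst-case Rademacher complexity of $\hypoClassTnn$ over an input ball of Frobenius radius $B_x+\xi\pertNormCompConstant$.

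Third, I would bound this complexity via a covering of $\hypoClassTnn$ in the adversarial seminorm. The key ingredient is a Lipschitz-in-weights estimate of the form
\begin{equation*}
\sup_{\tensorFNorm{\tx}\le B_x+\xi\pertNormCompConstant}|f(\tx;\allW)-f(\tx;\allW')|\le \boundFAdv\sum_{l=0}^{L}\tfrac{\|\Delta_l\|_{\textnormal{F}}}{B_l},
\end{equation*}
where $\Delta_0:=\bw-\bw'$ and $\Delta_l$ denotes the perturbation of $\tWeightOfLayer{l}$ (with $B_0:=B_w$). The crucial algebraic fact is that under the orthogonal transform $M$ the t-product becomes block-diagonal matrix multiplication, while $M$ preserves Frobenius and spectral norms; this yields $\tensorFNorm{\tA\tprod\tx}\le\tensorFNorm{\tA}\tensorFNorm{\tx}$, and a layerwise telescoping combined with the nonexpansiveness of ReLU gives the displayed bound. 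Placing an $\epsilon/((L+1)\boundFAdv)$-net on each weight ball in its Frobenius/Euclidean metric (of ambient dimension $d_l d_{l-1}\nChannel$ for the t-matrix weights and $cd_L$ for $\bw$) and taking the product net produces a covering of $\hypoClassTnn$ in the adversarial seminorm with
\begin{equation*}
\log\coverNum(\epsilon,\hypoClassTnn,\|\cdot\|_{\textnormal{adv}})\le \Bigl(cd_L+\nChannel\sum_{l=1}^{L}d_l d_{l-1}\Bigr)\log\tfrac{3(L+1)\boundFAdv}{\epsilon}.
\end{equation*}

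Finally, Dudley's entropy integral converts this covering estimate into
\begin{equation*}
\radComplexityEmp(\hypoClassTnn)\le \frac{C\boundFAdv}{\sqrt{N}}\sqrt{\nChannel\sum_{l=1}^{L}d_l d_{l-1}\log(3(L+1))},
\end{equation*}
absorbing the $cd_L$ contribution into the same sum. Multiplying by $\lipLoss$ from the contraction step and combining with the concentration step recovers the main term of the theorem. The principal obstacle will be establishing the Lipschitz-in-weights estimate sharply: the telescoping must employ operator norms (bounded by Frobenius norms) on the unperturbed layers and Frobenius norms on the perturbed layer, and must hold \emph{uniformly} over the enlarged adversarial input ball. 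The $M$-block-diagonalization identity makes each per-layer step essentially matrix-analytic, but it is precisely through this dictionary that the factor $\nChannel$ (rather than a weaker $\nChannel^2$) enters the parameter count $d_l d_{l-1}\nChannel$ per t-product layer.
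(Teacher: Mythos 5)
Your proposal is correct and follows essentially the same route as the paper: reduce the adversarial gap to the Rademacher complexity of the adversarial loss class (yielding the $3B\sqrt{t/2N}$ tail), pass to the adversarially transformed functions via the Lipschitz loss, cover $\hypoClassTnn$ by layerwise Frobenius-ball nets with exactly the telescoping Lipschitz-in-weights bound $\boundFAdv\sum_l\tensorFNorm{\Delta_l}/B_l$ over the enlarged input ball $B_x+\xi\pertNormCompConstant$, and finish with Dudley's entropy integral. The only cosmetic caveat is that "Talagrand's contraction pushed through the inner max" is loosely phrased — contraction does not commute with the inner maximum directly — but your pointwise estimate $|\lossAdv(f)-\lossAdv(g)|\le\lipLoss\sup_{\perNorm(\tx'-\tx)\le\xi}|f(\tx')-g(\tx')|$ is precisely what makes the covering-number/Dudley step go through, which is how the paper handles it as well (and covering $\bw$ too, as you do, is if anything slightly more careful than the paper's net over the t-product weights alone).
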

\begin{remark}
When the input example has channel number $\nChannel=1$ and the attacker uses $l_p$-attack, the adversarial generalization bound in Theorem \ref{theorem:AGP-full} degenerates to the one in Theorem 4 of  Ref.~\cite{XiaoJC2022_AdvRadComplexity}.
\end{remark}
\section{Transformed Low-rank Parameterization for Robust Generalization}
\label{section:Low-Tranformed-Rank-Parameterization}

\subsection{Robust Generalization  with  Exact Transformed Low-rank Parameterization}
\label{subsec:ExactLTR}
According to Theorem \ref{theorem:AGP-full}, the AGP  bound scales with the square root of the parameter complexity, specifically as $O(\sqrt{\nChannel(\sum_{l} d_{l-1}d_{l})/N})$. This implies that achieving the desired adversarial accuracy may require a large number $N$ of training examples. Furthermore, high parameter complexity leads to increased energy consumption, storage requirements, and computational cost when deploying large t-NN models, particularly on resource-constrained embedded and mobile devices.

To this end, we propose a transformed low-rank parameterization scheme to compress the original t-NN models $\hypoClassTnn$. Specifically, given a vector of pre-set ranks  $\br=(r_1,\cdots,r_L)^{\top}\in\R{L}$ where $r_l\le\min\{d_l,d_{l-1}\}$,  we consider the following subset of the original t-NNs:
\bea
\label{eq:HypoClassLR}
\hypoClassTnnLR:=\Big\{f~\big|~f\in\hypoClassTnn, \textnormal{and}~\tRank(\tW^{(l)})\le r_l, ~~ l=1,\cdots, L\Big\}.
\eea 
In the function  set $\hypoClassTnnLR$, the weight tensor $\tWeightOfLayer{l}$ of the $l$-th layer has the upper bounded tubal rank, which means low-rankness in the transformed domain\footnote{For empirical implementations, one can adopt similar rank learning strategy to Ref.~\citep{Idelbayev2020_CVPR_LowRankCompression} to select a suitable rank parameter $\br$. Due to the scope of this paper, we leave this for future work.}. 
We bound the AGP for any  $f\in\hypoClassTnnLR$ as follows. 
\begin{theorem} [Adversarial generalization bound for t-NNs with transformed low-rank weights]
\label{theorem:AGP-Low-rank}
Under Assumptions \ref{assumption:GAP:InputDataBoundness}, \ref{assumption:ImplicitBias:Loss}, and \ref{assumption:AdvAttack}, there exists a constant $C'$ such that
\bea 
~\pRiskAdv(f_{\br})
-
\eRiskAdv(f_{\br})
\le 
\frac{C'\lipLoss\boundFAdv}{\sqrt{N}}\sqrt{\nChannel \sum_{l=1}^{L}r_l(d_{l-1}+d_{l})\log(9(L+1))}+3B\sqrt{\frac{t}{2N}},
\eea 
holds for any $f_{\br}\in\hypoClassTnnLR$ with probability at least $1-2e^{-t}~(\forall t>0)$.
\end{theorem}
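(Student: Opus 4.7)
The plan is to follow the standard Rademacher-complexity route used in the proof of Theorem~\ref{theorem:AGP-full}, but to exploit a t-SVD factorization of the low-rank weights so that the effective parameter count per layer drops from $d_{l-1}d_l$ to $r_l(d_{l-1}+d_l)$. First, I would apply McDiarmid's inequality to $\sup_{f_{\br}\in\hypoClassTnnLR}(\pRiskAdv(f_{\br})-\eRiskAdv(f_{\br}))$, which isolates the concentration term $3B\sqrt{t/(2N)}$ and reduces the task to bounding $2\radComplexityAverage(\lossAdv\circ\hypoClassTnnLR)$. By Eq.~(\ref{eq:lossIsLipsch}) the loss is $\lipLoss$-Lipschitz, and the sup over a $\xi$-ball preserves Lipschitzness, so it suffices to control the empirical Rademacher complexity of the adversarially inflated class $\{\tx\mapsto\sup_{\tensorLpNorm{\tx'-\tx}\le\xi}f_{\br}(\tx';\allW):f_{\br}\in\hypoClassTnnLR\}$.

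Second, I would perform a t-SVD-based reparameterization. By Definition~\ref{theorem:LSVD}, every $\tWeightOfLayer{l}$ with $\tRank(\tWeightOfLayer{l})\le r_l$ may be written as $\tWeightOfLayer{l}=\tU^{(l)}\tprod\tV^{(l)}$ with $\tU^{(l)}\in\R{d_l\times r_l\times\nChannel}$ and $\tV^{(l)}\in\R{r_l\times d_{l-1}\times\nChannel}$, and after splitting the singular tubes symmetrically between the two factors one may assume $\tensorFNorm{\tU^{(l)}},\tensorFNorm{\tV^{(l)}}\le\sqrt{B_l}$. This replaces the $d_{l-1}d_l\nChannel$ free parameters per layer by $r_l(d_{l-1}+d_l)\nChannel$, which is exactly the dimensionality that must appear inside the square root.

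Third, I would convert the reparameterization into a covering bound via Dudley's entropy integral. The identity $\LBar{\tA\tprod\tB}=\LBar{\tA}\cdot\LBar{\tB}$ yields $\tensorFNorm{\tA\tprod\tB}\le\tensorFNorm{\tA}\tensorFNorm{\tB}$; combined with the $1$-Lipschitzness of ReLU and Assumption~\ref{assumption:GAP:InputDataBoundness}, a layer-wise induction shows that $\sup_{\tensorLpNorm{\tx'-\tx}\le\xi}|f_{\br}(\tx';\allW)-f_{\br}(\tx';\allW')|$ is Lipschitz in each factor tensor with constant $O((B_x+\xi\pertNormCompConstant)\boundNormProduct/\sqrt{B_l})$. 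A volumetric $\epsilon$-cover of each factor pair has log-cardinality at most $r_l(d_{l-1}+d_l)\nChannel\log(c_0\sqrt{B_l}/\epsilon)$, and a Cartesian product of such covers yields a covering of the inflated class with log-cardinality $\lesssim\nChannel\sum_l r_l(d_{l-1}+d_l)\log((L+1)\boundFAdv/\epsilon)$. Plugging this into Dudley's integral with output diameter $\boundFAdv$ and the $\lipLoss$-Lipschitz loss gives the announced $\sqrt{\nChannel\sum_l r_l(d_{l-1}+d_l)\log(9(L+1))}/\sqrt{N}$ factor.

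The hardest step is the factor-wise Lipschitz bound through the full $L$-layer adversarial composition: the nested suprema over $\xi$-balls must be handled simultaneously for $\allW$ and $\allW'$ so that the $L$ layer contributions combine into a single $\log(9(L+1))$ factor rather than spawning an extra $\sqrt{L}$, and the $\LBar{\cdot}$ identification must be used carefully to keep the $\nChannel$-dependence linear (not quadratic) in the final bound. This is the t-product analogue of the matrix-product covering lemma of Bartlett--Mendelson, and is what drives the improvement over Theorem~\ref{theorem:AGP-full}.
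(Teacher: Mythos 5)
Your overall route (reduce the adversarial gap to the Rademacher complexity of $\hypoClassAdvLR$ via the $\lipLoss$-Lipschitz loss, cover the weight tensors layer by layer, propagate weight perturbations through the adversarial composition by comparing the two infima at a common worst-case point, and finish with Dudley's integral) is exactly the paper's strategy for Theorem~\ref{theorem:AGP-Low-rank}. The genuine gap is in your second step, the balanced t-SVD reparameterization. If $\tWeightOfLayer{l}=\tU\tprod\tS\tprod\tV^{\top}$ and you split the singular tubes symmetrically, the two factors satisfy
$\tensorFNorm{\tU\tprod\tS^{1/2}}^2=\tensorFNorm{\tS^{1/2}\tprod\tV^{\top}}^2=\mathrm{tr}\big(\LBar{\bS}\big)=\big\|\LBar{\bW}^{(l)}\big\|_{*}$,
the nuclear norm of the $M$-block-diagonal matrix, whereas the hypothesis only gives $\tensorFNorm{\tWeightOfLayer{l}}=\big\|\LBar{\bW}^{(l)}\big\|_{\textnormal{F}}\le B_l$. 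Since the nuclear norm can exceed the Frobenius norm by a factor up to $\sqrt{r_l\nChannel}$, you cannot ``assume $\tensorFNorm{\tU^{(l)}},\tensorFNorm{\tV^{(l)}}\le\sqrt{B_l}$''; the correct bound is $(r_l\nChannel)^{1/4}\sqrt{B_l}$. Consequently your factor-wise Lipschitz constants $O(\boundFAdv/\sqrt{B_l})$ and the volumetric cover radii are both understated, and once corrected the radius-to-scale ratios pick up factors $\sqrt{r_l\nChannel}$ (per layer, compounding across layers inside the logarithm). The parameter count $\nChannel\sum_l r_l(d_{l-1}+d_l)$ survives, but the clean $\log(9(L+1))$ factor in the statement does not follow from your argument as written; you would at best obtain additional $\log(r_l\nChannel)$-type terms, and the intermediate claim itself is false.

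The paper avoids this by never forming a two-factor norm split: it covers the low-tubal-rank Frobenius ball directly (Lemma~\ref{lemma:TechnicalLemmas:CoverNumLowRankTensors}), covering the t-orthogonal factors $\tU,\tV$ in the $\norm{\cdot}_{\infty,2,2}$-norm and the f-diagonal core $\tS$ in F-norm. Because multiplication by a t-orthogonal tensor is an F-norm isometry, the three covering errors add without any norm inflation, giving $\log\coverNum\le\nChannel r_l(d_{l-1}+d_l+1)\log(9B_l/\delta_l)$ for each layer's weight ball of radius $B_l$. That covering bound is then plugged into the same layer-peeling perturbation estimate you describe (identical to the one used for Theorem~\ref{theorem:AGP-full}), which is why the final bound carries only the $\log(9(L+1))$ factor. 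If you replace your balanced factorization with this factor-wise cover of the reduced t-SVD (or keep your factorization but track the nuclear-norm radii honestly and accept the extra logarithmic terms), the rest of your argument goes through.
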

Comparing Theorem \ref{theorem:AGP-Low-rank} with Theorem \ref{theorem:AGP-full}, we observe that the adversarial generalization bound under transformed low-rank parameterization has a better scaling, specifically $O(\sqrt{\nChannel\sum_{l}r_l(d_{l-1}+d_{l})/N})$. This also implies that a smaller number $N$ of training examples is required to achieve the desired accuracy, as well as reduced energy consumption, storage requirements, and computational cost. Please refer to Sec.~\ref{appendix:subsec:AGPvsRank} in the appendix for numerical evidence.
\subsection{Implicit Bias of Gradient Flow for Adversarial Training of Over-parameterized t-NNs}
\label{subsec:ImplicitLTR}
Although Theorem \ref{theorem:AGP-Low-rank} shows \textit{exactly} transformed low-rank parameterization leads to lower bounds,  the well trained t-NNs on real data  rarely have exactly transformed low-rank weights. In this section, we  prove that the highly  over-parameterized t-NNs, trained by adversarial training with gradient flow (GF), are \textit{approximately} of  transformed low-rank parameterization under certain conditions. 

First, the proposed t-NN  $f(\tx;\allW)$ is said to be  (positively) \emph{homogeneous} as the condition 
$
f(\tx; a\allW)
=a^{L+1}f(\tx; \allW)
$ 
holds for any positive constant $a$.
Motivated by Ref.~\citep{LyuBC2022_ICLR_ImplicitBiasAdvTraining}, we focus on the scale invariant adversarial perturbations defined as follows.
\begin{definition}[{Scale invariant adversarial perturbation} \cite{LyuBC2022_ICLR_ImplicitBiasAdvTraining}] An adversarial perturbation $\pertAdv_i(\tW)$ is said to be scale invariant for $f(\tx; \allW)$ at any given example $\tx_i$ if it satisfies
$\pertAdv_i (a\allW) = \pertAdv_i (\allW)
$ for any positive constant $a$.
\end{definition}

\begin{lemma}
\label{lemma:ImplicitBias:ScaleInvariantPerterbation}
The	$l_2$-FGM \citep{Miyato2017_ICLR_FGM}, FGSM \citep{Goodfellow2015_ICLR_FGSM},  $l_2$-PGD and $l_\infty$-PGD \citep{Madry2018_ICLR_PGD} perturbations for the t-NNs are all scale invariant.
\end{lemma}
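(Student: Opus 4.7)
The plan is to exploit the $(L{+}1)$-homogeneity of $f(\tx;\allW)$ together with the specific structure of each attack (sign or unit-norm renormalization of the input gradient) to show that rescaling the weights only introduces a \emph{positive} scalar multiplier on the gradient, which is then annihilated by the sign/normalization step.

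First I would record the base identity: since $f(\tx;a\allW)=a^{L+1}f(\tx;\allW)$ for $a>0$, differentiating w.r.t.~the input gives $\nabla_{\tx} f(\tx;a\allW)=a^{L+1}\nabla_{\tx} f(\tx;\allW)$. Combining this with the loss form $\loss(f,y)=\exp(-\ibf(yf))$ from Assumption~\ref{assumption:ImplicitBias:Loss} and the chain rule yields
\begin{equation*}
\nabla_{\tx}\loss\bigl(f(\tx;a\allW),y\bigr)=\beta(a)\cdot\bigl(-y\,\nabla_{\tx} f(\tx;\allW)\bigr),\qquad \beta(a):=a^{L+1}\ibf'\bigl(a^{L+1}yf(\tx;\allW)\bigr)e^{-\ibf(a^{L+1}yf(\tx;\allW))}.
\end{equation*}
Because $\ibf'\ge 0$ by Assumption \textbf{\textit{(A.3)}} and $a^{L+1}>0$, we have $\beta(a)\ge 0$, and generically $\beta(a)>0$; the case $\beta(a)=0$ (zero gradient) can be handled by the usual convention that the sign/normalization returns a fixed null direction, which is trivially scale invariant.

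Next I would verify the four attacks in turn. For FGSM, $\pertAdv_i(a\allW)=\xi\,\sign(\nabla_{\tx}\loss(f(\tx_i;a\allW),y_i))=\xi\,\sign(-y_i\nabla_{\tx} f(\tx_i;\allW))$, which is independent of $a$; the same direction formula gives the $l_\infty$ update inside $l_\infty$-PGD. For $l_2$-FGM, $\pertAdv_i(a\allW)=\xi\,\nabla_{\tx}\loss/\|\nabla_{\tx}\loss\|_2=\xi\cdot(-y_i\nabla_{\tx} f)/\|\nabla_{\tx} f\|_2$, since the $\beta(a)$ factor cancels in numerator and denominator. The $l_2$-PGD step uses the same normalization and hence inherits the cancellation.

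For the two PGD attacks I would proceed by induction over the iteration index $k$. Assume the $k$-th iterate $\tx^{(k)}$ produced when the weights are $a\allW$ coincides with the $k$-th iterate produced with weights $\allW$. Then the gradient of the loss evaluated at the common point $\tx^{(k)}$ only differs by the positive scalar $\beta(a)$ (with the argument evaluated at $\tx^{(k)}$), so the sign-step ($l_\infty$) or $l_2$-normalization step returns the same update direction; the projection back onto the $\xi$-ball around $\tx_i$ is independent of $\allW$, so $\tx^{(k+1)}$ is also unchanged. After the fixed number of PGD steps, the final $\pertAdv_i$ is therefore identical. The main obstacle is really only bookkeeping: one must carry $\beta$ through each iteration and verify that at every step the only $a$-dependence is a strictly positive prefactor. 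Once the induction is set up, the conclusion $\pertAdv_i(a\allW)=\pertAdv_i(\allW)$ follows for all four attacks.
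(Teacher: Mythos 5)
Your proposal is correct and follows essentially the same route as the paper: differentiate the homogeneity identity to get $\nabla_{\tx} f(\tx;a\allW)=a^{L+1}\nabla_{\tx} f(\tx;\allW)$, observe that each attack depends only on the sign or $l_2$-normalized direction of the gradient so the nonnegative prefactor cancels, and handle the PGD attacks by induction over attack steps using the weight-independence of the projection. The only cosmetic difference is that you carry the explicit scalar $\beta(a)$ through the loss's chain rule and treat the zero-gradient case by convention, whereas the paper simply assumes a non-zero input gradient $\tz$ and uses $\lossAdv'\le 0$ directly.
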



%

Then, we consider adversarial training of t-NNs with scale invariant adversarial perturbations by GF, 
which can be seen as gradient descent with infinitesimal step size. 
When using GF for the ReLU t-NNs, $\allW$ changes continuously with time, and the trajectory of parameter $\allW$ during training is an arc $\allW: [0, \infty) \rightarrow \R{\mcode{dim}(\allW)}, t\mapsto \allW(t)$ that satisfies the differential inclusion~\citep{Dutta2013_JGO_ApproximateKKT,LyuKaifeng2020_ICLR_GradientDescentMaximizesMargionOfHomogenousNet} 
\bea
\label{eq:GF:ClarkeDiffForm}
\frac{\symbolDiff \allW(t)}{\symbolDiff t}
\in 
-\ClarkeDiff\eRiskAdv(\allW(t))
\eea 
for $t\ge0$ \emph{a.e.},
where $\ClarkeDiff\eRiskAdv$ denotes the Clarke’s subdifferential \citep{Dutta2013_JGO_ApproximateKKT} with respect to $\allW(t)$.
If $\eRiskAdv({\allW})$ is actually a $C^1$-smooth function, the
above differential inclusion reduces to 
\bea
\label{eq:GF:GradientForm}
\frac{\symbolDiff \allW(t)}{\symbolDiff t}
=
-\pXpX{\eRiskAdv(\allW(t))}{\allW(t)}
\eea 
for any $t \ge 0$, which corresponds to the
GF with differential in the usual sense.
However, for simplicity, we follow Refs.~\cite{Vardi2022_COLT_IimplicitBiasReLU,Timor2023_ALT_ImplicitBiasRuLU} and still use Eq.~(\ref{eq:GF:GradientForm}) to denote Eq.~(\ref{eq:GF:ClarkeDiffForm}) with a slight abuse of notation, even if $\eRiskAdv$ does not satisfy differentiability but only local Lipschitzness \footnote{Note that the ReLU function is not differentiable at $0$.
Practical implementations of gradient methods define the derivative $\sigma'(0)$ to be a constant 
in $[0, 1]$. In this work we assume for convenience that $\sigma'(0) = 0$.}. 

We also make an assumption on the training data as follows. 
\begin{assumption}[Existence of a separability of adversarial examples during training]
	\label{assumption:ImplicitBias:SeperabilityofExamples}
 There exists a time $t_0$ such that $\eRiskAdv(t_0)\le N^{-1}\loss(\ibfb)$. 
\end{assumption}
This assumption is a generalization of the separability condition in Refs.~\citep{LyuKaifeng2020_ICLR_GradientDescentMaximizesMargionOfHomogenousNet,LyuBC2022_ICLR_ImplicitBiasAdvTraining}. 
Adversarial training can typically achieve this
separability in practice, \textit{i.e.}, the model can fit adversarial examples of the training dataset, 
making the above assumption  reasonable. Then, we obtain the following lemma. 
\begin{lemma}[Convergence to the direction of a KKT point]
	\label{lemma:ImplicitBias:ConvergeToKKT}
	Consider the hypothesis class $\hypoClassTnn$ in Eq.~(\ref{eq:hypoClassTnn}). 
Under Assumptions \ref{assumption:ImplicitBias:Loss} and  \ref{assumption:ImplicitBias:SeperabilityofExamples}, the limit point of normalized weights $\{\tW(t)/\tensorFNorm{\tW(t)}:t\ge0\}$ of the GF for Eq.~(\ref{eq:GF:GradientForm}), i.e.,  the empirical adversarial risk with scale invariant adversarial perturbations $\pertAdv_i(\tW)$, is aligned with the direction of a KKT point of the minimization problem:
	\bea
	\label{eq:ImplicitBias:ConvergeToKKT}
	\min_{\allW}\frac{1}{2}\tensorFNorm{\tW}^2,\qquad \textnormal{s.t.}~y_i f(\tx_i+\pertAdv_i(\tW);\tW)\ge 1, \quad i=1,\cdots,N.
	\eea 
\end{lemma}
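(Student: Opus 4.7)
The plan is to adapt the implicit bias framework of Lyu and Li for homogeneous networks, together with its adversarial extension by Lyu et al., to the t-NN setting. The central structural observation is that the t-NN predictor $f(\tx;\allW)$ in Eq.~(\ref{eq:binaryClassificationModel}) is positively $(L+1)$-homogeneous in $\allW$: each t-product layer is bilinear in its weight tensor and input, ReLU is positively $1$-homogeneous, and the linear readout contributes one more degree. Combined with the scale invariance $\pertAdv_i(a\allW)=\pertAdv_i(\allW)$ for $a>0$, the perturbed output $f(\tx_i+\pertAdv_i(\allW);\allW)$ is itself $(L+1)$-homogeneous in $\allW$, so the empirical adversarial risk $\eRiskAdv(\allW)$ exposes exactly the homogeneous-loss structure required by the framework.

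The first step is to translate Assumption \ref{assumption:ImplicitBias:SeperabilityofExamples} into a positive-margin statement: if $\eRiskAdv(t_0)\le N^{-1}\loss(\ibfb)$, then because $\loss(\cdot)=\exp(-\ibf(\cdot))$ is strictly decreasing past $\ibfb$ by Assumption \ref{assumption:ImplicitBias:Loss}, every adversarial margin $q_i(\allW):=y_i f(\tx_i+\pertAdv_i(\allW);\allW)$ strictly exceeds $\ibfb$ for all $t\ge t_0$. The second step is to introduce the smoothed normalized margin
\[
\marginSmooth(\allW):=\frac{\ibg\!\left(-\log(N\eRiskAdv(\allW))\right)}{\tensorFNorm{\allW}^{L+1}},
\]
and to show that it is non-decreasing along the gradient flow of Eq.~(\ref{eq:GF:ClarkeDiffForm}) for $t\ge t_0$. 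This monotonicity is the core of Lyu and Li's argument and relies on the identity $\langle \bu,\allW\rangle = (L+1)\sum_i \ibf'(q_i)q_i e^{-\ibf(q_i)}/N$ for every $\bu\in\ClarkeDiff\eRiskAdv(\allW)$, which here follows from Euler's homogeneity identity applied to $q_i$, together with the fact that scale invariance of $\pertAdv_i$ eliminates the usual chain-rule contribution from differentiating the perturbation through its dependence on $\allW$. Once monotonicity is in hand, the usual chain of inequalities yields $\tensorFNorm{\allW(t)}\to\infty$ and $\eRiskAdv(\allW(t))\to 0$, after which the approximate-KKT machinery of Dutta et al.\ (as used by Lyu and Li) can be applied to the normalized trajectory $\allW(t)/\tensorFNorm{\allW(t)}$: any limit direction is shown to be a first-order stationary point (in the Clarke sense) of Eq.~(\ref{eq:ImplicitBias:ConvergeToKKT}).

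The main technical obstacle is that \emph{two} sources of non-smoothness must be handled simultaneously: the ReLU nonlinearities inside $f$, and the scale-invariant perturbation $\pertAdv_i(\allW)$, which for concrete attacks such as FGSM or PGD involves sign or projection operators. This forces the entire computation to live at the level of Clarke subdifferentials, and, in particular, the homogeneity identity $\langle \bu,\allW\rangle=(L+1)q_i$ must be shown to pass through \emph{every} element of $\ClarkeDiff q_i(\allW)$ and not merely a classical gradient. The key lemma in this direction is that, because the map $a\mapsto q_i(a\allW)=a^{L+1}q_i(\allW)$ is a one-dimensional polynomial with a classical derivative, a measurable-selection argument combined with Clarke's chain rule forces the projection of any Clarke subgradient of $q_i$ onto the radial direction to match the derivative of the scalar map. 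This is exactly the step where the work of Lyu et al.\ on scale-invariant adversarial training is invoked; once it is established, the remaining monotonicity computation and the KKT convergence argument transfer from the vector/matrix setting of Lyu and Li to the tensor setting of t-NNs with only cosmetic replacements (Frobenius norm of weight tensors in place of matrix Frobenius norms), completing the proof.
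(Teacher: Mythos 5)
Your proposal follows essentially the same route as the paper's proof: exploit $(L+1)$-homogeneity of the t-NN together with scale invariance of $\pertAdv_i$, show the smoothed normalized robust margin $\marginSmooth$ is non-decreasing after the separability time $t_0$ (whence $\eRiskAdv\to 0$ and $\tensorFNorm{\tW(t)}\to\infty$), and then pass the normalized trajectory through the $(\kappa,\iota)$-approximate KKT framework of Dutta et al.\ (with MFCQ) exactly as in the Lyu--Li/adversarial-Lyu analysis. The paper additionally makes the dual variables $\lambda_i$ explicit and verifies the two approximate-KKT conditions by hand for the general losses of Assumption \ref{assumption:ImplicitBias:Loss}, but these are the same steps your sketch defers to the cited framework, so the approach is correct and matches.
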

Building upon Lemma \ref{lemma:ImplicitBias:ConvergeToKKT}, we can establish that highly over-parameterized t-NNs undergoing adversarial training with GF will exhibit an implicit bias towards transformed low-rank weights.
\begin{theorem}[Implicit low-rankness for t-NNs induced by GF]
	\label{theorem:ImplicitBias:ImplicitRegularzation}
Suppose there is an example $\tx_i$ satisfying  $\tensorFNorm{\tx_i}\le1$ in the training set 
 $S=\{(\tx_i,y_i)\}_{i=1}^N$.  
Suppose there is a $(J+1)$-layer ($J\ge2$) ReLU t-NN, denoted by $g(\tx;\tV)$ with parameters  $\tV=(\tV^{(1)},\cdots,\tV^{(J)},\bv)$, satisfying the  conditions:
\begin{itemize}[left=1.5em]
	\item[\textbf{(C.1)}] the dimensionality of the weight tensor $\tV^{(j)}\in\R{m_{j}\times m_{j-1}\times \nChannel}$ of the $j$-th t-product layer satisfies $ m_{j}\ge 2$, $ j=1,\cdots, J$;
		\item[\textbf{(C.2)}] there is a constant $B_v>0$, such that the Euclidean norm of the weights $\tV=(\tV^{(1)},\cdots,\tV^{(L)},\bv)$   satisfy $\|\tV^{(j)}\|_{\textnormal{F}}\le B_v$ for any $j=1,\cdots, J$ and $\twonorm{\bv}\le B_v$;
	\item[\textbf{(C.3)}] for all $i\in\{1,\cdots,N\}$, we have $y_i g(\tx_i+\pertAdv_i(\tV);\tV)\ge 1$.
	\end{itemize} 
Then, we consider the class of over-parameterized  t-NNs $\hypoClassTnn=\{f(\tx;\tW)\}$  defined in Eq.~(\ref{eq:binaryClassificationModel}) satisfying
\begin{itemize}[left=1.5em]
	\item[\textbf{(C.4)}] the number $L$ of t-product layers  is much greater than $J$; 
	\item[\textbf{(C.5)}] the dimensionality of weight $\tWeightOfLayer{l}\in\R{d_l\times d_{l-1}\times \nChannel}$ satisfies $d_l\gg \max_{j\le J}\{m_j\}$ for any $l\le L$.
\end{itemize}
Let $\tW^*=(\tW^{*(1)},\cdots,\tW^{*(L)},\bw^*)$ be a global optimum of Problem (\ref{eq:ImplicitBias:ConvergeToKKT}). Namely, $\tW^*$ parameterizes a minimum-norm t-NN $f(\tx;\tW^*)\in\hypoClassTnn$ that labels the perturbed training set correctly with margin 1 under scale invariant adversarial perturbations. Then, we have 
\bea
 \nonumber
	\frac{L}{\sum_{l=1}^L \left(\opStableRank(\LBar{\bW}^{*(l)})\right)^{-1/2}
	}\le \frac{1}{\left(1+\frac{1}{L}\right)\left(\frac{1}{B_v}\right)^{\frac{J+1}{L+1}}\sqrt{\frac{L+1}{(J+1)+(\nChannel m_J)(L-J)}}-\frac{1}{L}},
\eea 
where $\LBar{\bW}^{*(l)}$ denotes the $M$-block-diagonal matrix of  weight tensor $\tW^{*(l)}$ for any $l=1,\cdots,L$.
\end{theorem}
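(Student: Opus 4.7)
My plan is to combine the minimum-norm characterization of Lemma~\ref{lemma:ImplicitBias:ConvergeToKKT} with an explicit feasibility witness built from the given small network $g(\tx;\tV)$; a standard balance argument for $(L+1)$-homogeneous min-norm problems then converts the resulting norm bound into a product bound on spectral norms of $\LBar{\bW}^{*(l)}$, after which Bernoulli's inequality yields the advertised form.

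\textbf{Construction of a feasibility witness.} Normalize $\tV_n^{(j)}:=\tV^{(j)}/B_v$ and $\bv_n:=\bv/B_v$ so that every factor has F-norm at most $1$; embed $\tV_n^{(j)}$ as the upper-left subtensor of a zero-padded $\tilde{\tW}^{(j)}\in\R{d_j\times d_{j-1}\times\nChannel}$ for $j\le J$ (possible by (C.5)), and embed $\bv_n$ into $\tilde{\bw}$ analogously. For $J<l\le L$, set $\tilde{\tW}^{(l)}$ to be the t-identity $\tI_{m_J}$ zero-padded to the prescribed shape. The key observation is that $\bff^{(J)}(\tx)$ is ReLU-activated and therefore nonnegative entrywise, so each added identity layer followed by ReLU is a genuine identity; the construction thus computes $f(\tx;\tilde{\tW})=g(\tx;\tV_n)=g(\tx;\tV)/B_v^{J+1}$. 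Using $\tensorFNorm{\tI_m}=\sqrt{\nChannel m}$ (F-norm is preserved because $\bM$ is orthogonal), one obtains $\tensorFNorm{\tilde{\tW}}^2\le A:=(J+1)+\nChannel m_J(L-J)$. Rescaling $\tilde{\tW}\to B_v^{(J+1)/(L+1)}\tilde{\tW}$---permitted by the $(L+1)$-homogeneity of $f$---produces a feasible witness of squared F-norm $\le B_v^{2(J+1)/(L+1)} A$; hence by Lemma~\ref{lemma:ImplicitBias:ConvergeToKKT}, $\tensorFNorm{\tW^*}^2\le B_v^{2(J+1)/(L+1)} A$.

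\textbf{Balance and the margin bound.} For the KKT point $\tW^*$ of the $(L+1)$-homogeneous min-norm problem, the rescalings $\tW^{*(l)}\to e^{sv_l}\tW^{*(l)}$, $\bw^*\to e^{sv_0}\bw^*$ with $\sum_l v_l+v_0=0$ preserve both $f$ and, by scale-invariance of $\pertAdv_i$, the constraint set, so first-order optimality of $\tensorFNorm{\tW}^2/2$ along these curves forces $\tensorFNorm{\tW^{*(l)}}=\twonorm{\bw^*}=:c^*$ for all $l$. Combining with the previous step gives $1/c^*\ge M:=B_v^{-(J+1)/(L+1)}\sqrt{(L+1)/A}$. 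Separately, the $1$-Lipschitzness of $\sigma$ and the t-product/$M$-block-diagonal correspondence yield $|f(\tx;\tW)|\le \twonorm{\bw}\prod_l \specnorm{\LBar{\bW}^{(l)}}\,\tensorFNorm{\tx}$, which applied at the normalized example $\tx_{i_0}$ (absorbing the perturbation into the unit effective-input bound) gives $\twonorm{\bw^*}\prod_l\specnorm{\LBar{\bW}^{*(l)}}\ge 1$. Because $\bM$ is orthogonal, $\tensorFNorm{\LBar{\bW}^{*(l)}}=\tensorFNorm{\tW^{*(l)}}=c^*$, so $\specnorm{\LBar{\bW}^{*(l)}}=c^*/\sqrt{s_l}$ with $s_l:=\opStableRank(\LBar{\bW}^{*(l)})$. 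Multiplying out: $(c^*)^{L+1}\prod_l s_l^{-1/2}\ge 1$, i.e.\ $\prod_l s_l^{-1/2}\ge M^{L+1}$. The GM--AM inequality on $\{s_l^{-1/2}\}$ then gives $\tfrac{1}{L}\sum_l s_l^{-1/2}\ge \bigl(\prod_l s_l^{-1/2}\bigr)^{1/L}\ge M^{(L+1)/L}$, and in the over-parameterized regime of (C.4)--(C.5) one has $M\ge 1$, so Bernoulli's inequality gives $M^{(L+1)/L}=(1+(M-1))^{(L+1)/L}\ge 1+\tfrac{L+1}{L}(M-1)=\tfrac{(L+1)M-1}{L}$. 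Therefore $\sum_l s_l^{-1/2}\ge (L+1)M-1$, and inverting yields $L/\sum_l s_l^{-1/2}\le 1/((1+1/L)M-1/L)$, matching the claim.

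\textbf{Main obstacle.} The delicate step is the construction: obtaining identity-extension layers of F-norm precisely $\sqrt{\nChannel m_J}$ rather than the $2\sqrt{\nChannel m_J}$ one incurs from the classical ReLU doubling-width trick. The observation removing this overhead is that the signal entering every added layer has already been ReLU-activated, so a plain t-identity followed by ReLU is a true identity; the $m_J\ge 2$ and $d_l\gg m_J$ conditions ensure the zero-padding is legal. Establishing the balance equation $\tensorFNorm{\tW^{*(l)}}=\twonorm{\bw^*}$ under the non-differentiable ReLU is routine via Clarke subdifferentials together with the scale invariance of $\pertAdv_i$ certified by Lemma~\ref{lemma:ImplicitBias:ScaleInvariantPerterbation}, which keeps the per-layer rescaling map smooth in $(v_0,v_1,\ldots,v_L)$.
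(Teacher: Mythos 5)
Your proposal is correct and essentially the paper's own argument: the same identity-padded, rescaled witness built from $g$ gives $\tensorFNorm{\tW^*}^2\le B_v^{2(J+1)/(L+1)}\big[(J+1)+\nChannel m_J(L-J)\big]$, the same balance argument forces all layer norms equal at the optimum, the same margin-versus-spectral-norm bound at the unit-norm example is used, and your GM--AM plus Bernoulli step is just an equivalent rearrangement of the paper's AM--GM-and-normalize step, both yielding $\sum_{l}\big(\opStableRank(\LBar{\bW}^{*(l)})\big)^{-1/2}\ge (L+1)M-1$ with the same $M$. The one phrase to tighten is the parenthetical in your margin step: justify $\twonorm{\bw^*}\prod_l\specnorm{\LBar{\bW}^{*(l)}}\ge 1$ as the paper does, by noting that the clean margin at the example with $\tensorFNorm{\tx_i}\le 1$ is at least the robust margin (i.e., take the zero perturbation), rather than by treating the perturbed input as if it still had norm at most one.
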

By the above theorem, when  $L$ is sufficiently large,  the harmonic mean of the square root of the stable rank of $\LBar{\bW}^{*(l)}$, \textit{i.e.}, the $M$-block-diagonal matrix of  weight tensor $\tW^{*(l)}$,  
is approximately bounded by $\sqrt{\nChannel m_{J}}$, which is significantly smaller than the square root of the  dimensionality $\sqrt{\min\{\nChannel d_{l}, \nChannel d_{l-1}\}}$ according to condition \textit{\textbf{(C.5)}} in Theorem \ref{theorem:ImplicitBias:ImplicitRegularzation}. Thus, $f(\tx;\tW^*)$ has a nearly low-rank parameterization in the transformed domain. 
In our case, the weights $\tW(t)$ generated by GF tend to have an infinite norm and to converge in direction to a transformed low-rank solution. Moreover, note that the ratio between the spectral norm and the F-norm is invariant to scaling, and hence it suggests that after a sufficiently long time, GF tends to reach a t-NN with transformed low-rank weight tensors. Refer to Sec.~\ref{appendix:subsec:ImplicitBias} for numerical evidence supporting Theorem \ref{theorem:ImplicitBias:ImplicitRegularzation}.

\subsection{Robust Generalization  with Approximate Transformed Low-rank Parameterization}
\label{subsec:AprxLTR}
Theorem \ref{theorem:ImplicitBias:ImplicitRegularzation} establishes that for highly over-parameterized adversarial training with GF, well-trained t-NNs exhibit approximately transformed low-rank parameters under specific conditions. In this section, we  analyze the AGP of t-NNs that possess an approximately transformed low-rank parameterization\footnote{We use the tubal rank as a  measure of low-rankness in the transformed domain for notation simplicity. One can also consider the average rank \citep{WangZL2021_ICCV_AverageRank} or multi-rank \citep{WangAD2021_STSP_SpecK4TC} for more refined bounds with quite similar  techniques.}.

Initially, by employing low-tubal-rank tensor approximation \cite{Kilmer2021_PNAS_t-SVD}, one can always compress an \textit{approximately} low-tubal-rank parameterized t-NN $f$ by a t-NN
$g\in\hypoClassTnnLR$ with an \textit{exact} low-tubal-rank parameterization, ensuring a small distance between $g$ and $f$ in the parameter space. Now, the question is: \textit{Can the small parametric distance between $f$ and $g$ also indicate a  small difference in their adversarial generalization behaviors?}
To answer this question, we first define the $(\delta,\br)$-approximate low-tubal-rank parameterized functions.
\begin{definition}[$(\delta,\br)$-approximate low-tubal-rank parameterization]
\label{def:delta-r-approximation}
A t-NN $f(\tx;\tW)\in\hypoClassTnn$ with weights $\tW=(\bw,\tW^{(1)},\cdots,\tW^{(L)})$ is said to satisfy the $(\delta,\br)$-approximate low-tubal parameterization with tolerance  $\delta>0$ and rank  $\br=(r_1,\cdots,r_L)^{\top}\in \mathbb{N}^{L}$, if there is a t-NN $g(\tx;\tW_{\br})\in\hypoClassTnnLR$ whose weights $\tW_g=(\bw,\tW_{r_1}^{(1)},\cdots,\tW_{r_L}^{(L)})$ satisfy
$\|\tW_{r_l}^{(l)}-\tW^{(l)}\|_{\textnormal{F}}\le \delta$ for any $l= 1,\cdots,L$.
\end{definition}

Furthermore, let's consider the collection of t-NNs with approximately low-tubal-rank weights
\bea 
\hypoClassTnnApxLR:=\left\{f \in
\hypoClassTnn~|~ f~\textnormal{satisfies the $(\delta,\br)$-approximate low-tubal-rank parameterization}\right\}.
\eea 
Subsequently, we analyze the AGP for any
$f\in\hypoClassTnnApxLR$ in terms of its low-tubal-rank compression $g\in\hypoClassTnnLR$. The idea is motivated by the work on compressed
bounds for non-compressed but compressible models \citep{Suzuki2020_ICLR_CompressionBounds4NoncompressedModel}, originally developed for generalization analysis of NNs for  standard training. 

Under Assumption \ref{assumption:ImplicitBias:Loss}, we first define 
$
\hypoClassAdvApxLR:=\{\tilde{f}:(\tx,y)\mapsto
\min_{\tensorLpNorm{\tx'-\tx}\le\xi}yf(\tx')~|~f\in\hypoClassTnnApxLR\}
$ as the adversarial version of $\hypoClassTnnApxLR$.
To analyze the AGP of $f\in\hypoClassTnnApxLR$ through $g\in\hypoClassTnnLR$, we instead consider their
adversarial counterparts $\tilde{f}\in\hypoClassAdvApxLR$ and $\tilde{g}\in\hypoClassAdvLR$, where 
$\hypoClassAdvLR$ is defined as 
$
\hypoClassAdvLR:=\{\tilde{g}:(\tx,y)\mapsto
\min_{\tensorLpNorm{\tx-\tx'}\le\xi}yg(\tx')~\big|~g\in\hypoClassTnnLR\}.
$  
Define the Minkowski difference of $\hypoClassAdvApxLR$ and $\hypoClassAdvLR$ as
$
\minkowDiffHypoClassAdv:=\{\tilde{f}-\tilde{g}~|~\tilde{f}\in
\hypoClassAdvApxLR,~\tilde{g}\in\hypoClassAdvLR\}.
$
The empirical $L_2$-norm of a t-NN $h\in\hypoClassTnn$ on the training data $S=\{(\tx_i,y_i)\}_{i=1}^{N}$ is defined as $\|h\|_S:=\sqrt{N^{-1}\sum_{i=1}^{N} h^2(\tx_i,y_i)}$, and the population $L_2$-norm is $\|h\|_{L_2}:=\sqrt{\E_{P(\tx,y)}[h^2(\tx,y)]}$. Define the local Rademacher complexity of $\minkowDiffHypoClassAdv$ of radius $\symbolRadius>0$ 
as 
$
\radComplexityLocal_{\symbolRadius}(\minkowDiffHypoClassAdv):=
\radComplexityAverage_{N}(\{h\in\minkowDiffHypoClassAdv~|~\funcLTwoNormExp{h}\le
\symbolRadius\}),
$ where $\radComplexityAverage_{N}(\calH)$ denotes the average Rademacher complexity of a function class $\calH$ \citep{Bartlett2002_LocalizedRademacher}. 

The first part of the upcoming Theorem \ref{theorem:ApxLR:AGP:GeneralCase}  shows that a small parametric distance between $f$ and $g$ leads to a small empirical $L_2$-distance in the adversarial output space. Specifically, for any $f(\tx;\tW)\in\hypoClassTnnApxLR$ with compression $g(\tx;\tW_{\br})$, their (adversarial) empirical $L_2$-distance $\|\tilde{f}(\tx;\tW)-\tilde{g}(\tx;\tW_{\br})\|_{S}$ can be  bounded by a small constant $\radiusMinkowEmp>0$ in linearity of $\delta$. 
We also aim for a small population $L_2$-distance by first assuming the local Rademacher complexity $\radComplexityLocal_{\symbolRadius}(\minkowDiffHypoClassAdv)$ can be bounded by a concave function of $\symbolRadius$, following common practice in Rademacher complexity analysis \citep{Bartlett2002_LocalizedRademacher, Suzuki2020_ICLR_CompressionBounds4NoncompressedModel}.
\begin{assumption}
\label{assumption:ApxLR:concaveLocalRadComp}
	For any $\symbolRadius>0$, there exists a function $\phi(\symbolRadius):[0,\infty)\rightarrow
	[0,\infty)$ such that 
	$
	\radComplexityLocal_{\symbolRadius}(\minkowDiffHypoClassAdv)\le
	\phi(\symbolRadius)~~\textnormal{and}~~\phi(2\symbolRadius)\le
	2\phi(\symbolRadius)
	$.
\end{assumption}
We further define
$\symbolRadius_*=\symbolRadius_*(t):=\inf\big\{\symbolRadius>0~\big|16\boundFAdv\symbolRadius^{-2}\phi(\symbolRadius)+\boundFAdv\symbolRadius^{-1}\sqrt{2t/N}+
2t\boundFAdv^2\symbolRadius^{-2}/N\le 1/2\big\}
$ for any $t>0$, 
such that 
the population $L_2$-norm of
any $h\in\minkowDiffHypoClassAdv$ can be bounded by 
$\|h\|_{L_2}^2 \le 2(\|h\|_S^2+\radiusFixedPointPhi^2)$ using the peeling argument \citep[Theorem 7.7]{Steinwart2008_Book_SVM}. 
We then establish an adversarial generalization bound for approximately low-tubal-rank t-NNs as follows.

\begin{theorem}[Adversarial generalization bound for general approximately
	low-tubal-rank t-NNs]
	\label{theorem:ApxLR:AGP:GeneralCase}
(I). For any 
$f\in\hypoClassTnnApxLR$ with adversarial proxy  $\tilde{f}\in\hypoClassAdvApxLR$, there exists a function $g\in\hypoClassTnnLR$ with adversarial proxy
	$\tilde{g}\in \hypoClassAdvLR$, such that the empirical $L_2$-distance $
 \|\tilde{f}-\tilde{g}\|_S
\le \delta \boundFAdv\sum_{l=1}^{L}\weightNormBoundOfLayer{l}^{-1}=:\radiusMinkowEmp$.\\
(II). Let $\dot{\symbolRadius}:=\sqrt{2(\hat{\symbolRadius}^2+\symbolRadius_*^2)}$. Under Assumptions \ref{assumption:GAP:InputDataBoundness}, \ref{assumption:ImplicitBias:Loss}, \ref{assumption:AdvAttack}, 
	\ref{assumption:ApxLR:concaveLocalRadComp}, there exist constants
	$C_1,C_2>0$ satisfying 
	\bea
 \label{eq:apxLR:generalcase}
	\pRiskAdv(f)-\eRiskAdv(f)
	&\le\underbrace{\frac{C_1\lipLoss\boundFAdv}{\sqrt{N}}\sqrt{\nChannel \sum_{l=1}^{L}r_l(d_{l-1}+d_{l})\log(9(L+1))}+\boundLoss\sqrt{\frac{t}{2N}}}_{\textnormal{main term}}\\
	&\quad+
	\underbrace{C_2\left(\Phi(\radiusLocalRC)
		+ \lipLoss\radiusLocalRC\sqrt{\frac{t}{N}}
		+ \frac{t\lipLoss\boundFAdv}{N}\right)}_{\textnormal{bias term}},
	\eea 
	for any $f\in\hypoClassTnnApxLR$ with probability at  least
	$1-4e^{-t}$ for any $t>0$, where 
$\Phi(\symbolRadius)$ is defined as 
\bea
\nonumber
\Phi(\symbolRadius):=\radComplexityAverage_{N}\left(\big\{\loss\circ\tilde{f}-\loss\circ\tilde{g}~\big|~\tilde{f}\in\hypoClassAdvApxLR,\tilde{g}\in\hypoClassAdvLR,
\|\tilde{f}-\tilde{g}\|_{L_2}\le \symbolRadius
\big\}
\right).
\eea
\end{theorem}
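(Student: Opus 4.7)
For Part (I), I would use a layer-by-layer Lipschitz argument. By the definition of $(\delta,\br)$-approximate low-tubal-rank parameterization, for any $f(\tx;\tW)\in\hypoClassTnnApxLR$ there exists $g(\tx;\tW_g)\in\hypoClassTnnLR$ sharing the same linear head $\bw$ and with $\|\tW_{r_l}^{(l)}-\tWeightOfLayer{l}\|_{\textnormal{F}}\le\delta$ for each $l$. I would expand the output gap $f(\tx)-g(\tx)$ as a telescoping sum in which one weight tensor is replaced at a time, and control each summand using the $1$-Lipschitzness of ReLU, the sub-multiplicativity $\|\tA\tprod\tB\|_{\textnormal{F}}\le\|\tA\|_{\textnormal{F}}\|\tB\|_{\textnormal{F}}$ (inherited from the $M$-block-diagonal representation and the orthogonality of $\bM$), and the bounds $\|\tWeightOfLayer{l}\|_{\textnormal{F}}\le\weightNormBoundOfLayer{l}$, $\twonorm{\bw}\le B_w$. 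Since the adversarial proxy $\tilde f(\tx,y)=\min_{\tensorLpNorm{\tx'-\tx}\le\xi}yf(\tx')$ is $1$-Lipschitz in $f$ and the perturbed inputs satisfy $\tensorFNorm{\tx'}\le B_x+\xi\pertNormCompConstant$, combining the pieces will give $|\tilde f(\tx,y)-\tilde g(\tx,y)|\le \delta\boundFAdv\sum_{l=1}^{L}\weightNormBoundOfLayer{l}^{-1}$ pointwise, from which the empirical $L_2$ bound $\radiusMinkowEmp$ follows.

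For Part (II), my plan is a compression-style decomposition
\bea
\pRiskAdv(f)-\eRiskAdv(f)
=\bigl[\pRiskAdv(g)-\eRiskAdv(g)\bigr]
+\bigl[(\pRiskAdv(f)-\pRiskAdv(g))-(\eRiskAdv(f)-\eRiskAdv(g))\bigr].
\eea
The first bracket is the AGP of an exact low-tubal-rank t-NN, so I would simply invoke Theorem \ref{theorem:AGP-Low-rank} on $g\in\hypoClassTnnLR$ to obtain the main term, absorbing constants into $C_1$ and recovering the $B\sqrt{t/(2N)}$ concentration piece. The second bracket, the bias term, would then be reduced by the Lipschitzness of the loss (Eq.~(\ref{eq:lossIsLipsch})) to a uniform empirical-process gap over $\{\loss\circ\tilde f-\loss\circ\tilde g:\tilde f\in\hypoClassAdvApxLR,\tilde g\in\hypoClassAdvLR\}$.

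The hard part is turning the \emph{empirical} distance control $\|\tilde f-\tilde g\|_S\le\radiusMinkowEmp$ from Part (I) into a \emph{population} $L_2$ control, which is exactly what the localized Rademacher machinery and Assumption \ref{assumption:ApxLR:concaveLocalRadComp} are designed for. I would apply a Talagrand/peeling argument in the style of \citep[Theorem 7.7]{Steinwart2008_Book_SVM}: the sub-root property $\phi(2\symbolRadius)\le 2\phi(\symbolRadius)$ together with the definition of the fixed point $\radiusFixedPointPhi$ yields, with probability at least $1-e^{-t}$, the uniform inequality $\|h\|_{L_2}^2\le 2(\|h\|_S^2+\radiusFixedPointPhi^2)$ on $\minkowDiffHypoClassAdv$, so combined with Part (I) this gives $\|\tilde f-\tilde g\|_{L_2}\le\radiusLocalRC=\sqrt{2(\radiusMinkowEmp^2+\radiusFixedPointPhi^2)}$. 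A standard Talagrand--Bousquet concentration on the restricted, loss-composed class $\{\loss\circ\tilde f-\loss\circ\tilde g:\|\tilde f-\tilde g\|_{L_2}\le\radiusLocalRC\}$ then produces the bias term $C_2\bigl(\Phi(\radiusLocalRC)+\lipLoss\radiusLocalRC\sqrt{t/N}+t\lipLoss\boundFAdv/N\bigr)$, where the $\lipLoss$ factors come from contracting with the Lipschitz loss. A final union bound over the AGP event of Theorem \ref{theorem:AGP-Low-rank} ($1-2e^{-t}$), the peeling event ($1-e^{-t}$), and the bias-term Talagrand event ($1-e^{-t}$) produces the stated probability $1-4e^{-t}$.
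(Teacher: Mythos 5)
Your proposal is correct and follows essentially the same route as the paper: the same telescoping/Lipschitz argument for the empirical distance $\radiusMinkowEmp$ in Part (I), the same decomposition of the gap into the AGP of $g$ (handled by Theorem \ref{theorem:AGP-Low-rank}) plus a bias bracket, the same peeling/fixed-point argument via Assumption \ref{assumption:ApxLR:concaveLocalRadComp} to convert $\|\tilde f-\tilde g\|_S\le\radiusMinkowEmp$ into $\|\tilde f-\tilde g\|_{L_2}\le\radiusLocalRC$, and the same Talagrand concentration plus symmetrization (with Lipschitz contraction of the loss) yielding the bias term and the $1-4e^{-t}$ union bound. No gaps worth noting.
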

The main term of the bound quantifies the complexity of functions in $\hypoClassTnnLR$ with exact low-tubal-rank parameterization in adversarial settings, which can be significantly smaller than that of $\hypoClassTnnApxLR$. On the other hand, the bias term captures the sample complexity required to bridge the gap between approximately low-tubal-rank parameterized $\hypoClassTnnApxLR$ and exactly low-tubal-rank parameterized $\hypoClassTnnLR$. As we usually observe $\radiusFixedPointPhi^2=o(1/\sqrt{N})$, setting $\radiusMinkowEmp=o_p(1)$ allows the bias term to decay faster than the main term, which is $O(1/\sqrt{N})$.
Theorem \ref{theorem:ApxLR:AGP:GeneralCase} suggests that \textit{a small parametric distance between $f\in\hypoClassTnnApxLR$ and $g\in\hypoClassTnnLR$ also implies a small difference in their adversarial generalization behaviors}.

\textbf{A special case.}
 We also showcase a specific scenario where the weights of t-product layers exhibit a polynomial spectral decay in the transformed domain, leading to a considerably small AGP bound.
\begin{assumption}
\label{assumption:ApxLR:SVDecaying}    
Consider the setting where any t-NN $f(\tx;\tW)\in\hypoClassTnnApxLR$ has tensor weights $\tWeightOfLayer{l}~(l=1,\cdots,L)$ whose singular values in the transformed domain satisfy  $\sigma_j(M(\tWeightOfLayer{l})_{:,:,k})\le V_0\cdot j^{-\alpha}$, where $V_0>0$ is a constant, and $\sigma_j(\cdot)$ is the $j$-th largest singular value of a  matrix.
\end{assumption}
Under Assumption \ref{assumption:ApxLR:SVDecaying},  the weight tensor $\tWeightOfLayer{l}$ can be approximated by
its optimal tubal-rank-$r_l$ approximation  
 $\tWeightOfLayer{l}_{r_l}$ for any $1\le r_l \le
\min\{d_{l},d_{l-1}\}$  with error 
$
\|\tWeightOfLayer{l}-\tWeightOfLayer{l}_{r_l}\|_{\textnormal{F}}
\le 
\sqrt{{\nChannel}/{(2\alpha-1)}}V_0
	(r_l-1)^{(1-2\alpha)/2}
$ \cite{Kilmer2021_PNAS_t-SVD},
which can be much smaller than 
$\|\tWeightOfLayer{l}_{r_l}\|_{\textnormal{F}}$
 when $\alpha>1/2$ is sufficiently large. 
Thus, we can find an exactly low-tubal-rank parameterized $g\in\hypoClassTnnLR$ for any $f\in\hypoClassTnnApxLR$ satisfying Assumption \ref{assumption:ApxLR:SVDecaying}, such that the parametric distance between $g$ and $f$ is quite small.
The following theorem shows that the small parametric distance also leads to a small AGP.
\begin{theorem}
\label{theorem:ApxLR:AGP:SVDecaying}
Under Assumptions \ref{assumption:GAP:InputDataBoundness}, \ref{assumption:ImplicitBias:Loss}, \ref{assumption:AdvAttack},  and \ref{assumption:ApxLR:SVDecaying}, if we let
$
\radiusMinkowEmp
=V_0\boundFAdv\sum_{l=1}^{L} (r_l+1)^{-\alpha}\weightNormBoundOfLayer{l}^{-1},
$ 
then for any t-NN $f\in\hypoClassTnnApxLR$, there exists a  function $g\in\hypoClassTnnLR$ whose t-product layer weights have tubal-rank exactly no greater than $r_l$, satisfying $\|\tilde{f}-\tilde{g}\|_S\le \radiusMinkowEmp$. 
Further,  there is a constant $C_\alpha$ only depending on $\alpha$ such that the AGP, i.e., $\pRiskAdv(f)-\eRiskAdv(f)$, of any $f\in\hypoClassTnnApxLR$ can be upper bounded  by
\bea
\nonumber
C_{\alpha}\lipLoss
\bigg\{
\boundFAdv E_1+\radiusMinkowEmp\sqrt{E_1}+ E_2^{\frac{2\alpha}{2\alpha+1}}
\left(\boundFAdv^{\frac{2\alpha-1}{2\alpha+1}}+1\right)
+\radiusMinkowEmp^{\frac{2\alpha}{2\alpha+1}}\sqrt{E_2}
+ (\radiusMinkowEmp+\frac{\boundLoss}{\lipLoss})\sqrt{\frac{t}{N}}
+ \frac{1+t\boundFAdv}{N}\bigg\},
\eea 
for any $t>0$ with probability at least $1-4e^{-t}$, where $
 E_1= N^{-1}\nChannel\sum_{l=1}^{L}r_l(d_l+d_{l-1})\log(9NL\boundFAdv/\sqrt{\nChannel})$ and 
 $E_2=
 N^{-1}\nChannel\sum_{l=1}^L  \Big(LV_0\boundFAdv\weightNormBoundOfLayer{l}^{-1}\Big)^{1/\alpha}(d_l+d_{l-1})\log(9NL\boundFAdv/\sqrt{\nChannel})
 $. 
\end{theorem}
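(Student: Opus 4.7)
\textbf{Proof proposal for Theorem \ref{theorem:ApxLR:AGP:SVDecaying}.}

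The plan is to instantiate Theorem \ref{theorem:ApxLR:AGP:GeneralCase} under the polynomial decay Assumption \ref{assumption:ApxLR:SVDecaying}, then sharpen the bias term by exploiting the quantitative decay of the truncation error. First, for any $f(\tx;\tW) \in \hypoClassTnnApxLR$ I will construct the exactly low-tubal-rank compression $g(\tx;\tW_{\br}) \in \hypoClassTnnLR$ layerwise: for each $l$, replace $\tWeightOfLayer{l}$ by its best tubal-rank-$r_l$ approximation $\tWeightOfLayer{l}_{r_l}$ obtained from t-SVD truncation. Assumption \ref{assumption:ApxLR:SVDecaying} combined with the Eckart--Young-type result in the transformed domain (frontal slice by frontal slice, using orthogonality of $\bM$) yields $\|\tWeightOfLayer{l}-\tWeightOfLayer{l}_{r_l}\|_{\textnormal{F}} \lesssim \sqrt{\nChannel/(2\alpha-1)}\, V_0 (r_l)^{(1-2\alpha)/2}$, which is essentially $V_0 (r_l+1)^{-\alpha}$ up to $\alpha$-dependent constants. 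Propagating this perturbation through the composition via the weight-Lipschitzness argument used in part (I) of Theorem \ref{theorem:ApxLR:AGP:GeneralCase} (peeling layers and using t-product operator norm bounds together with $\|\tWeightOfLayer{l}\|_{\textnormal{F}}\le B_l$) gives $\|\tilde{f}-\tilde{g}\|_S \le V_0\boundFAdv\sum_{l=1}^{L}(r_l+1)^{-\alpha}\weightNormBoundOfLayer{l}^{-1} = \radiusMinkowEmp$, establishing the parametric-distance claim.

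Next, I will apply Theorem \ref{theorem:ApxLR:AGP:GeneralCase}(II) with this choice of $\radiusMinkowEmp$. The main term is already $O(\sqrt{E_1})$ once one recognizes that the exact low-tubal-rank complexity $\sqrt{\nChannel \sum_l r_l(d_{l-1}+d_l)\log(9(L+1))/N}$ is precisely $\sqrt{E_1}$ up to logarithmic refinements (replacing $\log(9(L+1))$ by $\log(9NL\boundFAdv/\sqrt{\nChannel})$, which comes from a slightly finer covering needed to resolve perturbations at scale $\radiusMinkowEmp$). The remaining work is to bound $\Phi(\symbolRadius)$ and the fixed point $\radiusFixedPointPhi$. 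I will bound $\Phi$ via Dudley's entropy integral applied to covering numbers of $\minkowDiffHypoClassAdv$: at scale $\varepsilon$, each layer can be covered by choosing a truncation level $r_l(\varepsilon)$ that matches $V_0(r_l+1)^{-\alpha}\weightNormBoundOfLayer{l}^{-1}$ with $\varepsilon$, yielding a rank $r_l(\varepsilon) \asymp (LV_0\boundFAdv/(\weightNormBoundOfLayer{l}\varepsilon))^{1/\alpha}$ and hence a log-covering number scaling like the expression defining $E_2$.

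The crux will be the chaining step producing the polynomial-decay rate $E_2^{2\alpha/(2\alpha+1)}$. Plugging the covering number estimate into the Dudley integral and solving for the fixed point $\radiusFixedPointPhi$ of $\symbolRadius \asymp \phi(\symbolRadius)+\boundFAdv\sqrt{t/N}$ leads to an equation of the form $\symbolRadius^2 \asymp \symbolRadius^{1/\alpha} \cdot E_2 \cdot \text{poly}(\boundFAdv)$, whose solution is $\symbolRadius_* \asymp (E_2)^{\alpha/(2\alpha+1)} \cdot \boundFAdv^{(2\alpha-1)/(2\alpha+1)}$. Substituting back into the bias term $C_2(\Phi(\dot{\symbolRadius}) + \lipLoss\dot{\symbolRadius}\sqrt{t/N} + t\lipLoss\boundFAdv/N)$ from Theorem \ref{theorem:ApxLR:AGP:GeneralCase}, together with $\dot{\symbolRadius}^2 = 2(\radiusMinkowEmp^2+\radiusFixedPointPhi^2)$, produces the two composite contributions $E_2^{2\alpha/(2\alpha+1)}(\boundFAdv^{(2\alpha-1)/(2\alpha+1)}+1)$ and $\radiusMinkowEmp^{2\alpha/(2\alpha+1)}\sqrt{E_2}$, while the cross-terms with $\radiusMinkowEmp\sqrt{E_1}$, $\radiusMinkowEmp\sqrt{t/N}$, and $B\sqrt{t/N}$ appear naturally.

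The hard part will be the two-scale chaining that simultaneously respects the rank-$r_l$ metric entropy of the exact low-rank class (which controls the $\sqrt{E_1}$ main term) and the polynomial tail governed by Assumption \ref{assumption:ApxLR:SVDecaying} (which controls $E_2$). In particular, getting both the right power of $E_2$ and the correct dependence on $\boundFAdv$ in the fixed-point equation, while uniformly handling the adversarial proxy $\tilde{f} = \min_{\|\tx'-\tx\|\le \xi} y f(\tx')$ (so that the covering is in the adversarial $L_2$ geometry rather than in the clean one), is the step most likely to require care. Once the covering estimate and fixed-point calculation are in place, the final bound follows by collecting constants and absorbing $\alpha$-dependent factors into $C_\alpha$.
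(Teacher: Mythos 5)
Your overall route is the same as the paper's: truncate each t-product layer by t-SVD, bound the empirical adversarial $L_2$-distance between $f$ and its compression $g$, cover $\hypoClassAdvApxLR$ by choosing an $\epsilon$-dependent rank $r_l(\epsilon)\asymp\big(LV_0\boundFAdv/(\weightNormBoundOfLayer{l}\epsilon)\big)^{1/\alpha}$ so that the covering entropy has the form $a_1+a_2\log(\epsilon^{-1})+a_3\epsilon^{-1/\alpha}$, and then feed this into the localized-complexity/fixed-point machinery of Theorem \ref{theorem:ApxLR:AGP:GeneralCase} (the paper does this through its adaptation of Suzuki's Lemma with $q=1/(2\alpha)$), finishing with Young-type rearrangements of the cross terms. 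So the architecture of your argument is sound and matches the paper.

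However, there is a concrete gap in your first step, the claim $\|\tilde{f}-\tilde{g}\|_S\le\radiusMinkowEmp$. You derive only the Frobenius-norm truncation bound $\|\tWeightOfLayer{l}-\tWeightOfLayer{l}_{r_l}\|_{\textnormal{F}}\lesssim\sqrt{\nChannel/(2\alpha-1)}\,V_0\,r_l^{(1-2\alpha)/2}$ and assert it is ``essentially $V_0(r_l+1)^{-\alpha}$ up to $\alpha$-dependent constants.'' It is not: it exceeds $V_0(r_l+1)^{-\alpha}$ by a factor of order $\sqrt{\nChannel\, r_l}$, which is neither $\alpha$-dependent nor bounded. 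Propagating this Frobenius error through the part (I) argument of Theorem \ref{theorem:ApxLR:AGP:GeneralCase} (which is stated for F-norm weight perturbations $\delta$) therefore yields only $\|\tilde{f}-\tilde{g}\|_S\le\boundFAdv\sum_{l}\sqrt{\nChannel/(2\alpha-1)}\,V_0\,r_l^{(1-2\alpha)/2}\weightNormBoundOfLayer{l}^{-1}$, which does not establish the stated $\radiusMinkowEmp=V_0\boundFAdv\sum_{l}(r_l+1)^{-\alpha}\weightNormBoundOfLayer{l}^{-1}$. The paper instead uses the slice-wise spectral consequence of Assumption \ref{assumption:ApxLR:SVDecaying}, namely $\tensorSpecNorm{\tWeightOfLayer{l}-\tWeightOfLayer{l}_{r_l}}\le V_0(r_l+1)^{-\alpha}$, and in the layer-peeling bounds $\tensorFNorm{(\tWeightOfLayer{l}_{r_l}-\tWeightOfLayer{l})\tprod\tz}\le\tensorSpecNorm{\tWeightOfLayer{l}_{r_l}-\tWeightOfLayer{l}}\,\tensorFNorm{\tz}$ (Lemma \ref{lemma:45}); the relation $\sqrt{\nChannel}\,\tensorSpecNorm{\cdot}\le\tensorFNorm{\cdot}$ for the truncation residual is also what produces the $\sqrt{\nChannel}$ inside the logarithm of $E_1,E_2$ when the low-tubal-rank class is covered in F-norm. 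Your phrase ``t-product operator norm bounds'' gestures at this, but as written the quantity you propagate is the F-norm error, and that step fails by a $\sqrt{\nChannel\, r_l}$ factor. A secondary, smaller imprecision: the fixed point should satisfy $\radiusFixedPointPhi^2\asymp E_2^{2\alpha/(2\alpha+1)}\boundFAdv^{(2\alpha-1)/(2\alpha+1)}$ (up to the other additive terms), so the power of $\boundFAdv$ in your displayed $\symbolRadius_*$ is off, though the general-case lemma delivers the correct exponent once applied with $q=1/(2\alpha)$.
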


This suggests that by choosing a sufficiently large $\alpha>1/2$, where each weight tensor has a tubal-rank close to 1, we can attain a superior generalization error bound. It is important to note that the rank $r_l$ can be arbitrarily chosen, and there exists a trade-off relationship between $\radiusMinkowEmp$ and $E_1$. Therefore, by selecting the rank appropriately for a balanced trade-off, we can obtain an optimal bound as follows.
\begin{corollary}
\label{corollary:ApxLR:AGP:SVDecaying} Under the same assumption to Theorem \ref{theorem:ApxLR:AGP:SVDecaying}, 
if we choose the parameter $\br$ of tubal ranks in $\hypoClassTnnLR$ by $r_l=\min\{\lceil\big(LV_0\boundFAdv\weightNormBoundOfLayer{l}^{-1}\big)^{1/\alpha}\rceil,d_l,d_{l-1}\}$, then there is a constant $C_\alpha$ only depending on $\alpha$ such that the AGP of any $f\in\hypoClassTnnApxLR$ can be upper bounded as
\bea
\nonumber
\pRiskAdv(f)-\eRiskAdv(f)\le 
C_{\alpha}\lipLoss
\bigg\{
&\boundFAdv^{1-1/(2\alpha)} \sqrt{\frac{\nChannel\sum_{l=1}^L  \Big(LV_0\weightNormBoundOfLayer{l}^{-1}\Big)^{1/\alpha}(d_l+d_{l-1})\log(9NL\boundFAdv/\sqrt{\nChannel})}{N}}\\ &+E_2^{\frac{2\alpha}{2\alpha+1}}\left(\boundFAdv^{\frac{2\alpha-1}{2\alpha+1}}+1\right)
+\sqrt{E_2}
+ \frac{\boundLoss}{\lipLoss}\sqrt{\frac{t}{N}}
+ \frac{1+t\boundFAdv}{N}\bigg\}, 
\eea 
with probability at least $1-4e^{-t}$ for any $t>0$.
\end{corollary}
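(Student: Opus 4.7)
The plan is to specialize Theorem \ref{theorem:ApxLR:AGP:SVDecaying}, which holds for an arbitrary rank vector $\br\in\mathbb{N}^L$, to the particular choice $r_l=\min\{\lceil(LV_0\boundFAdv\weightNormBoundOfLayer{l}^{-1})^{1/\alpha}\rceil,\,d_l,\,d_{l-1}\}$, and then to simplify the resulting bound by routine algebra. This rank is an (approximate) per-layer minimizer of the $r_l$-dependent part of the theorem's right-hand side: the approximation radius $\radiusMinkowEmp=V_0\boundFAdv\sum_l(r_l+1)^{-\alpha}\weightNormBoundOfLayer{l}^{-1}$ is decreasing in $r_l$, while the complexity factor $\sqrt{E_1}\propto\sqrt{N^{-1}\nChannel\sum_l r_l(d_l+d_{l-1})\log(\cdot)}$ is increasing in $r_l$, and equating the per-layer contributions up to constants gives exactly the displayed formula for $r_l$. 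Since $E_2$ does not depend on $\br$ in a way that benefits from further optimization, only these two quantities need to be re-expressed.

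The first step is to verify that with this choice $\radiusMinkowEmp$ is bounded by a universal constant. In the unsaturated regime $r_l=\lceil(LV_0\boundFAdv\weightNormBoundOfLayer{l}^{-1})^{1/\alpha}\rceil$, one has $r_l+1\ge(LV_0\boundFAdv\weightNormBoundOfLayer{l}^{-1})^{1/\alpha}$, so $(r_l+1)^{-\alpha}\le\weightNormBoundOfLayer{l}/(LV_0\boundFAdv)$ and the $l$-th summand of $\radiusMinkowEmp$ is at most $1/L$. In the saturated regime $r_l=\min\{d_l,d_{l-1}\}$ the tubal-rank-$r_l$ approximation reproduces $\tWeightOfLayer{l}$ exactly, so the true per-layer approximation error is zero and may be substituted for the generic upper bound $V_0(r_l+1)^{-\alpha}\weightNormBoundOfLayer{l}^{-1}$. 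Summing over $l$ then yields $\radiusMinkowEmp\le 1$, which in turn absorbs the $\radiusMinkowEmp\sqrt{t/N}$ and $\radiusMinkowEmp^{2\alpha/(2\alpha+1)}\sqrt{E_2}$ contributions of Theorem \ref{theorem:ApxLR:AGP:SVDecaying} into the $\sqrt{t/N}$ and $\sqrt{E_2}$ summands displayed in the corollary.

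Next I would substitute $r_l\le(LV_0\boundFAdv\weightNormBoundOfLayer{l}^{-1})^{1/\alpha}+1$ into $E_1$ and absorb the additive $+1$ into the constant $C_\alpha$. This turns the $E_1$-dependent contribution of Theorem \ref{theorem:ApxLR:AGP:SVDecaying} into a constant times $\lipLoss$ times the first displayed summand of the corollary, once the resulting power of $\boundFAdv$ coming from $r_l\propto\boundFAdv^{1/\alpha}$ is factored out of the square root and combined with the $\boundFAdv$-factor already present in the theorem. The remaining $E_2$-dependent summands and the residual $(1+t\boundFAdv)/N$ term transfer verbatim from the theorem to the corollary, and the absolute constants are collected into a single $C_\alpha$ depending only on $\alpha$.

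The main (mild) obstacle is the saturation bookkeeping: one has to cleanly handle the layers for which the ideal per-layer choice $\lceil(LV_0\boundFAdv\weightNormBoundOfLayer{l}^{-1})^{1/\alpha}\rceil$ is capped by $\min\{d_l,d_{l-1}\}$. For $\radiusMinkowEmp$ the fix is to exploit exactness of the rank-$\min\{d_l,d_{l-1}\}$ approximation, as explained above; for $E_1$ the estimate $r_l\le(LV_0\boundFAdv\weightNormBoundOfLayer{l}^{-1})^{1/\alpha}+1$ follows directly from the definition of the $\min$ and of the ceiling, so the saturation case does not enlarge the bound on $E_1$ either. Once these routine case splits are combined, the probability statement $1-4e^{-t}$ is inherited directly from Theorem \ref{theorem:ApxLR:AGP:SVDecaying} and the corollary follows.
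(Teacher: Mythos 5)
Your overall route is the same as the paper's: the paper proves this corollary in a single sentence, by substituting the stated choice of $r_l$ into Theorem~\ref{theorem:ApxLR:AGP:SVDecaying} and simplifying, which is exactly your plan; your handling of the capped layers (replacing the generic per-layer bound $V_0(r_l+1)^{-\alpha}\weightNormBoundOfLayer{l}^{-1}$ by the exact zero error of the full-tubal-rank truncation, so that $\radiusMinkowEmp\le 1$, which strictly requires re-entering the proof of Theorem~\ref{theorem:ApxLR:AGP:SVDecaying} or Theorem~\ref{theorem:ApxLR:AGP:GeneralCase} since the theorem fixes $\radiusMinkowEmp$ by a formula) is in fact more careful than anything the paper records.

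The one step that does not check out is the final ``routine algebra'' that is supposed to produce the first displayed summand. With $r_l\lesssim(LV_0\boundFAdv\weightNormBoundOfLayer{l}^{-1})^{1/\alpha}$ you get $E_1\lesssim E_2$, so the two $E_1$-dependent terms of Theorem~\ref{theorem:ApxLR:AGP:SVDecaying} become $\boundFAdv E_1\lesssim \boundFAdv E_2$ (linear in $E_2$; there is no square root from which to factor a power of $\boundFAdv$) and $\radiusMinkowEmp\sqrt{E_1}\lesssim\sqrt{E_2}$. The corollary's first summand, however, equals $\boundFAdv^{1-1/\alpha}\sqrt{E_2}$ (the sum under its square root is $\boundFAdv^{-1/\alpha}E_2$), and $\boundFAdv E_2\le C\,\boundFAdv^{1-1/\alpha}\sqrt{E_2}$ holds only under the extra condition $E_2\lesssim\boundFAdv^{-2/\alpha}$, which is nowhere assumed; likewise, whichever $E_1$-term you apply your ``factor $\boundFAdv^{1/(2\alpha)}$ out of the square root and combine with the $\boundFAdv$ already present'' manipulation to, the resulting power of $\boundFAdv$ is $1+1/\alpha$, $1+1/(2\alpha)$ or $1/(2\alpha)$, never $1-1/(2\alpha)$. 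Carrying out your plan therefore yields $\boundFAdv E_2+\sqrt{E_2}$ in place of the first and third displayed summands (plus an extra $\radiusMinkowEmp\sqrt{t/N}\le\sqrt{t/N}$ that the displayed $(\boundLoss/\lipLoss)\sqrt{t/N}$ need not dominate), not the stated $\boundFAdv^{1-1/(2\alpha)}$ prefactor. This mismatch is inherited from the corollary as printed and is glossed over equally by the paper's one-line proof, but as a derivation your description of that step would fail if executed literally, so it should either be flagged as an apparent exponent error in the statement or be supplemented by the additional assumptions that make the substitution legitimate.
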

It is worth highlighting that the bound exhibits a linear dependency on the number of neurons in the t-product layers, represented as $O(\sqrt{{\nChannel\sum_{l}(d_l+d_{l-1})}/{N}})$. In contrast, Theorem \ref{theorem:AGP-full} demonstrates a dependency on the total number of parameters, denoted as $O(\sqrt{{\nChannel\sum_{l}d_ld_{l-1}}/{N}})$. This observation suggests that employing the low-tubal-rank parameterization can potentially enhance adversarial generalization for t-NNs.

\section{Related Works}
\textbf{T-SVD-based data and function representation.}
The unique feature of t-SVD-based data representation, in contrast to classical low-rank decomposition methods, is the presence of low-rankness in the transformed domain. This transformed low-rankness is crucial for effectively modeling real multi-channel data with both smoothness and low-rankness \citep{LiuXY_2020_TIT_LowTubalRankAltMin,WangAD2021_STSP_SpecK4TC,WangAD2020_AAAI_OITNN}. Utilized in t-product layers in DNNs \citep{Newman2018_StabletDNNs,Malik2021_SDM_DynamictGNN,WuZB2022_KDD_RobustTGNN}, t-SVD has also been a workhorse for function representation and achieves impressive empirical performance.  While t-SVD-based signal processing models have been extensively studied theoretically \citep{ZhangXJ2022_TIT_SparseNTF,LiuXY_2020_TIT_LowTubalRankAltMin,QiuHQ2022_ICML_tRPCA,WangAD2021_STSP_SpecK4TC,HouJY2021_PAMI_Robust,LuCY2019_PAMI_TRPCA}, the t-SVD-based learning model itself has not been thoroughly scrutinized until this paper. Hence, this study represents the first theoretical analysis of t-SVD-based learning models, contributing to the understanding of their theoretical foundations.

\textbf{Theoretical analysis methods.} Our analysis draws on norm-based generalization analysis \citep{Neyshabur2015_COLT_NormBasedControl} and implicit regularization of gradient descent-based learning \cite{Vardi2022_COLT_IimplicitBiasReLU} as related theoretical analysis methods. 
Norm-based generalization analysis plays a crucial role in theoretical analysis across various domains, including standard generalization analysis of DNNs \cite{Golowich2018_COLT_SizeIndependentSampleComplexity}, compressed models \cite{LiJL2020_AISTATS_Low-rankTensorGeneralizationBound}, non-compressed models \cite{Suzuki2020_ICLR_CompressionBounds4NoncompressedModel}, and adversarial generalization analysis \cite{YinDong2019_ICML_AdversarialRademacher, Awasthi2020_ICML_AdvLearningGuarantee, XiaoJC2022_AdvRadComplexity}.  
Our work extends norm-based tools to analyze both standard and adversarial generalization in t-NNs, going beyond the traditional use of matrix products. For implicit regularization of gradient descent based learning, extensive past research has been conducted on implicit bias of GF for both standard and adversarial training of homogeneous networks building on matrix product layers, respectively \citep{LyuKaifeng2020_ICLR_GradientDescentMaximizesMargionOfHomogenousNet,JiZW_2020_NeurIPS_DirectionalConvergence,Timor2023_ALT_ImplicitBiasRuLU}. 
We non-trivially extend these methods to analyze t-NNs and  reveals that  GF for over-parameterized ReLU t-NNs produces nearly transformed low-rank weights under scale invariant adversarial perturbations. 

Our theoretical results notably deviate from the standard error bounds for fully connected neural networks (FNNs) in several ways:
\begin{itemize}[left=0.1em]
\item 
The generalization bounds in Lemma \ref{theorem:SGP-full} and Theorem \ref{theorem:AGP-full} for t-NNs diverge from their counterparts for FNNs in Refs.~\cite{Golowich2018_COLT_SizeIndependentSampleComplexity,XiaoJC2022_AdvRadComplexity,YinDong2019_ICML_AdversarialRademacher} due to the channel number $\nChannel$ in t-NNs. Moreover, Theorem \ref{theorem:AGP-full} encompasses a wider range of adversary classes than the $l_p$-attacks in the aforementioned references.
\item 
The uniqueness of Theorem \ref{theorem:AGP-Low-rank}, compared to Refs.~\cite{XiaoJC2022_AdvRadComplexity,YinDong2019_ICML_AdversarialRademacher}, stems from its consideration of weight low-rankness in the adversarial generalization bound, suggesting possible robustness improvements in generalization.
\item 
Our exploration of the implicit bias in GF for adversarial training presents a novel angle: the bias towards approximate transformed low-rankness in t-NNs. While Ref.~\cite{LyuBC2022_ICLR_ImplicitBiasAdvTraining} focuses on the implicit bias in adversarial training for FNNs, centered on KKT point convergence with exponential loss, our work delves deeper, considering a wider array of loss functions in adversarial training for t-NNs.
\item 
A crucial distinction in our adversarial generalization bounds, detailed in Section \ref{subsec:AprxLTR}, from non-adversarial bounds for FNNs \cite{Suzuki2020_ICLR_CompressionBounds4NoncompressedModel} is the integration of the localized Rademacher complexity. This encompasses the Minkowski difference between adversarial counterparts of both approximately and exactly low-tubal-rank t-NNs as seen in Theorem \ref{theorem:ApxLR:AGP:GeneralCase}.
\end{itemize}

\section{Concluding Remarks}
A thorough investigation of the generalization behavior of t-NNs is conducted for the first time. We derive upper bounds for the generalization gaps of standard and adversarially trained t-NNs and propose compressing t-NNs with a transformed low-rank structure for more efficient adversarial learning and tighter bounds on the adversarial generalization gap. Our analysis shows that adversarial training with GF in highly over-parameterized settings results in t-NNs with approximately transformed low-rank weights. We further  establish sharp adversarial generalization bounds for t-NNs with approximately transformed low-rank weights. Our findings demonstrate that utilizing the transformed low-rank parameterization can significantly enhance the robust generalization of t-NNs, carrying both theoretical and empirical significance.

\textbf{Limitations.} 
While this paper adheres to the norm-based framework for capacity control \cite{Neyshabur2015_COLT_NormBasedControl,Golowich2018_COLT_SizeIndependentSampleComplexity}, it is worth noting that the obtained generalization bounds may be somewhat conservative. However, this limitation can be mitigated by employing more sophisticated analysis techniques, as evidenced by recent studies  \cite{Arora2018_ICML_CompressionBasedBound,Lotfi2022_NIPS_PAC4Generalization,XiaoJC2023_NIPS_PACBayesian,XiaoJC2022_NIPS_Stability}. 

\textbf{Discussions.} The inclination of adversarial training towards low-rank/sparse weights, and the reciprocal effects of parameter reduction on robustness, are currently at the forefront of ongoing research. This domain has witnessed a spectrum of observations and results  \cite{WangLL2018_OpenRevirw_Adversarial,Cosentino2019Search4Sparse,Savostianova2023robust,LiaoNY2022Achieving}. In this study, we propose that employing low-rank parameterization can enhance the adversarial robustness of t-NNs, as evidenced by our analysis of uniform adversarial generalization error bounds.  However, despite these promising results, it is crucial to emphasize the necessity of a more exhaustive exploration of low-rank parameterization. Its implications, particularly when considered in the context of approximation, estimation, and optimization, are profound and warrant further dedicated research efforts. Such a comprehensive investigation will undoubtedly enhance our understanding and fully unlock the potential of low-rank parameterization in neural networks.
\section*{Acknowledgments}
We extend our deepest gratitude to Linfeng Sui and Xuyang Zhao for their indispensable support in implementing the Python code for t-NNs during the rebuttal phase. Our sincere appreciation also goes to both the area chair and reviewers for their unwavering dedication and meticulous attention given to this paper. This research was supported by RIKEN Incentive Research Project 100847-202301062011, by JSPS KAKENHI Grant  Numbers JP20H04249 and JP23H03419, and in part by National Natural Science Foundation of China under Grants
62103110 and 62073087.
\bibliographystyle{abbrv}
\bibliography{TensorBib.bib}
\appendix
\newpage
\appendix

\begin{spacing}{1.3}
\centering
{
\centering
 \text{\Large Appendix}
\\
\Large
{

\textbf{Transformed Low-Rank Parameterization Can Help}
\textbf{Robust Generalization for Tensor Neural Networks}
}
}    
\end{spacing}

~\\
In the appendix, we begin by presenting numerical evaluations of our theoretical findings. Subsequently, we introduce the additional notations and preliminaries related to t-SVD, followed by the proofs of the propositions mentioned in the main text.

Our analysis and proofs pertaining to t-NNs differ from those for FNNs as follows:
\begin{itemize}[left=0.1em]
\item 
Firstly, unlike the analysis for FNNs' generalization bounds based on Rademacher complexity in Refs.~\cite{Golowich2018_COLT_SizeIndependentSampleComplexity,XiaoJC2022_AdvRadComplexity,YinDong2019_ICML_AdversarialRademacher}, we derive specific lemmas for standard and adversarial generalization bounds in t-NNs. This is due to the unique structure of (low-rank) t-product layers. We reformulate the t-product through an operator-like expression in Lemma 16, paving the way for pivotal Lemma 17, supporting the t-product-based ``peeling argument.'' Additionally, we introduce Lemmas 37, 38, and Lemma 33 to handle t-product layer output norms and covering low-tubal-rank tensors.
\item 
Secondly, proving the implicit bias of GF for adversarial training of t-NNs, specifically the approximately transformed low-rankness, is nontrivial in comparison to the proof in Ref.~\cite{LyuBC2022_ICLR_ImplicitBiasAdvTraining} for the implicit bias of adversarial training for FNNs. As we consider more general loss functions for t-NNs in contrast to the exponential loss for FNNs in Ref.~\cite{LyuBC2022_ICLR_ImplicitBiasAdvTraining}, we first derive a more general convergence result to the direction of a KKT point for t-NNs Lemma \ref{lemma:ImplicitBias:ConvergeToKKT}, and then goes deeper by using a constructive approach to establish the approximately transformed low-rankness  in
Theorem \ref{theorem:ImplicitBias:ImplicitRegularzation}.
\item 
Thirdly, differing from Ref.~\cite{Suzuki2020_ICLR_CompressionBounds4NoncompressedModel} which focuses on standard FNN generalization, our approach delves into t-NNs' adversarial generalization. We achieve this by introducing the $(\delta,\br)$-parameterization, bounding localized Rademacher complexity for a Minkowski set in adversarial settings, and using low-tubal-rank approximations for tensor weights.
\end{itemize}

\tableofcontents
\newpage
\section{Numerical Evaluations of the Theoretical Results}
This section presents numerical evaluations for our theoretical results. All training process is conducted on nVidia A100 GPU. For additional information and access to the demo code, please visit the following URL: \url{https://github.com/pingzaiwang/Analysis4TNN/}.
\subsection{Effects of Exact Transformed Low-rank Weights on the Adversarial Generalization Gap}  
\label{appendix:subsec:AGPvsRank}
To validate the adversarial generalization bound in Theorem \ref{theorem:AGP-Low-rank}, we have conducted experiments on the MNIST dataset to explore the relationship between adversarial generalization gaps (AGP), weight tensor low-rankness, and training sample size. We consider binary classification of 3 and 7, with FSGM \cite{Goodfellow2015_ICLR_FGSM} attacks of strength $20/255$. The t-NN consists of three t-product layers and one FC layer, with weight tensor dimensions of $28\times28\times28$ for $\tWeightOfLayer{1}$, $\tWeightOfLayer{2}$, and $\tWeightOfLayer{3}$, and $784$ for the FC weight $\bw$. As an input to the t-NN, each MNIST image of size $28\times28$ is treated as a t-vector of size $28\times1\times28$. 

\begin{figure}[htb]
\centering
  \makebox[0in]{
  \begin{tabular}{c c}
\includegraphics[width=0.5\linewidth]{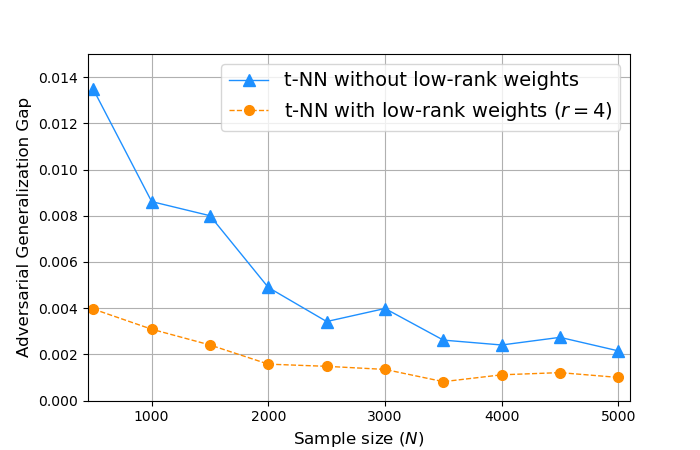}&
\includegraphics[width=0.5\linewidth]{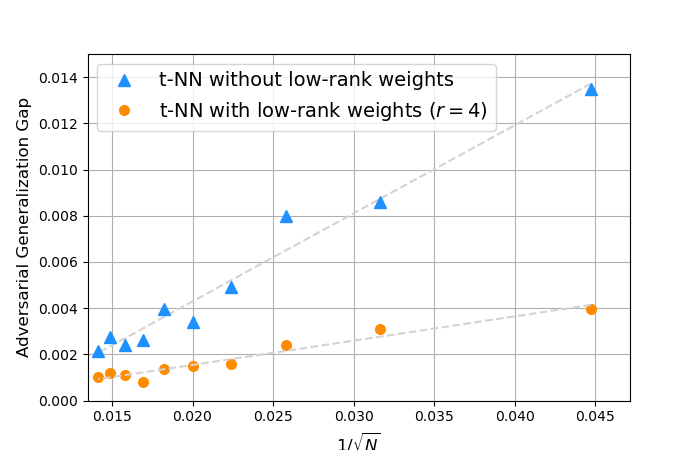}\\
(a) & (b) 
\end{tabular}
} 
\caption{The adversarial generalization gaps plotted against the training sample size ($N$) for t-NNs, both with and without transformed low-rank weight tensors, using the MNIST dataset. In (a), the adversarial generalization gaps are presented against the sample size $N$, while in (b), they are plotted against $1/\sqrt{N}$.}
\label{fig:LowRankVSFullRank}
\end{figure}

Theorem \ref{theorem:AGP-Low-rank} emphasizes: (\textit{i}) lower weight tensor rank leads to smaller bounds on the adversarial generalization gaps, and (\textit{ii}) the bound diminishes at a rate of $O(1/\sqrt{N})$ as $N$ increases. We explored this by conducting experiments, controlling the upper bounds of the tubal-rank to 4 and 28 for low and full tubal-rank cases, and systematically increasing the number of training samples. 
 
 Fig.~\ref{fig:LowRankVSFullRank} presents the results. The curves indicate that t-NNs with lower rank weight tensors have smaller robust generalization errors. Interestingly, the adversarial generalization errors seem to follow a linear relationship with $1/\sqrt{N}$, approximately validating the generalization error bound in Theorem 6 by approximating the scaling behavior of the empirical errors.

\subsection{Implicit Bias of GF-based Adversarial Training to Approximately Transformed Low-rank Weight Tensors}  
\label{appendix:subsec:ImplicitBias}
We carried out experiments to confirm two theoretical statements related to the analysis of GF-based adversarial training.

\textbf{Statement A.2.1} Theorem \ref{theorem:ImplicitBias:ImplicitRegularzation} reveals that, under specific conditions, well-trained t-NNs with highly over-parameterized adversarial training using GF show nearly transformed low-rank parameters.

\textbf{Statement A.2.2} Lemma 22 asserts that the empirical adversarial risk approaches zero, and the F-norm of the weights grows infinitely as $t$ approaches infinity.

In continuation of the experimental settings in Sec.~\ref{appendix:subsec:AGPvsRank}, we focus on binary classification on MNIST under FGSM attacks. The t-NN is structured with three t-product layers and one FC layer, with weight dimensions set to $D\times28\times28$ for $\tWeightOfLayer{1}$, $D\times D\times28$ for $\tWeightOfLayer{2}$ and  $\tWeightOfLayer{3}$, and $28D$ for the FC weight $\bw$. Our experiments involve setting values of $D$ to $128$ and $256$, respectively, and we track the effective rank of each weight tensor, the empirical adversarial risk, and the F-norm of the weights as the number of epochs progresses. Since implementing gradient flow with infinitely small step size is impractical in real experiments, we opt for SGD with a constant learning rate and batch-size of $80$, following the setting on fully connected layers in Ref.~\cite{LyuBC2022_ICLR_ImplicitBiasAdvTraining}.


For Statement A.2.1, we present preliminary results illustrating the progression of the stable ranks of the $M$-block-diagonal matrix of tensor weights in Fig.~\ref{fig:OverPara:Low-Rank-Weights} for the settings $D\in\{128,256\}$. Notably, these results show that the effective ranks decrease as more epochs are executed, thereby confirming the influence of implicit bias on transformed low-rankness, as described in Statement A.2.1.

For Statement A.2.2, we present initial numerical findings depicting the progress of the empirical adversarial risk and the F-norm of the weights in Figs.~\ref{fig:OverPara:AdvLoss} and \ref{fig:OverPara:F-Norm}, respectively. These results exhibit a consistent pattern with the theoretical descriptions outlined in Statement A.2.2 and the numerical results reported in Ref.~\cite{LyuBC2022_ICLR_ImplicitBiasAdvTraining} for adversarial training and Ref.~\cite{LyuKaifeng2020_ICLR_GradientDescentMaximizesMargionOfHomogenousNet} for standard training of FNNs. Specifically, we observe a decreasing trend in the empirical risk function and an increasing trend in the weight tensor's F-norm, which align with the expected behavior based on our theoretical framework and corroborate the numerical results presented in Refs.~\cite{LyuBC2022_ICLR_ImplicitBiasAdvTraining,LyuKaifeng2020_ICLR_GradientDescentMaximizesMargionOfHomogenousNet}.

\begin{figure}[htb]
\centering
  \makebox[0in]{
  \begin{tabular}{c c}
\includegraphics[width=0.5\linewidth]{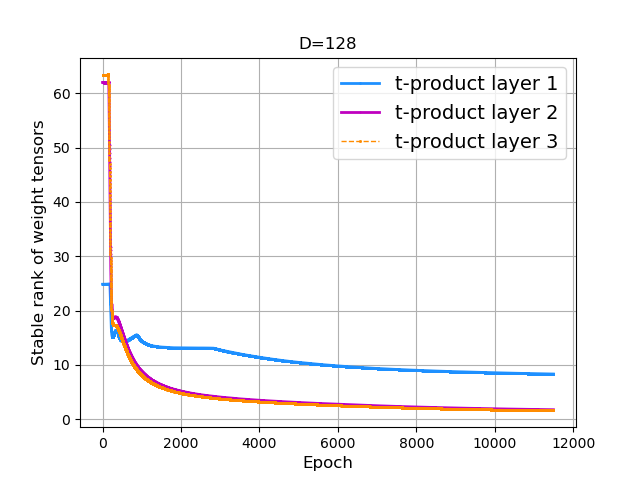}&
\includegraphics[width=0.5\linewidth]{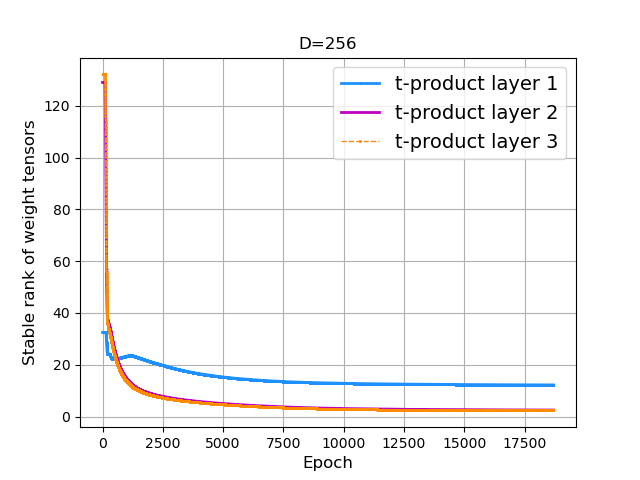}\\
(a) & (b) 
\end{tabular}
} 
\caption{Curves of the stable ranks of the $M$-block-diagonal matrix derived from the weight tensors, plotted against epoch numbers, using the MNIST dataset. Two t-NN size settings are showcased: (a) $D=128$ and (b) $D=256$.}
\label{fig:OverPara:Low-Rank-Weights}
\end{figure}

\begin{figure}[htb]
\centering
  \makebox[0in]{
  \begin{tabular}{c c}
\includegraphics[width=0.5\linewidth]{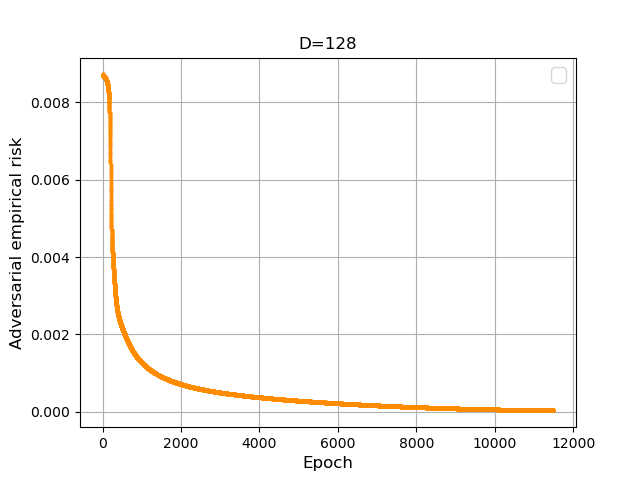}&
\includegraphics[width=0.5\linewidth]{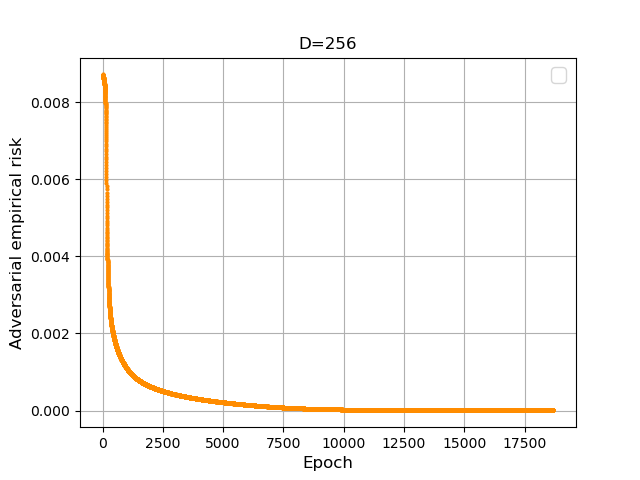}\\
(a) & (b) 
\end{tabular}
} 
\caption{
Curves of the adversarial training loss of t-NNs plotted against epoch numbers, using the MNIST dataset. Two t-NN size settings are showcased: (a) $D=128$ and (b) $D=256$.}
\label{fig:OverPara:AdvLoss}
\end{figure}

\begin{figure}[h]
\centering
  \makebox[0in]{
  \begin{tabular}{c c}
\includegraphics[width=0.5\linewidth]{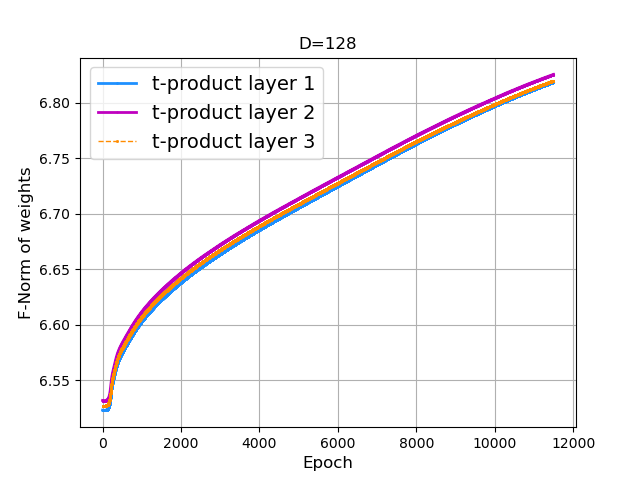}&
\includegraphics[width=0.5\linewidth]{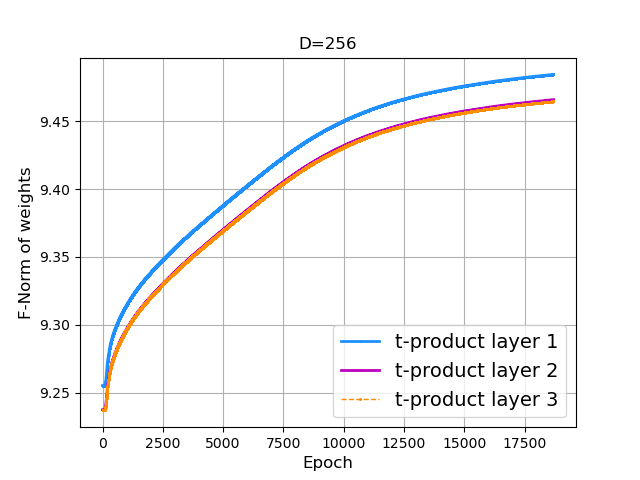}\\
(a) & (b) 
\end{tabular}
} 
\caption{Curves of the F-norms of the $M$-block-diagonal matrix derived from the weight tensors, plotted against epoch numbers, using the MNIST dataset. Two t-NN size settings are showcased: (a) $D=128$ and (b) $D=256$.}
\label{fig:OverPara:F-Norm}
\end{figure}

\subsection{Additional Regularization for a Better Low-rank Parameterized t-NN}
It is natural to ask: \textit{instead of using adversarial training with GF in highly over-parameterized settings to train a approximately transformed low-rank t-NN, is it possible to apply some extra regularizations in training to achieve a better low-rank parameterization}?

Yes, it is possible to apply additional regularizations during training to achieve a better low-rank representation in t-NNs. Instead of relying solely on adversarial training with gradient flow in highly over-parameterized settings, these extra regularizations can potentially promote and enforce low-rankness in the network.

To validate the concern regarding the addition of an extra regularization term, we performed a preliminary experiment. In this experiment, we incorporated the tubal nuclear norm \cite{WangAD2020_SPJ_RTD_TNN} as an explicit regularizer to induce low-rankness in the transformed domain. Specifically, we add the tubal nuclear norm regularization to the t-NN with three t-product layer $D=128$ in Sec.~\ref{appendix:subsec:ImplicitBias} with a regularization parameter $0.01$, and keep the other settings the same as Sec.~\ref{appendix:subsec:ImplicitBias}. We explore how the stable ranks of tensor weights evolve with the epoch number with/without tubal nuclear norm regularization.

The experimental results are depicted in Fig.~\ref{fig:OverPara:ExplicitRegu}. According to Fig.~\ref{fig:OverPara:ExplicitRegu}, it becomes evident that the introduction of the explicit low-rank regularizer significantly enforced low-rankness in the transform domain of the weight tensors.

\begin{figure}[h]
\centering
\includegraphics[width=0.75\linewidth]{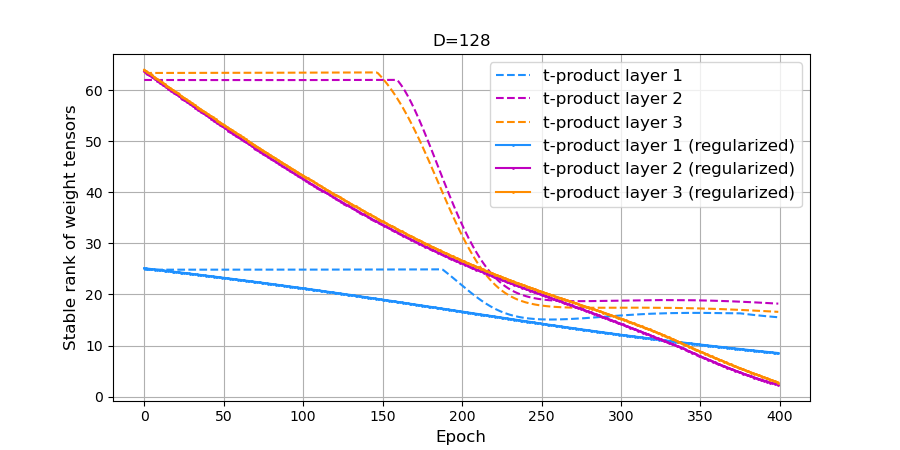}
\caption{Curves of the stable ranks of the $M$-block-diagonal matrix of  the weight tensors versus the epoch number  in the scenario where $D=128$, both with and without the tubal nuclear norm regularization, using the MNIST dataset.}
\label{fig:OverPara:ExplicitRegu}
\end{figure}

\newpage
\section{Notations and Preliminaries of t-SVD}
\subsection{Notations}
 For simplicity, we use $c$, $c_0$, $C$ etc.~to denote constants whose values can vary from line to line. We use 
We first give the most commonly used notations in Table \ref{table:Notations}.
\begin{table}[h]
	\centering
	\caption{List of main notations}
	\vspace{2mm}
		\begin{tabular}{p{2cm} p{4cm}| p{2.5cm} p{5cm}}
			\toprule
			\multicolumn{4}{l}{\textit{{Notations for t-SVD}}}\\
			$\tx\in\R{d\times 1\times \nChannel}$& a t-vector &
			$\tT\in\R{m\times n\times \nChannel}$& a t-matrix\\
			$\bM\in\R{\nChannel\times\nChannel}$& an orthogonal matrix &$M(\cdot)$&   transform via $\bM$ in Eq.~(\ref{eq:defL})\\
			$\tprod$&  t-product &
			$\LBar{\tT}$& $M$-block diagonal matrix of $\tT$ \\
			$\tT_{:,:,i}$&$i$-th frontal slice of $\tT$& $\tRank(\cdot)$&  tensor tubal rank\\
			$\tensorSpecNorm{\cdot}$&tensor spectral norm& $\tensorFNorm{\cdot}$& tensor F-norm \\	

            $\| \cdot \|$&matrix spectral norm& &  \\
			\midrule
			\multicolumn{4}{l}{\textit{{Notations for data representation}}}\\
			$\nChannel$&  number of channels &$d$&  number of features per channel\\
			$\tx_i\in\R{d\times 1\times \nChannel}$& a multi-channel example &$y_i\in\{\pm\}$&  label  of  multi-channel data $\tx_i$\\
			$S$ 
			&training  sample  of size $N$ &$\pertAdv_i$  & scale invariant adv.~ perturbation\\
			$\perNorm(\cdot)$ 
			&norm used for attack &$\tx_i'$  & adv.~perturbed version of  $\tx_i$\\
			$B_x$ 
			&upper bound on $\tensorFNorm{\tx}$&$\xi$  & radius of  $\perNorm(\cdot)$ for adv.~attack\\
			\midrule
			\multicolumn{4}{l}{\textit{{Notations for network structure}}}\\
			$L$&\multicolumn{3}{l}{number of t-product layers of a general t-NN}\\
			$\tW^{(l)}$&\multicolumn{3}{l}{weight tensor of  $l$-th t-product layer with dimensionality $d_{l}\times d_{l-1}\times \nChannel$}\\
			$\bw$&\multicolumn{3}{l}{weight vector of fully connected layer with dimensionality $\nChannel d_{L}$}\\
			$f(\tx;\tW)$&\multicolumn{3}{l}{a general t-NN  with weights $\tW=(\tW^{(1)},\cdots,\tW^{(L)},\bw)$}\\
			$\boundNormProduct$&\multicolumn{3}{l}{bound on product of Euclidean norms of  weights of  $f\in\hypoClassTnn$, i.e., $\boundNormProduct=B_w\prod_{l=1}^LB_l$}\\
			\midrule
			\multicolumn{4}{l}{\textit{{Notations for model analysis}}}\\
			$\tilde{f}(\tx,y)$&\multicolumn{3}{l}{adversarial version of ${f}(\tx)$ which maps $(\tx,y)$ to $\inf_{\perNorm(\tx'-\tx)\le\xi}y_if(\tx')$}\\
			$\boundFAdv$&\multicolumn{3}{l}{bound on the output of $\tilde{f}(\tx,y)$  given as $\boundFAdv:=(B_x+\xi\pertNormCompConstant)\boundNormProduct$} \\
			$\loss(f(\tx),y)$ &\multicolumn{3}{l}{loss function with range $[0,B]$, and Lipstchitz constant $\lipLoss$ (See Assumption \ref{assumption:ImplicitBias:Loss})}\\
			$\eRiskStd$,~$\pRiskStd$ &\multicolumn{3}{l}{  standard empirical and population risk, respectively}\\
			$\eRiskAdv$,~$\pRiskAdv$ &\multicolumn{3}{l}{  empirical and population risk, respectively}\\
			$\hypoClassTnn$,~$\hypoClassAdv$&
			\multicolumn{3}{l}{ function class of t-NNs  and its adversarial version, respectively}\\
			$\hypoClassTnnLR$,~$\hypoClassAdvLR$&
			\multicolumn{3}{l}{ function class of low-tubal-rank parameterized t-NNs  and  adversarial version, resp.}\\
			\midrule
			\multicolumn{4}{l}{\textit{Notations for implicit bias analysis (Sec.~\ref{subsec:ImplicitLTR})}}\\
			$\margin_i$,~$\margin_m$, $\marginSmooth$&
			\multicolumn{3}{l}{example robust, sample robust, and smoothly normalized robust margin, resp. }\\
			$\ibf,\ibg,\ibfb,\ibgb,K$ &\multicolumn{3}{l}{auxillary functions and constants to chareterize $\loss(\cdot,\cdot)$ (See Assumption \ref{assumption:ImplicitBias:Loss})}\\
			\midrule
			\multicolumn{4}{l}{\textit{Notations for the analysis of apprximately transformed low-rank parameterized models (Sec.~\ref{subsec:AprxLTR})}}\\
			$\radComplexityAverage_{N}$,~$\radComplexityEmp_S$, $\radComplexityLocal_{\symbolRadius}$&
			\multicolumn{3}{l}{ average, empirical, and localized Rademacher complexity, resp. }\\
			$\hypoClassTnnApxLR$,~$\hypoClassAdvApxLR$&
			\multicolumn{3}{l}{ function class of nearly low-tubal-rank parameterized t-NNs  and  adv. version, resp.}\\
			$\funcLTwoNormEmp{f}$,~$\funcLTwoNormExp{f}$&
			\multicolumn{3}{l}{ empirical $L_2$-norm on sample $S$ and population $L_2$-norm of a function $f$, resp.}\\			
   $\boldsymbol{\varepsilon}=\{\varepsilon_i\}_{i=1}^N$
&\multicolumn{3}{l}{\iid Rademacher variables, i.e., $\varepsilon_i$ equals to $1$ or $-1$ with equal probability}\\
			\bottomrule
		\end{tabular}
	\label{table:Notations}
\end{table}
\newpage
\subsection{Additional Preliminaries of t-SVD}
We give  additional notions and propositions about t-SVD omitted in the main body  
of the paper.
\begin{definition} [\cite{Kernfeld2015_LAA_LinearTransofrm}]
	The t-transpose of $\tT\in\R{m\times n\times \nChannel}$ under the $M$ transform in Eq.~(\ref{eq:defL}), denoted by $\tT^\top$,  satisfies 
	$$
	M(\tT^\top)_{:,:,k}= (M(\tT)_{:,:,k})^\top,~ k=1,\cdots,\nChannel.
	$$ 
\end{definition}

\begin{definition} [\cite{Kernfeld2015_LAA_LinearTransofrm}]
	\label{def:tsvd:identitytensor}
	The  t-identity tensor $\tI\in\mathbb{R}^{m\times m\times \nChannel}$ under $M$ transform in Eq.~(\ref{eq:defL}) is the tensor such that each frontal slice of $M(\tI)$ is a $\nChannel\times \nChannel$ identity matrix,i.e, $$M(\tI)_{:,:,k}=\bI,~ k=1,\cdots, \nChannel.$$
\end{definition}
Given the appropriate dimensions, it is trivial to verify that $\tT \tprod\tI = \tT$ and $\tI \tprod \tT = \tT$. 

\begin{definition} [\cite{Kernfeld2015_LAA_LinearTransofrm}]
	A tensor $\tQ\in\mathbb{R}^{d\times d\times d_3}$ is  t-orthogonal under $M$ transform in Eq.~(\ref{eq:defL}) if it satisfies 
	\bea
	\nonumber
	\tQ^\top \tprod \tQ=\tQ\tprod\tQ^\top=\tI.
	\eea 
\end{definition}
\begin{definition}[\cite{Kilmer2013_SiamMatAA_ThirdOrderTensor}]
	A tensor is called f-diagonal if all its frontal slices are diagonal matrices. 
\end{definition}

\begin{definition} [Tensor t-spectral norm \cite{LuCY2019_CVPR_LinearTransform}]
	\label{def:LTNN}
	The tensor t-spectral norm  of any tensor $\tT$ under $M$ transform in Eq.~(\ref{eq:defL}) is defined as the matrix spectral norm of its $M$-block-diagonal matrix $\LBar{\tT}$, i.e.,
	\bea
	\nonumber
	\tensorSpecNorm{\tT}:=\specnorm{\LBar{\bT}}.
	\eea 
\end{definition}

\begin{lemma}
\label{lemma:t-SVD:opW}
For any t-matrix $\tW\in\R{m\times n \times \nChannel}$ and t-vector $\tx\in\R{n\times 1\times c}$, the t-product $\tW\tprod\tx$ defined under $M$ transform in Eq.~(\ref{eq:defL})
is equivalent to a linear operator $\opOfWeight{\tW}$ on $\opTUnfold(\tx)$ in the orginal domain defined as follows
\bea
\label{eq:def:opW}
\opOfWeight{\tW}(\tx)=(\bM^{-1}\otimes\bI_{m})\bigg[\opTBdiag\big((\bM\otimes\bI_{m})\opTUnfold(\tW)\big)(\bM\otimes\bI_{n})\bigg]\opTUnfold(\tx),
\eea 
where $\otimes$ denotes the Kronecker product, and the operations of $\opTUnfold(\tW)$ and $\opTUnfold(\tx)$ are given explicitly as follows
\bea
\nonumber
\opTUnfold(\tW)
=
\begin{bmatrix}
\tW_{:,:,1}\\
\tW_{:,:,2}\\
\vdots\\
\tW_{:,:,\nChannel}
\end{bmatrix}\in\R{m \nChannel \times n},
\quad
\opTUnfold(\tx)
=
\begin{bmatrix}
\tx_{:,1,1}\\
\tx_{:,1,2}\\
\vdots\\
\tx_{:,1,\nChannel}
\end{bmatrix}\in\R{n\nChannel}.
\eea 
\end{lemma}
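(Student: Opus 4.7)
The plan is to establish the claim by unfolding both sides of the defining identity $\tW\tprod\tx = M^{-1}\bigl(M(\tW)\odot M(\tx)\bigr)$ into vectors/matrices and tracking how each of the three tensor operations (the mode-3 transform $M$, the frontal-slice-wise product $\odot$, and the inverse transform $M^{-1}$) translates into standard matrix algebra under the stacking convention of $\opTUnfold(\cdot)$.

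First, I would prove the key Kronecker identity
\bea
\nonumber
\opTUnfold\bigl(\tT\times_3\bM\bigr)=(\bM\otimes\bI_{p})\,\opTUnfold(\tT)
\eea
for any $\tT\in\R{p\times q\times \nChannel}$. This follows directly from the definition of the mode-3 product: the $k$-th frontal slice of $\tT\times_3\bM$ is $\sum_{l}\bM_{k,l}\tT_{:,:,l}$, which is exactly the $k$-th block (of height $p$) of $(\bM\otimes\bI_{p})\opTUnfold(\tT)$. Applying this to $\tW$ gives $\opTUnfold(M(\tW))=(\bM\otimes\bI_m)\opTUnfold(\tW)$, and applying it to $\tx$ gives $\opTUnfold(M(\tx))=(\bM\otimes\bI_n)\opTUnfold(\tx)$.

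Next I would translate $\odot$ into block-diagonal multiplication. Since $(M(\tW)\odot M(\tx))_{:,:,k}=M(\tW)_{:,:,k}\,M(\tx)_{:,:,k}$, stacking frontal slices yields
\bea
\nonumber
\opTUnfold\bigl(M(\tW)\odot M(\tx)\bigr)=\opTBdiag\bigl(M(\tW)\bigr)\,\opTUnfold(M(\tx)).
\eea
Then, re-interpreting $\opTBdiag(\cdot)$ as acting on the stacked unfolding (each block of height $m$ in $(\bM\otimes\bI_m)\opTUnfold(\tW)$ is exactly the $k$-th frontal slice of $M(\tW)$), we may write $\opTBdiag(M(\tW))=\opTBdiag\bigl((\bM\otimes\bI_m)\opTUnfold(\tW)\bigr)$.

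Finally, I would apply the Kronecker identity in the reverse direction for $M^{-1}$, obtaining
\bea
\nonumber
\opTUnfold(\tW\tprod\tx)=(\bM^{-1}\otimes\bI_m)\,\opTBdiag\bigl((\bM\otimes\bI_m)\opTUnfold(\tW)\bigr)(\bM\otimes\bI_n)\,\opTUnfold(\tx),
\eea
which is precisely the asserted formula for $\opOfWeight{\tW}(\tx)$. The bulk of the argument is bookkeeping; the only step that requires care is the Kronecker identity for the mode-3 product, because the paper's $\opTUnfold$ stacks frontal slices (a mode-$3$-style unfolding) rather than using the conventional mode-$3$ matricization, so one must verify that the induced identity uses $\bM\otimes\bI$ (not $\bI\otimes\bM$) with the correct block size on each side. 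Once that indexing is pinned down, the remaining equalities follow by direct substitution.
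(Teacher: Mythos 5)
Your proof is correct. The paper does not actually prove this lemma---it declares Eq.~(\ref{eq:def:opW}) to be a straightforward reformulation of the definition of the t-product (citing Newman et al.) and omits the argument---and your three-step bookkeeping (the identity $\opTUnfold(\tT\times_3\bM)=(\bM\otimes\bI_p)\opTUnfold(\tT)$, the translation of $\odot$ into block-diagonal multiplication, and the inverse transform) is precisely the verification that this reformulation requires, with the $\bM\otimes\bI$ ordering correctly matched to the frontal-slice stacking convention of $\opTUnfold$.
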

 Since Eq.~(\ref{eq:def:opW}) is a straightforward reformulation of the definition of t-product in \citep[Definition 6.3]{Newman2018_StabletDNNs}, the proof is simply omitted. 

According to Lemma \ref{lemma:t-SVD:opW}, we have the following remark on the relationship between t-NNs and fully connected neural networks (FNNs).
\begin{remark}[Connnection with FNNs] The t-NNs and FNNs can be treated as special cases of each other. 
	\begin{itemize}
		\item[(I)] When the channel number $\nChannel=1$, the t-product becomes to standard matrix multi-lication and the proposed t-NN predictor Eq.~(\ref{eq:binaryClassificationModel}) degenerates to an $(L+1)$-layer   FNN, which means the FNN is a special case of the t-NN. 
		\item[(II)] On the other hand, by the definition of t-product, the t-NN $f(\cdot;\allW)$  in Eq.~(\ref{eq:binaryClassificationModel}) has the compounding representation as an FNN:
		\bea
  \nonumber
		f_{\allW}=\bw\circ \actFunc \circ \opOfWeight{\tWeightOfLayer{L}}\circ 
		\actFunc \circ \opOfWeight{\tWeightOfLayer{L-1}}\circ \cdots\circ\actFunc\circ\opOfWeight{\tWeightOfLayer{1}}.
		\eea 
		Thus, t-NN can also be seen as a special case of FNN.  
	\end{itemize}
\end{remark}


\section{Standarad and Adversarial Generalization Bounds for t-NNs}

\subsection{Standarad Generalization Bound for t-NNs}
\begin{lemma}
	\label{lemma:peeling-inside-exp}
	Consider the ReLU activation. 
	For any t-vector-valued function set  $\calH$ and any convex and monotonically
	increasing function $g:\R{} \rightarrow[0,\infty)$,
	\bea
	\nonumber
 \E_{\boldsymbol{\varepsilon}}\left[\sup_{\bh\in \calH,
		\tW:\tensorFNorm{\tW}\le
		R}g\left(\tensorFNorm{\sum_{i=1}^{N}\varepsilon_i\sigma(\tW\tprod \bh(\tx_i))} \right)\right]
	\le 
	2\E_{\boldsymbol{\varepsilon}}\left[\sup_{\bh\in \calH
	}g\left(R\tensorFNorm{\sum_{i=1}^{N}\varepsilon_i \bh(\tx_i)} \right)\right],
	\eea 
 where $R>0$ is a constant. 
\end{lemma}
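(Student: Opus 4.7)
The plan is to reduce this t-product peeling inequality to the classical peeling lemma of Golowich-Rakhlin-Shamir for fully connected layers, by flattening the t-product into an ordinary matrix-vector product in the unfolded domain.

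First I would invoke Lemma~\ref{lemma:t-SVD:opW} to rewrite $\tW\tprod\bh(\tx_i)$ in unfolded form. Set $\bH_i:=\opTUnfold(\bh(\tx_i))\in\R{n\nChannel}$ and $\bA(\tW):=(\bM^{-1}\otimes\bI_m)\LBar{\bW}(\bM\otimes\bI_n)\in\R{m\nChannel\times n\nChannel}$, so that $\opTUnfold(\tW\tprod\bh(\tx_i))=\bA(\tW)\bH_i$. Since $\bM$ is orthogonal (hence so is $\bM\otimes\bI$) and $\LBar{\bW}$ is block-diagonal with blocks $M(\tW)_{:,:,k}$, a direct trace computation gives $\|\bA(\tW)\|_F=\fnorm{\LBar{\bW}}=\tensorFNorm{M(\tW)}=\tensorFNorm{\tW}$. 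Because $\opTUnfold$ is an isometry for the Frobenius/Euclidean norm and entry-wise ReLU commutes with $\opTUnfold$ (both act coordinatewise in the original domain), one obtains the identity $\tensorFNorm{\sum_i\varepsilon_i\sigma(\tW\tprod\bh(\tx_i))}=\|\sum_i\varepsilon_i\sigma(\bA(\tW)\bH_i)\|_2$. Moreover, $\{\bA(\tW):\tensorFNorm{\tW}\le R\}\subseteq\{\bA\in\R{m\nChannel\times n\nChannel}:\|\bA\|_F\le R\}$, so enlarging the supremum to range over all matrices $\bA$ with $\|\bA\|_F\le R$ can only increase the left-hand side of the claim.

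Second, I would invoke (or reprove in one paragraph) the classical matrix-valued peeling lemma: for $1$-Lipschitz positive-homogeneous $\sigma$ and convex nondecreasing $g\ge 0$, $\E_{\boldsymbol\varepsilon}\sup_{\|\bA\|_F\le R,\,\bh\in\calH} g(\|\sum_i\varepsilon_i\sigma(\bA\bH_i)\|_2) \le 2\,\E_{\boldsymbol\varepsilon}\sup_{\bh\in\calH} g(R\|\sum_i\varepsilon_i\bH_i\|_2)$. The standard argument decomposes $\|\sum_i\varepsilon_i\sigma(\bA\bH_i)\|_2^2=\sum_j(\sum_i\varepsilon_i\sigma(\bA_j^\top \bH_i))^2$, uses positive homogeneity of ReLU to concentrate the entire F-norm budget of $\bA$ on a single row (reducing the inner quantity to $\sup_{\|\bw\|\le R}|\sum_i\varepsilon_i\sigma(\bw^\top \bH_i)|$), applies a sign-flip symmetrization of $\boldsymbol\varepsilon$ to drop the absolute value at cost $2$, uses the Ledoux-Talagrand contraction to peel off the $1$-Lipschitz $\sigma$, and concludes by Cauchy-Schwarz $\sup_{\|\bw\|\le R}\bw^\top\sum_i\varepsilon_i\bH_i=R\|\sum_i\varepsilon_i\bH_i\|_2$. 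Translating back via the isometry $\|\sum_i\varepsilon_i\bH_i\|_2=\tensorFNorm{\sum_i\varepsilon_i\bh(\tx_i)}$ recovers the claimed right-hand side.

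The main technical point requiring care is pinning down the two structural identifications used in the first step: that entry-wise ReLU commutes with $\opTUnfold$ (immediate, since both are coordinatewise in the original domain), and that $\|\bA(\tW)\|_F=\tensorFNorm{\tW}$ despite the Kronecker-product conjugation by the orthogonal factor $\bM\otimes\bI$ (a short computation using $\bM\bM^\top=\bI$). Once these are established, the t-product peeling is just the standard FNN peeling in disguise, with the extra channel dimension $\nChannel$ absorbed into the unfolded dimensions $m\nChannel$ and $n\nChannel$.
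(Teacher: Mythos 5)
Your proposal is correct and is essentially the paper's own argument: the paper proves this lemma as a direct corollary of Lemma 1 of Golowich--Rakhlin--Shamir by invoking the operator reformulation of the t-product in Eq.~(\ref{eq:def:opW}), which is precisely your reduction via $\bA(\tW)$ and $\opTUnfold$. Your write-up merely makes explicit the details the paper leaves implicit (the isometry of unfolding, the identity $\fnorm{\bA(\tW)}=\tensorFNorm{\tW}$ under the orthogonal conjugation, and the commutation of entrywise ReLU with unfolding), all of which are verified correctly.
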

\begin{proof}
This lemma is a direct corollary of Lemma 1 in
Ref.~\cite{Golowich2018_COLT_SizeIndependentSampleComplexity} by using Eq.~\eqref{eq:def:opW}.
\end{proof}

\begin{proof}[Proof of Lemma \ref{theorem:SGP-full}] According to Lemma \ref{lemma:appendix:TechnicalLemmas:GAPviaRadComp}, we can upper bound the generalization error of $\loss\circ f$ for any $f\in\hypoClassTnn$ through the (empirical) Rademacher complexity $\radComplexityEmp_{S}(\loss\circ\hypoClassTnn)$ where  $\loss\circ\hypoClassTnn:=\{\loss\circ f~|~f\in\hypoClassTnn\}$. Further regarding the $\lipLoss$-Lipschitzness\footnote{This is a natural consequence of  \textbf{\textit{(A.2)}} in Assumption \ref{assumption:ImplicitBias:Loss}. See Eq.~(\ref{eq:lossIsLipsch}).} of the loss function $\loss$, we  have 
$
\radComplexityEmp_{S}(\loss\circ\hypoClassTnn)\le\lipLoss\radComplexityEmp_{S}(\hypoClassTnn)
$
by the Talagrand's contraction lemma (Lemma \ref{lemma:appendix:TechnicalLemmas:TalarandContraction4Radermacher}). Then, it remains to bound $\radComplexityEmp_{S}(\hypoClassTnn)$.

To upper bound $\radComplexityEmp_{S}(\hypoClassTnn)$, 
we follow the proof of \citep[Theorem 1]{Golowich2018_COLT_SizeIndependentSampleComplexity}. By Jensen's inequality, the (scaled) Rademacher complexity $N\radComplexityEmp_S(\hypoClassTnn)=\E_{\boldsymbol{\varepsilon}}\sup_{f\in\hypoClassTnn}\sum_{i=1}^N\varepsilon_if(\tx_i)$  satisfies
\bea
\frac{1}{\lambda}\log\exp\left(\lambda\cdot\E_{\boldsymbol{\varepsilon}}\sup_{f\in\hypoClassTnn}\sum_{i=1}^N\varepsilon_if(\tx_i)\right)\le\frac{1}{\lambda}\log\left(\E_{\boldsymbol{\varepsilon}}\sup_{f\in\hypoClassTnn}\exp\lambda\sum_{i=1}^N\varepsilon_if(\tx_i)\right),
\eea
where $\lambda> 0$ is an arbitrary parameter. Then, we can use  a “peeling” argument \citep{Neyshabur2015_COLT_NormBasedControl,Golowich2018_COLT_SizeIndependentSampleComplexity} as follows. 

The Rademacher complexity can be upper bounded as
	\bea
	\label{eq:prf:Theorem:SGP:0}
	N\radComplexityEmp_S(\hypoClassTnn)
	&=\E_{\boldsymbol{\varepsilon}}\sup_{\bff^{(L)}, \twonorm{\bw}\le B_{w}}
	\sum_{i=1}^N\varepsilon_i\bw^{\top}\bff^{(L)}(\tx_i)
	\\
	&\le 
	\frac{1}{\lambda}\log\E_{\boldsymbol{\varepsilon}}
	\sup_{\substack{
			\twonorm{\bw}\le B_{w}\\
			\bff^{(L-1)}, \tensorFNorm{\tWeightOfLayer{L}}\le B_L
	}}
	\exp\left(\lambda\sum_{i=1}^N\varepsilon_i\bw^{\top}
	\actFunc\bigg(\tWeightOfLayer{L}\tprod \bff^{(L-1)}(\tx_i)\bigg)
	\right)\\
	&\le\frac{1}{\lambda}\log\E_{\boldsymbol{\varepsilon}}
	\sup_{\substack{
			\bff^{(L-1)}\\\tensorFNorm{\tWeightOfLayer{L}}\le B_L
	}}
	\exp\left(B_{w}\lambda\tensorFNorm{\sum_{i=1}^N\varepsilon_i
		\actFunc\bigg(\tWeightOfLayer{L}\tprod \bff^{(L-1)}(\tx_i)\bigg)}
	\right)\\
	&\le\frac{1}{\lambda}\log\left(2\cdot\E_{\boldsymbol{\varepsilon}}
	\sup_{\substack{
			\bff^{(L-2)}\\ \tensorFNorm{\tWeightOfLayer{L-1}}\le B_{L-1}
	}}
	\exp\left(B_{w}B_{L}\lambda\tensorFNorm{\sum_{i=1}^N\varepsilon_i\bigg(\tWeightOfLayer{L-1}\tprod \bff^{(L-2)}(\tx_i)\bigg)}
	\right)\right)\\
	&\le \cdots\\
	&\le\frac{1}{\lambda}\log\left(2^{L-1}\cdot\E_{\boldsymbol{\varepsilon}}
	\sup_{\substack{
			\tensorFNorm{\tWeightOfLayer{1}}\le B_{1}
	}}
	\exp\left(B_{w}\prod_{l=2}^L B_l\lambda\tensorFNorm{\sum_{i=1}^N\varepsilon_i\bigg(\tWeightOfLayer{1}\tprod \bff^{(0)}(\tx_i)\bigg)}
	\right)\right)\\
	&\le\frac{1}{\lambda}\log\left(2^{L}\cdot\E_{\boldsymbol{\varepsilon}}
	\exp\left(B_{w}\prod_{l=1}^L B_l\lambda\tensorFNorm{\sum_{i=1}^N\varepsilon_i  \tx_i}
	\right)\right).
	\eea
	Letting $\boundNormProduct=B_{w}\prod_{l=1}^L B_l$, define a random variable 
	\bea
	\nonumber
Z=\boundNormProduct\cdot\tensorFNorm{\sum_{i=1}^N\varepsilon_i  \tx_i}
	\eea 
	as a function of random variables $\{\varepsilon_i\}_i$.
	Then 
	\bea
	\label{eq:prf:Theorem:SGP:1}
	\frac{1}{\lambda}\log\left(2^L\cdot\E\exp\lambda Z\right)
	=
	\frac{L\log2}{\lambda}
	+\frac{1}{\lambda}\log\big(\E\exp\lambda (Z-\E Z)\big) + \E Z.
	\eea 
	By Jensen's inequlity, $\E Z$ can be upper bounded by
	\bea
	\nonumber
	\E Z
	&=\boundNormProduct\E\tensorFNorm{\sum_{i=1}^N\varepsilon_i  \tx_i}
	\le\boundNormProduct\sqrt{\E\tensorFNorm{\sum_{i=1}^N\varepsilon_i  \tx_i}^2}
	=\boundNormProduct\sqrt{\E_{\boldsymbol{\varepsilon}}
		\left[ \sum_{i,j=1}^{N}\varepsilon_i \varepsilon_j (\opVec(\tx_i))^{\top}(\opVec(\tx_j))\right]}\\
	&=\boundNormProduct\sqrt{\sum_{i=1}^{N}\tensorFNorm{\tx_i}^2}.
	\eea 
	To handle the $\log\big(\E\exp\lambda (Z-\E Z)\big)$ term in Eq.~(\ref{eq:prf:Theorem:SGP:1}) , note that $Z$ is a deterministic function of the \iid random variables $\{\varepsilon_i\}_i$, and satisfies 
	\bea
	\nonumber
	Z(\varepsilon_1,\cdots,\varepsilon_i,\cdots,\varepsilon_m)
	-
	Z(\varepsilon_1,\cdots,-\varepsilon_i,\cdots,\varepsilon_m)
	\le 2\boundNormProduct\tensorFNorm{\tx_i}.
	\eea 
	This means that $Z$ satisfies a bounded-difference condition , which by the proof of \citep[Theorem 6.2]{Boucheron2013_Book_ConcentrationInequality}, implies that $Z$ is sub-Gaussian, with the variance factor
	\bea
	\nonumber
	v=\frac{1}{4} (2\boundNormProduct\tensorFNorm{\tx_i})^2=\boundNormProduct^2\sum_{i=1}^N\tensorFNorm{\tx_i}^2.
	\eea 
	and satisfies 
	\bea
	\nonumber
	\frac{1}{\lambda}\log\big(\E\exp\lambda (Z-\E Z)\big) 
	\le 
	\frac{1}{\lambda} \frac{\lambda^2\boundNormProduct^2\sum_{i=1}^N\tensorFNorm{\tx_i}^2}{2}
	=
	\frac{\lambda\boundNormProduct^2\sum_{i=1}^N\tensorFNorm{\tx_i}^2}{2}.
	\eea 
	Choosing $\lambda=\frac{\sqrt{2L\log2}}{\boundNormProduct\sqrt{\sum_{i=1}^N\tensorFNorm{\tx_i}^2}}$ and using the above inequality, we get that Eq.~(\ref{eq:prf:Theorem:SGP:0}) can be upper bounded as follows
	\bea
	\nonumber
	\frac{1}{\lambda}\log\left(2^L\cdot\E\exp\lambda Z\right)
	&\le \E Z + \sqrt{2L\log2}\boundNormProduct\sqrt{\sum_{i=1}^N\tensorFNorm{\tx_i}^2}\\
	&\le (\sqrt{2L\log2}+1)\boundNormProduct\sqrt{\sum_{i=1}^N\tensorFNorm{\tx_i}^2}.
	\eea 
	Further applying Lemma \ref{lemma:appendix:TechnicalLemmas:GAPviaRadComp} completes the proof. 
\end{proof}
\subsection{Adversarial Generalization Bound for t-NNs}
\begin{proof}[{Proof of Theorem \ref{theorem:AGP-full}}] According to Theorem 2 and Eq.~(4) in Ref.~\cite{Awasthi2020_ICML_AdvLearningGuarantee}, the adversarial generalization gap of $\loss\circ f$ for any $f\in\hypoClassTnn$ with $\lipLoss$-Lipschitz continuous loss function $\loss$ satisfying Assumption \ref{assumption:ImplicitBias:Loss} can be upper bounded by $\lipLoss\radComplexityEmp_S(\hypoClassAdv)$, where $\radComplexityEmp_S(\hypoClassAdv)$ is the empirical Rademacher complexity of the adversarial version $\hypoClassAdv$ of the function set $\hypoClassTnn$ defined as follows
\bea
\label{eq:hypoClassAdv}
\hypoClassAdv:=\{\tilde{f}:(\tx,y)\mapsto
\min_{\tensorLpNorm{\tx-\tx'}\le\xi}yf(\tx')~\big|~f\in\hypoClassTnn\}.
\eea 
To bound $\radComplexityEmp_S(\hypoClassAdv)$, we use the Dudley's inequality (Lemma \ref{lemma:appendix:TechnicalLemmas:DudleyInequality}) which requires to compute the covering number of  $\hypoClassAdv$.

Let $\Set{C}_l$ be the $\delta_l$-covering of $\{\tWeightOfLayer{l}~|~\tensorFNorm{\tWeightOfLayer{l}}\le B_l\},~\forall l=1,\cdots, L$. Consider the following subset of $\hypoClassTnn$ whose t-matrix weights are all in $\Set{C}_l$:
	\bea
	\nonumber
	\hypoClassTnn_{c}:=\left\{f_c:\tx\mapsto f_{\mathcal{W}_c}(\tx)~|~\mathcal{W}_c=(\bw,\tWeightOfLayer{1}_c,\cdots,\tWeightOfLayer{L}_c),~\tWeightOfLayer{l}_c\in\Set{C}_l\right\}
	\eea 
	with adversarial version
	\bea
	\nonumber
	\hypoClassTnn_{c}^{\textnormal{adv}}:=\left\{\tilde{f}_c:(\tx,y)\mapsto \inf_{\tensorLpNorm{\tx-\tx'}\le \xi}yf_c(\tx')~|~f_c\in \hypoClassTnn_{c}\right\}.
	\eea 
	For all $\tilde{f}\in\hypoClassAdv$, we need to find the smallest distance to $\hypoClassTnn_{c}^{\textnormal{adv}}$, i.e. we need to calculate 
	\bea
	\nonumber
	\sup_{\tilde{f}\in\hypoClassAdv}\inf_{\tilde{f}_c\in \hypoClassTnn_{c}^{\textnormal{adv}}}\funcLTwoNormEmp {\tilde{f}-\tilde{f}_c}.
	\eea 
	
	For all $(\tx_i,y_i)\in S$, given $\tilde{f}$ and $\tilde{f}_c$ with $\tensorFNorm{\tWeightOfLayer{l}-\tWeightOfLayer{l}_c}\le \delta_l,~l=1,\cdots,L$, consider
	\bea
	\nonumber
	|\tilde{f}(\tx_i,y_i)-\tilde{f}_c(\tx_i,y_i)|
	&=\left|\inf_{\tensorLpNorm{\tx-\tx'}\le \xi}y_if(\tx_i')-\inf_{\tensorLpNorm{\tx-\tx'}\le \xi}y_if_c(\tx_i')\right|.
	\eea 
	
	Letting $\tx^f_i=\arginf{\tensorLpNorm{\tx_i-\tx'_i}\le\xi}y_if(\tx_i')$ and  $\tx^c_i=\arginf{\tensorLpNorm{\tx_i-\tx'_i}\le\xi}y_if_c(\tx_i')$, we have 
	\bea
	\nonumber
	|\tilde{f}(\tx_i,y_i)-\tilde{f}_c(\tx_i,y_i)|=|y_if(\tx^f_i)-y_if_c(\tx^c_i)|.
	\eea 
	
	Let 
	\bea
	\nonumber
	\tx_i^{\xi}
	=\left\{
	\begin{aligned}
		\tx^c_i&\quad\textnormal{if}\quad y_if(\tx^f_i)\ge y_if_c(\tx^c_i)\\
		\tx^f_i&\quad \textnormal{if}\quad y_if(\tx^c_i) <  y_if_c(\tx^c_i)
	\end{aligned}
	\right..
	\eea 
	Then, 
	\bea
	\nonumber
	|\tilde{f}(\tx_i,y_i)-\tilde{f}_c(\tx_i,y_i)|=|y_if(\tx^f_i)-y_if_c(\tx^g_i)|
	\le|y_if(\tx_i^{\xi})-y_if_c(\tx_i^{\xi})|=|f(\tx_i^{\xi})-f_c(\tx_i^{\xi})|.
	\eea 
	Let	$g_l(\tx^\delta)=\bw^{\top}\opVec\bigg(\actFunc(\tWeightOfLayer{L}_c\tprod\actFunc(\tWeightOfLayer{L-1}_c\tprod\cdots\tprod\actFunc(\tWeightOfLayer{l+1}_c\tprod\actFunc(\tWeightOfLayer{l}\tprod\cdots\tprod\actFunc(\tWeightOfLayer{1}\tprod\tx^{\delta})\cdots)))\bigg)$ and $g_0(\tx^{\delta})=f_c(\tx^{\delta})$.
	Then, we have 
	\bea
	\nonumber
	|f(\tx_i^{\xi})-f_c(\tx_i^{\xi})|\le\sum_{l=1}^{L}|g_l(\tx_i^{\xi})-g_{l-1}(\tx_i^{\xi})|.
	\eea 
	We can see that 
	\bea
	\nonumber
\tensorFNorm{\actFunc(\tWeightOfLayer{l}\tprod\cdots\tprod\actFunc(\tWeightOfLayer{1}\tprod\tx_i^{\xi})\cdots))}
	&\le \prod_{l'=1}^{l}\tensorFNorm{\tWeightOfLayer{l'}}\tensorFNorm{\tx_i^{\xi}}\le \prod_{l'=1}^{l}\weightNormBoundOfLayer{l}\boundSignalError,
	\eea 
	and 
	\bea
&\tensorFNorm{\actFunc(\tWeightOfLayer{l+1}_c\tprod\actFunc(\tWeightOfLayer{l}\tprod\cdots\tprod\actFunc(\tWeightOfLayer{1}\tprod\tx_i^{\xi})\cdots))-\actFunc(\tWeightOfLayer{l+1}\tprod\actFunc(\tWeightOfLayer{l}\tprod\cdots\tprod\actFunc(\tWeightOfLayer{1}\tprod\tx_i^{\xi})\cdots))}\\
	&\le \tensorFNorm{\tWeightOfLayer{l+1}_c\tprod\actFunc(\tWeightOfLayer{l}\tprod\cdots\tprod\actFunc(\tWeightOfLayer{1}\tprod\tx_i^{\xi})\cdots)-\tWeightOfLayer{l+1}\tprod\actFunc(\tWeightOfLayer{l}\tprod\cdots\tprod\actFunc(\tWeightOfLayer{1}\tprod\tx_i^{\xi})\cdots)}\\
	&\le \tensorFNorm{(\tWeightOfLayer{l+1}_c-\tWeightOfLayer{l+1})\tprod\actFunc(\tWeightOfLayer{l}\tprod\cdots\tprod\actFunc(\tWeightOfLayer{1}\tprod\tx_i^{\xi})\cdots)}\\
	&\le
	\tensorFNorm{\tWeightOfLayer{l+1}_c-\tWeightOfLayer{l+1}}\tensorFNorm{\actFunc(\tWeightOfLayer{l}\tprod\cdots\tprod\actFunc(\tWeightOfLayer{1}\tprod\tx_i^{\xi})\cdots)}\\
	&\le
	\delta_l\cdot \prod_{l'=1}^{l-1}\weightNormBoundOfLayer{l'}\boundSignalError.
	\eea 
	Then, we have 
	\bea
	\nonumber
	|g_l(\tx_i^{\xi})-g_{l-1}(\tx_i^{\xi})|
	\le
	\delta_l\cdot B_{\bw}\prod_{l'\ne l}\weightNormBoundOfLayer{l'}\boundSignalError=\frac{\delta_l\boundFAdv}{\weightNormBoundOfLayer{l}},
	\eea 
 where $\boundFAdv$ is given in Lemma \ref{lemma:appendix:FNormBoundOnAdvExample}.
	
	Letting $\frac{\delta_l\boundFAdv}{\weightNormBoundOfLayer{l}}=\frac{\epsilon}{L}$ it gives
	\bea
	\nonumber
	\sup_{\tilde{f}\in\hypoClassAdv}\inf_{\tilde{f}_c\in \hypoClassTnn_{c}^{\textnormal{adv}}}\funcLTwoNormEmp {\tilde{f}-\tilde{f}_c}\le 
	\sum_{l=1}^{L-1}|g_l(\tx_i^{\xi})-g_{l-1}(\tx_i^{\xi})|
	&\le \epsilon.
	\eea 
	Then, $\hypoClassTnn_{c}^{\textnormal{adv}}$ is an $\epsilon$-covering of $\hypoClassTnn^{\textnormal{adv}}$.
	We further proceed by computing the $\epsilon$-covering number of  $\hypoClassAdv$ as follows:
	\bea
	\nonumber
	\coverNum(\hypoClassAdv,\funcLTwoNormEmp {\cdot},\epsilon)\le |\hypoClassTnn_{c}^{\textnormal{adv}}|=\prod_{l=1}^L|\Set{C}_l|\le \prod_{l=1}^L\left(\frac{3\weightNormBoundOfLayer{l}}{\delta_l}\right)^{c d_{l-1} d_l}\le \left(\frac{3L\boundFAdv}{\epsilon}\right)^{\sum_{l=1}^Lc d_{l-1} d_l}.
	\eea 
where the second inequality is due to Lemma \ref{lemma:appendix:TechnicalLemmas:CovNum4LpNormBall}.

	Then, we use Dudley’s integral in Lemma \ref{lemma:appendix:TechnicalLemmas:DudleyInequality} to upper bound $\radComplexityEmp_S(\hypoClassAdv)$ as follows
	\bea
	\nonumber
	\radComplexityEmp_S(\hypoClassAdv)
	&\le \inf_{\delta>0} \left(8\delta + \frac{12}{\sqrt{N}}\int_{\delta}^{\diameterHypoClassAdv/2}\sqrt{\log\coverNum(\hypoClassAdv,\funcLTwoNormEmp {\cdot},\epsilon)}\symbolDiff\epsilon\right)\\
	&\le \inf_{\delta>0} \left(8\delta + \frac{12}{\sqrt{N}}\int_{\delta}^{\diameterHypoClassAdv/2}
	\sqrt{\big(\sum_{l=1}^Lc d_{l-1} d_l\big)\log\big(3L\boundFAdv/(\epsilon)\big)}\symbolDiff\epsilon\right)\\
	&= \inf_{\delta>0} \left(8\delta + \frac{12\diameterHypoClassAdv\sqrt{\sum_{l=1}^Lc d_{l-1} d_l}}{\sqrt{N}}\int_{\delta/\diameterHypoClassAdv}^{1/2}
	\sqrt{\log\big(3L/(2t)\big)}\symbolDiff t\right),
	\eea 
 where the diameter $\diameterHypoClassAdv$ of $\hypoClassAdv$ is given in Lemma \ref{lemma:appendix:AGP:DiameterOftNNs} and we can find from Lemma \ref{lemma:appendix:AGP:DiameterOftNNs} that $\diameterHypoClassAdv=2\boundFAdv$ in our setting.
	
	Following Ref.~\citep{XiaoJC2022_AdvRadComplexity}, let $\delta\rightarrow0$, and use integration by part, we obtain
	$\int_{0}^{1/2}
	\sqrt{\log\big(3L/(2t)\big)}\symbolDiff t\le \sqrt{\log 3L}$.
Hence, we have
	\bea
	\nonumber
	\radComplexityEmp_S(\hypoClassAdv)\le \frac{24 B_{\bw}\prod_{l=1}^L B_l\bigg(\sqrt{\sum_{l=1}^Lc d_{l-1} d_l}\bigg)\boundSignalError }{\sqrt{N}},
	\eea 
	and the proof can be completed by using Lemma \ref{lemma:appendix:TechnicalLemmas:GAPviaRadComp}.
\end{proof}
\subsection{Generalization Bound  under  Exact Low-tubal-rank Parameterization}
\begin{proof} [Proof of Theorem \ref{theorem:AGP-Low-rank}] The idea is similar to the proof of Theorem \ref{theorem:AGP-full}. According to Theorem 2 and Eq.~(4) in Ref.~\cite{Awasthi2020_ICML_AdvLearningGuarantee}, the adversarial generalization gap of $\loss\circ f$ for any $f\in\hypoClassTnnLR$ with $\lipLoss$-Lipschitz continuous loss function $\loss$ satisfying Assumption \ref{assumption:ImplicitBias:Loss} can be upper bounded by $\lipLoss\radComplexityEmp_S(\hypoClassAdvLR)$, where $\radComplexityEmp_S(\hypoClassAdvLR)$ is the empirical Rademacher complexity of the adversarial version $\hypoClassAdvLR$ of function set $\hypoClassTnnLR$ defined as follows
\bea
\label{eq:hypoClassAdvLR}
\hypoClassAdvLR:=\{\tilde{f}:(\tx,y)\mapsto
\min_{\tensorLpNorm{\tx-\tx'}\le\xi}yf(\tx')~\big|~f\in\hypoClassTnnLR\}.
\eea 
To bound $\radComplexityEmp_S(\hypoClassAdvLR)$, we first use the Dudley's inequality (Lemma \ref{lemma:appendix:TechnicalLemmas:DudleyInequality}) and compute the covering number of  $\hypoClassAdvLR$.

Let $\Set{C}_l$ be the $\delta_l$-covering of $\{\tWeightOfLayer{l}~|~\tensorFNorm{\tWeightOfLayer{l}}\le B_l~\textnormal{and}~\tRank(\tWeightOfLayer{l})\le r_l\},~\forall l=1,\cdots, L$. Consider the following subset of $\hypoClassTnnLR$ whose t-matrix weights are all in $\Set{C}_l$:
	\bea
 \nonumber
	\hypoClassTnn_{c}:=\left\{f_c:\tx\mapsto f(\tx;\tW_c)~|~\tW_{c}=(\bw,\tWeightOfLayer{1}_c,\cdots,\tWeightOfLayer{L}_c),~\tWeightOfLayer{l}_c\in\Set{C}_l\right\}
	\eea 
	with adversarial version
	\bea
 \nonumber
	\hypoClassTnn_{c}^{\textnormal{adv}}:=\left\{\tilde{f}_c:(\tx,y)\mapsto \inf_{\tensorLpNorm{\tx-\tx'}\le \xi}yf_c(\tx')~|~f_c\in \hypoClassTnn_{c}\right\}.
	\eea 
	
	For all $\tilde{f}\in\hypoClassAdvLR$, we need to find the smallest distance to $\hypoClassTnn_{c}^{\textnormal{adv}}$, i.e. we need to calculate 
	\bea
	\nonumber
 \sup_{\tilde{f}\in\hypoClassAdvLR}\inf_{\tilde{f}_c\in \hypoClassTnn_{c}^{\textnormal{adv}}}\funcLTwoNormEmp {\tilde{f}-\tilde{f}_c}.
	\eea 
	
	For all $(\tx_i,y_i)\in S$, given $\tilde{f}$ and $\tilde{f}_c$ with $\tensorFNorm{\tWeightOfLayer{l}-\tWeightOfLayer{l}_c}\le \delta_l,~l=1,\cdots,L$, consider
	\bea
 \nonumber
	|\tilde{f}(\tx_i,y_i)-\tilde{f}_c(\tx_i,y_i)|
	&=\left|\inf_{\tensorLpNorm{\tx-\tx'}\le \xi}y_if(\tx_i')-\inf_{\tensorLpNorm{\tx-\tx'}\le \xi}y_if_c(\tx_i')\right|.
	\eea 
	
	Letting $\tx^f_i=\arginf{\tensorLpNorm{\tx_i-\tx'_i}\le\xi}y_if(\tx_i')$ and  $\tx^c_i=\arginf{\tensorLpNorm{\tx_i-\tx'_i}\le\xi}y_if_c(\tx_i')$, we have 
	\bea
 \nonumber
	|\tilde{f}(\tx_i,y_i)-\tilde{f}_c(\tx_i,y_i)|=|y_if(\tx^f_i)-y_if_c(\tx^c_i)|.
	\eea 
	
	Let 
	\bea
 \nonumber
	\tx_i^{\xi}
	=\left\{
	\begin{aligned}
		\tx^c_i&\quad\textnormal{if}\quad y_if(\tx^f_i)\ge y_if_c(\tx^c_i)\\
		\tx^f_i&\quad \textnormal{if}\quad y_if(\tx^c_i) <  y_if_c(\tx^c_i)
	\end{aligned}
	\right.
	\eea 
	Then, 
	\bea
 \nonumber
	|\tilde{f}(\tx_i,y_i)-\tilde{f}_c(\tx_i,y_i)|=|y_if(\tx^f_i)-y_if_c(\tx^g_i)|
	\le|y_if(\tx_i^{\xi})-y_if_c(\tx_i^{\xi})|=|f(\tx_i^{\xi})-f_c(\tx_i^{\xi})|.
	\eea 
	Let	$g_l(\tx^\delta)=\bw^{\top}\opVec\bigg(\actFunc(\tWeightOfLayer{L}_c\tprod\actFunc(\tWeightOfLayer{L-1}_c\tprod\cdots\tprod\actFunc(\tWeightOfLayer{l+1}_c\tprod\actFunc(\tWeightOfLayer{l}\tprod\cdots\tprod\actFunc(\tWeightOfLayer{1}\tprod\tx^{\delta})\cdots)))\bigg)$ and $g_0(\tx^{\delta})=f_c(\tx^{\delta})$.
	Then, we have 
	\bea
 \nonumber
	|f(\tx_i^{\xi})-f_c(\tx_i^{\xi})|\le\sum_{l=1}^{L}|g_l(\tx_i^{\xi})-g_{l-1}(\tx_i^{\xi})|.
	\eea 
	We can see that 
	\bea
\nonumber 
\tensorFNorm{\actFunc(\tWeightOfLayer{l}\tprod\cdots\tprod\actFunc(\tWeightOfLayer{1}\tprod\tx_i^{\xi})\cdots))}
	&\le \prod_{l'=1}^{l}\tensorFNorm{\tWeightOfLayer{l'}}\tensorFNorm{\tx_i^{\xi}}\le \prod_{l'=1}^{l}\weightNormBoundOfLayer{l}\boundSignalError
	\eea 
	and 
	\bea
	\nonumber
&\tensorFNorm{\actFunc(\tWeightOfLayer{l+1}_c\tprod\actFunc(\tWeightOfLayer{l}\tprod\cdots\tprod\actFunc(\tWeightOfLayer{1}\tprod\tx_i^{\xi})\cdots))-\actFunc(\tWeightOfLayer{l+1}\tprod\actFunc(\tWeightOfLayer{l}\tprod\cdots\tprod\actFunc(\tWeightOfLayer{1}\tprod\tx_i^{\xi})\cdots))}\\
	&\le \tensorFNorm{\tWeightOfLayer{l+1}_c\tprod\actFunc(\tWeightOfLayer{l}\tprod\cdots\tprod\actFunc(\tWeightOfLayer{1}\tprod\tx_i^{\xi})\cdots)-\tWeightOfLayer{l+1}\tprod\actFunc(\tWeightOfLayer{l}\tprod\cdots\tprod\actFunc(\tWeightOfLayer{1}\tprod\tx_i^{\xi})\cdots)}\\
	&\le \tensorFNorm{(\tWeightOfLayer{l+1}_c-\tWeightOfLayer{l+1})\tprod\actFunc(\tWeightOfLayer{l}\tprod\cdots\tprod\actFunc(\tWeightOfLayer{1}\tprod\tx_i^{\xi})\cdots)}\\
	&\le
	\tensorFNorm{\tWeightOfLayer{l+1}_c-\tWeightOfLayer{l+1}}\tensorFNorm{\actFunc(\tWeightOfLayer{l}\tprod\cdots\tprod\actFunc(\tWeightOfLayer{1}\tprod\tx_i^{\xi})\cdots)}\\
	&\le
	\delta_l\cdot \prod_{l'=1}^{l-1}\weightNormBoundOfLayer{l'}\boundSignalError.
	\eea 
	Then, we have 
	\bea
 \nonumber
	|g_l(\tx_i^{\xi})-g_{l-1}(\tx_i^{\xi})|
	\le
	\delta_l\cdot B_{\bw}\prod_{l'\ne l}\weightNormBoundOfLayer{l'}\boundSignalError=\frac{\delta_l\boundFAdv}{\weightNormBoundOfLayer{l}}.
	\eea 
	
	Letting $\frac{\delta_l\boundFAdv}{\weightNormBoundOfLayer{l}}=\frac{\epsilon}{L}$ gives
	\bea
	\nonumber
 \sup_{\tilde{f}\in\hypoClassAdvLR}\inf_{\tilde{f}_c\in \hypoClassTnn_{c}^{\textnormal{adv}}}\funcLTwoNormEmp {\tilde{f}-\tilde{f}_c}\le 
	\sum_{l=1}^{L-1}|g_l(\tx_i^{\xi})-g_{l-1}(\tx_i^{\xi})|
	&\le \epsilon.
	\eea 
	Then, $\hypoClassTnn_{c}^{\textnormal{adv}}$ is an $\epsilon$-covering of $\hypoClassAdvLR$.
	We further proceed by computing the $\epsilon$-covering number of  $\hypoClassAdvLR$ as follows:
	\bea
 \nonumber
	\coverNum(\hypoClassAdvLR,\funcLTwoNormEmp {\cdot},\epsilon)\le |\hypoClassTnn_{c}^{\textnormal{adv}}|=\prod_{l=1}^L|\Set{C}_l|
\overset{(i}{\le}
 \prod_{l=1}^L\left(\frac{9\weightNormBoundOfLayer{l}}{\delta_l}\right)^{\nChannel r_l(d_{l-1} +d_l+1)}\le \left(\frac{9L\boundFAdv}{\epsilon}\right)^{\sum_{l=1}^L \nChannel r_l(d_{l-1} +d_l+1)},
	\eea 
	where the inequality $(i)$ holds due to Lemma \ref{lemma:TechnicalLemmas:CoverNumLowRankTensors}.
	
	Then, we use Dudley’s integral to upper bound $\radComplexityEmp_S(\hypoClassAdvLR)$ as follows
	\bea
 \nonumber
	\radComplexityEmp_S(\hypoClassAdvLR)
	&\le \inf_{\delta>0} \left(8\delta + \frac{12}{\sqrt{N}}\int_{\delta}^{\diameterHypoClassAdv/2}\sqrt{\log\coverNum(\hypoClassAdvLR,\funcLTwoNormEmp {\cdot},\epsilon)}\symbolDiff\epsilon\right)\\
	&\le \inf_{\delta>0} \left(8\delta + \frac{12}{\sqrt{N}}\int_{\delta}^{\diameterHypoClassAdv/2}
	\sqrt{\sum_{l=1}^L\nChannel r_l(d_{l-1} +d_l+1)\log\big(9L\boundFAdv/\epsilon\big)}\symbolDiff\epsilon\right)\\
	&= \inf_{\delta>0} \left(8\delta + \frac{12\diameterHypoClassAdv\sqrt{\sum_{l=1}^L\nChannel r_l(d_{l-1} +d_l+1)}}{\sqrt{N}}\int_{\delta/\diameterHypoClassAdv}^{1/2}
	\sqrt{\log\big(9L/(2t)\big)}\symbolDiff t\right),
	\eea 
 where the diameter $\diameterHypoClassAdv$ of $\hypoClassAdv$ is given in Lemma \ref{lemma:appendix:AGP:DiameterOftNNs} and we have $\diameterHypoClassAdv=2\boundFAdv$.
Following Ref.~\citep{XiaoJC2022_AdvRadComplexity}, let $\delta\rightarrow0$, and use interation by part, we obtain$\int_{0}^{1/2}\sqrt{\log\big(9L/(2t)\big)}\symbolDiff t\le \sqrt{\log 9L}.$ Further applying Lemma \ref{lemma:appendix:TechnicalLemmas:GAPviaRadComp} completes the proof. 
\end{proof}
\section{Implicit bias towards low-rankness in the transformed domain}

 
Recent research has shown that GF maximizes the margin of homogeneous networks during standard training, which leads to an implicit bias towards margin maximization \citep{LyuKaifeng2020_ICLR_GradientDescentMaximizesMargionOfHomogenousNet, JiZW_2020_NeurIPS_DirectionalConvergence}. Moreover, it has been demonstrated that this implicit bias also extends to adversarial margin maximization during adversarial training of multi-homogeneous fully connected neural networks with exponential loss \citep{LyuBC2022_ICLR_ImplicitBiasAdvTraining}. Our analysis builds on these findings by showing that this implicit bias also holds for adversarial training of t-NNs when the adversarial perturbation is scale invariant \citep{LyuBC2022_ICLR_ImplicitBiasAdvTraining}.

First, it is straightforward to see that any t-NN $f\in\hypoClassTnn$ is homogeneous as follows
\bea
\label{eq:ImplicitBias:HomogeneousFunction}
f(\tx; a\tW)=a^{L+1}f(\tx;\tW),
\eea 
for any positive constant $a$. 
\begin{lemma}[Euler’s theorem on t-NNs]\label{lemma:ImplicitBias:HomogeneousFunction}
For any t-NN $f\in\hypoClassTnn$, we have 
	\bea
 \label{eq:euler}
	\Dot{\pXpW{f(\tx;\tW)}}{\tW}
	&= (L+1)f(\tx;\tW).
	\eea 
\end{lemma}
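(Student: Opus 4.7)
The plan is to prove the statement as a direct application of Euler's homogeneous function theorem, leveraging the $(L+1)$-homogeneity of $f(\tx;\tW)$ in $\tW$ already established in Eq.~\eqref{eq:ImplicitBias:HomogeneousFunction}. The basic idea is to differentiate the identity $f(\tx; a\tW) = a^{L+1} f(\tx; \tW)$ with respect to the scalar $a$, evaluate at $a = 1$, and identify the resulting directional derivative in the direction $\tW$ with the inner product $\langle \partial f/\partial \tW,\tW\rangle$.

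More concretely, I would first fix an arbitrary input $\tx$ and view the right-hand side $a \mapsto a^{L+1} f(\tx;\tW)$ as a smooth scalar function of $a > 0$ whose derivative at $a = 1$ is $(L+1) f(\tx;\tW)$. For the left-hand side, I would apply the chain rule to the composition $a \mapsto f(\tx; a\tW)$: writing the weights collectively as $\tW = (\tW^{(1)},\dots,\tW^{(L)},\bw)$, the derivative with respect to $a$ equals the sum over all weight tensors of the inner products of the partial derivatives with the corresponding weight, evaluated at the scaled point $a\tW$. Evaluating at $a = 1$ yields exactly $\langle \partial f(\tx;\tW)/\partial \tW, \tW \rangle$, giving Eq.~\eqref{eq:euler}.

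The only technical subtlety is the non-smoothness introduced by the ReLU activation $\sigma$, so strictly speaking $f$ is not $C^1$ in $\tW$ everywhere. I would handle this exactly as elsewhere in the paper, interpreting $\partial f/\partial \tW$ in the Clarke subdifferential sense and using the fact that ReLU is locally Lipschitz and positively homogeneous of degree one; the composition of positively homogeneous and linear maps preserves the overall $(L+1)$-homogeneity of $f$ in $\tW$, and a standard generalization of Euler's theorem for positively homogeneous, locally Lipschitz functions (see, e.g., the Clarke calculus used in the proof sketch of Lemma~\ref{lemma:ImplicitBias:ConvergeToKKT}) still gives $\langle v, \tW\rangle = (L+1) f(\tx;\tW)$ for every $v \in \ClarkeDiff_{\tW} f(\tx;\tW)$.

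I expect the main obstacle to be purely notational rather than conceptual: carefully tracking the chain rule across the recursive t-product layer structure so that the derivative identity assembles into the single inner product $\langle \partial f/\partial \tW, \tW \rangle$ rather than a per-layer sum. Once the homogeneity in Eq.~\eqref{eq:ImplicitBias:HomogeneousFunction} is in hand and the Clarke-subdifferential version of Euler's theorem is invoked, the proof is essentially a one-line differentiation argument.
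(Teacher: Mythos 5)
Your proposal is correct and follows essentially the same route as the paper: differentiate the homogeneity identity $f(\tx;a\tW)=a^{L+1}f(\tx;\tW)$ with respect to $a$, apply the chain rule, and evaluate at $a=1$ to obtain Eq.~(\ref{eq:euler}). Your additional remark about handling ReLU non-smoothness via the Clarke subdifferential is a sensible refinement that the paper's one-line proof leaves implicit, but it does not change the argument.
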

\begin{proof}
By taking derivatives with respect to $a$ on both sides of Eq.~(\ref{eq:ImplicitBias:HomogeneousFunction}), we obtain
\bea
\nonumber
\Dot{\pXpX{f(\tx;a\tW)}{ (a\tW)}}{\frac{\symbolDiff (a\tW)}{\symbolDiff a}}
	&= (L+1)a^{L}f(\tx;\tW),
\eea 
which immediately results in Eq.~(\ref{eq:euler}).
\end{proof}

In this section, we follow the setting of Ref.~\cite{LyuBC2022_ICLR_ImplicitBiasAdvTraining} where the adversarial perturbation $\pertAdv_i$ is scale invariant.
As Lemma \ref{lemma:ImplicitBias:ScaleInvariantPerterbation} shows, $l_2$-FGM \citep{Miyato2017_ICLR_FGM}, FGSM \citep{Goodfellow2015_ICLR_FGSM},  $l_2$-PGD and $l_\infty$-PGD \citep{Madry2018_ICLR_PGD} perturbations for the t-NNs are all scale invariant. 
\begin{proof}[Proof of Lemma \ref{lemma:ImplicitBias:ScaleInvariantPerterbation}] 
	Note that by taking derivatives with respect to $\tx$ on both sides of  Eq.~(\ref{eq:ImplicitBias:HomogeneousFunction}), we have 
	\bea
	\label{eq:ImplicitBias:HomogeneousPartialFunction}
	\pXpX{f(\tx;a\tW)}{\tx}=a^{L+1}\pXpX{f(\tx;\tW)}{\tx},
	\eea 
	Therefore, any $\pXpX{f(\tx;\tW)}{\tx}$ is positive homogeneous. Then, for any non-zero $\tz=\pXpX{f(\tx;\tW)}{\tx}$, we prove Lemma \ref{lemma:ImplicitBias:ScaleInvariantPerterbation} in the following cases: 
	\begin{itemize}
		\item $l_2$-FGM perturbtion \citep{Miyato2017_ICLR_FGM}. The $l_2$-FGM perturbtion  is defined as 
		$
		\pertAdv_{\textnormal{FGM}}(\allW)
		={\xi y {\lossAdv' \tz}}{\tensorFNorm{y \lossAdv' \tz}^{-1}}
		=-{\xi y \tz}{\tensorFNorm{\tz}^{-1}},
		$
		because $\lossAdv'\le 0$. Using Eq.~(\ref{eq:ImplicitBias:HomogeneousPartialFunction}), we have 
		\bea
		\nonumber
		\pertAdv_{\textnormal{FGM}}(a\tW)
		=-\frac{\xi y\cdot a^{L+1}\tz}{\tensorFNorm{a^{L+1}\tz}}
		=\pertAdv_{\textnormal{FGM}}(\tW).
		\eea 
		\item FGSM perturbations \citep{Goodfellow2015_ICLR_FGSM}. 
		The FGSM  perturbtion is taken as 
		$
		\pertAdv_{\textnormal{FSGM}}(\allW)
		=\xi\sign(\xi y {\lossAdv' \tz}).
		$ 
		Using Eq.~(\ref{eq:ImplicitBias:HomogeneousPartialFunction}), we have 
		\bea
		\nonumber
		\pertAdv_{\textnormal{FSGM}}(a\tW)
		=\xi\sign(\xi y {\lossAdv' a^{L+1}\tz})
		=\pertAdv_{\textnormal{FSGM}}(\tW).
		\eea 
		\item $l_2$-PGD perturbation \citep{Madry2018_ICLR_PGD}.
		The $l_2$-PGD  perturbtion is taken as 
		\bea
		\pertAdv^{j+1}_{\textnormal{PGD}}(\allW)
		=\mathfrak{P}_{\mathbb{B}_{2}(0,\xi)}
		\left[
		\pertAdv^{j}_{\textnormal{PGD}}(\allW)-\rho \frac{\xi y \tz}{\tensorFNorm{\tz}}
		\right],
		\eea 
		where $j$ is the attack step,  $\mathfrak{P}_{\mathbb{B}_{2}(\xi)}$ is the projector onto $l_2$-norm ball of radius $\xi$,  and $\rho$ is the learning rate.
We prove by induction. For $j=0$, we have 
		\bea
		\nonumber
		\pertAdv^{1}_{\textnormal{PGD}}(a\tW)&=\mathfrak{P}_{\mathbb{B}_{2}(0,\xi)}
		\left[
		-\rho \frac{\xi y a^{L+1}\tz}{\tensorFNorm{a^{L+1}\tz}}
		\right]
		=\pertAdv^{1}_{\textnormal{PGD}}(\tW).
		\eea 
		If we have $\pertAdv^{j}_{\textnormal{PGD}}(a\tW)=\pertAdv^{j}_{\textnormal{PGD}}(\tW)$, then for $j+1$, we have 
		\bea
		\pertAdv^{j+1}_{\textnormal{PGD}}(a\tW)
		&=\mathfrak{P}_{\mathbb{B}_{2}(0,\xi)}
		\left[
		\pertAdv^{j}_{\textnormal{PGD}}(a\tW)
		-\rho \frac{\xi y a^{L+1}\tz}{\tensorFNorm{a^{L+1}\tz}}
		\right]\\
		&=\mathfrak{P}_{\mathbb{B}_{2}(0,\xi)}
		\left[
		\pertAdv^{j}_{\textnormal{PGD}}(\tW)
		-\rho \frac{\xi y \tz}{\tensorFNorm{\tz}}
		\right]\\
		&=\pertAdv^{j+1}_{\textnormal{PGD}}(\tW).
		\eea 
		\item $l_\infty$-PGD perturbation \citep{Madry2018_ICLR_PGD}. Since the scale invariance of this pertubation can be proved very similarly to that of $l_2$-PGD perturbations, we just omit it. 
	\end{itemize}
\end{proof}

For an original example $\tx_i$, the margin for its adversarial example $\pertSample{i}$ is defined as 
$\margin_i(\allW):=y_if(\pertSample{i};\allW)$;  for sample $S=\{(\tx_i,y_i)\}_{i=1}^N$, the margin for the $N$ corresponding examples is denoted by $\margin_m(\allW)$ where $m\in\argmin{m=1,\cdots,N}y_if(\pertSample{i}:\allW)$. 

Let $\rho=\tensorFNorm{\allW}$ for simplicity. We use the normalized parameter $\allWNormal:=\allW/\rho$ to denote the direction of the weights $\tW$.
We introduce the normalized margin of  $(\tx_i,y_i)$  as $\marginNormal_i(\tW):=
\margin_i(\allWNormal)=\margin_i(\allW)\rho^{-(L+1)}$, and similarly define the normalized robust margin of the sample $S$ as $\marginNormal_m(\tW):=
\margin_m(\allWNormal)=\margin_m(\allW)\rho^{-(L+1)}$. 

Note that the adversarial empirical risk can be written as $\eRiskAdv=\frac{1}{N}\sum_{i=1}^{N}e^{-\ibf(\margin_i(\tW))}$. 
Motivated by \cite{LyuKaifeng2020_ICLR_GradientDescentMaximizesMargionOfHomogenousNet} which uses the LogSumExp function to smoothly approximate the normalized standard margin, we define the smoothed normalized margin $\marginSmooth$ as follows.
\begin{definition}[Smoothed normalized robust margin]
	For a loss function\footnote{
		In this paper, the loss function satisfying Assumption \ref{assumption:ImplicitBias:Loss} belongs to the class of 
		 \emph{margin-based loss function} \citep[Definition 2.24]{Steinwart2008_Book_SVM}. That is, although the loss function $\loss(f(\tx),y)$ is a binary function of $f(\tx)$ and $y$, there is a unary function $\mathfrak{l}(\cdot)$, such that $\loss(f(\tx),y)=\mathfrak{l}(yf(\tx))$. With a slight abuse of notation, we simply use $\loss^{-1}(\cdot)$ to denote $\mathfrak{l}^{-1}(\cdot)$, i.e., if $z=\loss(f(\tx),y)$ then we have $\loss^{-1}(z)=yf(\tx)$.} $\loss$ satisfying  Assumption \ref{assumption:ImplicitBias:Loss},  the  smoothed normalized robust margin is defined as
	\bea
	\label{eq:ib:normalizedMarginSmooth}
	\marginSmooth(\tW)
	:=\frac{\loss^{-1}(N\eRiskAdv)}{\rho^{L+1}}
	=\frac{\ibg(\frac{1}{N\eRiskAdv})}{\rho^{L+1}}
	=\frac{\ibg\left(-\log\left(\sum_{i=1}^{N}e^{-\ibf(\margin_i(\tW))}\right)\right)}{\rho^{L+1}}.
	\eea 
\end{definition}
To better understand the relation between the normalized sample robust margin $\marginNormal_m$ and the smoothed normalized robust margin $\marginSmooth$, we provide the following lemma.
\begin{lemma}[Adapted from Lemma A.5 of Ref.~\cite{LyuKaifeng2020_ICLR_GradientDescentMaximizesMargionOfHomogenousNet}]
	\label{lemma:ib:A.5}
	Under Assumption \ref{assumption:ImplicitBias:Loss}, we have the following properties about the robust marin $\margin_m$:
	\begin{itemize}
		\item [(a)] $ \ibf(\margin_m)-\log N\le \lgiL \le \ibf(\margin_m)$.
		\item [(b)] If $\lgiL>\ibf(\ibfb)$, then there exists $\xi\in \left(\ibf (\margin_m)-\log N, \ibf(\margin_m)\right)\cap (\ibfb,\infty)$ such that 
		\bea
		\nonumber
		\marginNormal_m-\rho^{-(L+1)}\ibg'(\xi)\log N
		\le \marginSmooth \le \marginNormal_m,
		\eea 
		which shows the smoothed normalized margin $\marginSmooth$ is a rough approximation of the normalized robust margin $\marginNormal_m$.
		\item [(c)] For a sequence $\{\tW_s ~|~ s\in\mathbb{N}\}$, if $\eRiskAdv(\tW_s)\rightarrow 0$, 
		then 
		$\left|\marginSmooth(\tW_s)-\marginNormal_m(\tW_s)\right|\rightarrow 0$. 
	\end{itemize}
\end{lemma}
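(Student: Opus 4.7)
The plan is to prove the three parts in order, treating (a) as a direct bracketing of the soft-min by the hard-min, (b) as a mean value theorem application on $\ibg$, and (c) as an asymptotic argument combining (b) with the regularity of $\ibg'$ from Assumption~\ref{assumption:ImplicitBias:Loss}\textit{\textbf{(A.5)}}.

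For part (a), the key observation is monotonicity: since $\ibf'\ge 0$ (Assumption \textit{\textbf{(A.3)}}) and $\margin_m=\min_i\margin_i$, we have $\ibf(\margin_m)\le\ibf(\margin_i)$ for every $i$, hence $e^{-\ibf(\margin_m)}\ge e^{-\ibf(\margin_i)}$. Summing gives the sandwich $e^{-\ibf(\margin_m)}\le N\eRiskAdv\le Ne^{-\ibf(\margin_m)}$, the left inequality coming from keeping only the $i=m$ term and the right from replacing every term by the largest one. Taking $\log(1/\cdot)$ on both ends yields (a) directly.

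For part (b), I would apply the inverse function $\ibg$ to the chain in (a). Since $\log\tfrac{1}{N\eRiskAdv}>\ibf(\ibfb)$, all three quantities in (a) lie in the domain $[\ibf(\ibfb),+\infty)$ where $\ibg$ is defined and monotone. Hence $\ibg(\log\tfrac{1}{N\eRiskAdv})\le\ibg(\ibf(\margin_m))=\margin_m$, so dividing by $\rho^{L+1}$ gives the upper bound $\marginSmooth\le\marginNormal_m$. For the lower bound, I would invoke the mean value theorem for $\ibg$ between the points $\log\tfrac{1}{N\eRiskAdv}$ and $\ibf(\margin_m)$, producing $\xi$ in the open interval $(\ibf(\margin_m)-\log N,\ibf(\margin_m))\cap(\ibfb,+\infty)$ such that
\begin{equation*}
\margin_m-\ibg\!\left(\log\tfrac{1}{N\eRiskAdv}\right)
=\ibg'(\xi)\,\bigl(\ibf(\margin_m)-\log\tfrac{1}{N\eRiskAdv}\bigr)\le\ibg'(\xi)\log N,
\end{equation*}
where the last inequality used (a). Dividing by $\rho^{L+1}$ and substituting the definition of $\marginSmooth$ yields the desired two-sided bound.

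For part (c), under $\eRiskAdv(\tW_s)\to 0$ we first note $\log\tfrac{1}{N\eRiskAdv(\tW_s)}\to\infty$, so the hypothesis of (b) eventually holds and by (a) $\ibf(\margin_m(\tW_s))\to\infty$, which via \textit{\textbf{(A.4)}} forces $\margin_m(\tW_s)\to\infty$. Applying (b) gives $|\marginSmooth-\marginNormal_m|\le\rho^{-(L+1)}\ibg'(\xi_s)\log N$ with $\xi_s\to\infty$. The main obstacle is converting this bound into a true decay to $0$: the argument I have in mind is to use Assumption \textit{\textbf{(A.5)}} iteratively (applying $\ibg'(x)\le K\ibg'(\theta x)$ with $\theta=1/2$ repeatedly, or integrating the analogous inequality for $\ibf'$) to establish that $\xi\,\ibg'(\xi)/\ibg(\xi)$ is bounded on $(\ibgb,\infty)$; this is the standard way assumption \textit{\textbf{(A.5)}} is used in the homogeneous-network implicit-bias literature. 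Combined with $\ibg(\xi_s)\le\ibf^{-1}(\ibf(\margin_m))\cdot(1+o(1))\approx\margin_m$ and the trivial bound $\marginNormal_m\le\tensorFNorm{\tx_i}B_w\prod_l B_l/\rho$ when $\rho$ stays bounded (which would contradict $\margin_m\to\infty$, so in fact $\rho\to\infty$), one concludes $\rho^{-(L+1)}\ibg'(\xi_s)\log N\to 0$, completing (c). This last regularity step is where the bulk of the technical work sits and mirrors the proof of Lemma A.5 in \cite{LyuKaifeng2020_ICLR_GradientDescentMaximizesMargionOfHomogenousNet}, with no genuinely new ingredient required beyond invoking Assumption~\ref{assumption:ImplicitBias:Loss}\textit{\textbf{(A.4)}}--\textit{\textbf{(A.5)}}.
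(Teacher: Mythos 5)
The paper itself gives no proof of this lemma (it is quoted as adapted from Lemma A.5 of Lyu--Li and used as imported machinery, like Lemmas \ref{lemma:ib:B.5} and \ref{lemma:ib:D.1}), so you are supplying details rather than paralleling an in-paper argument. Your parts (a) and (b) are correct and are the standard route: the sandwich $e^{-\ibf(\margin_m)}\le N\eRiskAdv\le Ne^{-\ibf(\margin_m)}$ from monotonicity of $\ibf$, then monotonicity of $\ibg$ for the upper bound $\marginSmooth\le\marginNormal_m$ and the mean value theorem for $\ibg$ on $[\lgiL,\ibf(\margin_m)]$ for the lower bound (the hypothesis $\lgiL>\ibf(\ibfb)$ guarantees this interval lies in the domain of $\ibg$ and that $\margin_m>\ibfb$, so $\ibg(\ibf(\margin_m))=\margin_m$).

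Part (c) has the right skeleton but its concluding step does not work as written. The inequality you invoke, $\marginNormal_m\le\tensorFNorm{\tx_i}B_w\prod_l B_l/\rho$, is malformed: the normalized margin is $\margin_m(\allW)\rho^{-(L+1)}$, so the scaling must go through all $L+1$ layers, and along the GF trajectory the weights are not confined to the fixed-norm class $\hypoClassTnn$, so the constants $B_w,B_l$ are not available; moreover the aside ``$\rho$ bounded would contradict $\margin_m\to\infty$, hence $\rho\to\infty$'' does not by itself make $\rho^{-(L+1)}\ibg'(\xi_s)\log N$ vanish. The missing ingredient is the uniform bound on the \emph{normalized} margin: by $(L{+}1)$-homogeneity of $f$, scale invariance of $\pertAdv_i$, and continuity on the unit sphere (or an AM--GM bound on the product of layer norms), $\marginNormal_m\le B_0:=\sup\{\margin_i(\tW)\,:\,\tensorFNorm{\tW}=1\}<\infty$. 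With this, the argument closes exactly as in Lyu--Li: your regularity claim for (A.5) is precisely Lemma \ref{lemma:ib:D.1}, giving $\ibg'(\xi_s)\le 2K\ibg(\xi_s)/\xi_s$; monotonicity gives $\ibg(\xi_s)\le\margin_m$ since $\xi_s\le\ibf(\margin_m)$; hence $\rho^{-(L+1)}\ibg'(\xi_s)\log N\le 2K\marginNormal_m\log N/\xi_s\le 2KB_0\log N/\xi_s\to0$, because $\xi_s\ge\lgiL\to\infty$. Note also that neither (A.4) nor $\margin_m\to\infty$ nor $\rho\to\infty$ is actually needed for (c); only $\xi_s\to\infty$ and the boundedness of $\marginNormal_m$.
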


\subsection{Convergence to KKT points of Euclidean norm minimization in direction}
The KKT condition for the optimization problem Eq.~(\ref{eq:ImplicitBias:ConvergeToKKT})  are 
\bea
\label{eq:appendix:ib:kkt}
\tW-\sum_{i=1}^N \lambda_i\pXpW{\marginNormal_{i}}
&=\bzero,\\
\lambda_i(\margin_i-1)&=0,\quad  i=1,\cdots,N,
\eea 
where the dual variables $\lambda_i\ge0$. 
We  define the approximate KKT point in a similar manner to Ref.~\cite{LyuBC2022_ICLR_ImplicitBiasAdvTraining} as follows.
\begin{definition}[Approximate KKT points]
	The $(\kappa,\iota)$-approximate KKT points of the
	optimization problem   are those feasible points which satisfy the following two conditions:
	\bea
	\label{eq:appendix:ib:apx:kkt}
	&\textnormal{Condition (I):}\qquad	&\tensorFNorm{\tW-\sum_{i=1}^N \lambda_i\pXpW{\margin_{i}}}&\le \kappa,\\
	&\textnormal{Condition (II):}\qquad		&\lambda_i(\margin_i-1)&\le \iota, ~~ i=1,\cdots,N,
	\eea 
	where $\kappa,\iota>0$ and $\lambda_i\ge0$. 
\end{definition}


\begin{proof} [Proof of Lemma \ref{lemma:ImplicitBias:ConvergeToKKT}]
	Let  $\check{\tW}:=\tW/\margin_m^{\frac{1}{L+1}}$ denote the scaled version of $\tW(t)$ such that the sample robust margin $\margin_m=1$.  Thus we have $f(\tx;\tW)=\margin_m f(\tx;\check{\tW})$ by homogeneity of t-NNs. According to Lemma \ref{lemma:ImplicitBias:HomogeneousFunction}, we further have 
	\bea
	\nonumber
	(L+1)f(\tx;\tW)=\Dot{\tW}{\pXpW{f(\tx;\tW)}},\quad (L+1)f(\tx;\check{\tW})=\Dot{\check{\tW}}{\pXpX{f(\tx;\check{\tW})}{\check{\tW}}},
	\eea 
	leading to
	\bea
 \nonumber
	\Dot{\check{\tW}}{\pXpX{f(\tx;\check{\tW})}{\check{\tW}}}=\frac{1}{\marginNormal_m^{L/(L+1)}}\pXpW{f(\tx;\tW)}.
	\eea 
	We will prove that $\check{\tW}$ is a $(\kappa,\iota)$-KKT point of Problem (\ref{eq:ImplicitBias:ConvergeToKKT}) with $(\kappa,\iota)\rightarrow\bzero$. 
	
	Let  $\veloW(t):=\dXdT{\tW(t)}$ for simplicity. By the chain rule and GF update rule, we have 
	\bea
 \nonumber
	\veloW=\frac{1}{N}\sum_{i=1}^Ne^{-\ibf(\margin_i)}\ibf'(\margin_i)\pXpW{{\margin_i}}.
	\eea
	Using the homogeneity of t-NNs, we obtain 
	\bea
 \nonumber
	\frac{1}{2}\dXdT{\tensorFNorm{\tW}^2}=\Dot{\veloW}{\tW}=\Dot{\frac{1}{N}\sum_{i=1}^Ne^{-\ibf(\margin_i)}\ibf'(\margin_i)\pXpW{{\margin_i}}}{\tW}=(L+1)\cdot\frac{1}{N}\sum_{i=1}^Ne^{-\ibf(\margin_i)}\ibf'(\margin_i)\margin_i.
	\eea 
	By letting $\nu(t)=\sum_{i=1}^Ne^{-\ibf(\margin_i)}\ibf'(\margin_i)\margin_i$,  we obtain $\Dot{\veloW}{\tW}=(L+1)\nu/N$.

	We construct the dual variables $\lambda_i$ in Problem (\ref{eq:appendix:ib:apx:kkt})  in terms of $\veloW$ as follows 
	\bea
	\label{eq:ib:lambda}
	\lambda_i(t):=\frac{1}{N}\margin^{1-2/(L+1)}_m \cdot \rho e^{-\ibf(\margin_i)}\ibf'(\margin_i)\tensorFNorm{\veloW}^{-1}.
	\eea 
	To prove $\check{\tW}$ is a $(\kappa,\iota)$-KKT point of Problem (\ref{eq:ImplicitBias:ConvergeToKKT}), we need to check the conditions in Problem (\ref{eq:appendix:ib:apx:kkt}).
 
	\textbf{Step 1: Check Condition (I) of the $(\kappa,\iota)$-approximate KKT conditions}. We check the Condition (I) in Problem (\ref{eq:appendix:ib:apx:kkt}) for all $t>t_0$ as follows
	\bea
	\label{eq:ib:c.8.x}
	\tensorFNorm{\check{\tW}-\sum_{i=1}^{N}\lambda_i\pXpX{\margin_i(\check{\tW})}{\check{\tW}}}^2
	&\overset{(i)}{=}\tensorFNorm{\frac{\tW}{\margin_m^{{1}/{(L+1)}}}-\frac{\veloW}{\tensorFNorm{\veloW}}\cdot \frac{\rho}{\margin_m^{{1}/{(L+1)}}}}^2\\
	&=\frac{\rho^2}{\margin_m^{2/{(L+1)}}}\tensorFNorm{\frac{\tW}{\tensorFNorm{\tW}}-\frac{\veloW}{\tensorFNorm{\veloW}} }^2\\
	&\overset{(ii)}{=} \frac{1}{ \marginSmooth^{2/(L+1)}}\left(2-2\Dot{\frac{\tW}{\tensorFNorm{\tW}}}{\frac{\veloW}{\tensorFNorm{\veloW}}}\right)
	\\
	& \overset{(iii)}{\le} \frac{2}{\marginSmooth(t_0)^{2/(L+1)}}\left(1-\Dot{\frac{\tW}{\tensorFNorm{\tW}}}{\frac{\veloW}{\tensorFNorm{\veloW}}}\right)\\
	&:=\kappa^2(t),
	\eea 
	where equality $(i)$ is obtained by using  the definition that $\check{\tW}=\tW/\margin_m^{\frac{1}{L+1}}$ , the fact  $\margin_i(\tW)=y_if(\tx_i+\pertAdv_i(\tW);\tW)=y_i\cdot \margin_mf(\tx_i+\pertAdv_i(\check{\tW});\check{\tW})=\margin_i(\check{\tW})$ due to the scale invariance of the adversarial perturbation $\pertAdv_i(\tW)$ and the homogeneity of t-NN, and the chain rule in computing $\pXpX{\margin_i}{\tW}$ as follows
	\bea
	\nonumber
 \sum_{i=1}^{N}\lambda_i\pXpX{\margin_i(\check{\tW})}{\check{\tW}}
	=\sum_{i=1}^{N}\frac{\lambda_i}{\marginNormal_m^{\frac{L}{L+1}}}\pXpX{\margin_i}{\tW}
	=\tensorFNorm{\veloW}^{-1} \frac{\rho}{\margin_m^{\frac{L}{L+1}}}\cdot\frac{1}{N}\sum_{i=1}^{N}{ e^{-\ibf(\margin_i)}\ibf'(\margin_i)}\pXpX{\margin_i}{\tW}
	=\frac{\veloW}{\tensorFNorm{\veloW}} \frac{\rho}{\margin_m^{\frac{L}{L+1}}}.
	\eea 
	In Eq.~(\ref{eq:ib:c.8.x}), equality $(ii)$ holds by property $(b)$ in Lemma \ref{lemma:ib:A.5};  $(iii)$ holds by the non-decreasing property of $\marginSmooth(t)$ for all $t\in[t_0,\infty)$ in Lemma \ref{lemma:ib:A.7}. 
	
	Note that Eq.~(\ref{eq:ib:c.8.x}) indicates that $\kappa(t)$ is in terms of  the cosine of the angle between $\tW(t)$ and $\veloW(t)$. 
	We can further obtain that $\kappa(t)\rightarrow0$ as $t\rightarrow\infty$ by showing the angle between $\tW(t)$ and $\veloW(t)$ approximates $0$ which   was orignially observed by Ref.~\citep{LyuKaifeng2020_ICLR_GradientDescentMaximizesMargionOfHomogenousNet} for standard training\footnote{The Lemma C.12 in \citep{LyuKaifeng2020_ICLR_GradientDescentMaximizesMargionOfHomogenousNet} which was intended for standard training, can be safely extended to the adversarial settings in this paper.  This is because by our construction for adversarial training with scale invariant adversarial perturbations, the adversarial traing margin are locally Lipschitz and the prediction function $f(\tx+\pertAdv(\tW);\tW)$ is positively homogeneous with respect to $\tW$. }  on a fixed training sample $S$.
	\begin{lemma}[Adapted from Lemma C.12 in Ref.~\cite{LyuKaifeng2020_ICLR_GradientDescentMaximizesMargionOfHomogenousNet}] 
		\label{lemma:ib:angle->0}	
		Under Assumption \ref{assumption:ImplicitBias:Loss} and Assumption \ref{assumption:ImplicitBias:SeperabilityofExamples} for t-NNs,  the angle between $\tW(t)$ and $\veloW(t)$ approximates $0$ as $t\rightarrow\infty$ along the trajectory of adversarial training with scale invariant adversarial pertubations, i.e.,
		\bea
		\nonumber
		\lim_{t\rightarrow \infty}\Dot{\frac{\tW(t)}{\tensorFNorm{\tW(t)}}}{\frac{\veloW(t)}{\tensorFNorm{\veloW(t)}}}\rightarrow1.
		\eea 
	\end{lemma}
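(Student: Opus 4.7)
The plan is to adapt the Lyu--Li proof of Lemma C.12 in Ref.~\cite{LyuKaifeng2020_ICLR_GradientDescentMaximizesMargionOfHomogenousNet}, originally stated for standard training of homogeneous networks, to our adversarial setting with scale-invariant perturbations. The adaptation rests on two structural facts: the t-NN output $f(\tx;\tW)$ is positively $(L+1)$-homogeneous in $\tW$ by Eq.~(\ref{eq:ImplicitBias:HomogeneousFunction}), and each adversarial perturbation is scale invariant, $\pertAdv_i(a\tW)=\pertAdv_i(\tW)$, by Lemma \ref{lemma:ImplicitBias:ScaleInvariantPerterbation}. Composing the two gives $\margin_i(a\tW)=a^{L+1}\margin_i(\tW)$, so the robust margin $\margin_i(\tW)=y_i f(\tx_i+\pertAdv_i(\tW);\tW)$ is itself positively $(L+1)$-homogeneous and locally Lipschitz in $\tW$. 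Consequently, the smoothed normalized robust margin $\marginSmooth$ in Eq.~(\ref{eq:ib:normalizedMarginSmooth}) has the same algebraic form as the Lyu--Li smoothed margin, and their entire argument can be mirrored once the derivatives are interpreted in the Clarke sense of Eq.~(\ref{eq:GF:ClarkeDiffForm}).

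First I would show that $\rho(t)=\tensorFNorm{\tW(t)}\to\infty$. From the GF equation (\ref{eq:GF:GradientForm}) one has $\tfrac{1}{2}\dXdT{\rho^2}=\langle\veloW,\tW\rangle=(L+1)\nu(t)/N$, with $\nu(t)>0$ once the separability in Assumption \ref{assumption:ImplicitBias:SeperabilityofExamples} holds, so $\rho$ is strictly increasing for $t\ge t_0$. If $\rho$ were bounded, then property (c) of Lemma \ref{lemma:ib:A.5} combined with the non-decreasing behaviour of $\marginSmooth$ from Lemma \ref{lemma:ib:A.7} would force $\nu(t)\to 0$, contradicting the fact that the trajectory keeps accumulating parameter norm. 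Hence $\log\rho(t)\to\infty$.

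Next I would derive the pivotal differential inequality. Applying Euler's identity (Lemma \ref{lemma:ImplicitBias:HomogeneousFunction}) to $\margin_i$ yields $\langle\pXpW{\margin_i},\tW\rangle=(L+1)\margin_i$, and the chain rule along the flow gives $\dXdT{\margin_i}=\langle\pXpW{\margin_i},\veloW\rangle$. Substituting into $\dXdT{\log\marginSmooth}$, isolating the common factor $\ibf'(\margin_i)e^{-\ibf(\margin_i)}$, and applying Cauchy--Schwarz in the manner of Lyu--Li produces
\[
\dXdT{\log\marginSmooth(t)} \;\ge\; \left(\frac{\tensorFNorm{\veloW}^2\,\rho^2}{\langle\veloW,\tW\rangle^2} - 1\right)\dXdT{\log\rho(t)},
\]
i.e., $\dXdT{\log\marginSmooth}\ge(\sec^2\theta(t)-1)\,\dXdT{\log\rho}$, where $\theta(t)$ is the angle between $\tW(t)$ and $\veloW(t)$.

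Finally I would integrate the inequality from $t_0$ to $+\infty$. Since $\marginSmooth(t)$ is non-decreasing (Lemma \ref{lemma:ib:A.7}) and bounded above (property (b) of Lemma \ref{lemma:ib:A.5} combined with the boundedness of $\marginNormal_m$ on the unit sphere under Assumption \ref{assumption:GAP:InputDataBoundness}), the left-hand side integrates to a finite limit. Since $\log\rho(t)\to\infty$ by the first step, the Riemann--Stieltjes integral of the nonnegative integrand $\sec^2\theta(t)-1$ against the unbounded measure $d\log\rho(t)$ must converge, which forces $\sec^2\theta(t)\to 1$, and hence $\cos\theta(t)\to 1$. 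The main obstacle I anticipate is the nonsmoothness of the flow: for attacks such as $l_\infty$-PGD the map $\tW\mapsto\pertAdv_i(\tW)$ is only locally Lipschitz, so $\margin_i$ fails to be differentiable on a measure-zero set and every derivative identity above must be interpreted through the Clarke subdifferential. The scale invariance granted by Lemma \ref{lemma:ImplicitBias:ScaleInvariantPerterbation} is exactly what preserves Euler's identity and the chain-rule computations in the Clarke sense almost everywhere along the trajectory, which is what allows the Lyu--Li chain of inequalities to close in this adversarial setting.
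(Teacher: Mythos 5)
Your plan is in the right spirit, and it is worth noting that the paper itself does not actually reprove this statement: it imports Lemma C.12 of Lyu--Li wholesale, justifying the adaptation only by the footnoted observation that under scale-invariant perturbations the robust margins $\margin_i(\tW)=y_i f(\tx_i+\pertAdv_i(\tW);\tW)$ stay locally Lipschitz and positively $(L+1)$-homogeneous in $\tW$ --- the same two structural facts you identify, with derivatives read in the Clarke sense. Your intermediate inequality is also correct: combining $\ddT\log\marginSmooth \ge \frac{1}{N\nu}\big(\tensorFNorm{\veloW}^2-\Dot{\hat{\allW}}{\veloW}^2\big)$ with $\ddT\log\rho=(L+1)\nu/(N\rho^2)$ gives $\ddT\log\marginSmooth\ge (L+1)\,(\sec^2\theta(t)-1)\,\ddT\log\rho$, which is what you wrote up to the harmless factor $(L+1)$.

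The genuine gap is in your last step. From the monotonicity and boundedness of $\marginSmooth$ you correctly obtain $\int_{t_0}^{\infty}(\sec^2\theta(t)-1)\,\symbolDiff\log\rho(t)<\infty$ while $\log\rho(t)\to\infty$, but a finite integral of a nonnegative integrand against an unbounded measure does not force the integrand to converge to zero: it only yields $\liminf_{t\to\infty}\theta(t)=0$, i.e.\ convergence along some time sequence (equivalently, convergence in measure with respect to $\symbolDiff\log\rho$). Nothing in your argument excludes a $\limsup$ bounded away from zero produced by excursions of $\theta$ whose width in the $\log\rho$ clock shrinks summably. Upgrading this averaged statement to the full limit $\lim_{t\to\infty}\cos\theta(t)=1$ is precisely the nontrivial content of Lyu--Li's Lemma C.12 (and of the related directional-convergence analysis), which requires additional control along the flow beyond the single integrability bound --- so as written your proof establishes only the subsequential version, which is weaker than the lemma being claimed (and than what the KKT argument in the paper consumes). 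A secondary, repairable issue: your Step 1 argument that $\rho(t)\to\infty$ is circular ("contradicting the fact that the trajectory keeps accumulating parameter norm" is the very thing to be shown); the divergence of $\rho$ should instead be taken from Lemma \ref{lemma:ib:A.7}(II), which the paper proves by integrating the bound involving $G(1/(N\eRiskAdv))$.
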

	Note that according to Assumption \ref{assumption:ImplicitBias:SeperabilityofExamples} we have $\marginSmooth(t_0)\ge\ibfb\rho(t_0)^{-(L+1)}$ which means $\marginSmooth(t_0)$ cannot be arbitrarily close to $0$. Thus, by invoking Lemma \ref{lemma:ib:angle->0} , we obtain that 
	\bea
	\label{eq:ib:kappa->0}	
	\lim_{t\rightarrow\infty}\kappa^2(t)=\lim_{t\rightarrow\infty}\frac{2}{\marginSmooth(t_0)^{2/(L+1)}}\left(1-\Dot{\frac{\tW}{\tensorFNorm{\tW}}}{\frac{\veloW}{\tensorFNorm{\veloW}}}\right)=0.
	\eea 
	
	\textbf{Step 2: Check Condition (II) of the $(\kappa,\iota)$-approximate KKT conditions}. We check the Condition (II) in Problem (\ref{eq:appendix:ib:apx:kkt}) for all $t>t_0$ as follows
	\bea
	\label{eq:ib:c.8.xx}
	\sum_{i=1}^N\lambda_i(\margin_i(\check{\tW})-1)
	&\overset{(i)}{=}
	\sum_{i=1}^N\frac{1}{N}\margin^{1-2/(L+1)}_m  \rho e^{-\ibf(\margin_i)}\ibf'(\margin_i)\tensorFNorm{\veloW}^{-1}\cdot(\margin_i(\allW)\margin_m^{-1}-1)\\
	&=\rho\tensorFNorm{\veloW}^{-1}\margin_m^{-2/(L+1)}\cdot\frac{1}{N}\sum_{i=1}^N  e^{-\ibf(\margin_i)}\ibf'(\margin_i)(\margin_i-\margin_m),
	\eea 
	where $(i)$ is due to the definition that $\check{\tW}=\tW/\margin_m^{\frac{1}{L+1}}$ and the fact that $\margin_i(\tW)=y_if(\tx_i+\pertAdv_i(\tW);\tW)=y_i\cdot \margin_mf(\tx_i+\pertAdv_i(\check{\tW});\check{\tW})=\margin_i(\check{\tW})$ due to the scale invariance of the adversarial pertabation $\pertAdv_i(\tW)$ and the homogenouty of t-NN.
	
	To upper bound Eq.~(\ref{eq:ib:c.8.xx}), first note that 
	$\tensorFNorm{\veloW}\ge\Dot{\veloW}{\hat{\tW}}
	=\Dot{\frac{1}{N}\sum_{i=1}^Ne^{-\ibf(\margin_i)}\ibf'(\margin_i)\pXpW{{\margin_i}}}{\hat{\tW}}
	=\rho^{-1}(L+1)\nu/N$, in which $\nu$ can be further lower bounded as 
	\bea
	\label{eq:81.1}
	\nu 
	\overset{(i)}{\ge}
	\frac{\ibg(\lgiL)}{\ibg'(\lgiL)}N\eRiskAdv
	\overset{(ii)}{\ge}
	\frac{1}{2K}\lgiL\cdot N\eRiskAdv
	\overset{(iii)}{\ge}
	\frac{1}{2K}e^{-\ibf(\marginNormal_m)} \lgiL, 
	\eea 
	where $(i)$ is due to Lemma \ref{lemma:ib:B.5},  $(ii)$ holds because of Lemma \ref{lemma:ib:D.1}, and $(iii)$ uses 
	$N\eRiskAdv=\sum_{i}e^{-\ibf(\margin_i)}\ge e^{-\ibf(\margin_m)}.$
	Combing Eq.~(\ref{eq:ib:c.8.xx}) and Eq.~(\ref{eq:81.1}) yields  
	\bea
	\label{eq:82}
	\sum_{i}\lambda_i(\margin_{i}(\check{\tW})-1)
	&\le 
	\frac{2K\margin_m^{-2/(L+1)}\rho^2}{(L+1)\lgiL}\sum_{i=1}^N e^{\ibf(\margin_m)-\ibf(\margin_i)}\ibf'(\margin_i)(\margin_i-\margin_m)\\
	&\overset{(i)}{\ge}
	\frac{2K\marginSmooth^{-2/(L+1)}}{(L+1)\lgiL}\sum_{i=1}^N e^{\ibf(\margin_m)-\ibf(\margin_i)}\ibf'(\margin_i)(\margin_i-\margin_m),
	\eea 
	where $(i)$ uses $\margin_m^{-2/(L+1)}\rho^2\le\marginSmooth^{-2/(L+1)}$ by Lemma \ref{lemma:ib:A.5}.
	
	In Eq.~(\ref{eq:82}), if $\margin_i>\margin_m$, then  there exists an $\xi_i\in(\margin_m,\margin_{i})$ such that 
	$\ibf(\margin_m)-\ibf(\margin_i)=\ibf'(\xi_i)(\margin_i-\margin_m)$ by the mean value theorem. Further, we know that 
	$\ibf'(\margin_i)\le K^{\lceil \log_2(\margin_i/\xi_i) \rceil}\ibf'(\xi_i)$ by Assumption \ref{assumption:ImplicitBias:Loss}. Note that 
	$ \lceil \log_2(\margin_i/\xi_i) \rceil \le \log_2(2B_0\rho^{L+1}/\margin_m)\le\log_2(2B_0/\marginSmooth)$, where 
	\bea
	\nonumber
	B_0(t):&=\sup\left\{{\margin_i}{\rho^{-(L+1)}}~|~\tW\ne\bzero\right\}=\sup\left\{\margin_i~|~\tensorFNorm{\tW}=1\right\}.
	\eea 
	Then,  for all $t> t_0$, we have 
	\bea
	\sum_{i}\lambda_i(\margin_{i}(\check{\tW})-1)
	&\le 
	\frac{2K\marginSmooth^{-2/(L+1)}\rho^2}{(L+1)\lgiL}K^{\log_2(2B_0/\marginSmooth)}\sum_{i:\margin_i\ne\margin_m} e^{\ibf'(\xi_i)(\margin_i-\margin_m)}\ibf'(\xi_i)(\margin_i-\margin_m)\\
	&\overset{(i) }{\le}
	\frac{2K\marginSmooth^{-2/(L+1)}}{(L+1)\lgiL}
	K^{\log_2(2B_0/\marginSmooth)}\cdot Ne\\
	&\overset{(ii) }{\le}
	\frac{2KNe\marginSmooth^{-2/(L+1)}}{(L+1)\lgiL}
	\left(\frac{B_0}{\marginSmooth}\right)^{\log_2(2K)}\\
	&\overset{(iii) }{\le}
	\frac{2KNe}{(L+1)\marginSmooth(t_0)^{-2/(L+1)}}
	\left(\frac{B_0}{\marginSmooth(t_0)}\right)^{\log_2(2K)}\cdot \lgiL\\
	&:=\iota(t),
	\eea 
	where $(i)$ holds because the function
	$x\mapsto e^{-z}z$ on $(0,\infty)$ has maximum $e$ at $z=1$; $(ii)$ is due to $a^{\log_c b}=b^{\log_c a}$; $(iii)$ holds by the non-decreasing property of $\marginSmooth(t)$ for all $t\in[t_0,\infty)$ in Lemma \ref{lemma:ib:A.7}. 
Note that by Lemma \ref{lemma:ib:A.7}, we have $\lim_{t\rightarrow\infty}\eRiskAdv(t)=0$, which further yields
	\bea
	\label{eq:ib:iota->0}	
	\lim_{t\rightarrow\infty}\iota(t)=0.
	\eea 
	
	\textbf{Step 3: Check the condition for convergence to KKT point}. 
	According to Eq.~(\ref{eq:ib:kappa->0}) and Eq.~(\ref{eq:ib:iota->0}), the limit point of $\check{\tW}(t)$ satisfy the  $(\kappa,\iota)$-approximate KKT conditions of Problem \ref{eq:ImplicitBias:ConvergeToKKT} along the trajectory
	of adversarial training of t-NN with scale invariant adversarial perturbations where $\lim_{t\rightarrow\infty}(\kappa(t),\iota(t))=\bzero$. Then, we need to check the condition between  $(\kappa,\iota)$-approximate points and KKT points. 
	
	According to Ref.~\cite{LyuKaifeng2020_ICLR_GradientDescentMaximizesMargionOfHomogenousNet}, the KKT condition becomes a necessary condition for global optimality of  Problem (\ref{eq:ImplicitBias:ConvergeToKKT}) when the Mangasarian-Fromovitz
	Constraint Qualification (MFCQ) \citep{Mangasarian1967_MFCQ} is satisfied.
	It is straightforward to see that Problem (\ref{eq:ImplicitBias:ConvergeToKKT}) satisfies the  MFCQ  condition, i.e.,
	\bea
	\nonumber
	\Dot{\pXpW{\margin_i}}{\tW}=(L+1)\margin_i\ge0.
	\eea 
	at every feasible point $\tW$.
	Then restating the theorem in Ref.~\citep{Dutta2013_JGO_ApproximateKKT} regarding the relation between $(\kappa,\iota)$-approximate KKT
	point and KKT point in our setting yields the following result.
	\begin{theorem}[Theorem 3.6 in Ref.~\cite{Dutta2013_JGO_ApproximateKKT} and Theorem C.4 in Ref.~\cite{LyuKaifeng2020_ICLR_GradientDescentMaximizesMargionOfHomogenousNet}]
		~Let~~ \\ $\{\check{\tW}(j)~|~j\in\mathbb{N}\}$ be a sequence of feasible point of Problem (\ref{eq:ImplicitBias:ConvergeToKKT}),  and $\check{\tW}(j)$ is a $(\kappa(j),\iota(j))$-approximate KKT point for all $j$ with two sequences\footnote{Using the same  method to \citep[Lemma C.12]{LyuKaifeng2020_ICLR_GradientDescentMaximizesMargionOfHomogenousNet}, we can construct the two sequences, i.e., $\{\kappa(j)>0~|~j\in\mathbb{N}\}$ and $\{\iota(j)>0~|~j\in\mathbb{N}\}$,  based on Eq.~(\ref{eq:ib:c.8.x}) and Eq.~(\ref{eq:ib:c.8.xx}).} $\{\kappa(j)>0~|~j\in\mathbb{N}\}$ and $\{\iota(j)>0~|~j\in\mathbb{N}\}$ and $\lim_{j\rightarrow\infty}(\kappa(j),\iota(j))=\bzero$. If $\lim_{j\rightarrow\infty}\check{\tW}(j)= \check{\tW}^*$, and MFCQ holds at $\check{\tW}^*$, then $\check{\tW}^*$ is a KKT point of Problem (\ref{eq:ImplicitBias:ConvergeToKKT}).
	\end{theorem}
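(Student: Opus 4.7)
The plan is to establish this Dutta-type limiting-KKT theorem via the standard nonlinear-programming route: use MFCQ at $\check{\tW}^*$ to bound the dual multipliers uniformly in $j$, extract a convergent subsequence, and pass to the limit in the approximate KKT relations. The qualitative picture is that MFCQ precludes multiplier blow-up, which is the only obstruction to passing to the limit.

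First, for each $j$ the $(\kappa(j),\iota(j))$-approximate KKT hypothesis furnishes non-negative multipliers $\{\lambda_i(j)\}_{i=1}^N$ satisfying $\tensorFNorm{\check{\tW}(j)-\sum_{i}\lambda_i(j)\pXpX{\margin_i(\check{\tW}(j))}{\check{\tW}(j)}}\le\kappa(j)$ and $\lambda_i(j)(\margin_i(\check{\tW}(j))-1)\le\iota(j)$, and $\check{\tW}(j)$ is feasible. Primal feasibility of $\check{\tW}^*$ is immediate from continuity of each $\margin_i$ together with the uniform bound $\margin_i(\check{\tW}(j))\ge 1$.

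Second, and this is the crux, I would use MFCQ at $\check{\tW}^*$ to obtain a uniform bound on the $\lambda_i(j)$. Let $\cA:=\{i\,:\,\margin_i(\check{\tW}^*)=1\}$ denote the active set at the limit. MFCQ, rewritten for constraints of the form $\margin_i\ge 1$, furnishes a direction $\bd$ and a constant $c>0$ with $\Dot{\pXpX{\margin_i(\check{\tW}^*)}{\check{\tW}^*}}{\bd}\ge c$ for every $i\in\cA$. Taking the inner product of the approximate stationarity relation with $\bd$ and invoking continuity of $\pXpX{\margin_i}{\cdot}$ gives $\sum_{i}\lambda_i(j)\Dot{\pXpX{\margin_i(\check{\tW}(j))}{\check{\tW}(j)}}{\bd}\le \Dot{\check{\tW}(j)}{\bd}+\kappa(j)\tensorFNorm{\bd}$. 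For $i\notin\cA$ we have $\margin_i(\check{\tW}^*)-1>0$, so for all $j$ large enough $\margin_i(\check{\tW}(j))-1$ is bounded below by a positive constant, and the approximate complementarity $\lambda_i(j)(\margin_i(\check{\tW}(j))-1)\le\iota(j)\to 0$ forces $\lambda_i(j)\to 0$. The corresponding terms in the inner-product inequality therefore contribute only $o(1)$, leaving $c\sum_{i\in\cA}\lambda_i(j)\le\Dot{\check{\tW}^*}{\bd}+o(1)$, which gives $\sup_j\sum_i\lambda_i(j)<\infty$.

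Third, by Bolzano--Weierstrass I would pass to a subsequence along which $\lambda_i(j)\to\lambda_i^*\ge 0$ for every $i$. Continuity of $\margin_i$ and of its gradient permits taking $j\to\infty$ in the two approximate KKT inequalities, yielding $\check{\tW}^*=\sum_{i}\lambda_i^*\pXpX{\margin_i(\check{\tW}^*)}{\check{\tW}^*}$ and $\lambda_i^*(\margin_i(\check{\tW}^*)-1)=0$, together with the already-verified primal feasibility and dual non-negativity. Hence $\check{\tW}^*$ is an exact KKT point.

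The main obstacle is precisely the uniform boundedness of the multiplier sequence in the second step: without a constraint qualification the $\lambda_i(j)$ can escape to infinity, and no limiting KKT pair need exist. The subtlety is that the approximate complementarity $\lambda_i(j)(\margin_i(\check{\tW}(j))-1)\le\iota(j)$ is only one-sided, so the decay of inactive multipliers has to be extracted by combining it with the strict slack $\margin_i(\check{\tW}^*)-1>0$ for $i\notin\cA$ and the tolerance $\iota(j)\to 0$, rather than by any direct pinching argument; MFCQ is the minimal hypothesis that rules out pathological concentration of mass on active multipliers.
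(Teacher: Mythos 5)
The paper never proves this statement at all: it is imported verbatim as Theorem 3.6 of \cite{Dutta2013_JGO_ApproximateKKT} together with Theorem C.4 of \cite{LyuKaifeng2020_ICLR_GradientDescentMaximizesMargionOfHomogenousNet}, so there is no internal proof to compare against, and your proposal is essentially the standard argument that underlies those citations. Your structure is sound: MFCQ at $\check{\tW}^*$ gives a direction $\bd$ along which the gradients of the active constraints are uniformly positive; testing the approximate stationarity relation against $\bd$, using the decay of the inactive multipliers forced by approximate complementarity together with the strict slack $\margin_i(\check{\tW}^*)>1$, yields $\sup_j\sum_i\lambda_i(j)<\infty$; Bolzano--Weierstrass and a limit passage then produce an exact KKT pair, with complementary slackness recovered from the one-sided inequality plus feasibility and nonnegativity of the limit. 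The only place where your write-up does not match the setting in which the theorem is actually invoked is the repeated appeal to continuity of $\pXpX{\margin_i}{\tW}$: for ReLU t-NNs with (scale-invariant) adversarial perturbations the margins $\margin_i$ are only locally Lipschitz, not $C^1$, and the stationarity condition is really stated with elements of the Clarke subdifferential $\ClarkeDiff\margin_i$. The repair is routine but should be stated: replace gradient continuity by local boundedness plus outer semicontinuity (closed graph) of the Clarke subdifferential, interpret MFCQ at $\check{\tW}^*$ uniformly over the subdifferential sets of the active constraints (finitely many compact sets, so a uniform $c>0$ still exists), and extract subsequential limits $v_i^*\in\ClarkeDiff\margin_i(\check{\tW}^*)$ of the chosen subgradients before passing to the limit in the stationarity relation. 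With that substitution every step of your argument survives, and this is precisely how the nonsmooth version in \cite{LyuKaifeng2020_ICLR_GradientDescentMaximizesMargionOfHomogenousNet} is carried out.
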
 
	Recall that $\check{\tW}=\tW/\margin_m^{\frac{1}{L+1}}$. Then, it can be concluded that the limit point of $\{\tW(t)/\tensorFNorm{\tW(t)}: t>0\}$ of GF for empirical adversarial risk $\eRiskAdv(\tW):=\frac{1}{N}\sum_{i=1}^N e^{-\ibf(y_if(\tx_i+\pertAdv_i(\tW);\tW))}$ with scale invariant perturbations $\pertAdv_i$ is aligned with the direction of a KKT point of Problem (\ref{eq:ImplicitBias:ConvergeToKKT}).
\end{proof}

\subsection{Technical Lemmas for Proving Lemma \ref{lemma:ImplicitBias:ConvergeToKKT}}

\begin{lemma}
\label{lemma:ib:A.7}
Under Assumption \ref{assumption:ImplicitBias:Loss}, we have the following statements for GF-based adversarial training in Eq.~(\ref{eq:GF:GradientForm}) with scale invariant perturbations:
\begin{itemize}
	\item [$(\bI)$.] 
	For a.e. $t\in (t_0,\infty)$,  the smoothed normalized  robust margin $\marginSmooth(\tW(t))$ defined in Eq.~(\ref{eq:ib:normalizedMarginSmooth}) is non-decreasing, i.e.,
	\bea
	\nonumber
	\dXdT{\marginSmooth(\tW(t))}\ge 0.
	\eea 
	\item [$(\bI\bI)$.] The adversarial objective $\eRiskAdv(\tW):=\frac{1}{N}\sum_{i=1}^N e^{-\ibf(y_if(\tx_i+\pertAdv_i(\tW);\tW))}$ with scale invariant pertubations $\pertAdv_i$ converges to zero as $t\rightarrow\infty$, i.e.,
	\bea
	\label{eq:ib:eRisk->0}
	\lim_{t\rightarrow\infty}\eRiskAdv(\tW(t))= 0,
	\eea 
	and the Euclidean norm of the t-NN weights diverges, i.e., 
	\bea
	\label{eq:ib:FNorm->infty}
	\lim_{t\rightarrow\infty}\tensorFNorm{\tW(t)}= \infty.
	\eea 
\end{itemize}
\end{lemma}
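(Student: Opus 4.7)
The plan is to adapt the Lyu--Li framework for monotonicity of the normalized margin under gradient flow (originally formulated for standard training of homogeneous fully connected networks with the exponential or logistic loss) to our two-fold generalization: adversarial training of t-NNs with scale invariant perturbations, and the broader loss class of Assumption \ref{assumption:ImplicitBias:Loss}. The cornerstone is that, by Lemma \ref{lemma:ImplicitBias:ScaleInvariantPerterbation} combined with the $(L+1)$-homogeneity of $f(\tx;\tW)$, the adversarial margin $\margin_i(\tW)=y_i f(\tx_i+\pertAdv_i(\tW);\tW)$ is itself $(L+1)$-homogeneous in $\tW$. Therefore Euler's identity (Lemma \ref{lemma:ImplicitBias:HomogeneousFunction}) applies to each $\margin_i$, giving $\Dot{\tW}{\pXpW{\margin_i}}=(L+1)\margin_i$, which is the single fact that makes the Lyu--Li calculus go through in the adversarial setting.

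For part $(\bI)$, I would work with $\log \marginSmooth(\tW(t))$ and compute its time derivative using the chain rule. Let $h(t):=\log(1/(N\eRiskAdv(t)))$ and $\rho(t):=\tensorFNorm{\tW(t)}$. Gradient flow yields $\dXdT{\eRiskAdv}=-\tensorFNorm{\veloW}^2$, hence $\dXdT{h}=\tensorFNorm{\veloW}^2/\eRiskAdv$. Using Euler's identity on the $\margin_i$'s gives $\tfrac{1}{2}\dXdT{\rho^2}=\Dot{\veloW}{\tW}=(L+1)\nu/N$, where $\nu=\sum_i e^{-\ibf(\margin_i)}\ibf'(\margin_i)\margin_i$ as in the paper. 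Combining these,
\[
\dXdT{\log\marginSmooth}
=\frac{\ibg'(h)}{\ibg(h)}\cdot\frac{\tensorFNorm{\veloW}^2}{\eRiskAdv}
-\frac{(L+1)^2\,\nu}{N\rho^2}.
\]
By Cauchy--Schwarz, $\tensorFNorm{\veloW}^2\rho^2\ge \Dot{\veloW}{\tW}^2=(L+1)^2\nu^2/N^2$, so the first term is at least $\tfrac{\ibg'(h)}{\ibg(h)}\cdot\tfrac{(L+1)^2\nu^2}{N^2\rho^2\eRiskAdv}$. Then the inequality $\nu\ge \ibg(h)\cdot N\eRiskAdv/\ibg'(h)$ supplied by Lemma \ref{lemma:ib:B.5} (a consequence of Assumption \ref{assumption:ImplicitBias:Loss}) exactly cancels out the ratio $\ibg'(h)/\ibg(h)$ and shows that the whole expression is non-negative, establishing monotonicity a.e.\ on $(t_0,\infty)$. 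The local-Lipschitzness of $\eRiskAdv$ under scale invariant perturbations lets us pass from the Clarke subdifferential in Eq.~(\ref{eq:GF:ClarkeDiffForm}) to the usual chain rule a.e.

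For part $(\bI\bI)$, monotonicity from $(\bI)$ together with Assumption \ref{assumption:ImplicitBias:SeperabilityofExamples} gives $\marginSmooth(\tW(t))\ge \marginSmooth(\tW(t_0))>0$ for all $t\ge t_0$. To see $\eRiskAdv\to 0$, suppose for contradiction $\eRiskAdv(t)\ge \epsilon>0$ along a sequence. Since $\eRiskAdv$ is non-increasing, the bound would hold for all $t$, and then $h(t)$ stays bounded. But the monotonicity of $\marginSmooth$ forces $\ibg(h(t))\ge \marginSmooth(\tW(t_0))\,\rho(t)^{L+1}$, while using Euler's identity and the lower bound $\nu\gtrsim e^{-\ibf(\margin_m)}h$ from Eq.~(\ref{eq:81.1}), one can show $\rho(t)$ must grow unboundedly, contradicting boundedness of $h$ (since $\ibg$ is strictly increasing with $\ibg(\infty)=\infty$). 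Hence $\eRiskAdv(t)\downarrow 0$ and $h(t)\uparrow\infty$. Finally, with $B_0$ denoting the supremum of $\margin_m$ over the unit F-norm sphere, we have $\marginSmooth\le B_0$, so $\rho^{L+1}\ge \ibg(h)/B_0\to\infty$, i.e., $\tensorFNorm{\tW(t)}\to\infty$.

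The principal technical obstacle is the sign of $\dXdT{\log\marginSmooth}$: one must combine Cauchy--Schwarz with the auxiliary comparison $\ibf'$--$\ibg'$--$\ibg$ inequalities from Assumption \ref{assumption:ImplicitBias:Loss}\textbf{\textit{(A.4)--(A.5)}} in a way that is insensitive to the particular shape of the loss. This is where specializing beyond the exponential loss of \cite{LyuBC2022_ICLR_ImplicitBiasAdvTraining} is delicate and requires the cited Lemma \ref{lemma:ib:B.5}. A secondary subtlety, inherited from ReLU non-smoothness, is that all derivative identities must be interpreted in the Clarke sense; the scale invariance of $\pertAdv_i(\tW)$ is exactly what keeps the chain rule and Euler identity valid a.e.\ along the trajectory.
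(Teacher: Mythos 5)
Your proposal is correct. Part $(\bI)$ is essentially the paper's own argument in a slightly rearranged form: the paper lower-bounds $\frac{\symbolDiff}{\symbolDiff t}\log\marginSmooth$ by $\frac{\rho^2}{N\nu}\|\frac{\symbolDiff \hat{\tW}}{\symbolDiff t}\|_{\textnormal{F}}^2\ge 0$ using the decomposition of $\dot{\tW}$ into components along and orthogonal to $\tW$, which is exactly your Cauchy--Schwarz step, and the same appeal to Lemma \ref{lemma:ib:B.5} plays the cancellation role; so there is no substantive difference there. Part $(\bI\bI)$ is where you genuinely diverge: the paper turns the margin monotonicity into the differential inequality $-\frac{\symbolDiff \eRiskAdv}{\symbolDiff t}\gtrsim \frac{\ibg(\lgiL)^{2-2/(L+1)}}{\ibg'(\lgiL)^2}(N\eRiskAdv)^2$ and integrates it, obtaining $G(\frac{1}{N\eRiskAdv})\ge (L+1)^2\marginSmooth(t_0)^{2/(L+1)}(t-t_0)$ and a contradiction if $\frac{1}{N\eRiskAdv}$ stays bounded; you instead argue that if $\eRiskAdv\ge\epsilon$ persistently then $\nu$ is bounded below by a positive constant (via Lemma \ref{lemma:ib:B.5}/\ref{lemma:ib:D.1} and $e^{-\ibf(\margin_m)}\ge\eRiskAdv$), so $\frac{\symbolDiff \rho^2}{\symbolDiff t}=\frac{2(L+1)\nu}{N}$ forces linear growth of $\rho^2$, which together with $\ibg(\lgiL)=\marginSmooth\,\rho^{L+1}\ge\marginSmooth(t_0)\rho^{L+1}$ contradicts boundedness of $\lgiL$. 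Your route is more elementary (no auxiliary integral $G$) and it also makes the final step $\rho\to\infty$ explicit via $\rho^{L+1}\ge\ibg(\lgiL)/B_0$, which the paper only asserts. The two arguments rely on the same ingredients (Euler identity, GF energy identity, Lemma \ref{lemma:ib:B.5}, and the monotonicity from Part $(\bI)$), and your proof inherits the same implicit domain conditions as the paper's (namely that $\lgiL(t)\ge\lgiL(t_0)$ lies in the range where $\ibg$, Lemma \ref{lemma:ib:B.5} and Lemma \ref{lemma:ib:D.1} apply, which is guaranteed after the separability time $t_0$), so no additional gap is introduced.
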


\begin{proof} [Proof of Lemma \ref{lemma:ib:A.7}] We follow the idea of Ref.~\citep{LyuKaifeng2020_ICLR_GradientDescentMaximizesMargionOfHomogenousNet} to prove Lemma \ref{lemma:ib:A.7} as follows.

\textbf{Step 1: Prove Part  {(I)}}.  We prove   \textbf{(I)} by showing  the following results for all $t\ge t_0$,
\bea
\nonumber
\dXdT{\log \rho}>0\quad\textnormal{and}\quad
\dXdT{\log\marginSmooth}
\ge
(L+1)
\left(\dXdT{\log\rho}\right)^{-1}
\tensorFNorm{\dXdT{\hat{\tW}}}.
\eea 	
Recalling the quantity $\nu(t)=\sum_{i=1}^N e^{-\ibf(\margin_{i})}\ibf(\margin_i)\margin_i$,  by chain rule we obtain 
\bea
\nonumber
\dXdT{\log \rho}
= \frac{1}{2\rho^2} \dXdT{\rho^2} 
=
\frac{1}{\rho^2} \Dot{\frac{1}{N}\sum_{i=1}^Ne^{-\ibf(\margin_i)}\ibf'(\margin_i)\pXpW{{\margin_i}}}{\tW}=\frac{\nu(L+1)}{\rho^2N}
\overset{(i)}{\ge}
0,
\eea 
where $(i)$ holds due to Lemma \ref{lemma:ib:B.5}.

By the chaining rule, we also have
\bea
\nonumber
\dXdT{\log\marginSmooth}
&= \ddT\left(\log \left(\lgiL\right) - (L+1)\log \rho\right)\\
&=
\frac{\ibg'(\lgiL)}{\ibg(\lgiL)}
\cdot \frac{1}{N\eRiskAdv}
\cdot \left(-\dXdT{N\eRiskAdv}\right)
-(L+1)^2\cdot \frac{\nu(t)}{\rho^2}\\
&=
\frac{1}{\nu(t)}
\cdot \left(-\dXdT{N\eRiskAdv}\right)
-(L+1)^2\cdot \frac{\nu(t)}{\rho^2}\\
&=
\frac{1}{\nu(t)}
\cdot \left(-\dXdT{N\eRiskAdv}
-(L+1)^2\cdot \frac{\nu(t)^2}{\rho^2}\right).
\eea 
Note that according to Eq.~(\ref{eq:GF:GradientForm}), we have 
\bea
\nonumber
\dXdT{\eRiskAdv}=\Dot{\pXpX{\eRiskAdv}{\tW}}{\dXdT{\tW}}=-\tensorFNorm{\dXdT{\tW}}^2,
\eea 
for $t\ge 0$ almost everywhere.
One the other hand, we have 
\bea
\nonumber
L\nu(t)=\Dot{\tW}{\dXdT{\tW}}.
\eea 
Thus, we obtain
\bea
\nonumber
\dXdT{\log\marginSmooth}
\ge 
\frac{1}{N\nu}
\left(
\tensorFNorm{\dXdT{\tW}}^2
-
\Dot{\hat{\allW}}{\dXdT{\tW}}^2
\right)
\ge 
\frac{1}{N\nu(t)}\norm{(\bI-\opVec(\hat{\tW})\opVec(\hat{\tW})^{\top})
	\opVec(\dXdT{\tW})}^2.
\eea 
By the chain rule,
\bea
\nonumber
\dXdT{\hat{\allW}}=\frac{1}{\rho}(\bI-\opVec(\hat{\tW})\opVec(\hat{\tW})^{\top})\opVec(\dXdT{\tW}),
\eea 
for $t>0$ allmost everywhere. 
So, we have 
\bea
\nonumber
\ddT\log\marginSmooth
\ge
\frac{\rho^2}{N\nu(t)}\tensorFNorm{\dXdT{\hat{\allW}}}^2
= (L+1)\left(\ddT\log\rho\right)^{-1}\tensorFNorm{\dXdT{\hat{\allW}}}^2\ge 0.
\eea 

\textbf{Step 2: Prove Part  {(II)}}.  Motivated by \citep[Lemma B.8]{LyuKaifeng2020_ICLR_GradientDescentMaximizesMargionOfHomogenousNet}, we prove \textbf{(II)} as follows. First, note that 
\bea
\nonumber
-\dXdT{\eRiskAdv}=\tensorFNorm{\dXdT{\tW}}^2\ge \Dot{\hat{\allW}}{\veloW}^2=\rho^{-2}N^{-2}(L+1)^2\nu^2.
\eea 
By lower bounding $\nu$ with Lemma \ref{lemma:ib:B.5} and replacing $\rho$ with $(\ibg(\log\frac{1}{N\eRiskAdv})/\marginSmooth)^{1/(L+1)}$ by the definition smoothed normalized robust margin of $\marginSmooth$ in Eq.~(\ref{eq:ib:normalizedMarginSmooth}), we obtain
\bea
\nonumber
-\dXdT{\eRiskAdv}
&\ge (L+1)^2 \left(\frac{\ibg(\log\frac{1}{N\eRiskAdv})}{\ibg'(\log\frac{1}{N\eRiskAdv})}N\eRiskAdv\right)^2
\cdot \left(\frac{\marginSmooth}{\ibg(\log\frac{1}{N\eRiskAdv})}\right)^{2/(L+1)}\\
&\ge (L+1)^2\marginSmooth(t_0)^{2/(L+1)}\cdot \frac{\ibg(\log\frac{1}{N\eRiskAdv})^{2-2/(L+1)}}{\ibg'(\log\frac{1}{N\eRiskAdv})^{2}}
\cdot (N\eRiskAdv)^2,
\eea 
where the last inequality holds due to the non-decreasing property of $\marginSmooth$ in Part $(\bI)$. Then, we obtain
\bea
\nonumber
\frac{\ibg'(\log\frac{1}{N\eRiskAdv})^{2}}{\ibg(\log\frac{1}{N\eRiskAdv})^{2-2/(L+1)}}\cdot\frac{\symbolDiff}{\symbolDiff t}\frac{1}{N\eRiskAdv}\ge (L+1)^2\marginSmooth(t_0)^{2/(L+1)}.
\eea 
By integrating on both sides from $t_0$ to $t$, we obtain
\bea
\label{eq:ib:IntergrationG}
G\left(\frac{1}{N\eRiskAdv}\right)
\ge 
(L+1)^2\marginSmooth(t_0)^{2/(L+1)}(t-t_0),
\eea
where 
\bea
\nonumber
G(z):=\int_{1/(N\eRiskAdv(t_0))}^{z}
\frac{\ibg'(\log u)^2}{g(\log u)^{2-2/(L+1)}}\symbolDiff u.
\eea 

We use proof by contradiction to show the empirical training risk $\eRiskAdv$ converges to zero. 
Note that $\frac{1}{(N\eRiskAdv)}$ is non-decreasing. If  $\frac{1}{(N\eRiskAdv)}$ does not grow to $\infty$, then neither does $G(\frac{1}{(N\eRiskAdv)})$. But the RHS of Eq.~(\ref{eq:ib:IntergrationG}) grows to $\infty$, which is a contradiction. Therefore, $\lim_{t\rightarrow\infty}N\eRiskAdv=0$. 
Hence 
$\lim_{t\rightarrow\infty}\eRiskAdv(t)=0$
and $\lim_{t\rightarrow\infty}\rho(t)=\infty$.
\end{proof}

\begin{lemma}[Adapted from Lemma B.5 of Ref.~\cite{LyuKaifeng2020_ICLR_GradientDescentMaximizesMargionOfHomogenousNet}]
\label{lemma:ib:B.5}
The quantity $\nu=\sum_i e^{-\ibf(\margin_{i})}\ibf(\margin_i)\margin_i$ has a lower bound for all $t\in(t_0,\infty)$,
\bea
\nu(t)\ge \frac{\ibg(\lgiL)}{\ibg'(\lgiL)}N\eRiskAdv.
\eea 
\end{lemma}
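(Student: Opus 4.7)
The plan is to rewrite $\nu(t)$ in terms of the per-sample losses $L_i := e^{-\ibf(\margin_i)}$ via the inverse-function identity $\ibf'(\margin_i) = 1/\ibg'(\ibf(\margin_i))$, which is valid whenever $\margin_i \ge \ibfb$, and then exploit Assumption \ref{assumption:ImplicitBias:Loss}\textbf{\textit{(A.4)}} to reduce the claim to a pointwise monotonicity statement that can be summed.

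First, I introduce the change of variables $q_i := \ibf(\margin_i)$, so that $\margin_i = \ibg(q_i)$, $L_i = e^{-q_i}$, and $N\eRiskAdv = \sum_{i=1}^N L_i$. Using $\ibf'(\ibg(q)) = 1/\ibg'(q)$, the quantity $\nu(t)$ becomes
\begin{equation*}
\nu(t) = \sum_{i=1}^{N} L_i \cdot \frac{\ibg(q_i)}{\ibg'(q_i)} = \sum_{i=1}^{N} L_i \cdot h(q_i), \qquad \text{where } h(q) := \frac{\ibg(q)}{\ibg'(q)} = \ibg(q)\,\ibf'(\ibg(q)).
\end{equation*}
The target inequality is exactly $\sum_i L_i\, h(q_i) \ge h(Q)\sum_i L_i$, where $Q := \lgiL = -\log\sum_j L_j$, so it suffices to show $h(q_i) \ge h(Q)$ for every $i$.

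Next, I verify monotonicity of $h$ on the appropriate half-line. Assumption \textbf{\textit{(A.4)}} says that $x\mapsto x\,\ibf'(x)$ is non-decreasing on $(\ibfb,+\infty)$; composing with the non-decreasing map $q \mapsto \ibg(q)$ (increasing on $(\ibf(\ibfb),+\infty)$ since $\ibg$ is the inverse of $\ibf$ there) shows that $h$ is non-decreasing on $(\ibf(\ibfb),+\infty)$. The remaining ingredient is to place both $Q$ and all $q_i$ in this range. Assumption \ref{assumption:ImplicitBias:SeperabilityofExamples} gives $\eRiskAdv(t_0)\le N^{-1}\loss(\ibfb) = N^{-1}e^{-\ibf(\ibfb)}$, and from Part (I) of Lemma \ref{lemma:ib:A.7} (equivalently, from $\symbolDiff\eRiskAdv/\symbolDiff t = -\tensorFNorm{\veloW}^2 \le 0$) the empirical adversarial risk is non-increasing, so for $t>t_0$ one has $N\eRiskAdv(t)\le e^{-\ibf(\ibfb)}$, i.e., $Q \ge \ibf(\ibfb)$. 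Since $L_i \le \sum_j L_j$, it follows that $q_i = -\log L_i \ge -\log\sum_j L_j = Q \ge \ibf(\ibfb)$ for every $i$.

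Finally, the monotonicity of $h$ together with $q_i \ge Q$ yields $h(q_i)\ge h(Q)$, so
\begin{equation*}
\nu(t) = \sum_{i=1}^{N} L_i\, h(q_i) \;\ge\; h(Q) \sum_{i=1}^{N} L_i \;=\; \frac{\ibg(\lgiL)}{\ibg'(\lgiL)}\, N\eRiskAdv,
\end{equation*}
which is exactly the claim. The only nontrivial step is the domain bookkeeping that places every $q_i$ in the range where \textbf{\textit{(A.4)}} applies; the core inequality itself follows from a single pointwise monotonicity argument rather than any clever averaging or convexity tool.
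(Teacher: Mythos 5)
Your proof is correct and follows essentially the same route as the paper, which does not reprove this lemma but adapts it by citation from Lemma B.5 of Lyu--Li: bound each term via the monotonicity of $x\ibf'(x)$ on $(\ibfb,\infty)$ (equivalently, of $h(q)=\ibg(q)/\ibg'(q)$) together with $e^{-\ibf(\margin_i)}\le N\eRiskAdv\le e^{-\ibf(\ibfb)}$ for $t>t_0$, then sum with weights $e^{-\ibf(\margin_i)}$. You also correctly read the factor in $\nu$ as $\ibf'(\margin_i)$ (the statement's $\ibf(\margin_i)$ is a typo, cf.\ the definition of $\nu$ in the proof of Lemma \ref{lemma:ImplicitBias:ConvergeToKKT}), and your domain bookkeeping matches the cited argument.
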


\begin{lemma}[Lemma D.1 of Ref.~\cite{LyuKaifeng2020_ICLR_GradientDescentMaximizesMargionOfHomogenousNet}]
\label{lemma:ib:D.1}
For $\ibf(\cdot)$ and $\ibg(\cdot)$ in Assumption \ref{assumption:ImplicitBias:Loss}, we have 
\bea
\nonumber
\frac{\ibg(x)}{\ibg'(x)}\in\left[\frac{1}{2K}x, 2Kx\right], \forall x\in[\ibgb,\infty)
\quad\textnormal{and}\quad
\frac{\ibf(q)}{\ibf'(q)}\in\left[\frac{1}{2K}q, 2Kq\right],\forall q\in[\ibg(\ibgb),\infty).
\eea 
\end{lemma}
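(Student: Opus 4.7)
The plan is to first reduce the two inequalities to a single one via the substitution $y=\ibg(x)$, and then prove the reduced statement in two halves. Since $\ibg$ and $\ibf$ are mutual inverses on their relevant domains, differentiating $\ibf(\ibg(x))=x$ yields $\ibg'(x)=1/\ibf'(\ibg(x))$, so with $y=\ibg(x)$ we have $x=\ibf(y)$ and $\ibg(x)/\ibg'(x)=y\,\ibf'(y)$. The first inequality $\ibg(x)/\ibg'(x)\in[x/(2K),2Kx]$ for $x\ge\ibgb$ thus becomes $y\,\ibf'(y)\in[\ibf(y)/(2K),2K\ibf(y)]$ for $y\ge\ibg(\ibgb)$, which rearranges to the second inequality. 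So it suffices to prove the $\ibf$ statement only.

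For the lower bound $\ibf(y)\ge y\,\ibf'(y)/(2K)$, I would first exploit the condition $\ibgb\ge\max\{2\ibf(\ibfb),\ibf(2\ibfb)\}$, which by monotonicity of $\ibg$ gives $y\ge\ibg(\ibgb)\ge\ibg(\ibf(2\ibfb))=2\ibfb$, so $y/2\ge\ibfb$ and $\ibf(y/2)\ge\ibf(\ibfb)\ge 0$ (the nonnegativity being the standard situation under Assumption \ref{assumption:ImplicitBias:Loss}, since $\loss=e^{-\ibf}\le 1$ at $\ibfb$ for the losses in play). A change of variables $u=\theta y$ in $\ibf(y)-\ibf(y/2)=\int_{y/2}^{y}\ibf'(u)\,du$, combined with the bound $\ibf'(\theta y)\ge\ibf'(y)/K$ for $\theta\in[1/2,1)$ coming from A.5, then gives
\[
\ibf(y)\ \ge\ \ibf(y)-\ibf(y/2)\ =\ y\int_{1/2}^{1}\ibf'(\theta y)\,d\theta\ \ge\ \frac{y\,\ibf'(y)}{2K}.
\]

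For the upper bound $\ibf(y)\le 2Ky\,\ibf'(y)$, I would combine A.4 (the non-decreasing property of $u\mapsto u\,\ibf'(u)$, yielding the pointwise estimate $\ibf'(u)\le y\,\ibf'(y)/u$ for $u\in(\ibfb,y]$) with the doubling inequality from A.5 (which controls $\ibf'(y/2)$ in terms of $\ibf'(y)$ up to factor $K$). Writing $\ibf(y)=\ibf(y/2)+\int_{y/2}^{y}\ibf'(u)\,du$ and bounding the integral by $y\,\ibf'(y)\log 2$ via A.4, I would set up an induction on halvings $y\mapsto y/2$, tracking the dimensionless ratio $\ibf(y)/(y\,\ibf'(y))$ and showing it stays at most $2K$; the base case at $y=\ibg(\ibgb)$ is secured by the requirement $\ibgb\ge\ibf(2\ibfb)$, which keeps the recursion inside the regime where A.5 is valid.

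The main obstacle will be the upper bound. A naive telescoping of the increments $\ibf(y/2^{k-1})-\ibf(y/2^k)$ all the way down to $\ibfb$ produces a bound scaling like $y\,\ibf'(y)\log(y/\ibfb)$, which grows with $y$ and is strictly too weak. Achieving the linear-in-$y$ bound requires balancing each dyadic contribution against the doubling factor $K$ supplied by A.5, so that the recursive increments form a convergent geometric series rather than a divergent arithmetic one; the constant $2K$ then emerges as the precise outcome of this telescoping, mirroring the $2K$ on the lower-bound side.
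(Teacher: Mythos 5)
First, note that the paper itself does not prove this lemma: it is imported verbatim (with citation) from Lyu and Li, so there is no in-paper argument to compare against; I am therefore judging your proposal against the standard one-step mean-value argument that the cited result rests on. Your reduction to the single $\ibf$-statement via $\ibg'(x)=1/\ibf'(\ibg(x))$ is correct, and your lower-bound half is essentially the right argument: $\ibf(y)\ge\ibf(y)-\ibf(y/2)=\int_{y/2}^{y}\ibf'(u)\,du\ge\frac{y\,\ibf'(y)}{2K}$ using the $\ibf'$-inequality in (A.5), with the condition $\ibgb\ge\ibf(2\ibfb)$ guaranteeing $y/2\ge\ibfb$ (your appeal to $\ibf(\ibfb)\ge0$ is asserted rather than derived from Assumption \ref{assumption:ImplicitBias:Loss}, but it holds for the losses in scope and is the same implicit normalization the original uses, so this is a minor point).

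The genuine gap is in the upper-bound half, and it is not just a matter of missing details. Your sketch uses only (A.4) together with the $\ibf'$-part of (A.5), i.e.\ the inequality relating $\ibf'(y)$ and $\ibf'(y/2)$; but that inequality \emph{lower}-bounds the derivative at smaller arguments, so it cannot cap the accumulated value $\ibf(y/2)$, and the dyadic telescoping you describe cannot be rescued: a doubling induction on the ratio $\ibf(y)/(y\,\ibf'(y))$ degrades the constant at every step (from $2K$ to $2K+1$, etc.), and in fact the claim is false under (A.4) plus the $\ibf'$-part of (A.5) alone. For example, take $\ibf'(u)=c/u$ over a long interval $[a,b]$ (so $u\ibf'(u)$ is constant, and $\ibf'(y)\le K\ibf'(\theta y)$ holds with a small $K$), extended suitably outside: then $\ibf(y)/(y\,\ibf'(y))$ grows like $\log(y/a)$ and exceeds $2K$; what such an $\ibf$ violates is precisely the $\ibg'$-inequality in (A.5). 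That inequality is the missing ingredient: in $\ibf$-language it compares $\ibf'$ at the point where the function \emph{value} is halved, not where the argument is halved, which your argument-halving decomposition never exploits. The correct and short route is to run the same one-step argument on $\ibg$: for $x\ge\ibgb$ one has $x/2\ge\ibf(\ibfb)$ (from $\ibgb\ge2\ibf(\ibfb)$), hence $\ibg(x/2)\ge\ibfb\ge0$ by (A.4), and $\ibg(x)\ge\ibg(x)-\ibg(x/2)=\int_{x/2}^{x}\ibg'(u)\,du\ge\frac{x\,\ibg'(x)}{2K}$ by the $\ibg'$-part of (A.5); substituting $q=\ibg(x)$, $x=\ibf(q)$, $\ibg'(x)=1/\ibf'(q)$ gives exactly $\ibf(q)\le 2Kq\,\ibf'(q)$, i.e.\ the upper bound you were unable to obtain. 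So the lemma is proved by two symmetric mean-value estimates, one on $\ibf$ and one on $\ibg$, rather than by reducing everything to $\ibf$ and telescoping.
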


\subsection{Proof of Theorem \ref{theorem:ImplicitBias:ImplicitRegularzation}}
Note that the t-NN $g(\tx;\tV)$ with weights $\tV=(\tV^{(1)},\cdots,\tV^{(J)},\bv)$ in Theorem \ref{theorem:AGP-Low-rank} has the following structure
\bea
\nonumber
g(\tx;\tV)&=\bv^{\top}\opVec(\bg(\tx))\\
\bg(\tx)&=\bg^{(J)}(\tx)\in \R{m_J\times 1\times \nChannel}\\
\bg^{(j)}(\tx)&=\sigma(\tV^{(j)}\tprod \bg^{(j-1)}(\tx))\in\R{m_j\times 1\times \nChannel},~\forall j=1,\cdots, J\\
\bg^{(0)}(\tx)&=\tx.
\eea 

Let $\alpha=(\frac{1}{B_v})^{\frac{L-J}{L+1}}$ and $\beta=(\frac{1}{B_v})^{-\frac{J+1}{L+1}}$,  where $L$  is a sufficiently large integer greater than $J$. We then construct a  t-NN $h(\tx;\tH)$ of $L$ t-product layers which perfectly realizes $g(\tx;\tV)$. Specifically, we construct $h$  with weights $\tH=(\tH^{(1)},\cdots,\tH^{(L)},\bh)$ satisfying the following equation
\bea
\nonumber
\tH=(\underbrace{\alpha\tV^{(1)},\cdots,\alpha\tV^{(J)}}_{\textnormal{first}~J~\textnormal{t-product layers}},
\underbrace{\beta\tI,\cdots,\beta\tI}_{\textnormal{last}~(L-J)~ \textnormal{t-product layers}},\bv),
\eea 
or more clearly 
\bea
\nonumber
h(\tx;\tV)&=\bv^{\top}\opVec(\bh(\tx))\\
\bh(\tx)&=\bh^{(L)}(\tx)\in \R{m_J\times 1\times \nChannel}\\
\bh^{(l)}(\tx)&=\sigma(\beta\tI\tprod \bh^{(l-1)}(\tx))\in\R{m_J\times 1\times \nChannel},~\forall l=J+1,\cdots, L\\
\bh^{(j)}(\tx)&=\sigma(\alpha\tV^{(j)}\tprod \bh^{(j-1)}(\tx))\in\R{m_j\times 1\times \nChannel},~\forall j=1,\cdots, J\\
\bh^{(0)}(\tx)&=\tx,
\eea 
where $\tI$ is the t-identity tensor. 

It is easy to prove that for any input $\tx$, the input and output of $g(\tx;\tV)=h(\tx;\tH)$, it can also be proved that 
\bea
\nonumber
\min_{\tensorLpNorm{\tx-\tx'}\le\xi}
yg(\tx';\tV)
=
\min_{\tensorLpNorm{\tx-\tx'}\le\xi}
yh(\tx';\tH).
\eea 
Therefore, $h(\tx;\tH)$ can also robustly classify $(\tx_i,y_i)_{i=1}^N$ because $g(\tx;\tV)$ can robustly classify $(\tx_i,y_i)_{i=1}^N$. 

Then, we consider the class of over-parameterized  t-NNs $\hypoClassTnn=\{f(\tx;\tW)\}$  defined in Eq.~(\ref{eq:ImplicitBias:ConvergeToKKT})  with dimensionality of weight $\tWeightOfLayer{l}\in\R{d_l\times d_{l-1}\times \nChannel}$ safisfying $d_l\gg \max_{j\le J}\{m_j\}$ for all $l=1,\cdots,L$. Specifically, we construct $f$  with weights
\bea
\nonumber
\tW=(\tWeightOfLayer{1},\cdots,\tWeightOfLayer{L},\bw),
\eea 
and structure
\bea
\nonumber
f(\tx;\tW)&=\bw^{\top}\opVec(\bff(\tx))\\
\bff(\tx)&=\bff^{(L)}(\tx)\in \R{d_L\times 1\times \nChannel}\\
\bff^{(l)}(\tx)&=\sigma(\tW^{(l)}\tprod \bff^{(l-1)}(\tx))\in\R{d_l\times 1\times \nChannel},~\forall l=1,\cdots, L\\
\bff^{(0)}(\tx)&=\tx.
\eea 

Note that according to our construction there is a function $f(\tx;\tW_h)\in\hypoClassTnn$ with weights 
$
\tW_h=(\tWeightOfLayer{1}_h,\cdots,\tWeightOfLayer{L}_h,\bw_h)
$ satisfying
\bea
\nonumber
(\tW^{(l)}_{h})_{i_1,i_2,i_3}=
\left\{
\begin{aligned}
(\tH^{(l)})_{i_1,i_2,i_3}& \quad\textnormal{if}~i_1\le m_{l}, i_2\le m_{l-1}, l\le J\\
(\tH^{(l)})_{i_1,i_2,i_3}& \quad\textnormal{if}~i_1\le m_{J}, i_2\le m_{J}, l=J+1,\cdots,L\\
0\qquad& \quad \textnormal{otherwise}
\end{aligned}
\right.
\eea 
and
\bea
\nonumber
\bw_{i}=
\left\{
\begin{aligned}
\bh_{i}& \quad\textnormal{if}~i\le \nChannel m_J\\
0& \quad \textnormal{otherwise}
\end{aligned}
\right..
\eea 
We can also see that $h'(\tx+\pertAdv)=h(\tx+\pertAdv)$ for any $\tx\in\R{d\times 1\times\nChannel}$ and any $\pertAdv$ satisfying $\perNorm(\pertAdv)\le\xi$. Thus, we can say that the weight $\tW_h$ of $f(\tx;\tW_h)$ is a feasible solution to Problem (\ref{eq:ImplicitBias:ConvergeToKKT}), i.e.,
\bea
\nonumber
\min_{\perNorm(\pertAdv_i)\le\xi} y_if(\tx_i+\pertAdv_i;\tW_h)\ge 1,\forall  i=1,\cdots, N.
\eea 

Now consider the optimal solution $\tW^*=(\tW^{*(1)},\cdots,\tW^{*(L)},\tw^*)$ to Problem (\ref{eq:ImplicitBias:ConvergeToKKT}). Then according to the optimility of $\tW^*$ and the feasibility of $\tW_h$ to Problem \ref{eq:ImplicitBias:ConvergeToKKT}, we have 
\bea
\label{eq:ib:lr:1}
\tensorFNorm{\tW^*}^2\le\tensorFNorm{\tW_h}^2
&\le \alpha^2\cdot B_v^2 \cdot (J+1) + \beta^2\cdot (\nChannel m_J)\cdot(L-J)\\
&=B_v^{\frac{2(J+1)}{L+1}}\left(K+1+(\nChannel m_J)(L-J)\right)
\eea 
and 
\bea
\label{eq:ib:lr:2}
\min_{\perNorm(\pertAdv_i)\le\xi} y_if(\tx_i+\pertAdv_i;\tW^*)\ge 1,\forall  i=1,\cdots, N.
\eea 

As there is an example $(\tx^*,y^*)$  satisfying  $\tensorFNorm{\tx^*}\le1$ in the training set $S=\{(\tx_i,y_i)\}_{i=1}^N\subseteq \R{d\times 1\times \nChannel}\times\{\pm1\}$. Then according to Eq.~(\ref{eq:ib:lr:2}), we have 
\bea
\nonumber
y^*f(\tx^*;\tW^*)= y^*f(\tx^*+\bzero;\tW^*)
\ge 
\min_{\perNorm(\pertAdv)\le\xi} y^*f(\tx^*+\pertAdv;\tW^*)\ge 1,
\eea 
which means 
\bea
\nonumber
1\le f(\tx^*) 
&\le \tensorFNorm{\tx^*}\twonorm{\bw^*}\prod_{l=1}^L \tensorSpecNorm{\tW^{*(l)}}
\le \twonorm{\bw^*}\prod_{l=1}^L \tensorSpecNorm{\tW^{*(l)}}
\\
&\le\left(\frac{1}{L+1}\left(\twonorm{\bw^*}+\sum_{l=1}^L \tensorSpecNorm{\tW^{*(l)}}
\right)\right)^{1/(L+1)}
\eea 
indicating that
\bea
\nonumber
\frac{1}{L+1}\left(\twonorm{\bw^*}+\sum_{l=1}^L \tensorSpecNorm{\tW^{*(l)}}
\right)>1.
\eea 

On the other hand, according to Eq.~(\ref{eq:ib:lr:1}) and Lemma \ref{lemma:ib:LowRank:EqualWeights}, we have 
\bea
\nonumber
\tensorFNorm{\tW^{*(1)}}=\cdots=\tensorFNorm{\tW^{*(L)}}=\twonorm{\bw^*}\le \left(\frac{1}{(L+1)} B_v^{\frac{2(J+1)}{L+1}}\left(K+1+(\nChannel m_J)(L-J)\right) \right)^{1/2}.
\eea 
Therefore, we obtain 
\bea
\nonumber
\frac{1}{L+1}\left(\frac{\twonorm{\bw^*}}{\twonorm{\bw^*}}+\sum_{l=1}^L \frac{\tensorSpecNorm{\tW^{*(l)}}}{\tensorFNorm{\tW^{*(l)}}}
\right)\ge \left(\frac{1}{B_v}\right)^{\frac{J+1}{L+1}}\sqrt{\frac{L+1}{(J+1)+(\nChannel m_J)(L-J)}}.
\eea 
Note that 
\bea
\nonumber
\tensorFNorm{\bW^{(l)}}=\tensorFNorm{\LBar{\bW}^{*(l)}}\quad\textnormal{and}\quad \tensorSpecNorm{\bW^{(l)}}=\specnorm{\LBar{\bW}^{*(l)}},
\eea 
where 
$\LBar{\bW}^{*(l)}$ denotes the $M$-block-diagonal matrix of  weight tensor $\tW^{*(l)}$.
Then, we have
\bea
\nonumber
\frac{1}{L}\sum_{l=1}^L \frac{\specnorm{\LBar{\bW}^{*(l)}}}{\tensorFNorm{\LBar{\bW}^{*(l)}}}
\ge 
\left(1+\frac{1}{L}\right)\left(\frac{1}{B_v}\right)^{\frac{J+1}{L+1}}\sqrt{\frac{L+1}{(J+1)+(\nChannel m_J)(L-J)}}-\frac{1}{L}.
\eea 
Taking the reciprocal of both sides  gives
\bea
\nonumber
\frac{L}{\sum_{l=1}^L \left(\opStableRank(\LBar{\bW}^{*(l)})\right)^{-1/2}
}\le \frac{1}{\left(1+\frac{1}{L}\right)\left(\frac{1}{B_v}\right)^{\frac{J+1}{L+1}}\sqrt{\frac{L+1}{(J+1)+(\nChannel m_J)(L-J)}}-\frac{1}{L}}.
\eea 
\begin{lemma}
\label{lemma:ib:LowRank:EqualWeights}
For every $1\le i,j\le L$, we have $\tensorFNorm{\tW^{*(i)}}=\tensorFNorm{\tW^{*(j)}}=\twonorm{\bw^*}$.
\end{lemma}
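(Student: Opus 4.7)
The plan is to exploit the layer-wise positive homogeneity of the t-NN together with an AM-GM argument against the optimality of $\tW^*$. Specifically, for any positive scalars $a_1,\dots,a_L,a_w$ with $a_w\prod_{l=1}^L a_l = 1$, consider the rescaled weights
\bea
\nonumber
\tW':=(a_1\tW^{*(1)},\dots,a_L\tW^{*(L)},a_w\bw^*).
\eea
Using the positive homogeneity of ReLU ($\sigma(ax)=a\sigma(x)$ for $a\ge 0$) and the linearity of the t-product, a straightforward layer-by-layer induction shows $\bff^{(l)}(\tx;\tW') = \big(\prod_{k=1}^l a_k\big)\,\bff^{(l)}(\tx;\tW^*)$, so that $f(\tx;\tW') = a_w\big(\prod_{k=1}^L a_k\big) f(\tx;\tW^*) = f(\tx;\tW^*)$ as functions of $\tx$.

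Next, since the scale invariant adversarial perturbations considered (e.g., FGM, FGSM, PGD as shown in Lemma \ref{lemma:ImplicitBias:ScaleInvariantPerterbation}) are defined through the underlying function $f$ (and its input-gradients), identical functions induce identical perturbations, i.e., $\pertAdv_i(\tW')=\pertAdv_i(\tW^*)$ for each $i$. Consequently, the robust margin constraints $y_i f(\tx_i+\pertAdv_i(\tW');\tW')\ge 1$ are preserved, so $\tW'$ is feasible for Problem~(\ref{eq:ImplicitBias:ConvergeToKKT}). Note also that each of $\|\tW^{*(l)}\|_{\textnormal{F}}$ and $\twonorm{\bw^*}$ must be strictly positive, otherwise $f\equiv 0$ would violate the margin-1 constraint.

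By the optimality of $\tW^*$, we must have, for every admissible choice of $(a_1,\dots,a_L,a_w)$ on the surface $a_w\prod_l a_l = 1$,
\bea
\nonumber
\sum_{l=1}^L \|\tW^{*(l)}\|_{\textnormal{F}}^2 + \twonorm{\bw^*}^2 \;\le\; \sum_{l=1}^L a_l^2\|\tW^{*(l)}\|_{\textnormal{F}}^2 + a_w^2\twonorm{\bw^*}^2.
\eea
The product of the $L+1$ terms on the right equals $\big(\prod_l \|\tW^{*(l)}\|_{\textnormal{F}}^2\big)\twonorm{\bw^*}^2$, which is a constant independent of $(a_1,\dots,a_L,a_w)$. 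By AM-GM, the right-hand side is minimized precisely when all $L+1$ summands coincide, and the unconstrained minimum over such rescalings is $(L+1)\big((\prod_l\|\tW^{*(l)}\|_{\textnormal{F}}^2)\twonorm{\bw^*}^2\big)^{1/(L+1)}$. Since the trivial choice $a_1=\dots=a_L=a_w=1$ is admissible and already achieves the minimum (by the optimality of $\tW^*$), equality must hold in AM-GM, yielding $\|\tW^{*(1)}\|_{\textnormal{F}}^2=\cdots=\|\tW^{*(L)}\|_{\textnormal{F}}^2=\twonorm{\bw^*}^2$, as claimed.

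The only subtle point, and what I would regard as the main place to justify carefully, is the invariance of the perturbation $\pertAdv_i(\tW')=\pertAdv_i(\tW^*)$: although Lemma \ref{lemma:ImplicitBias:ScaleInvariantPerterbation} only states uniform-scale invariance, the present rescaling is not uniform, so one must rely on the stronger observation that all perturbations in question are computed from the input-space behavior of $f$, which is genuinely unchanged by the rescaling. Everything else is a direct application of homogeneity and AM-GM.
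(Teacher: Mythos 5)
Your proof is correct, and it rests on the same core mechanism as the paper's: construct weight rescalings that leave the network function (and hence the scale-invariant perturbations and the margin constraints) unchanged, then play feasibility off against the optimality of $\tW^*$. Where you differ is in the finishing step. The paper perturbs only two blocks at a time, replacing $(\tW^{*(i)},\tW^{*(j)})$ by $(\mu\tW^{*(i)},\mu^{-1}\tW^{*(j)})$, and notes that the derivative of $\mu^2\|\tW^{*(i)}\|_{\textnormal{F}}^2+\mu^{-2}\|\tW^{*(j)}\|_{\textnormal{F}}^2$ at $\mu=1$ must vanish at the optimum, forcing the two norms to agree (and likewise pairing a layer with $\bw^*$); you instead rescale all $L+1$ blocks simultaneously under the constraint $a_w\prod_l a_l=1$ and invoke AM--GM with its equality case. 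Both arguments are sound; the pairwise derivative test is slightly more local and elementary, while your global AM--GM argument handles all blocks in one stroke and makes the attainability of the balanced configuration explicit. One point in your favor: you flag and justify that the non-uniform rescaling still yields $\pertAdv_i(\tW')=\pertAdv_i(\tW^*)$ because the perturbations depend only on the input-space behavior of $f$, which is unchanged; the paper's proof uses exactly this fact but leaves it implicit, so your added justification closes a small gap rather than creating one.
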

\begin{proof} Let $1\le i<j\le L$. For $\mu>0$, we define a t-NN $f_\mu(\tx;\tV)$ with weights $\tV=(\tV^{(1)},\cdots,\tV^{L},\bv)$ which are constructed from $f(\tx;\tW^*)$ whose weights $\tW^*=(\tW^{*(1)},\cdots,\tW^{*(L)},\tw^*)$ is an optimal solution to Problem (\ref{eq:ImplicitBias:ConvergeToKKT}). Specifically, the construction of $\tV=(\tV^{(1)},\cdots,\tV^{L},\bv)$ is given as follows:
\bea
\nonumber
\forall l=1,\cdots,L,\quad
\tV^{(l)}=
\left\{
\begin{aligned}
	\tW^{*(l)}&\quad \textnormal{if}~l\ne i~\textnormal{and}~1\ne j\\
	\mu\tW^{*(i)}&\quad \textnormal{if}~l= i\\
	\mu^{-1}\tW^{*(j)}&\quad \textnormal{if}~1= j
\end{aligned}.
\right.
\eea 
Note that for every input example $\tx$ and perturbation $\pertAdv$, $f_\mu(\tx+\pertAdv;\tV)=f(\tx+\pertAdv;\tW^*)$, then $\tV$ is also feasible to Problem (\ref{eq:ImplicitBias:ConvergeToKKT}). Note that we have 
\bea
\nonumber
\frac{\symbolDiff }{\symbolDiff \mu}\left(\tensorFNorm{\mu\tW^{*(i)}}^2+\tensorFNorm{\mu^{-1}\tW^{*(j)}}^2\right)=2\mu\tensorFNorm{\tW^{*(i)}}^2-2\mu^{-3}\tensorFNorm{\tW^{*(j)}}^2.
\eea 
When $\mu=1$ the above expression equals $2\tensorFNorm{\tW^{*(i)}}^2-2\tensorFNorm{\tW^{*(j)}}^2$. Hence, if $\tensorFNorm{\tW^{*(i)}}\ne\tensorFNorm{\tW^{*(j)}}$, then the derivative at $\mu$ is non-zero, which leads to a contradiction to the optimality of $\tW^*$ to Problem (\ref{eq:ImplicitBias:ConvergeToKKT}).
Note that if we consider changing norms of $\tW^{*(i)}$ and $\bw$ instead of $\tW^{*(i)}$ and $\tW^{*(j)}$, the same conclusion also holds. 
Thus,
the optimality of $\tW^*$ strictly leads to  $\tensorFNorm{\tW^{*(i)}}=\tensorFNorm{\tW^{*(j)}}=\twonorm{\bw^*}$.
\end{proof}
\section{Generalization bound of approximately low-tubal-rank t-NNs}

\begin{proof} [Proof of Theorem \ref{theorem:ApxLR:AGP:GeneralCase}]
Given a $(\delta, \br)$-approximately low-tubal-rank parameterized t-NN $f(\tx;\tW)\in \hypoClassTnnApxLR\subset\hypoClassTnnApxLR$, let $g(\tx;\tW_{\br})\in\hypoClassTnnLR$ be its compressed version whose t-product layer weight tensors have tubal-ranks upper bounded by $\br$.  

\textbf{Step 1: Upper bound the adversarial empirical $L_2$-distance between  $f$ and $g$.}	Consider function $g(\tx)=g(\tx;\tW_{\br})$ parameterized by $\tW_{\br}=(\tWeightOfLayer{1}_{r_1},\cdots,\tWeightOfLayer{L}_{r_L},\bw)$ as the function whose t-product layer weights are low-tubal-rank approximations of $f(\tx;\tW)$. 
Let $\tilde{f}(\tx,y)=\inf_{\tensorLpNorm{\tx-\tx'}\le\xi}yf(\tx')$ and  $\tilde{g}(\tx,y)=\inf_{\tensorLpNorm{\tx-\tx'}\le\xi}yg(\tx')$ denote the adversarial versions of $f$ and $g$, respectively.

We first bound  the adversarial empirical $L_2$-distance between  $f$ and $g$ as follows.
\bea
\nonumber
|\tilde{f}(\tx_i,y_i)-\tilde{g}(\tx_i,y_i)|
&=|\inf_{\tensorLpNorm{\tx_i-\tx_i'}\le
	\xi}y_if(\tx_i')-\inf_{\tensorLpNorm{\tx_i-\tx_i'}\le
	\xi}y_ig(\tx_i')|.
\eea 
Letting $\tx^f_i=\arginf{\tensorLpNorm{\tx_i-\tx'_i}\le\xi}y_if(\tx_i')$ and  $\tx^g_i=\arginf{\tensorLpNorm{\tx_i-\tx'_i}\le\xi}y_ig(\tx_i')$, we have 
$
|\tilde{f}(\tx_i,y_i)-\tilde{g}(\tx_i,y_i)|=|y_if(\tx^f_i)-y_ig(\tx^g_i)|.
$
By letting  
\bea
\nonumber
\tx_i^{\xi}
=\left\{
\begin{aligned}
	\tx^g_i&\quad\textnormal{if}\quad y_if(\tx^f_i)\ge y_ig(\tx^g_i)\\
	\tx^f_i&\quad \textnormal{otherwise}
\end{aligned}
\right.,
\eea 
we obtain
\bea
\nonumber
|\tilde{f}(\tx_i,y_i)-\tilde{g}(\tx_i,y_i)|=|y_if(\tx^f_i)-y_ig(\tx^g_i)|
\le|y_if(\tx_i^{\xi})-y_ig(\tx_i^{\xi})|=|f(\tx_i^{\xi})-g(\tx_i^{\xi})|.
\eea 
Let
$h_l(\tx^\xi)=\bw^{\top}\opVec\bigg(\sigma(\tWeightOfLayer{L}_{r_l}\tprod\sigma(\tWeightOfLayer{L-1}_{r-1}\tprod\cdots\tprod\sigma(\tWeightOfLayer{l+1}_r\tprod\sigma(\tWeightOfLayer{l}\tprod\cdots\tprod\sigma(\tWeightOfLayer{1}\tprod\tx^{\xi})\cdots)))\bigg)$ and $h_0(\tx^{\xi})=g(\tx^{\xi})$.
Then, we have 
\bea
\nonumber
|f(\tx_i^{\xi})-g(\tx_i^{\xi})|\le\sum_{l=1}^{L}|h_l(\tx_i^{\xi})-h_{l-1}(\tx_i^{\xi})|.
\eea 
We can see that for any $l=1,\cdots, L$:
\bea
\nonumber
\tensorFNorm{\sigma(\tWeightOfLayer{l-1}\tprod\cdots\tprod\sigma(\tWeightOfLayer{1}\tprod\tx_i^{\xi})\cdots))}
&\le \prod_{l'=1}^{l-1}\tensorFNorm{\tWeightOfLayer{l'}}\tensorFNorm{\tx_i^{\xi}}\le \prod_{l'=1}^{l-1}\tensorFNorm{\tWeightOfLayer{l'}}\boundSignalError,
\eea 
and 
\bea
\nonumber
&\tensorFNorm{\sigma(\tWeightOfLayer{l}_{r_{l}}\tprod\sigma(\tWeightOfLayer{l-1}\tprod\cdots\tprod\sigma(\tWeightOfLayer{1-1}\tprod\tx_i^{\xi})\cdots))-\sigma(\tWeightOfLayer{l}\tprod\sigma(\tWeightOfLayer{l-1}\tprod\cdots\tprod\sigma(\tWeightOfLayer{1}\tprod\tx_i^{\xi})\cdots))}\\
&\le \tensorFNorm{\tWeightOfLayer{l}_{r_{l}}\tprod\sigma(\tWeightOfLayer{l-1}\tprod\cdots\tprod\sigma(\tWeightOfLayer{1}\tprod\tx_i^{\xi})\cdots)-\tWeightOfLayer{l}\tprod\sigma(\tWeightOfLayer{l-1}\tprod\cdots\tprod\sigma(\tWeightOfLayer{1}\tprod\tx_i^{\xi})\cdots)}\\
&= \tensorFNorm{(\tWeightOfLayer{l}_{r_{l}}-\tWeightOfLayer{l})\tprod\sigma(\tWeightOfLayer{l-1}\tprod\cdots\tprod\sigma(\tWeightOfLayer{1}\tprod\tx_i^{\xi})\cdots)}\\
&\le
\tensorFNorm{\tWeightOfLayer{l}_{r_{l}}-\tWeightOfLayer{l}}\tensorFNorm{\sigma(\tWeightOfLayer{l-1}\tprod\cdots\tprod\sigma(\tWeightOfLayer{1}\tprod\tx_i^{\xi})\cdots)}\\
&\le
\delta_l\prod_{l'=1}^{l}\tensorFNorm{\tWeightOfLayer{l'}}\boundSignalError.
\eea 
Thus, we have 
\bea
\nonumber
|h_l(\tx_i^{\xi})-h_{l-1}(\tx_i^{\xi})|
\le B_{\bw}\prod_{j\ne l}B_{j}\delta \boundSignalError
=
\frac{\delta \boundFAdv}{\weightNormBoundOfLayer{l}}.
\eea 
This gives
\bea
\nonumber
|f(\tx_i^{\xi})-g(\tx_i^{\xi})|\le \sum_{l=1}^{L}|h_l(\tx_i^{\xi})-h_{l-1}(\tx_i^{\xi})|
&\le
\sum_{l=1}^{L} \frac{\delta \boundFAdv}{\weightNormBoundOfLayer{l}}.
\eea 
Then, we can set 
\bea
\label{eq:appendix:apx:general:gap:radMinkow}
\radiusMinkowEmp=
\delta \boundFAdv\sum_{l=1}^{L} \weightNormBoundOfLayer{l}^{-1}.
\eea 

\textbf{Step 2: Divide and conquer the adversarial gap.} 
To upper bound the adversarial gap $\pRiskAdv({f})-\eRiskAdv({f})$ of $f$ by using the properties of its compressed version $g$,  we first decompose the adversarial gap into three terms as follows
\bea
\label{eq:appendix:apx:general:gap:decomposition}
&\pRiskAdv({f})-\eRiskAdv({f})\\
&=
\underbrace{\big[ (\pRiskAdv({f}) - \pRiskAdv({g})) - (\eRiskAdv(f)-\eRiskAdv(g))\big]}_{\bI} + 
\underbrace{\big(\pRiskAdv(g)-\eRiskAdv(g)\big)}_{\bI\bI}.
\eea 

\textit{\textbf{Step 2.1}: Upper bound $\bI\bI$}. We first consider the event $\calE_1$ in which  term $\bI\bI$ is upper bounded with high probability. As $g\in\hypoClassTnnLR$, the term $\bI\bI$ has already been upper bounded according to Theorem \ref{theorem:AGP-Low-rank} as 
\bea
\label{eq:appendix:apx:general:gap:decomposition:bound:II}
\bI\bI \le \frac{C'\lipLoss\boundFAdv}{\sqrt{N}}\sqrt{\nChannel \sum_{l=1}^{L}r_l(d_{l-1}+d_{l})\log(9(L+1))}+3B\sqrt{\frac{t}{2N}},
\eea 
with high probability $1-2e^{-t}$. 

\textit{\textbf{Step 2.2}: Upper bound $\bI$}.  
Note that term $\bI$ can be written as 
\bea
\label{eq:appendix:apx:general:term:I}
&\pRiskAdv(f)-\pRiskAdv(g) - (\eRiskAdv(f)-\eRiskAdv(g))\\
&=\frac{1}{N}\sum_{i=1}^N\left(\loss(\tilde{f}(\tx),y)-\loss(\tilde{g}(\tx),y)-\E[\loss(\tilde{f}(\tx),y)-\loss(\tilde{g}(\tx),y)]\right).
\eea 
%

\textit{\textbf{Step (2.2.1)}: Characterize the concentration behavior of $\loss\circ\tilde{f}-\loss\circ\tilde{g}$}.  
Given a constant $\symbolRadius>0$, consider the event $\calE_2(\symbolRadius)$ in which  $\funcLTwoNormExp{\tilde{f}-\tilde{g}}\le \symbolRadius$ already holds with high probability. 
Then, conditioned on Event $\calE_2(\symbolRadius)$, by using the $\lipLoss$-Lipschitz continuity of the loss function $\loss(\cdot,\cdot)$  derived from Assumption \ref{assumption:ImplicitBias:Loss}, it can be proved that  {$\loss(\tilde{f}(\tx),y)-\loss(\tilde{g}(\tx),y)$}  also has a small population $L_2$-norm with high probability. 

Regarding Eq.~(\ref{eq:appendix:apx:general:term:I}) , it is natural to characterize the concentration behavior of centered random variable  $\loss(\tilde{f}(\tx),y)-\loss(\tilde{g}(\tx),y)-\E[\loss(\tilde{f}(\tx),y)-\loss(\tilde{g}(\tx),y)]$.  
\begin{itemize}
	\item 	 First, its variance under event $\calE_2$ can be upper bounded by 
	\bea
	\nonumber
	&\sup_{\funcLTwoNormExp{\tilde{f}-\tilde{g}}\le\symbolRadius}\Var\left(\loss(\tilde{f}(\tx),y)-\loss(\tilde{g}(\tx),y)-\E[\loss(\tilde{f}(\tx),y)-\loss(\tilde{g}(\tx),y)]\right)\\
	&=\sup_{\funcLTwoNormExp{\tilde{f}-\tilde{g}}\le\symbolRadius}\Var\left(\loss(\tilde{f}(\tx),y)-\loss(\tilde{g}(\tx),y)\right)\\
	&=\sup_{\funcLTwoNormExp{\tilde{f}-\tilde{g}}\le\symbolRadius}\E_{(\tx,y)}\left[(\loss(\tilde{f}(\tx,y))-\loss(\tilde{g}(\tx,y)))^2-\E_{(\tx,y)}[\loss(\tilde{f}(\tx,y))-\loss(\tilde{g}(\tx,y))]\right]\\
	&\le \sup_{\funcLTwoNormExp{\tilde{f}-\tilde{g}}\le\symbolRadius}\E_{(\tx,y)}\left[\loss(\tilde{f}(\tx,y))-\loss(\tilde{g}(\tx,y))\right]^2\\
	&\le 
	\sup_{\funcLTwoNormExp{\tilde{f}-\tilde{g}}\le\symbolRadius}\lipLoss^2\E_{(\tx,y)}\left[\tilde{f}(\tx,y)-\tilde{g}(\tx,y)\right]^2\\
	&= 
	\sup_{\funcLTwoNormExp{\tilde{f}-\tilde{g}}\le\symbolRadius}\lipLoss^2\funcLTwoNormExp{\tilde{f}-\tilde{g}}^2\\
	&\le \lipLoss^2\symbolRadius^2
	\eea 	   
	\item 	Second, we upper bound its {$L_\infty$-norm}.  First, Lemma \ref{lemma:appendix:FNormBoundOnAdvExample} indicates that for any $h\in\hypoClassTnn$ with  adversarial version $\tilde{h}\in\hypoClassAdv$, we have $\norm{\tilde{h}}_{L_\infty}\le \boundFAdv:=\boundNormProduct\boundSignalError$. Then, by $\hypoClassTnnLR\subset\hypoClassTnnApxLR\subset\hypoClassTnn$, we have $\norm{\tilde{f}}_{L_\infty}\le \boundFAdv$ and $\norm{\tilde{g}}_{L_\infty}\le \boundFAdv$. Therefore, we can upper bound  the $L_\infty$-norm of  $\loss(\tilde{f}(\tx),y)-\loss(\tilde{g}(\tx),y)-\E[\loss(\tilde{f}(\tx),y)-\loss(\tilde{g}(\tx),y)]$ as follows
	\bea
	\nonumber
	&\sup_{\funcLTwoNormExp{\tilde{f}-\tilde{g}}\le\symbolRadius}
	\norm{\loss(\tilde{f}(\tx),y)-\loss(\tilde{g}(\tx),y)-\E[\loss(\tilde{f}(\tx),y)-\loss(\tilde{g}(\tx),y)]}_{L_\infty}\\
	&\le\sup_{\funcLTwoNormExp{\tilde{f}-\tilde{g}}\le\symbolRadius}
	\norm{\loss(\tilde{f}(\tx),y)-\loss(\tilde{g}(\tx),y)}_{L_\infty}+\norm{\E[\loss(\tilde{f}(\tx),y)-\loss(\tilde{g}(\tx),y)]}_{L_\infty}\\
	&\le\sup_{\funcLTwoNormExp{\tilde{f}-\tilde{g}}\le\symbolRadius}
	\norm{\loss(\tilde{f}(\tx),y)-\loss(\tilde{g}(\tx),y)}_{L_\infty}+\E\left[\norm{\loss(\tilde{f}(\tx),y)-\loss(\tilde{g}(\tx),y)}_{L_\infty}\right]\\
	&\le\sup_{\funcLTwoNormExp{\tilde{f}-\tilde{g}}\le\symbolRadius}
	\lipLoss\sup_{(\tx,y)}|y\tilde{f}(\tx))-y\tilde{g}(\tx)|+\E[\lipLoss\sup_{(\tx,y)}|y\tilde{f}(\tx)-y\tilde{g}(\tx)|]\\
	&\le\sup_{\funcLTwoNormExp{\tilde{f}-\tilde{g}}\le\symbolRadius}
	4\lipLoss\boundFAdv.
	\eea 
\end{itemize}
Then, the Talagrand's concentration inequality (Lemma \ref{lemma:appendix:TechnicalLemmas:TalarandConcentration}) yields that with probability at least $1-e^{-t}$:
\bea
\nonumber
~&\sup_{\funcLTwoNormExp{\tilde{f}-\tilde{g}}\le\symbolRadius}(\pRiskAdv({f}) - \pRiskAdv({g})) -(\eRiskAdv(f)-\eRiskAdv(g)) \\
&\le 
2\underbrace{\E\left[\sup_{\funcLTwoNormExp{\tilde{f}-\tilde{g}}\le\symbolRadius}(\pRiskAdv({f}) - \pRiskAdv({g})) -(\eRiskAdv(f)-\eRiskAdv(g)) \right]}_{(\bI)}
+ \frac{\sqrt{2t}\lipLoss\symbolRadius}{\sqrt{N}}
+ \frac{8t\lipLoss\boundFAdv}{N}.
\eea 
Then, by the the standard symmetrization argument \citep{Vershynin2018_High-dimensionalProbability}, we obtain an upper bound on term $(\bI)$ as follows:
\bea
~&\E\left[\sup_{\funcLTwoNormExp{\tilde{f}-\tilde{g}}\le\symbolRadius}(\pRiskAdv({f}) - \pRiskAdv({g})) -(\eRiskAdv(f)-\eRiskAdv(g)) \right]\\
&\le
2\E_{(\tx_i,y_i)_{i=1}^N}\E_{(\varepsilon_i)_{i=1}^N}
\sup_{\funcLTwoNormExp{\tilde{f}-\tilde{g}}\le\symbolRadius}\left[
\frac{1}{N}\varepsilon_i\left(\loss(\tilde{f}(\tx_i,y_i))-\loss(\tilde{g}(\tx_i,y_i))\right)
\right]\\
&=2\Phi(\symbolRadius),
\eea
where $\Phi(\symbolRadius)$ is defined as 
\bea
\nonumber
\Phi(\symbolRadius):=\radComplexityAverage_{N}\left(\{\loss\circ\tilde{f}-\loss\circ\tilde{g}~\big|~\tilde{f}\in\hypoClassAdvApxLR,\tilde{g}\in\hypoClassAdvLR,~\funcLTwoNormExp{\tilde{f}-\tilde{g}}\le \symbolRadius\}\right).
\eea 
Thus, there is a constant $C>0$ such that for  any $\tilde{f}-\tilde{g}\in(\minkowDiffHypoClassAdv)$ satisfying $\funcLTwoNormExp{\tilde{f}-\tilde{g}}\le \symbolRadius$, it holds  with probability at least $1-e^{-t}$ that 
\bea
\label{eq:appendix:apx:general:gap:decomposition:bound:I}
(\pRiskAdv({f}) - \pRiskAdv({g})) -(\eRiskAdv(f)-\eRiskAdv(g)) \le 
C\left(\Phi(\symbolRadius)
+ \lipLoss\symbolRadius\sqrt{\frac{t}{N}}
+ \frac{t\lipLoss\boundFAdv}{N}\right).
\eea 
We denote the above event by $\calE_3(\symbolRadius)$. Note that Event $\calE_3(\symbolRadius)$ is conditioned on Event $\calE_2(\symbolRadius)$. 

\textit{\textbf{Step (2.2.2)}: Upper bound the probability of Event $\calE_2(\symbolRadius):=\left\{\funcLTwoNormExp{\tilde{f}-\tilde{g}}\le\symbolRadius\right\}$}.
We further bound the probability of   Event $\calE_2(\symbolRadius)$ in which  $\funcLTwoNormExp{\tilde{f}-\tilde{g}}\le \symbolRadius$ holds. Generally speaking, $\tilde{f}$ and $\tilde{g}$ are date dependent and we can only bound the empirical $L_2$-distance between
them. However, the local Rademacher complexity is characterized by the population $L_2$-norm. Thus, we need to bound the population $L_2$-distance between
$\tilde{f}$ and $\tilde{g}$. 
Motivated by \cite{Suzuki2020_ICLR_CompressionBounds4NoncompressedModel}, we use the ratio type empirical process to bound the bound the population $L_2$-distance. 

According to Assumption \ref{assumption:ApxLR:concaveLocalRadComp},  their exists a function $\phi:[0,\infty)\rightarrow
[0,\infty)$ such that 
\bea
\nonumber
\radComplexityLocal_{\symbolRadius}(\minkowDiffHypoClassAdv)\le
\phi(\symbolRadius)~~\textnormal{and}~~\phi(2\symbolRadius)\le
2\phi(\symbolRadius),~(\forall \symbolRadius>0).
\eea
Define the  quantity 
$
\Gamma(\symbolRadius):=
\E\left[
\sup_{h}\left(
\frac{1}{N}\sum_{i=1}^N\varepsilon_i
h^2(\tx_i,y_i)~\big|~h\in(\minkowDiffHypoClassAdv): \funcLTwoNormExp{h}\le\symbolRadius
\right)
\right].
$ 
Then, we have
\bea
\nonumber
\Gamma(\symbolRadius)&=
\E\left[
\sup_{h}\left(
\frac{1}{N}\sum_{i=1}^N\varepsilon_i
h^2(\tx_i,y_i)~\big|~h\in(\minkowDiffHypoClassAdv): \funcLTwoNormExp{h}\le\symbolRadius
\right)
\right]\\
&\overset{(i)}{\le}
2\boundFAdv\E\left[
\sup_{h}\left(
\frac{1}{N}\sum_{i=1}^N\varepsilon_i
h(\tx_i,y_i)~\big|~h\in(\minkowDiffHypoClassAdv): \funcLTwoNormExp{h}\le\symbolRadius
\right)
\right]\\
&\overset{ }{\le}
2\boundFAdv\radComplexityLocal_{\symbolRadius}(\minkowDiffHypoClassAdv)
\overset{(ii)}{\le}
2\boundFAdv\phi(\symbolRadius),
\eea 
where $(i)$ is by the Talagrand's contraction lemma (Lemma \ref{lemma:appendix:TechnicalLemmas:TalarandContraction4Radermacher}).

We can verify that  the square $h^2(\cdot)$ of any function $h\in \minkowDiffHypoClassAdv$ satisfies
\begin{itemize}
	\item[(i)] its $L_\infty$-norm is upper bouned by $\boundFAdv^2$, i.e., $\norm{h^{2}}_{L_\infty}=\sup_{(\tx,y)}|h^2(\tx,y)|\le \boundFAdv^2$.
	\item[(ii)] its second-order moment satisfies  
	$\E_{\tx,y}\left[(h^2(\tx,y)^2)\right]\le \E_{\tx,y}\left[\boundFAdv^2(h^2(\tx,y))\right]=\boundFAdv^2\E_{\tx,y}\left[h^2(\tx,y)\right]$.
\end{itemize}
Thus, $h^2$ satisfy the conditions in Eq.~(7.6) and Eq.~(7.7) of \citep{Steinwart2008_Book_SVM} with parameters $B=\boundFAdv^2, V=\boundFAdv^2$ and $\vartheta=1$. Noting that we have upper bounded 
$\Gamma(\symbolRadius)$ by $2\boundFAdv\phi(\symbolRadius)$, 
then by the peeling trick \citep[Eq.~(7.17)]{Steinwart2008_Book_SVM}, we can show
for any $\symbolRadius>\inf\{\sqrt{\E[h^2]}: h\in(\minkowDiffHypoClassAdv): \funcLTwoNormExp{h}\le\symbolRadius\}$ and $t>0$ that
\bea
\nonumber
\P\left[
\sup_{h\in\minkowDiffHypoClassAdv}
\frac{\funcLTwoNormExp{h}^2-\funcLTwoNormEmp{h}^2}{\funcLTwoNormExp{h}^2+\symbolRadius^2}
\ge 
8\frac{2\boundFAdv\phi(\symbolRadius)}{\symbolRadius^2}+\boundFAdv\sqrt{\frac{2t}{\symbolRadius^2N}}+\boundFAdv^2\frac{2t}{\symbolRadius^2N}
\right]\le e^{-t}.
\eea 
We further define a function
$\symbolRadius_*=\symbolRadius_*(t)$ as
\bea
\label{eq:appendix:apx:general:gap:radiusFixedPhi}
\symbolRadius_*(t):=\inf\left\{\symbolRadius>0~\bigg|~\frac{16\boundFAdv\phi(\symbolRadius)}{\symbolRadius^2}+\boundFAdv\sqrt{\frac{2t}{\symbolRadius^2N}}+\boundFAdv^2\frac{2t}{\symbolRadius^2N}\le\frac{1}{2}\right\},
\eea
which is useful to bound
the ratio of the empirical $L_2$-norm and the population $L_2$-norm of
an elements $h\in\minkowDiffHypoClassAdv$ with  probability at least $1-e^{t}$:
\bea
\nonumber
\frac{\funcLTwoNormExp{h}^2-\funcLTwoNormEmp{h}^2}{\funcLTwoNormExp{h}^2+\symbolRadius^2_*}\le \frac{1}{2} \quad\Rightarrow\quad \funcLTwoNormExp{h}^2\le 2(\funcLTwoNormEmp{h}^2+\symbolRadius^2_*).
\eea
Recalling that $\funcLTwoNormEmp{\tilde{f}-\tilde{g}}\le \radiusMinkowEmp$,  we obtain that the probability of Event $\calE_2(\radiusLocalRC)$ with $\radiusLocalRC=\sqrt{2(\radiusMinkowEmp^2+\radiusFixedPointPhi^2(t))}$ is at least $1-e^{-t}$.

\textit{\textbf{Step 2.3}: Combining Events $\calE_1, \calE_2(\radiusLocalRC), \calE_3(\radiusLocalRC)$}. By combining Eqs.~(\ref{eq:appendix:apx:general:gap:radMinkow}), (\ref{eq:appendix:apx:general:gap:decomposition}), (\ref{eq:appendix:apx:general:gap:decomposition:bound:I})  and (\ref{eq:appendix:apx:general:gap:decomposition:bound:II}) along with their underlying events $\calE_1, \calE_2(\radiusLocalRC), \calE_3(\radiusLocalRC)$, we obtain 
\bea
\nonumber
\pRiskAdv(f)-\eRiskAdv(f)
&\le\frac{C_1\lipLoss\boundFAdv}{\sqrt{N}}\sqrt{\nChannel \sum_{l=1}^{L}r_l(d_{l-1}+d_{l})\log(9(L+1))}+\boundLoss\sqrt{\frac{t}{2N}}\\
&\quad+
C_2\left(\Phi(\radiusLocalRC)
+ \lipLoss\radiusLocalRC\sqrt{\frac{t}{N}}
+ \frac{t\lipLoss\boundFAdv}{N}\right),
\eea 
with probability at least $ 1-4e^{-t}$.
\end{proof}
\subsection{Several Useful Results}
According to Theorem \ref{theorem:ApxLR:AGP:GeneralCase}, it remains to upper bound $\Phi(\radiusLocalRC)$. 
In this subsection, we derive upper bounds on  $\Phi(\symbolRadius)$ in terms of covering numbers of the considered function sets $\hypoClassAdvLR$ and $\hypoClassAdvApxLR$.

Consider the supremum of  the empirical $L_2$-norm of any function $(\tilde{f}-\tilde{g})\in(\minkowDiffHypoClassAdv)$ on sample $S=\{(\tx_i,y_i)\}_{i=1}^N$ when the  population $L_2$-norm is bounded by a given radius $\symbolRadius>0$ as follows
\bea
\label{eq:appendix:apxlr:sup:empL2:f-g}
\beta_S=\beta_S(\symbolRadius)=\sup\left\{\funcLTwoNormEmp{\tilde{f}-\tilde{g}}~\Big|~\funcLTwoNormExp{\tilde{f}-\tilde{g}}\le\symbolRadius, \tilde{f}\in\hypoClassAdvApxLR,\tilde{g}\in\hypoClassAdvLR\right\}.
\eea

Recall that we have assumed $\radComplexityLocal_{\symbolRadius}(\minkowDiffHypoClassAdv)\le\phi(\symbolRadius)$. We now give an explict  example of $\phi(\symbolRadius)$ in terms of the sum of covering entropy of  $\hypoClassAdvApxLR$ and $\hypoClassAdvLR$
\bea
\label{eq:appendix:apxlr:phi:boundOnLocalRadCmp}
&\radComplexityLocal_{\symbolRadius}(\minkowDiffHypoClassAdv)\\
&\overset{(i)}{\le}
\E_{S}\radComplexityLocalEmp_{\symbolRadius}\left(\left\{(\tilde{f}-\tilde{g})~\in(\minkowDiffHypoClassAdv),~\funcLTwoNormExp{\tilde{f}-\tilde{g}}\le \symbolRadius\right\}\right)\\
&\overset{(ii)}{\le}
\inf_{a}\left[
a+
\E_{S}\int_{a}^{\beta_S}
\sqrt{\frac{\log\coverNum\left(\left\{(\tilde{f}-\tilde{g})~\in(\minkowDiffHypoClassAdv),~\funcLTwoNormExp{\tilde{f}-\tilde{g}}\le \symbolRadius\right\}\right),\funcLTwoNormEmp {\cdot},\epsilon)}{N}}~\symbolDiff\epsilon
\right]
\\
&\overset{(iii)}{\le}
\frac{1}{N}+
\E_{S}\int_{1/N}^{\beta_S}
\sqrt{\frac{\log\coverNum\left(\left\{(\tilde{f}-\tilde{g})~\in(\minkowDiffHypoClassAdv),~\funcLTwoNormExp{\tilde{f}-\tilde{g}}\le \symbolRadius\right\}\right),\funcLTwoNormEmp {\cdot},\epsilon)}{N}}~\symbolDiff\epsilon\\
&\overset{(iv)}{\le}
\frac{1}{N}+
\E_{S}\int_{1/N}^{\beta_S}
\sqrt{\frac{\log\coverNum(\hypoClassAdvApxLR,\funcLTwoNormEmp {\cdot},\epsilon/2)
		+
		\log\coverNum(\hypoClassAdvLR,\funcLTwoNormEmp {\cdot},\epsilon/2)
	}{N}}~\symbolDiff\epsilon
\\
&=:\phi(\symbolRadius),
\eea 
where $(i)$ is the definition of localized Rademacher complexity; $(ii)$ is due to Dudley's inequality (Lemma \ref{lemma:appendix:TechnicalLemmas:DudleyInequality}); $(iii)$ is obtained by letting $a=1/N$; $(iv)$ holds by Lemma \ref{lemma:appendix:TechnicalLemmas:CovNum4CompositionAddition}.

\begin{lemma}
\label{lemma:appendix:apxlr:lemma1}
We can upper bound $\Phi(\symbolRadius)$ by using $\phi(\symbolRadius)$ as follows
\bea
\label{eq:appendix:apxlr:BoundPhiByCoverNum}
&\Phi(\symbolRadius)\le C\lipLoss\phi(\symbolRadius).
\eea 
\end{lemma}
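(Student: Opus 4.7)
The plan is to mimic the Dudley-integral derivation of $\phi(\symbolRadius)$ in Eq.~(\ref{eq:appendix:apxlr:phi:boundOnLocalRadCmp}), but applied to the loss-composed difference class $\calG_{\symbolRadius}:=\{\loss\circ\tilde f-\loss\circ\tilde g:\tilde f\in\hypoClassAdvApxLR,\tilde g\in\hypoClassAdvLR,\funcLTwoNormExp{\tilde f-\tilde g}\le\symbolRadius\}$. The Lipschitz property of $\loss$ stated in Eq.~(\ref{eq:lossIsLipsch}) will provide exactly one factor of $\lipLoss$, after which an $\epsilon\mapsto\epsilon/(2\lipLoss)$ change of variables identifies the remaining integral with $\phi(\symbolRadius)$.

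First I would symmetrize and apply the empirical Dudley entropy integral (Lemma \ref{lemma:appendix:TechnicalLemmas:DudleyInequality}) to $\Phi(\symbolRadius)=\radComplexityAverage_N(\calG_{\symbolRadius})$, obtaining
\[
\Phi(\symbolRadius)\le \inf_{a>0}\left[a+\E_S\int_{a}^{\beta'_S}\sqrt{\tfrac{\log\coverNum(\calG_{\symbolRadius},\funcLTwoNormEmp{\cdot},\epsilon)}{N}}\,\symbolDiff\epsilon\right],
\]
where $\beta'_S:=\sup\{\funcLTwoNormEmp{\loss\circ\tilde f-\loss\circ\tilde g}:\tilde f-\tilde g\in\minkowDiffHypoClassAdv,\funcLTwoNormExp{\tilde f-\tilde g}\le\symbolRadius\}$. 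By the $\lipLoss$-Lipschitzness of $\loss$, we have $\funcLTwoNormEmp{\loss\circ\tilde f-\loss\circ\tilde g}\le\lipLoss\funcLTwoNormEmp{\tilde f-\tilde g}$, so $\beta'_S\le\lipLoss\,\beta_S$ with $\beta_S$ as in Eq.~(\ref{eq:appendix:apxlr:sup:empL2:f-g}).

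Next I would transfer the covering number of $\calG_{\symbolRadius}$ to those of the underlying classes. If $\Set{C}_1$ is an $(\epsilon/(2\lipLoss))$-cover of $\hypoClassAdvApxLR$ in $\funcLTwoNormEmp{\cdot}$ and $\Set{C}_2$ an $(\epsilon/(2\lipLoss))$-cover of $\hypoClassAdvLR$, then for any $\tilde f\in\hypoClassAdvApxLR,\tilde g\in\hypoClassAdvLR$ one picks $\tilde f_c\in\Set{C}_1,\tilde g_c\in\Set{C}_2$ with
\[
\funcLTwoNormEmp{(\loss\circ\tilde f-\loss\circ\tilde g)-(\loss\circ\tilde f_c-\loss\circ\tilde g_c)}\le \lipLoss\funcLTwoNormEmp{\tilde f-\tilde f_c}+\lipLoss\funcLTwoNormEmp{\tilde g-\tilde g_c}\le \epsilon,
\]
so that $\log\coverNum(\calG_{\symbolRadius},\funcLTwoNormEmp{\cdot},\epsilon)\le \log\coverNum(\hypoClassAdvApxLR,\funcLTwoNormEmp{\cdot},\epsilon/(2\lipLoss))+\log\coverNum(\hypoClassAdvLR,\funcLTwoNormEmp{\cdot},\epsilon/(2\lipLoss))$, exactly paralleling the use of Lemma \ref{lemma:appendix:TechnicalLemmas:CovNum4CompositionAddition} in Eq.~(\ref{eq:appendix:apxlr:phi:boundOnLocalRadCmp}).

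Finally I would set $a=\lipLoss/N$ and perform the change of variables $u=\epsilon/(2\lipLoss)$:
\[
\int_{\lipLoss/N}^{\lipLoss\beta_S}\!\!\sqrt{\tfrac{\log\coverNum(\calG_{\symbolRadius},\funcLTwoNormEmp{\cdot},\epsilon)}{N}}\,\symbolDiff\epsilon
\;\le\; 2\lipLoss\!\int_{1/(2N)}^{\beta_S/2}\!\!\sqrt{\tfrac{\log\coverNum(\hypoClassAdvApxLR,\funcLTwoNormEmp{\cdot},u)+\log\coverNum(\hypoClassAdvLR,\funcLTwoNormEmp{\cdot},u)}{N}}\,\symbolDiff u.
\]
Adjusting the inner limits from $[1/(2N),\beta_S/2]$ to $[1/N,\beta_S]$ only costs a further universal constant (covering numbers are non-increasing in $\epsilon$), and the leading $a=\lipLoss/N$ supplies the $\lipLoss/N$ analogue of the $1/N$ term in $\phi(\symbolRadius)$. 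Collecting the two pieces yields $\Phi(\symbolRadius)\le C\lipLoss\,\phi(\symbolRadius)$ for some universal $C>0$.

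The main obstacle is the bookkeeping around the localization constraint: the set $\calG_{\symbolRadius}$ is localized through $\funcLTwoNormExp{\tilde f-\tilde g}\le\symbolRadius$ rather than through any norm of $\loss\circ\tilde f-\loss\circ\tilde g$ itself, which is why we must route through the empirical radius $\beta_S$ and then invoke $\lipLoss$-Lipschitzness to connect $\beta'_S$ with $\beta_S$; doing this cleanly without losing the localization is the only delicate part, while the Dudley/cover/change-of-variables steps are routine.
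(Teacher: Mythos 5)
Your proposal is correct and follows essentially the same route as the paper's proof: bound $\Phi(\symbolRadius)$ by the empirical Dudley integral over the loss-composed difference class, use the $\lipLoss$-Lipschitzness of $\loss$ both to relate the empirical radius to $\beta_S$ and to transfer covering numbers to the underlying classes at a rescaled accuracy, set $a=\lipLoss/N$, and change variables to recover $\lipLoss\phi(\symbolRadius)$. The only cosmetic difference is that you cover $\hypoClassAdvApxLR$ and $\hypoClassAdvLR$ directly at scale $\epsilon/(2\lipLoss)$, whereas the paper first passes to the difference class at scale $\epsilon/\lipLoss$ and then invokes the splitting already built into the definition of $\phi$; in fact the substitution $u=\epsilon/(2\lipLoss)$ maps your limits exactly onto those of $\phi(\symbolRadius)$, so no extra constant is even needed there.
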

\begin{proof}
Recall that $\Phi(\symbolRadius)$ is the average Rademacher complexity of the function set 
\bea
\nonumber
\left\{\loss\circ\tilde{f}-\loss\circ\tilde{g}~\Big|~\funcLTwoNormExp{\tilde{f}-\tilde{g}}\le \symbolRadius,~\tilde{f}\in\hypoClassAdvApxLR,\tilde{g}\in\hypoClassAdvLR\right\}. 
\eea 
As $\loss(\cdot)$ is $\lipLoss$-Lipschitz, we have for any functions $\tilde{f}\in \hypoClassAdvApxLR,\tilde{g}\in\hypoClassAdvLR$ satisfying $\funcLTwoNormExp{\tilde{f}-\tilde{g}}\le \symbolRadius$
\bea
\nonumber
\sup_{\tilde{f},\tilde{g}}\funcLTwoNormEmp{\loss\circ \tilde{f}-\loss\circ \tilde{g}}
&=\sup_{\tilde{f},\tilde{g}}
\sqrt{\sum_{i=1}^N\frac{1}{N}\Big(
	\loss(\tilde{f}(\tx_i,y_i)-\loss(\tilde{g}(\tx_i,y_i)\Big)^2}\\
&\overset{(i)}{\le}
\sup_{\tilde{f},\tilde{g}}
\sqrt{\sum_{i=1}^N\frac{1}{N}\Big(
	\lipLoss(\tilde{f}(\tx_i,y_i)-\tilde{g}(\tx_i,y_i))\Big)^2}\\
&\overset{}{=}
\sup_{\tilde{f},\tilde{g}}
\lipLoss\sqrt{\sum_{i=1}^N\frac{1}{N}\Big(
	\tilde{f}(\tx_i,y_i)-\tilde{g}(\tx_i,y_i)\Big)^2}\\
&=\lipLoss\sup_{\tilde{f},\tilde{g}}
\funcLTwoNormEmp{\tilde{f}-\tilde{g}}\\
&=\lipLoss\beta_S,
\eea 
where $(i)$ holds because the loss function $\loss$ is $\lipLoss$-Lischitz continous.

To bound $\Phi(\symbolRadius)$, we  first bound the its empirical  version 
using the Dudley's inequlity (Lemma \ref{lemma:appendix:TechnicalLemmas:DudleyInequality}) up to a constant as follows
\bea
\label{eq:pf:apx:1}
&\inf_{a>0}
\left[
a+
\int_{a}^{\lipLoss\beta_S}
\sqrt{\frac{\log\coverNum\left(\{\loss\circ\tilde{f}-\loss\circ\tilde{g}~\big|~\tilde{f}\in\hypoClassAdvApxLR,\tilde{g}\in\hypoClassAdvLR,~\funcLTwoNormExp{\tilde{f}-\tilde{g}}\le \symbolRadius\}\right),\funcLTwoNormEmp {\cdot},\epsilon)}{N}}~\symbolDiff\epsilon
\right]\\
&\overset{(i)}{\le}
{\frac{\lipLoss}{N}}+
\int_{\lipLoss/N}^{\lipLoss\beta_S}
\sqrt{\frac{\log\coverNum\left(\{\loss\circ\tilde{f}-\loss\circ\tilde{g}~\big|~\tilde{f}\in\hypoClassAdvApxLR,\tilde{g}\in\hypoClassAdvLR,~\funcLTwoNormExp{\tilde{f}-\tilde{g}}\le \symbolRadius\}\right),\funcLTwoNormEmp {\cdot},\epsilon)}{N}}~\symbolDiff\epsilon\\
&\overset{(ii)}{\le}
{\frac{\lipLoss}{N}}+
\int_{\lipLoss/N}^{\lipLoss\beta_S}
\sqrt{\frac{\log\coverNum\left(\{\tilde{f}-\tilde{g}~\big|~\tilde{f}\in\hypoClassAdvApxLR,\tilde{g}\in\hypoClassAdvLR,~\funcLTwoNormExp{\tilde{f}-\tilde{g}}\le \symbolRadius\}\right),\funcLTwoNormEmp {\cdot},\epsilon/\lipLoss)}{N}}~\symbolDiff\epsilon\\
&\overset{(iii)}{\le}
{\frac{\lipLoss}{N}}+
\int_{1/N}^{\beta_S}
\sqrt{\frac{\log\coverNum\left(\{\tilde{f}-\tilde{g}~\big|~\tilde{f}\in\hypoClassAdvApxLR,\tilde{g}\in\hypoClassAdvLR,~\funcLTwoNormExp{\tilde{f}-\tilde{g}}\le \symbolRadius\}\right),\funcLTwoNormEmp {\cdot},t)}{N}}~\lipLoss\symbolDiff t\\
&=\lipLoss\phi(\symbolRadius),
\eea 
where in $(i)$ we let $a=\lipLoss/N$; $(ii) $ holds by the Lipschitzness of $\loss$ and the definition of covering number; we use change of variable $t=\epsilon/\lipLoss$ in $(iii)$.

By taking expectations on the RHS of Eq.~(\ref{eq:pf:apx:1}) with respect to the sample $S$, we obtain Eq.~(\ref{eq:appendix:apxlr:BoundPhiByCoverNum}). 


%
\end{proof}

To determine an appropriate radius of the population $L_2$-norm  $\radiusLocalRC=2\sqrt{\radiusMinkowEmp^2+\radiusFixedPointPhi^2}$, we need to compute the value of $\radiusFixedPointPhi$ of satisfying Eq.~(\ref{eq:appendix:apx:general:gap:radiusFixedPhi}).
Using Eq.~(\ref{eq:appendix:apxlr:phi:boundOnLocalRadCmp}), we  show how to compute  $\phi(\symbolRadius)$  when the  covering numbers of $\hypoClassAdvApxLR$ and $\hypoClassAdvLR$ satisfy a special bound. 


\begin{lemma} [Adapted from Lemma 3 in Ref.~\cite{Suzuki2020_ICLR_CompressionBounds4NoncompressedModel}]
\label{lemma:appendix:apxlr:lemma3}
Suppose that the  covering numbers of $\hypoClassAdvApxLR$ and $\hypoClassAdvLR$ satisfy
\bea
\sup_{S}
\log \coverNum(\hypoClassAdvApxLR,\funcLTwoNormEmp{\cdot},\epsilon/2)
+
\sup_{S}
\log \coverNum(\hypoClassAdvLR,\funcLTwoNormEmp{\cdot},\epsilon/2)
\le
a_1+a_2\log(\epsilon^{-1})+a_3\epsilon^{-2q}
\eea
for some $q\le 1$. 
Then,  it holds that 
\begin{itemize}
	\item[(I)] The  bound $\phi(\symbolRadius)$ of the local Rademacher complexity  $\radComplexityLocal_{\symbolRadius}(\minkowDiffHypoClassAdv)$ of radius $\symbolRadius$ can be upper bounded as
	\bea
	\label{eq:appendix:apxlr:phi:upperbound:in:q}
	\phi(\symbolRadius)
	\le C
	\max\bigg\{
	\frac{1}{N}&+\boundFAdv\frac{a_1+a_2\log N}{N}+\symbolRadius\sqrt{\frac{a_1+a_2\log N}{N}},\\
	&C_q\bigg[ \frac{1}{N}+\big(\frac{a_3\boundFAdv^{1-q}}{N}\big)^{\frac{1}{1+q}}+\symbolRadius^{1-q}\sqrt{\frac{a_3}{N}}\bigg]
	\bigg\},
	\eea 
	for a universal constant $C>0$ and a constant $C_q>0$ which only depends on $q\le 1$.
	\item[(II)]
	In particular, the quantity $\radiusFixedPointPhi(t)$ satisying Eq.~(\ref{eq:appendix:apx:general:gap:radiusFixedPhi})  can be upper bounded as
	\bea
	\nonumber
	\radiusFixedPointPhi^{2}(t)\le C \left[
	\boundFAdv\frac{a_1+a_2\log N}{N}
	+
	\big(\frac{a_3}{N}\big)^{\frac{1}{1+q}}\left(\boundFAdv^{\frac{1-q}{1+q}}+1\right)
	+
	\frac{1+t\boundFAdv}{N}
	\right].
	\eea 
\end{itemize}
\end{lemma}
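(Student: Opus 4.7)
The plan is to derive Part~(I) by plugging the covering number hypothesis into the Dudley integral representation of $\phi(\symbolRadius)$ from Eq.~(\ref{eq:appendix:apxlr:phi:boundOnLocalRadCmp}), and then to obtain Part~(II) by solving the fixed-point inequality (\ref{eq:appendix:apx:general:gap:radiusFixedPhi}) with this bound substituted in. First I would use the elementary estimate $\sqrt{u+v+w}\le \sqrt{u}+\sqrt{v}+\sqrt{w}$ to split the integrand into three pieces corresponding to the three summands $a_1$, $a_2\log(\epsilon^{-1})$, and $a_3\epsilon^{-2q}$ of the covering entropy bound. Integrating over $[1/N, \beta_S]$ gives roughly $\beta_S\sqrt{a_1/N}$ for the first piece; using $\log(\epsilon^{-1})\le \log N$ on the range yields at most $\beta_S\sqrt{a_2\log N/N}$ for the second; and the third piece contributes $\frac{\beta_S^{1-q}}{1-q}\sqrt{a_3/N}$ when $q<1$ (with an analogous logarithmic expression when $q=1$), accounting for the factor $C_q$.

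The central obstacle is that $\beta_S=\beta_S(\symbolRadius)$, defined in Eq.~(\ref{eq:appendix:apxlr:sup:empL2:f-g}) as a supremum of empirical $L_2$-norms, has to be controlled by $\symbolRadius$ (a population $L_2$ quantity) in order to make the bound independent of the sample. I would close this loop by a self-bounding/peeling argument on the centred empirical process $\funcLTwoNormEmp{\cdot}^2-\funcLTwoNormExp{\cdot}^2$, analogous to the Talagrand-type step used in Step~(2.2.2) of Theorem~\ref{theorem:ApxLR:AGP:GeneralCase}: combining Bousquet's inequality with $\|\tilde f-\tilde g\|_{L_\infty}\le 2\boundFAdv$ and the second-moment dominance $\E[(\tilde f-\tilde g)^4]\le 4\boundFAdv^2\E[(\tilde f-\tilde g)^2]$, one gets $\beta_S^2\lesssim \symbolRadius^2+\boundFAdv\,\phi(\symbolRadius)+\boundFAdv^2/N$ in expectation, which after substitution and rearrangement produces a bound $\beta_S\lesssim \symbolRadius+\sqrt{\boundFAdv\phi(\symbolRadius)}+\boundFAdv/\sqrt{N}$. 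Inserting this into the integrated expressions, absorbing the resulting self-referential $\phi(\symbolRadius)$ terms into the left-hand side, and taking the max of the two distinct contributions (the logarithmic $(a_1,a_2)$-part and the polynomial $a_3$-part) yields the stated bound in Part~(I).

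For Part~(II), I would substitute the upper bound from Part~(I) into the defining inequality
$16\boundFAdv\phi(\symbolRadius)/\symbolRadius^2+\boundFAdv\sqrt{2t/(\symbolRadius^2 N)}+2t\boundFAdv^2/(\symbolRadius^2 N)\le 1/2$
and solve for the smallest admissible $\symbolRadius$. The LHS is a sum of terms each of which decays at a different rate in $\symbolRadius$; requiring each term separately to be $\le 1/6$ (say) produces three candidate lower bounds on $\symbolRadius^2$, namely $\boundFAdv(a_1+a_2\log N)/N$ (from the logarithmic part of $\phi$), $(a_3/N)^{1/(1+q)}(\boundFAdv^{(1-q)/(1+q)}+1)$ (from the polynomial part, obtained by balancing $\symbolRadius^{1-q}\sqrt{a_3/N}$ against $\symbolRadius^2/\boundFAdv$), and $(1+t\boundFAdv)/N$ (from the deviation terms). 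The fixed point $\radiusFixedPointPhi^2$ is then the maximum, which up to the universal constant is exactly the claimed expression.

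The hardest technical step will be the self-bounding closure for $\beta_S$ in Part~(I); one must verify that the constants produced by the peeling argument can be absorbed cleanly without spoiling the form of the bound. Once that is in place, the remaining integrations and the balancing argument for Part~(II) are largely mechanical, modulo checking that the regime $q=1$ (where the polynomial integral picks up a logarithm) is handled by a separate but analogous computation that still fits under the universal constant $C_q$.
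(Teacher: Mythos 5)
Your proposal is correct and follows essentially the same route as the paper: Dudley's integral with the assumed entropy bound split into the logarithmic and polynomial pieces, control of the empirical radius $\beta_S$ by $2\boundFAdv\phi(\symbolRadius)+\symbolRadius^2$, self-bounding absorption of the resulting $\phi(\symbolRadius)$ terms (via AM--GM/Young's inequality) into the left-hand side with a case split giving the maximum of the two contributions, and then solving the fixed-point condition in Eq.~(\ref{eq:appendix:apx:general:gap:radiusFixedPhi}) term-by-term for Part~(II). The only deviation is your Bousquet/peeling closure for $\beta_S$: the paper does this more simply via Lemma~\ref{lemma:appendix:apxlr:lemma2} (symmetrization plus contraction, in expectation), which suffices because $\phi$ is defined through $\E_S$ of the Dudley integral and only $\E_S[\beta_S^2]\le 2\boundFAdv\phi(\symbolRadius)+\symbolRadius^2$ together with Jensen is needed, so no concentration inequality is required.
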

\begin{proof}[Proof of Lemma \ref{lemma:appendix:apxlr:lemma3}]
According to Eq.~(\ref{eq:appendix:apxlr:phi:boundOnLocalRadCmp})  which gives $\phi(\symbolRadius)$ and the definition of $\beta_S$ in Eq.~(\ref{eq:appendix:apxlr:sup:empL2:f-g}), we need to upper bound  
\bea
\nonumber
& \E_{S}\int_{1/N}^{\beta_S}
\sqrt{\frac{\log\coverNum(\hypoClassAdvApxLR,\funcLTwoNormEmp {\cdot},\epsilon/2)
		+
		\log\coverNum(\hypoClassAdvLR,\funcLTwoNormEmp {\cdot},\epsilon/2)
	}{N}}~\symbolDiff\epsilon\\
& \overset{ }{\le}
\E_{S}\int_{1/N}^{\beta_S}
\sqrt{\frac{
		a_1+a_2\log(\epsilon^{-1})+a_3\epsilon^{-2q}
	}{N}}~\symbolDiff\epsilon\\
&\overset{}{\le}
\E_{S}\int_{1/N}^{\beta_S}
\sqrt{\frac{a_1+a_2\log(\epsilon^{-1})
	}{N}}~\symbolDiff\epsilon
+
\E_{S}\int_{1/N}^{\beta_S}
\sqrt{\frac{a_3\epsilon^{-2q}}{N}}~\symbolDiff\epsilon
\\
&
\overset{}{\le}
\sqrt{a_1+a_2\log N}
\sqrt{\E_{S}\beta_S }
+
\frac{\sqrt{a_3}}{1-q}
\E_{S}\beta_S^{1-q}
\\
&\overset{}{\le}
\underbrace{\sqrt{a_1+a_2\log N}
	\sqrt{2\boundFAdv\phi(\symbolRadius)+\symbolRadius^2}}_{\bI}
+
\underbrace{\frac{\sqrt{a_3}}{1-q}
	\left(2\boundFAdv\phi(\symbolRadius)+\symbolRadius^2\right)^{\frac{1-q}{2}}}_{\bI \bI}.
\eea 

Hence, if  $\bI\ge\bI\bI$, then 
\bea
\nonumber
\phi(\symbolRadius)
&\le
C\left(
\frac{1}{N}+\sqrt{\frac{a_1+a_2\log N}{N}} + \sqrt{2\boundFAdv\phi(\symbolRadius)+\symbolRadius^2}
\right)\\
& \le 
\frac{C}{N}+C^2\boundFAdv\frac{a_1+a_2\log N}{N}+C\symbolRadius\sqrt{\frac{a_1+a_2\log N}{N}}+\frac{\phi(\symbolRadius)}{2},
\eea 
which leads to 
\bea
\nonumber
\phi(\symbolRadius)\le \frac{2C}{N}+2C^2\boundFAdv\frac{a_1+a_2\log N}{N}+2C\symbolRadius\sqrt{\frac{a_1+a_2\log N}{N}}.
\eea 

If  $\bI<\bI\bI$, then by using Young's inequality we obtain
\bea
\nonumber
\phi(\symbolRadius)
&\le C\left(\frac{1}{N}+
\frac{\sqrt{a_3}}{1-q}
\left(2\boundFAdv\phi(\symbolRadius)+\symbolRadius^2\right)^{\frac{1-q}{2}}
\right)\\
&\le \frac{C}{N}+C\left(
q\left(
\frac{c_1^{1-q}C^2a_3}{N(1-q)^2}
\right)^{\frac{1}{1+q}}
+ (1-q)\frac{2\boundFAdv\phi(\symbolRadius)}{c_1}
+
\sqrt{\frac{a_3}{N(1-q)}\symbolRadius^{2(1-q)}}
\right),
\eea 
for any $c_1>0$. Thus, by taking $c_1=4C(1-q)\boundFAdv$, we obtain
\bea
\nonumber
\phi(\symbolRadius)
&\le \frac{2C}{N}+
2qC\left(
\frac{(4C(1-q)\boundFAdv)^{1-q}C^2a_3}{N(1-q)^2}
\right)^{\frac{1}{1+q}}
+
2C\sqrt{\frac{a_3}{N(1-q)}\symbolRadius^{2(1-q)}}
\\
&\le \frac{2C}{N}+
\frac{2qC^2\cdot 4^{\frac{1-q}{1+q}}}{1-q}
\left(
\frac{\boundFAdv^{1-q}a_3}{N}
\right)^{\frac{1}{1+q}}
+
\frac{2C}{(1-q)}\symbolRadius^{(1-q)}\sqrt{\frac{a_3}{N}}
\\
&\le \frac{C_q}{N}+
C_q\left(
\frac{\boundFAdv^{1-q}a_3}{N}
\right)^{\frac{1}{1+q}}
+
C_q\symbolRadius^{(1-q)}\sqrt{\frac{a_3}{N}},
\eea 
where $C_q>0$ is a universal constant only depending on $q$.

Then, we obtain the bound on $\phi(\symbolRadius)$ in Eq.~(\ref{eq:appendix:apxlr:phi:upperbound:in:q}).
The bound on $\radiusFixedPointPhi^2$ can be obtained by simple calculations based on Eqs.~(\ref{eq:appendix:apx:general:gap:radiusFixedPhi})  and (\ref{eq:appendix:apxlr:phi:upperbound:in:q}).
\end{proof}
\begin{lemma}
\label{lemma:appendix:apxlr:lemma2}
When the population $L_2$-norm of any function  $h=\tilde{f}-\tilde{g}\in(\minkowDiffHypoClassAdv)$  is upper bounded by $\symbolRadius$, its squared empirical $L_2$-norm can be upper bouned as follows:
\bea
\E_S\left[
\sup_h\left(
\frac{1}{N}\sum_{i=1}^N h(\tx_i,y_i)^2~\bigg|~h\in(\minkowDiffHypoClassAdv)
~\textnormal{and}~ \funcLTwoNormExp{h}\le\symbolRadius
\right)
\right]
\le
2\boundFAdv\radComplexityLocal_{\symbolRadius}(\minkowDiffHypoClassAdv)
+\symbolRadius^2.
\eea 
\end{lemma}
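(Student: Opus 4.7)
The plan is to split the empirical second moment into a centered part plus its population expectation, and bound each piece separately. Concretely, I would write
\[
\frac{1}{N}\sum_{i=1}^{N}h^{2}(\tx_{i},y_{i})
\;=\;\Bigl(\frac{1}{N}\sum_{i=1}^{N}h^{2}(\tx_{i},y_{i})-\E[h^{2}]\Bigr)\;+\;\E[h^{2}],
\]
and then take the supremum over $\{h\in\minkowDiffHypoClassAdv:\funcLTwoNormExp{h}\le\symbolRadius\}$ followed by the expectation over $S$. The second term is handled immediately: since $\funcLTwoNormExp{h}^{2}=\E[h^{2}]\le\symbolRadius^{2}$ for any $h$ in the constraint set, this contributes the $\symbolRadius^{2}$ summand of the target bound.

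The main work lies in bounding the centered empirical process. For this I would invoke the standard symmetrization argument (as in the proof of Theorem~\ref{theorem:ApxLR:AGP:GeneralCase}) to upper bound $\E_{S}\sup_{h}(N^{-1}\sum h^{2}-\E h^{2})$ by a constant multiple of the Rademacher-type quantity $\E_{S,\boldsymbol{\varepsilon}}\sup_{h}\frac{1}{N}\sum_{i}\varepsilon_{i}h^{2}(\tx_{i},y_{i})$. To eliminate the square and recover the local Rademacher complexity of the class itself, I would apply Talagrand's contraction inequality (Lemma~\ref{lemma:appendix:TechnicalLemmas:TalarandContraction4Radermacher}) to the map $\varphi(t)=t^{2}$. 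The key input here is the uniform boundedness $\|h\|_{L_{\infty}}\le\boundFAdv$, which follows from Lemma~\ref{lemma:appendix:FNormBoundOnAdvExample} together with the definitions of $\hypoClassAdvApxLR$ and $\hypoClassAdvLR$ (this is exactly the $L_{\infty}$-bound already exploited in Step~(2.2.2) of the proof of Theorem~\ref{theorem:ApxLR:AGP:GeneralCase}). On the interval $[-\boundFAdv,\boundFAdv]$, $\varphi$ is Lipschitz with constant $2\boundFAdv$, so contraction strips away the square and produces $\radComplexityLocal_{\symbolRadius}(\minkowDiffHypoClassAdv)$ up to a factor proportional to $\boundFAdv$.

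The step I expect to require the most care is tracking the numerical constants in the chain symmetrization~$\to$~contraction~$\to$~local Rademacher complexity: depending on which version of Talagrand's lemma is used and whether the symmetrization factor of $2$ is absorbed, the natural coefficient in front of $\boundFAdv\radComplexityLocal_{\symbolRadius}$ can come out as $4\boundFAdv$ rather than the advertised $2\boundFAdv$. Reconciling this requires either invoking the sharp version of the contraction principle (without the spurious factor of $2$) or noting that the pre-symmetrization step already provides the needed $1/2$. Combining the $\symbolRadius^{2}$ contribution from the population moment with the contracted Rademacher term then yields the claim
\[
\E_{S}\!\left[\sup_{h}\frac{1}{N}\sum_{i=1}^{N}h^{2}(\tx_{i},y_{i})\right]\le 2\boundFAdv\,\radComplexityLocal_{\symbolRadius}(\minkowDiffHypoClassAdv)+\symbolRadius^{2}.
\]
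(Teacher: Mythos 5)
Your proposal follows essentially the same route as the paper's own proof: decompose the empirical second moment into its centered part plus $\E[h^2]\le\symbolRadius^2$, symmetrize the centered empirical process, and then use Talagrand's contraction together with the uniform bound $\norm{h}_{L_\infty}\le\boundFAdv$ to strip the square and land on $\radComplexityLocal_{\symbolRadius}(\minkowDiffHypoClassAdv)$. Your worry about the constant ($4\boundFAdv$ versus $2\boundFAdv$, depending on how the symmetrization factor of $2$ and the Lipschitz constant $2\boundFAdv$ of $t\mapsto t^2$ on $[-\boundFAdv,\boundFAdv]$ are combined) is legitimate and in fact applies equally to the paper's own derivation, but it affects only an absolute constant and not the substance of the lemma or its downstream use.
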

\begin{proof} The squared empirical $L_2$-norm can be upper bouned based on the  population $L_2$-norm  as follows
\bea
&\E_{S}\left[
\sup\left(
\frac{1}{N}\sum_{i=1}^N h(\tx_i,y_i)^2~\big|~h\in(\minkowDiffHypoClassAdv)~\textnormal{and}~ \funcLTwoNormExp{h}\le\symbolRadius
\right)
\right]\\
&\overset{}{\le} 
\E\left[
\sup\left(
\frac{1}{N}\sum_{i=1}^N h(\tx_i,y_i)^2-\E_{(\tx_i,y_i)}[h(\tx,y)^2]~\big|~h\in(\minkowDiffHypoClassAdv)~\textnormal{and}~ \funcLTwoNormExp{h}\le\symbolRadius
\right)
\right]
+\symbolRadius^2\\
&\overset{(i)}{\le} 
2\E_{S,\boldsymbol{\varepsilon}}\left[
\sup\left(
\frac{1}{N}\sum_{i=1}^N\varepsilon_i
h(\tx_i,y_i)^2~\big|~h\in(\minkowDiffHypoClassAdv)~\textnormal{and}~ \funcLTwoNormExp{h}\le\symbolRadius
\right)
\right]
+\symbolRadius^2\\
&\overset{(ii)}{\le} 
2\boundFAdv\E_{S,\boldsymbol{\varepsilon}}\left[
\sup\left(
\frac{1}{N}\sum_{i=1}^N\varepsilon_i
h(\tx_i,y_i)~\big|~h\in(\minkowDiffHypoClassAdv): \funcLTwoNormExp{h}\le\symbolRadius
\right)
\right]
+\symbolRadius^2\\
&\overset{}{=} 
2\boundFAdv\radComplexityLocal_{\symbolRadius}(\minkowDiffHypoClassAdv)
+\symbolRadius^2\\
&\overset{}{\le} 
2\boundFAdv\phi(\symbolRadius)+\symbolRadius^2,
\eea 
where $(i)$ is due to the symmertrization argument \citep{Vershynin2018_High-dimensionalProbability} and $\boldsymbol{\varepsilon}=\{\varepsilon_i\}_{i=1}^N$ are \iid Rademacher variables, and $(ii)$ holds because of the contraction inequality (Lemma \ref{lemma:appendix:TechnicalLemmas:TalarandContraction4Radermacher}).
\end{proof}
\subsection{Adversarial Generalization Gap under Assumption \ref{assumption:ApxLR:SVDecaying}}
\begin{proof} [Proof of Theorem \ref{theorem:ApxLR:AGP:SVDecaying}] 
In this situation, we can see that for any $1\le r_l \le
\min\{d_{l},d_{l-1}\}$, we can approximate $\tWeightOfLayer{l}$ with
its optimal tubal-rank-$r_l$ approximation 
tensor $\tWeightOfLayer{l}_{r_l}$ according to \citep[Theorem 3.7]{Kilmer2021_PNAS_t-SVD}
and achieve the following approximation error bound on F-norm
\bea
\nonumber
\tensorFNorm{\tWeightOfLayer{l}-\tWeightOfLayer{l}_{r_l}}
&=\tensorFNorm{\tWeightOfLayer{l}\times_3\bM-\tWeightOfLayer{l}_{r_l}\times_3\bM}\\
&\le\sqrt{\sum_{k=1}^\nChannel\sum_{j={r_l}+1}^{\min\{d_l,d_{l-1}\}}\sigma_j^2(\tWeightOfLayer{l}\times_3\bM)_{:,:,k}}\\
&\le
\sqrt{\sum_{k=1}^\nChannel\sum_{j={r_l}+1}^{\min\{d_l,d_{l-1}\}}(V_0\cdot
j^{-\alpha})^2}\\
&\le \sqrt{\sum_{k=1}^\nChannel \frac{1}{2\alpha-1}V_0^2(r_l-1)^{1-2\alpha}}
\\
&\overset{(i)}{\le}
{\sqrt{\frac{\nChannel}{2\alpha-1}}V_0
	r_l^{(1-2\alpha)/2}:=\delta_l^{\textnormal{F}}},
\eea
where $(i)$ holds because of Lemma \ref{lemma:appendix:SumOfPolynomiaDecay}.

We also have a specral norm bound for  $\tWeightOfLayer{l}-\tWeightOfLayer{l}_{r_l}$ according to \citep[Theorem 3.7]{Kilmer2021_PNAS_t-SVD}  under Assumption \ref{assumption:ApxLR:SVDecaying} as follows
\bea
\nonumber
\tensorSpecNorm{\tWeightOfLayer{l}-\tWeightOfLayer{l}_{r_l}}\le V_0(r_l+1)^{-\alpha}:={\delta_l^{\textnormal{sp}}},
\eea 
and we also have 
\bea
\label{eq:1x1}
\sqrt{\nChannel}\delta_l^{\textnormal{sp}} \le \delta_l^{\textnormal{F}}.
\eea 

Consider function $g(\tx)=g(\tx;\tW_{\br})$ parameterized by $\tW_{\br}=(\tWeightOfLayer{1}_{r_1},\cdots,\tWeightOfLayer{L}_{r_L},\bw)$ as the function whose t-product layer weights are low-tubal-rank approximations of $f(\tx;\tW)$. 
Let $\tilde{f}(\tx,y)=\inf_{\tensorLpNorm{\tx-\tx'}\le\xi}yf(\tx')$ and  $\tilde{g}(\tx,y)=\inf_{\tensorLpNorm{\tx-\tx'}\le\xi}yg(\tx')$ denote the adversarial versions of $f$ and $g$, respectively.

This helps us bounding  $\radiusMinkowEmp$ as follows
\bea
\nonumber
|\tilde{f}(\tx_i,y_i)-\tilde{g}(\tx_i,y_i)|
&=|\inf_{\tensorLpNorm{\tx_i-\tx_i'}\le
	\xi}y_if(\tx_i')-\inf_{\tensorLpNorm{\tx_i-\tx_i'}\le
	\xi}y_ig(\tx_i')|.
\eea 
Letting $\tx^f_i=\arginf{\tensorLpNorm{\tx_i-\tx'_i}\le\xi}y_if(\tx_i')$ and  $\tx^g_i=\arginf{\tensorLpNorm{\tx_i-\tx'_i}\le\xi}y_ig(\tx_i')$, we have 
$
|\tilde{f}(\tx_i,y_i)-\tilde{g}(\tx_i,y_i)|=|y_if(\tx^f_i)-y_ig(\tx^g_i)|.
$
By letting  
\bea
\nonumber
\tx_i^{\xi}
=\left\{
\begin{aligned}
	\tx^g_i&\quad\textnormal{if}\quad y_if(\tx^f_i)\ge y_ig(\tx^g_i)\\
	\tx^f_i&\quad \textnormal{otherwise}
\end{aligned}
\right.,
\eea 
we obtain
\bea
\nonumber
|\tilde{f}(\tx_i,y_i)-\tilde{g}(\tx_i,y_i)|=|y_if(\tx^f_i)-y_ig(\tx^g_i)|
\le|y_if(\tx_i^{\xi})-y_ig(\tx_i^{\xi})|=|f(\tx_i^{\xi})-g(\tx_i^{\xi})|.
\eea 
Let
$h_l(\tx^\xi)=\bw^{\top}\opVec\bigg(\sigma(\tWeightOfLayer{L}_{r_l}\tprod\sigma(\tWeightOfLayer{L-1}_{r-1}\tprod\cdots\tprod\sigma(\tWeightOfLayer{l+1}_r\tprod\sigma(\tWeightOfLayer{l}\tprod\cdots\tprod\sigma(\tWeightOfLayer{1}\tprod\tx^{\xi})\cdots)))\bigg)$ and $h_0(\tx^{\xi})=g(\tx^{\xi})$.
Then, we have 
\bea
\nonumber
|f(\tx_i^{\xi})-g(\tx_i^{\xi})|\le\sum_{l=1}^{L}|h_l(\tx_i^{\xi})-h_{l-1}(\tx_i^{\xi})|.
\eea 
We can see that for any $l=1,\cdots, L$:
\bea
\nonumber
\tensorFNorm{\sigma(\tWeightOfLayer{l-1}\tprod\cdots\tprod\sigma(\tWeightOfLayer{1}\tprod\tx_i^{\xi})\cdots))}
&\le \prod_{l'=1}^{l-1}\tensorFNorm{\tWeightOfLayer{l'}}\tensorFNorm{\tx_i^{\xi}}\le \prod_{l'=1}^{l-1}\tensorFNorm{\tWeightOfLayer{l'}}\boundSignalError,
\eea 
and 
\bea
\nonumber
&\tensorFNorm{\sigma(\tWeightOfLayer{l}_{r_{l}}\tprod\sigma(\tWeightOfLayer{l-1}\tprod\cdots\tprod\sigma(\tWeightOfLayer{1-1}\tprod\tx_i^{\xi})\cdots))-\sigma(\tWeightOfLayer{l}\tprod\sigma(\tWeightOfLayer{l-1}\tprod\cdots\tprod\sigma(\tWeightOfLayer{1}\tprod\tx_i^{\xi})\cdots))}\\
&\le \tensorFNorm{\tWeightOfLayer{l}_{r_{l}}\tprod\sigma(\tWeightOfLayer{l-1}\tprod\cdots\tprod\sigma(\tWeightOfLayer{1}\tprod\tx_i^{\xi})\cdots)-\tWeightOfLayer{l}\tprod\sigma(\tWeightOfLayer{l-1}\tprod\cdots\tprod\sigma(\tWeightOfLayer{1}\tprod\tx_i^{\xi})\cdots)}\\
&= \tensorFNorm{(\tWeightOfLayer{l}_{r_{l}}-\tWeightOfLayer{l})\tprod\sigma(\tWeightOfLayer{l-1}\tprod\cdots\tprod\sigma(\tWeightOfLayer{1}\tprod\tx_i^{\xi})\cdots)}\\
&\overset{(i)}{\le}
\tensorSpecNorm{\tWeightOfLayer{l}_{r_{l}}-\tWeightOfLayer{l}}\tensorFNorm{\sigma(\tWeightOfLayer{l-1}\tprod\cdots\tprod\sigma(\tWeightOfLayer{1}\tprod\tx_i^{\xi})\cdots)}\\
&\le
\delta_l^{\textnormal{sp}}\prod_{l'=1}^{l}\tensorFNorm{\tWeightOfLayer{l'}}\boundSignalError,
\eea 
where in $(i)$ we relax the inequality by using  the  tensor spectral norm of insted of the F-norm of $\tWeightOfLayer{l}_{r_{l}}-\tWeightOfLayer{l}$ which leads to a smaller upper bound. 
Thus, we have 
\bea
\nonumber
|h_l(\tx_i^{\xi})-h_{l-1}(\tx_i^{\xi})|
\le B_{\bw}\prod_{j\ne l}B_{j}\delta_l^{\textnormal{sp}}\boundSignalError
=
\frac{\delta_l^{\textnormal{sp}}\boundFAdv}{\weightNormBoundOfLayer{l}}.
\eea 
This gives
\bea
\nonumber
|f(\tx_i^{\xi})-g(\tx_i^{\xi})|\le \sum_{l=1}^{L}|h_l(\tx_i^{\xi})-h_{l-1}(\tx_i^{\xi})|
&\le
\sum_{l=1}^{L} \frac{\delta_l^{\textnormal{sp}}\boundFAdv}{\weightNormBoundOfLayer{l}}.
\eea 
Then, we can set 
\bea
{\radiusMinkowEmp=}
\sum_{l=1}^{L} \frac{\delta_l^{\textnormal{sp}}\boundFAdv}{\weightNormBoundOfLayer{l}}
={V_0\boundFAdv\sum_{l=1}^{L} \frac{(r_l+1)^{-\alpha}}{\weightNormBoundOfLayer{l}}}.
\eea 
Then we can construct an $\epsilon$-cover of $\hypoClassAdvApxLR$ composed of  $\tilde{g}(\tx;\tW_{\br})$ by carefully setting the value of rank parameter $\br=(r_1,\cdots,r_L)^{\top}$ according to the covering accuracy $\epsilon$.  Directily setting  $\frac{\delta_l^{\textnormal{sp}}\diameterHypoClassAdv}{2\weightNormBoundOfLayer{l}}=\epsilon/L$, we obtain 
\bea
\nonumber
r_l=\min\left\{
\left\lceil A_l \cdot \epsilon^{-\frac{1}{\alpha}}\right \rceil-1, d_l,d_{l-1}
\right\}
\le
\min\left\{
A_l \cdot \epsilon^{-\frac{1}{\alpha}}, d_l,d_{l-1}
\right\},
\eea 
where 
\bea
\nonumber
A_l=  \Big(\frac{\weightNormBoundOfLayer{l}}{LV_0\boundFAdv}\Big)^{-\frac{1}{\alpha}}.
\eea 


Then, we obtain the covering entropy of  $\hypoClassAdvApxLR$ by 
\bea
\label{eq:appendix:ApxLR:CovNumALR}
{\log\coverNum(\hypoClassAdvApxLR,\funcLTwoNormEmp {\cdot},\epsilon)}
&\overset{}{\le} \log\prod_{l=1}^{L}\left(\frac{9\weightNormBoundOfLayer{l}}{\delta_l^{\textnormal{F}}}\right)^{r_l(d_l+d_{l-1}+1)\nChannel}\\
&\overset{(i)}{\le} \log\prod_{l=1}^{L}\left(\frac{9\weightNormBoundOfLayer{l}}{\sqrt{\nChannel}\delta_l^{\textnormal{sp}}}\right)^{r_l(d_l+d_{l-1}+1)\nChannel}\\
&\overset{}{\le} \sum_{l=1}^{L} r_l(d_l+d_{l-1}+1)\nChannel\log(9L\boundFAdv/(\sqrt{\nChannel}\epsilon))\\
&\overset{}{\le}
\sum_l(A_l\epsilon^{-\frac{1}{\alpha}})(d_l+d_{l-1}+1)\nChannel
\left(\log(\epsilon^{-1}) +\log(9L\boundFAdv/(\sqrt{\nChannel}))\right),
\eea 
where $(ii)$ holds by Eq.~(\ref{eq:1x1}).

Since $\hypoClassAdvLR\subset\hypoClassAdvApxLR$, we have 
\bea
\nonumber
{\log\coverNum(\hypoClassAdvLR,\funcLTwoNormEmp {\cdot},\epsilon)}\le \log\coverNum(\hypoClassAdvApxLR,\funcLTwoNormEmp {\cdot},\epsilon)
\le \sum_{l=1}^{L}r_l(d_l+d_{l-1}+1)\nChannel\log(9L\boundFAdv/(\sqrt{\nChannel}\epsilon))
\eea

To use Lemma \ref{lemma:appendix:apxlr:lemma3}, we bound $\log\coverNum(\hypoClassAdvApxLR,\funcLTwoNormEmp {\cdot},\epsilon/2)+\coverNum(\hypoClassAdvLR,\funcLTwoNormEmp {\cdot},\epsilon/2)$ when $\epsilon \ge 2/N$ as follows:
\bea
\nonumber
\log\coverNum(\hypoClassAdvApxLR,\funcLTwoNormEmp {\cdot},\epsilon/(2))+\coverNum(\hypoClassAdvLR,\funcLTwoNormEmp {\cdot},\epsilon/2)
\le a_1 + a_2\log(\epsilon^{-1}) + a_3\epsilon^{-{\frac{1}{\alpha}}},
\eea 
where 
\bea
\nonumber
a_1&=\log(9L\boundFAdv/\sqrt{\nChannel})a_2,
\\
a_2&=\nChannel\sum_{l=1}^{L}r_l(d_l+d_{l-1}+1),
\\
a_3&=\left(\log N+\log(9L\boundFAdv/\sqrt{\nChannel})\right)\nChannel\sum_{l=1}^L A_l(d_l+d_{l-1}+1).
\eea 
For simplicity, further let 
\bea
E_1&=\frac{a_1+a_2\log N}{N}= \frac{\nChannel\sum_{l=1}^{L}r_l(d_l+d_{l-1}+1)}{N}\log(9NL\boundFAdv/\sqrt{\nChannel}), \\
E_2&=\frac{a_3}{N}=
\frac{\nChannel\sum_{l=1}^L A_l(d_l+d_{l-1}+1)}{N}\log(9NL\boundFAdv/\sqrt{\nChannel}).
\eea 
Then, according to Lemma \ref{lemma:appendix:apxlr:lemma3}, we have 
\bea
\radiusFixedPointPhi^2(t)
\le 
C\left\{
\boundFAdv E_1
+\frac{1+t\boundFAdv}{N},\quad
E_2^{\frac{2\alpha}{2\alpha+1}}\left(\boundFAdv^{\frac{2\alpha-1}{2\alpha+1}}+1\right)
\right\},
\eea 
which further leads to  
\bea
\label{eq:appendix:apxlr:0}
&\Phi(\radiusLocalRC)
+ \lipLoss\radiusLocalRC\sqrt{\frac{t}{N}}
+ \frac{t\lipLoss\boundFAdv}{N}\\
&\overset{ }{\le}
2\lipLoss\phi(\radiusLocalRC)+ 2\lipLoss(\radiusMinkowEmp+\radiusFixedPointPhi)\sqrt{\frac{t}{N}}
+ \frac{t\lipLoss\boundFAdv}{N}\\
&\overset{ }{\le}
C_q\lipLoss
\max\bigg\{
\frac{1}{N}+\boundFAdv E_1+(\radiusMinkowEmp+\radiusFixedPointPhi)\sqrt{E_1},~ E_2^{\frac{2\alpha}{2\alpha+1}}\boundFAdv^{\frac{2\alpha-1}{2\alpha+1}}+(\radiusMinkowEmp+\radiusFixedPointPhi)^{\frac{2\alpha}{2\alpha+1}}\sqrt{E_2}\bigg\}\\
&\quad+ 2\lipLoss(\radiusMinkowEmp+\radiusFixedPointPhi)\sqrt{\frac{t}{N}}
+ \frac{t\lipLoss\boundFAdv}{N}.
\eea 
Note that 
\bea
\nonumber
(\radiusMinkowEmp+\radiusFixedPointPhi)\sqrt{E_1}=\radiusMinkowEmp\sqrt{E_1}+\radiusFixedPointPhi\sqrt{E_1}
\le \radiusMinkowEmp\sqrt{E_1}+\frac{1}{2}\radiusFixedPointPhi^2+\frac{1}{2}E_1,
\eea 
\bea
\nonumber
(\radiusMinkowEmp+\radiusFixedPointPhi)^{\frac{2\alpha}{2\alpha+1}}\sqrt{E_2}
\le 
\radiusMinkowEmp^{\frac{2\alpha}{2\alpha+1}}\sqrt{E_2}+\radiusFixedPointPhi^{\frac{2\alpha}{2\alpha+1}}\sqrt{E_2},
\eea 
\bea
\nonumber
(\radiusMinkowEmp+\radiusFixedPointPhi)\sqrt{\frac{t}{N}}
\le \radiusMinkowEmp\sqrt{\frac{t}{N}}+\frac{1}{2}(\radiusFixedPointPhi^2+\frac{t}{N}).
\eea 
Then by simple calculation, we have 
\bea
\nonumber
&\Phi(\radiusLocalRC)
+ \lipLoss\radiusLocalRC\sqrt{\frac{t}{N}}
+ \frac{t\lipLoss\boundFAdv}{N}\\
&\overset{(i)}{\le}
C_{\alpha}\lipLoss
\bigg\{
\boundFAdv E_1+\radiusMinkowEmp\sqrt{E_1}+ E_2^{\frac{2\alpha}{2\alpha+1}}\left(\boundFAdv^{\frac{2\alpha-1}{2\alpha+1}}+1\right)
+\radiusMinkowEmp^{\frac{2\alpha}{2\alpha+1}}\sqrt{E_2}
+ \radiusMinkowEmp\sqrt{\frac{t}{N}}
+ \frac{1+t\boundFAdv}{N}\bigg\}.
\eea 
\end{proof}
\begin{proof} [Proof of Corollary \ref{corollary:ApxLR:AGP:SVDecaying}]
The bound in Corollary \ref{corollary:ApxLR:AGP:SVDecaying} can be directly obtained if we choose the parameter $\br$ of tubal ranks in $\hypoClassTnnLR$ by $r_l=\min\{\lceil\big(LV_0\boundFAdv\weightNormBoundOfLayer{l}^{-1}\big)^{1/\alpha}\rceil,d_l,d_{l-1}\}$. 
\end{proof}

%

\section{Useful Notions and Lemmas}
In this section, we provide several notions and lemmas which are used in the previous analysis.
\subsection{Tools for Analyzing General DNNs}
We briefly list the  tools used in this paper for analyzing the generalization error of general DNNs, including Rademacher complexity, covering number, and concentration inequalities, \textit{etc.}. 
\begin{definition} [Rademacher complexity]
Given an \iid sample $S:=\{(\tx_i,y_i)\}_{i=1}^N$ of size $N$ and a function class $\calH$, the empirical Rademacher complexity of $\calH$ is defined as
\bea
\nonumber
\radComplexityEmp_S(\calH):=
\E_{\varepsilon_1,\cdots,\varepsilon_N}
\left[
\sup_{h\in\calH}\frac{1}{N}\varepsilon_i h(\tx_i,y_i)
\right],
\eea 
where $\varepsilon_1,\cdots,\varepsilon_N$ are \iid Rademacher variables, i.e., $\varepsilon_i$ equals to $1$ or $-1$ with equal probability. The average Rademacher complexity is further defined as 
\bea
\nonumber
\radComplexityAverage_{N}=\E_{S}\radComplexityEmp_S(\calH).
\eea 
\end{definition}

\begin{lemma} [\cite{Bartlett2002_LocalizedRademacher}]
\label{lemma:appendix:TechnicalLemmas:GAPviaRadComp}
Given an \iid sample $S:=\{(\tx_i,y_i)\}_{i=1}^N$ of
size $N$, a loss function $\loss(h(\cdot),y)$ taking
values in $[0,B]$, the generalization error of any function $f$ in hypothesis set $\frakF$ satisfies
\bea
\pRiskStd(f)\le \eRiskStd(f)+
2\radComplexityEmp_S(\loss\circ\frakF)
+3B\sqrt{\frac{t}{2N}}, 
\eea 
with probability at least $1-e^{-t}$ for all $t\ge 0$.
\end{lemma}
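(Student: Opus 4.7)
The plan is to apply the classical symmetrization plus bounded-differences argument of Bartlett and Mendelson, which is entirely standard and not specific to t-NNs. First I would introduce the uniform deviation $\Phi(S):=\sup_{f\in\frakF}\bigl(\pRiskStd(f)-\eRiskStd(f)\bigr)$ and note that, since $\loss$ takes values in $[0,B]$, replacing any single example $(\tx_i,y_i)$ in $S$ by an arbitrary other example changes $\Phi(S)$ by at most $B/N$. McDiarmid's bounded-differences inequality then yields $\Phi(S)\le\E_S[\Phi(S)]+B\sqrt{t/(2N)}$ with probability at least $1-e^{-t}$.

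Second, I would symmetrize in the usual way by introducing an independent ghost sample $S'=\{(\tx'_i,y'_i)\}_{i=1}^N$ drawn from the same distribution and Rademacher signs $(\varepsilon_i)_{i=1}^{N}$: rewriting $\pRiskStd(f)=\E_{S'}[N^{-1}\sum_i\loss(f(\tx'_i),y'_i)]$, pulling the supremum inside the outer expectation by Jensen, and inserting the signs (permissible because the two samples are exchangeable) gives the bound $\E_S[\Phi(S)]\le 2\,\radComplexityAverage_N(\loss\circ\frakF)$, where $\radComplexityAverage_N=\E_S\radComplexityEmp_S$. Third, I would pass from the averaged Rademacher complexity to its empirical counterpart by a second application of McDiarmid: the map $S\mapsto\radComplexityEmp_S(\loss\circ\frakF)$ has bounded-differences constant at most $2B/N$ (since swapping one coordinate can shift the $i$-th summand $N^{-1}\varepsilon_i\,\loss(f(\tx_i),y_i)$ by at most $2B/N$), so $\radComplexityAverage_N\le\radComplexityEmp_S+2B\sqrt{t/(2N)}$ with probability at least $1-e^{-t}$.

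Chaining the three inequalities together with the trivial fact $\pRiskStd(f)-\eRiskStd(f)\le\Phi(S)$ for every fixed $f\in\frakF$ produces the stated $3B\sqrt{t/(2N)}$ deviation term, with the two high-probability events combined via a union bound (and the factor of $2$ absorbed into constants in the conventional way). I do not anticipate any genuine obstacle: the lemma is a textbook Rademacher bound reproduced here for the paper's self-containedness, and none of the three steps exploits any structure of $\frakF$ beyond the boundedness of $\loss$. The only care needed is the verification of the McDiarmid constants on $\Phi$ and on $\radComplexityEmp_S$, and checking that the symmetrization uses only the i.i.d.\ assumption on $S$, which is given.
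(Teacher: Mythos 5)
Your overall route is the canonical one: the paper does not prove this lemma at all (it is quoted from Bartlett--Mendelson), and the McDiarmid--symmetrization--McDiarmid chain you describe is exactly how that cited result is established, so there is no structural disagreement to discuss. The first two steps are fine: the bounded-differences constant $B/N$ for $\Phi(S)=\sup_{f\in\frakF}\bigl(\pRiskStd(f)-\eRiskStd(f)\bigr)$ gives the deviation $B\sqrt{t/(2N)}$, and symmetrization gives $\E_S[\Phi(S)]\le 2\radComplexityAverage_N(\loss\circ\frakF)$.

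The gap is in your third step, and it matters because the lemma states an explicit constant. The map $S\mapsto\radComplexityEmp_S(\loss\circ\frakF)$ has bounded-differences constant $B/N$, not $2B/N$: when a single $(\tx_i,y_i)$ is replaced, the sign $\varepsilon_i$ is the same in both terms, so for each fixed $f$ and $\boldsymbol{\varepsilon}$ the change is $N^{-1}\lvert\varepsilon_i\rvert\,\lvert\loss(f(\tx_i),y_i)-\loss(f(\tx_i'),y_i')\rvert\le B/N$, since both loss values lie in $[0,B]$; the width-$2B$ range of $\varepsilon_i\loss$ is irrelevant. With your constant $2B/N$ the third step yields the deviation $2B\sqrt{t/(2N)}$, and after the factor $2$ from symmetrization plus the $B\sqrt{t/(2N)}$ from the first step the chain produces $2\radComplexityEmp_S+5B\sqrt{t/(2N)}$, not the claimed $3B\sqrt{t/(2N)}$; this is not something that can be ``absorbed into constants.'' With the correct constant $B/N$ the chain gives exactly $B\sqrt{t/(2N)}+2\cdot B\sqrt{t/(2N)}=3B\sqrt{t/(2N)}$. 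Separately, your union bound over two events each run at level $e^{-t}$ yields probability $1-2e^{-t}$ rather than $1-e^{-t}$; to match the statement literally you must run each McDiarmid step at level $e^{-t}/2$, which replaces $t$ by $t+\log 2$ (a slight looseness already present in the paper's transcription of the cited result, whose downstream applications are in fact stated with probability $1-2e^{-t}$).
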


\begin{lemma}[Talagrand's contraction lemma \citep{Talagrand1996Concentration}]
\label{lemma:appendix:TechnicalLemmas:TalarandContraction4Radermacher}
Given function set $\frakF$ and $\lipLoss$-Lipschtz function $\loss$, for a function sets defined as
$l_{\frakF}:=\{\loss\circ f~\big|~f\in \frakF\}$, we have 
\bea
\nonumber
\radComplexityEmp_S(\frakF)\le \lipLoss\radComplexityEmp_S(\frakF).
\eea 
\end{lemma}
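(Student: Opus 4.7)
The plan is to reproduce the classical Ledoux--Talagrand contraction argument by peeling the Rademacher variables one at a time. Starting from
\[
\radComplexityEmp_S(l_{\frakF}) = \frac{1}{N}\E_{\boldsymbol{\varepsilon}} \sup_{f \in \frakF} \sum_{i=1}^{N} \varepsilon_i \, \loss(f(\tx_i), y_i),
\]
I would induct on the index of the Rademacher variable being integrated out, so that it suffices to prove a one-coordinate contraction: for every bounded $T: \frakF \to \Re$ and every fixed $(\tx_i, y_i)$,
\[
\E_{\varepsilon_i}\sup_{f \in \frakF}\bigl[T(f) + \varepsilon_i\, \loss(f(\tx_i), y_i)\bigr] \le \E_{\varepsilon_i}\sup_{f \in \frakF}\bigl[T(f) + \varepsilon_i\, \lipLoss f(\tx_i)\bigr].
\]
Iterating this bound for $i=1,\ldots,N$, with $T$ absorbing the already-contracted coordinates, immediately delivers $\radComplexityEmp_S(l_{\frakF}) \le \lipLoss\, \radComplexityEmp_S(\frakF)$.

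The one-coordinate step proceeds by rewriting $\E_{\varepsilon_i}$ as a half-sum over $\varepsilon_i = \pm 1$, which converts the single supremum into a double supremum over a pair $(f_1, f_2) \in \frakF \times \frakF$:
\[
2\,\E_{\varepsilon_i}\sup_f\{T(f) + \varepsilon_i\, \loss(f(\tx_i), y_i)\} = \sup_{f_1, f_2}\bigl\{T(f_1) + T(f_2) + \loss(f_1(\tx_i), y_i) - \loss(f_2(\tx_i), y_i)\bigr\}.
\]
The $\lipLoss$-Lipschitz continuity of $\loss$ (viewed as a function of its first argument with $y_i$ fixed, which is justified since $|y_i|=1$) then bounds the loss difference by $\lipLoss\bigl(f_1(\tx_i) - f_2(\tx_i)\bigr)$ whenever this quantity is non-negative, and by $\lipLoss\bigl(f_2(\tx_i) - f_1(\tx_i)\bigr)$ otherwise. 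Exploiting the $(f_1, f_2) \leftrightarrow (f_2, f_1)$ symmetry of the double supremum lets me assemble the two cases into a single signed expression under a fresh Rademacher variable, recombining back into the desired one-coordinate inequality.

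The main obstacle is exactly this last symmetrization: the Lipschitz bound naturally produces an absolute value $|f_1(\tx_i) - f_2(\tx_i)|$, and one must verify that the swap symmetry of the outer supremum legitimately converts this into a Rademacher-signed difference without losing a factor or silently introducing one. This is a textbook manipulation, but it has to be executed carefully so that the cases $\loss(f_1(\tx_i),y_i) \ge \loss(f_2(\tx_i),y_i)$ and its reverse are glued back together consistently; once this is done, the rest of the argument is purely mechanical bookkeeping, and the remaining indices follow by the same induction step.
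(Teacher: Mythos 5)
Your proposal is the classical Ledoux--Talagrand contraction argument (one-coordinate peeling, rewriting $\E_{\varepsilon_i}$ as a half-sum to get a double supremum, bounding the loss difference by $\lipLoss\left|f_1(\tx_i)-f_2(\tx_i)\right|$, and removing the absolute value via the swap symmetry of the supremum), and it is correct as sketched; since the paper's Rademacher complexity is defined without an absolute value around the supremum, no extra normalization such as $\loss(0,y)=0$ is required, and fixing $y_i$ with $|y_i|=1$ legitimately yields coordinate-wise $\lipLoss$-Lipschitz contractions. The paper itself gives no proof --- the lemma is invoked by citation to Talagrand --- so your argument simply supplies the standard proof behind that citation; note only that the displayed inequality in the paper's statement contains a typo and should read $\radComplexityEmp_S(l_{\frakF})\le \lipLoss\radComplexityEmp_S(\frakF)$, which is exactly what your proof establishes.
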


\begin{definition}[$\epsilon$-covering net]
Let $\epsilon>0$ and $(\calX,d(\cdot,\cdot))$ be a metric space,  where $d(\cdot,\cdot)$ is a (pseudo)-metric.
We say $\calZ\subset \calX$ is an $\epsilon$-covering net of $\calX$, if for any $x\in \calX$, there exists $z\in \calZ$ such that $d(x,z)\le\epsilon$. Define the smallest 
$|\calZ|$ as the $\epsilon$-covering number of $\calX$ and denote as $\coverNum(\calX,d(\cdot,\cdot),\epsilon)$.
\end{definition}

Given a traning dataset $S=\{\tx_i,y_i\}_{i=1}^N$ and a function set $\frakF$. Consider the output space of 
the space of $\frakF$ restricted on $S$, i.e., $\frakF|_S=\{\big(f(\tx_1,y_1),\cdots,f(\tx_N,y_N)\big)^{\top}~\big|~
f\in \frakF\}$. Then, define a pseudo-norm of $\frakF|_S$ as:$\funcLTwoNormEmp{f}:=N^{-1}\sqrt{\sum_{i=1}^N f(\tx_i,y_i)^2}$. 
Then, the Rademacher complexity of $\frakF$ could be upper bounded by the $\epsilon$-covering number of $\frakF$ under 
the empirical $l_2$-pseudo-metric by the Dudley's inequality as follows:
\begin{lemma}[Dudley's integral inequality \citep{Vershynin2018_High-dimensionalProbability}]
\label{lemma:appendix:TechnicalLemmas:DudleyInequality}
The Rademacher complexity $\radComplexityEmp_S(\frakF)$ satisfies 
\bea
\radComplexityEmp_S(\frakF)\le \inf_{\delta>0}\left[8\delta + \frac{12}{\sqrt{N}}\int_\delta^{\max_{f\in \frakF}\funcLTwoNormEmp{f}}
\sqrt{\log \coverNum(\frakF,\funcLTwoNormEmp {\cdot},\epsilon)}d\epsilon\right].
\eea 
\end{lemma}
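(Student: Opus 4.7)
The plan is to establish Dudley's inequality by the standard dyadic chaining argument, approximating each $f \in \frakF$ by a sequence of progressively finer covers and controlling the Rademacher average of each chaining increment via Massart's finite-class lemma. I would work with the empirical pseudometric $\funcLTwoNormEmp{\cdot}$ throughout and treat each $f$ as a vector in $\R{N}$ via its sample evaluations.

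First I would set $R := \max_{f \in \frakF}\funcLTwoNormEmp{f}$, and for every integer $k \ge 0$ put $\epsilon_k = R\cdot 2^{-k}$; let $\calZ_k$ be a minimal $\epsilon_k$-net of $\frakF$ in $\funcLTwoNormEmp{\cdot}$, and for every $f \in \frakF$ pick a nearest representative $\pi_k(f) \in \calZ_k$. Given the free parameter $\delta > 0$, let $K$ be the largest index with $\epsilon_K > \delta$. The core step is the telescoping identity
\begin{equation*}
f - \pi_0(f) \;=\; \sum_{k=1}^{K}\bigl(\pi_k(f) - \pi_{k-1}(f)\bigr) \;+\; \bigl(f - \pi_K(f)\bigr),
\end{equation*}
where I would adjoin the zero function to $\calZ_0$ so that the $\pi_0(f)$ term is harmless, and the residual $f - \pi_K(f)$ has empirical $L_2$-norm at most $\delta$.

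Next I would take Rademacher expectations. The residual contributes at most a $\delta$-sized term by Cauchy--Schwarz. For each chaining level, the number of distinct differences $\pi_k(f) - \pi_{k-1}(f)$ is at most $|\calZ_k|\cdot|\calZ_{k-1}| \le \coverNum(\frakF,\funcLTwoNormEmp{\cdot},\epsilon_k)^2$, and each such difference has $\funcLTwoNormEmp{\cdot}$-norm bounded by $\epsilon_k+\epsilon_{k-1} = 3\epsilon_k$. Massart's finite-class lemma then bounds the corresponding Rademacher average by $3\epsilon_k\sqrt{2\log\coverNum(\frakF,\funcLTwoNormEmp{\cdot},\epsilon_k)^2}/\sqrt{N} = 6\epsilon_k\sqrt{\log\coverNum(\frakF,\funcLTwoNormEmp{\cdot},\epsilon_k)}/\sqrt{N}$. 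Summing over $k$ and using monotonicity of $\epsilon\mapsto\coverNum(\frakF,\funcLTwoNormEmp{\cdot},\epsilon)$, each dyadic term is dominated by $2\int_{\epsilon_{k+1}}^{\epsilon_k}\sqrt{\log\coverNum(\frakF,\funcLTwoNormEmp{\cdot},\epsilon)}\,\symbolDiff\epsilon$; collecting pieces yields the stated bound, and taking the infimum over $\delta$ closes the argument.

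The main obstacle I expect is bookkeeping of the absolute constants $8$ and $12$: depending on how the truncation $K$ is selected, how the tail $f-\pi_K(f)$ is absorbed (a crude $\delta$-estimate versus an additional chaining pass), and how the dyadic sum is majorized by the integral, slightly different constants emerge, and matching $(8,12)$ requires one level of ``slack'' in the chaining bookkeeping. A secondary subtlety is convergence of the chaining series as $\delta \downarrow 0$ when $\log\coverNum$ is not integrable near $0$; this is exactly why the statement takes an infimum over $\delta > 0$ rather than sending $\delta$ to zero, and my proof should respect that by leaving $\delta$ as a tuning parameter throughout.
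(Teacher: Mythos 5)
The paper does not actually prove this lemma; it imports it wholesale from Vershynin's textbook, so there is no in-paper argument to match against. Your dyadic chaining proof is the standard argument behind the cited result and is essentially correct: zero-anchored net at scale $\epsilon_0=\max_{f\in\frakF}\funcLTwoNormEmp{f}$, telescoping through nearest-point projections $\pi_k$, Massart's finite-class lemma on the at most $\coverNum(\frakF,\funcLTwoNormEmp{\cdot},\epsilon_k)^2$ increments of norm at most $3\epsilon_k$, and a Cauchy--Schwarz bound on the truncation residual, with the dyadic sum majorized by the entropy integral via $\epsilon_k=2(\epsilon_k-\epsilon_{k+1})$ and monotonicity of the covering number. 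Two small bookkeeping points, both of the kind you already flag: with $K$ chosen as the largest index satisfying $\epsilon_K>\delta$, the residual satisfies $\funcLTwoNormEmp{f-\pi_K(f)}\le\epsilon_K\le 2\delta$, not $\delta$; and the summed integral has lower limit $\epsilon_{K+1}$, which can be as small as $\delta/2$ rather than $\delta$, so the raw output of your argument is of the form $2\delta+\tfrac{12}{\sqrt{N}}\int_{\delta/2}^{\epsilon_0}\sqrt{\log\coverNum(\frakF,\funcLTwoNormEmp{\cdot},\epsilon)}\,d\epsilon$; reparametrizing $\delta\mapsto 2\delta$ (or simply absorbing the factor) recovers the stated bound with constants no worse than $(8,12)$. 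Your remark on why the statement keeps the infimum over $\delta>0$ rather than letting $\delta\downarrow 0$ is also the right reading of the truncated form.
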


\begin{lemma}[Covering number of norm balls \citep{Vershynin2018_High-dimensionalProbability}]
\label{lemma:appendix:TechnicalLemmas:CovNum4LpNormBall}
Let $\Set{B}$ be a $l_p$-norm ball with radius $W$. Let $d(\bx_1, \bx_2)=\norm{\bx_1-\bx_2}_{p}$ . Define the $\epsilon$-covering number of $\Set{B}$ as $\coverNum (\Set{B}, d(\cdot, \cdot), \epsilon)$, we have
\bea
\coverNum(\Set{B}, d(\cdot, \cdot), \epsilon)\le \left(1+\frac{2W}{\epsilon}\right)^{d}.
\eea 
\end{lemma}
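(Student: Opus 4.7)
The plan is to establish the bound via the standard volumetric packing argument, which is dimension-agnostic as long as volumes scale homogeneously in the ambient space, a property that holds for any norm on $\R{d}$. I will first reduce the covering-number problem to a packing-number problem, then exploit the fact that translation by a fixed vector and dilation by a positive scalar behave predictably with respect to Lebesgue volume under any norm-induced metric.

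First, I would let $\Set{Z} = \{\bz_1,\ldots, \bz_N\} \subset \Set{B}$ be a maximal $\epsilon$-separated subset of $\Set{B}$, meaning $\norm{\bz_i-\bz_j}_p > \epsilon$ for all $i \ne j$ and no further point of $\Set{B}$ can be added while preserving this separation. Maximality immediately implies that $\Set{Z}$ is an $\epsilon$-net of $\Set{B}$: for any $\bx \in \Set{B}$, there must exist $\bz_i$ with $\norm{\bx - \bz_i}_p \le \epsilon$, or else $\Set{Z} \cup \{\bx\}$ would still be $\epsilon$-separated, contradicting maximality. Hence $\coverNum(\Set{B}, d, \epsilon) \le N$, and it suffices to upper bound $N$.

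Next, I would apply the volume argument. Since the points in $\Set{Z}$ are pairwise separated by more than $\epsilon$, the open $l_p$-balls $\Set{B}_p(\bz_i, \epsilon/2)$ of radius $\epsilon/2$ centered at them are pairwise disjoint. Moreover, each is contained in the enlarged $l_p$-ball $\Set{B}_p(\bzero, W + \epsilon/2)$ of radius $W + \epsilon/2$ centered at the origin, since $\norm{\bz_i}_p \le W$. Taking Lebesgue volumes and using that in $\R{d}$ the volume of an $l_p$-ball scales as the $d$-th power of its radius (so $\text{vol}(\Set{B}_p(\bzero, r)) = r^d \cdot \text{vol}(\Set{B}_p(\bzero, 1))$), disjointness and containment give
\begin{equation}
N \cdot (\epsilon/2)^d \cdot \text{vol}(\Set{B}_p(\bzero,1)) \le (W + \epsilon/2)^d \cdot \text{vol}(\Set{B}_p(\bzero, 1)).
\end{equation}
The unit-ball volume cancels, and rearranging yields $N \le \left(\frac{2W + \epsilon}{\epsilon}\right)^d = \left(1 + \frac{2W}{\epsilon}\right)^d$, which completes the proof.

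There is no substantial obstacle here: the result is classical, and the only subtlety is ensuring that the volume-scaling identity holds for a general $l_p$-norm, which follows because any norm ball in $\R{d}$ is the image of the unit ball under the linear map $\bx \mapsto r\bx$ whose Jacobian determinant is $r^d$. I would simply state this scaling fact without proof, citing the standard reference (Vershynin's text already cited in the excerpt).
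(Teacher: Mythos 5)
Your proof is correct: the paper does not prove this lemma at all but simply cites Vershynin, and your maximal-separated-set-plus-volume-comparison argument is precisely the standard proof of that cited result (covering number bounded by packing number, disjoint $\epsilon/2$-balls inside the $(W+\epsilon/2)$-ball, homogeneous volume scaling). Nothing is missing; the only implicit point, that the net points lie inside $\Set{B}$ itself, is handled automatically since your separated set is chosen within $\Set{B}$.
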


\begin{lemma}[Covering number of low-tubal-rank tensors]
\label{lemma:TechnicalLemmas:CoverNumLowRankTensors}
For the set of  tensors $\Set{T}_r:=\{\tT\in\R{m\times n \times \nChannel}~\big|~\tRank(\tT)\le r,~\tensorFNorm{\tT}\le 1\}$ with $r\le \min\{m,n\}$, its $\epsilon$-covering number can be upper bounded by
\bea
\label{eq:CoverNumLowRankTensors}
\coverNum(\Set{T}_r,\tensorFNorm{\cdot},\epsilon)\le\left(\frac{9}{\epsilon}\right)^{(m+n+1)r\nChannel}.
\eea 
\end{lemma}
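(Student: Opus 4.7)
The plan is to pass to the transformed domain, where the tubal-rank bound becomes a per-slice matrix-rank bound, and then combine a covering of the Stiefel factors with a Euclidean covering of the singular-value block.

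First, since $\bM$ is orthogonal, the map $\tT\mapsto\LBar{\bT}$ is an isometry in Frobenius norm; moreover, by Definition \ref{theorem:LSVD} the hypothesis $\tRank(\tT)\le r$ forces each diagonal slice of $M(\tS)$ to carry at most $r$ nonzero entries, so each block $M(\tT)_{:,:,k}\in\R{m\times n}$ of $\LBar{\bT}$ has ordinary matrix rank at most $r$. Thus covering $\Set{T}_r$ in $\tensorFNorm{\cdot}$ reduces to covering the block-diagonal set
\begin{equation*}
\mathcal{B}:=\Big\{\mathrm{bdiag}(A_1,\ldots,A_\nChannel)\ :\ A_k=U_kS_kV_k^\top,\ U_k^\top U_k=V_k^\top V_k=\bI_r,\ \textstyle\sum_{k}\fnorm{S_k}^2\le 1\Big\}
\end{equation*}
in the matrix Frobenius norm.

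Next I would build three component nets at resolution $\bar\epsilon$. For each $k$, the unit operator-norm ball in $\R{m\times r}$ (which contains every orthonormal $U_k$) admits an $\bar\epsilon$-net in operator norm of size at most $(3/\bar\epsilon)^{mr}$ by the standard volumetric bound, and likewise $(3/\bar\epsilon)^{nr}$ for $V_k$. Taking products over $k=1,\ldots,\nChannel$ yields joint nets of cardinalities $(3/\bar\epsilon)^{mr\nChannel}$ and $(3/\bar\epsilon)^{nr\nChannel}$. For the singular-value block, the concatenated vector of diagonals $(S_1,\ldots,S_\nChannel)$ lies in a unit Euclidean ball of dimension $r\nChannel$, whose $\bar\epsilon$-net has cardinality $(3/\bar\epsilon)^{r\nChannel}$ by Lemma \ref{lemma:appendix:TechnicalLemmas:CovNum4LpNormBall}.

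Then I would stitch these together through the slicewise factorization identity $U_kS_kV_k^\top - U'_kS'_k{V'_k}^\top = (U_k-U'_k)S_kV_k^\top + U'_k(S_k-S'_k)V_k^\top + U'_kS'_k(V_k-V'_k)^\top$, which gives
\begin{equation*}
\fnorm{A_k-A'_k}\le \|U_k-U'_k\|\,\fnorm{S_k}+\fnorm{S_k-S'_k}+\fnorm{S'_k}\,\|V_k-V'_k\|.
\end{equation*}
Squaring, applying $(a+b+c)^2\le 3(a^2+b^2+c^2)$, summing over $k$, and using $\sum_k\fnorm{S_k}^2\le 1$ together with the uniform operator-norm bounds $\|U_k-U'_k\|,\|V_k-V'_k\|\le\bar\epsilon$ yields $\fnorm{\LBar{\bT}-\LBar{\bT}'}^2\le 9\bar\epsilon^2$. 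Choosing $\bar\epsilon=\epsilon/3$ and multiplying the three net cardinalities delivers the stated bound $(9/\epsilon)^{(m+n+1)r\nChannel}$.

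The main technical subtlety is tightness of the slicewise combination: covering each slice independently in Frobenius norm at resolution $\epsilon/\sqrt{\nChannel}$ would produce a spurious $\sqrt{\nChannel}$ factor in the base of the exponential. This is avoided because the operator-norm nets for $U_k$ and $V_k$ are at a single uniform resolution $\bar\epsilon$ across all slices, while the singular-value block is covered jointly as an $r\nChannel$-dimensional Euclidean ball; together with the $(a+b+c)^2\le 3(a^2+b^2+c^2)$ step, the dependence on $\nChannel$ enters only in the exponent.
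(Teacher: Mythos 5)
Your proof is correct and reaches the paper's exact bound, but it travels a different technical route. The paper stays in the original domain: it takes the reduced t-SVD $\tT=\tU\tprod\tS\tprod\tV^{\top}$, covers the (semi) t-orthogonal factor sets under the norm $\norm{\cdot}_{\infty,2,2}$ (max F-norm over lateral slices), covers the f-diagonal tensor set in F-norm, and telescopes the difference of t-products, getting the three exponents $mr\nChannel$, $nr\nChannel$, $r\nChannel$ from tensor-level nets. You instead pass through the F-norm isometry $\tT\mapsto\LBar{\bT}$, reduce the tubal-rank constraint to a per-frontal-slice matrix rank bound in the transformed domain, cover each slice's Stiefel-type factors at a uniform resolution in operator norm, cover the concatenated singular values jointly as an $r\nChannel$-dimensional Euclidean ball, and aggregate across slices via $(a+b+c)^2\le 3(a^2+b^2+c^2)$; this is exactly where you avoid the spurious $\sqrt{\nChannel}$ that naive slice-by-slice F-norm covering would cost, and it plays the same role that the paper's $\norm{\cdot}_{\infty,2,2}$ trick plays at the tensor level. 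What your route buys is transparency: the rank structure and the dimension count $(m+n+1)r\nChannel$ are read off from ordinary matrix SVDs, with no t-product algebra needed in the telescoping; what the paper's route buys is that the whole argument is stated with tensor norms and t-products, so no per-slice bookkeeping or explicit cross-slice aggregation is required. One shared caveat (harmless in both proofs, since the bound is used inside Dudley's integral): the constructed net elements need not lie in $\Set{T}_r$ itself (your $U_k'$ need not be orthonormal and the reconstructed tensor's F-norm can slightly exceed $1$), so both arguments really produce an external cover; if an internal cover is insisted upon, the standard doubling of the radius fixes it at the cost of a constant.
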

\begin{proof}
Consider the reduced t-SVD \citep{Kilmer2021_PNAS_t-SVD} of a tensor $\tX=\tU\tprod\tS\tprod\tV^{\top}\in\Set{T}_r$, where $\tprod$ denotes the t-product induced by the linear transform $M(\cdot)$ defined in Eq.~(\ref{eq:defL}) , $\tU\in\R{m\times r\times \nChannel}$ and $\tV\in\R{n\times r\times \nChannel}$ are (semi)-t-orthogonal tensors, and  $\tS\in\R{r\times r\times \nChannel}$ is an f-diagonal tensor. As $\tT\in\Set{T}_r$, we have $\tensorFNorm{\tS}=\tensorFNorm{\tX}\le1$. The idea is to cover $\Set{T}_r$ by covering the set of factor tensors $\tU,\tV$ and $\tS$.

Let $\Set{D}\subset\R{r\times r\times \nChannel}$ be the set of f-diagonal tensors with F-norm equal to one. We take $\Set{D}^c$ to be an $\epsilon/3$-covering net for $\Set{D}$. Then by Lemma \ref{lemma:appendix:TechnicalLemmas:CovNum4LpNormBall}, we have $|\Set{D}^c|\le(9/\epsilon)^{r\nChannel}$. Next, let $\Set{O}_{m,r}=\{\tU\in\R{m\times r\times \nChannel}~|~\tU\tprod\tU^{\top}=\tI\}$. To cover $\Set{O}_{m,r}$, we consider the $\norm{\cdot}_{\infty,2,2}$-norm defined as 
\bea
\nonumber
\norm{\tX}_{\infty,2,2}:=\max_{i}\tensorFNorm{\tX_{:,i,:}}.
\eea 
Let $\Set{Q}_{m,r}:=\{\tX\in\R{m\times r\times \nChannel}~|~\norm{\tX}_{\infty,2,2}\le 1\}$. Then, we have $\Set{O}_{m,r}\subset \Set{Q}_{m,r}$ by the definition of t-orthogonal tensors. Letting $\Set{Q}_{m,r}^c$ be an $\epsilon/3$-covering net of $\Set{Q}_{m,r}$, then we obtain $|\Set{Q}_{m,r}^c|\le (9/\epsilon)^{mr\nChannel}$ by Theorem \ref{lemma:appendix:TechnicalLemmas:CovNum4LpNormBall}. Similarly, an $\epsilon/3$-covering net of $\Set{Q}_{n,r}$ satisfies $|\Set{Q}_{n,r}^c|\le (9/\epsilon)^{nr\nChannel}$.

Now construct a set $\Set{T}_r^c=\{\tU^c\tprod\tS^c\tprod(\tV^c)^{\top}~|~\tU^c\in \Set{Q}_{m,r}^c,~\tS^c\in \Set{D}^c, \tV^c\in \Set{Q}_{n,r}^c\}$. Then, we have 
$$|\Set{T}_r^c|=|\Set{Q}_{m,r}^c|\cdot|\tS^c|\cdot|\Set{Q}_{n,r}^c|\le(9/\epsilon)^{(m+n+1)r\nChannel}.$$ Net, we will show that $\Set{T}_r^c$ is an $\epsilon$-covering net of $\Set{T}_r$, i.e., for any $\tX\in\Set{T}_r$, there is an $\tX^c\in\Set{T}_r^c$ satisfying $\tensorFNorm{\tX-\tX^c}\le \epsilon$. 

Given $\tX\in\Set{T}_r$, consider its  reduced t-SVD as  $\tX=\tU\tprod\tS\tprod\tV^{\top}\in\Set{T}_r$. Then, there exists $\tX^c=\tU^c\tprod\tS^c\tprod(\tV^c)^{\top}$ with $\tU^c\in \Set{Q}_{m,r}^c,~\tS^c\in \Set{D}^c, \tV^c\in \Set{Q}_{n,r}^c$ satisying  
$\norm{\tU-\tU^c}_{\infty,2,2}\le\epsilon/3$, $\norm{\tS-\tS^c}_{F}\le\epsilon/3$, and $\norm{\tV-\tV^c}_{\infty,2,2}\le\epsilon/3$. This gives
\bea
\nonumber
&\tensorFNorm{\tX-\tX^c}\\
&=\tensorFNorm{\tU\tprod\tS\tprod\tV^{\top}-\tU^c\tprod\tS^c\tprod(\tV^c)^{\top}}\\
&=\bigg\|\tU\tprod\tS\tprod\tV^{\top}-\tU^c\tprod\tS\tprod\tV^{\top}+\tU^c\tprod\tS\tprod\tV^{\top}-\tU^c\tprod\tS^c\tprod\tV^{\top}\\
&\qquad\qquad\qquad\qquad\qquad\qquad\qquad\quad+\tU^c\tprod\tS^c\tprod\tV^{\top}-\tU^c\tprod\tS^c\tprod(\tV^c)^{\top}\bigg\|_{\text{F}}\\
&\le\tensorFNorm{\tU\tprod\tS\tprod\tV^{\top}-\tU^c\tprod\tS\tprod\tV^{\top}}
+\tensorFNorm{\tU^c\tprod\tS\tprod\tV^{\top}-\tU^c\tprod\tS^c\tprod\tV^{\top}}\\
&\qquad\qquad\qquad\qquad\qquad\qquad\qquad+\tensorFNorm{\tU^c\tprod\tS^c\tprod\tV^{\top}-\tU^c\tprod\tS^c\tprod(\tV^c)^{\top}}\\
&\le\tensorFNorm{\tU\tprod\tS\tprod\tV^{\top}-\tU^c\tprod\tS\tprod\tV^{\top}}
+\tensorFNorm{\tU^c\tprod\tS\tprod\tV^{\top}-\tU^c\tprod\tS^c\tprod\tV^{\top}}\\
&\qquad\qquad\qquad\qquad\qquad\qquad\qquad+\tensorFNorm{\tU^c\tprod\tS^c\tprod\tV^{\top}-\tU^c\tprod\tS^c\tprod(\tV^c)^{\top}}\\
&\le\tensorFNorm{(\tU-\tU^c)\tprod\tS\tprod\tV^{\top}}
+\tensorFNorm{\tU^c\tprod(\tS-\tS^c)\tprod\tV^{\top}}+\tensorFNorm{\tU^c\tprod\tS^c\tprod(\tV-\tV^c)^{\top}}.
\eea 
For the first term, note that since $\tV$ is a t-orthogonal tensor, $$\tensorFNorm{(\tU-\tU^c)\tprod\tS\tprod\tV^{\top}}=\tensorFNorm{(\tU-\tU^c)\tprod\tS}.$$
and 
\bea
\nonumber
\tensorFNorm{(\tU-\tU^c)\tprod\tS}^2
&=
\sum_{i=1}^{r}
\tensorFNorm{(\tU-\tU^c)_{:,i,:}\tprod\tS_{i,i,:}}^2\\
&=
\sum_{i=1}^{r}
\tensorFNorm{(\tU-\tU^c)_{:,i,:}}^2\tensorFNorm{\tS_{i,i,:}}^2\\
&\le
(\sum_{i=1}^{r}\tensorFNorm{\tS_{i,i,:}}^2)
\max_{i}\tensorFNorm{(\tU-\tU^c)_{:,i,:}}^2\\
&\le\tensorFNorm{\tS}^2
\norm{\tU-\tU^c}_{\infty,2,2}\\
&\le ({\epsilon}/{3})^2.
\eea 
Hence, $\tensorFNorm{(\tU-\tU^c)\tprod\tS\tprod\tV^{\top}}\le\epsilon/3$. Similarly, we have 
$\tensorFNorm{\tU^c\tprod\tS^c\tprod(\tV-\tV^c)^{\top}}\le\epsilon/3$. 
The middle term can be bounded $\tensorFNorm{\tU^c\tprod(\tS-\tS^c)\tprod\tV^{\top}}\le\tensorFNorm{\tS-\tS^c}\le\epsilon/3$ due to the property of t-orthognal tensors $\tU^c$ and $\tV^c$ \citep{Kilmer2021_PNAS_t-SVD}. Therefore,  for any $\tX\in\Set{T}_r$, there is an $\tX^c\in\Set{T}_r^c$ satisfying $\tensorFNorm{\tX-\tX^c}\le \epsilon$. 
\end{proof}
\begin{lemma} [Covering number bounds for composition and addtion \citep{Graf2022_arXiv_MeasuringExcessCapacity}]
\label{lemma:appendix:TechnicalLemmas:CovNum4CompositionAddition}
Let $\frakF_1$ and $\frakF_2$ be classes of functions on normed space $(\mathcal{X},\norm{\cdot}_{\mathcal{X}})\rightarrow (\mathcal{Y},\norm{\cdot}_{\mathcal{Y}})$ and let $\frakF$ be a class of $c$-Lipschitz functions $(\mathcal{Y},\norm{\cdot}_{\mathcal{Y}})\rightarrow (\mathcal{Z},\norm{\cdot}_{\mathcal{Z}})$. Then for any $X\in \mathcal{X}^N$ and $\epsilon_{\frakF_1},\epsilon_{\frakF_2},\epsilon_{\frakF}>0$, it holds that 
\bea
\nonumber
\coverNum(\{f_1+f_2~|~f_1\in\frakF_1, f_2\in\frakF_2\}, \epsilon_{\frakF_1}+\epsilon_{\frakF_2}, \norm{\cdot}_{X})
\le 
\coverNum(\frakF_1, \epsilon_{\frakF_1}, \norm{\cdot}_{X})\coverNum(\frakF_2, \epsilon_{\frakF_2}, \norm{\cdot}_{X}).
\eea 
and 
\bea
\nonumber
\coverNum(\{f\circ f_1~|~f\in\frakF, f_1\in\frakF_1\}, \epsilon_{\frakF}+c\epsilon_{\frakF_1}, \norm{\cdot}_{X})
\le 
\coverNum(\frakF_1, \epsilon_{\frakF_1}, \norm{\cdot}_{X})
\sup_{f_1\in\frakF_1}\coverNum(\frakF, \epsilon_{\frakF}, \norm{\cdot}_{f_1(X)}).
\eea 
Specifically, if $\frakF=\{f\}$ is a singleton, we have  
\bea
\nonumber
\coverNum(\{f\circ f_1~|~f\in\frakF, f_1\in\frakF_1\}, c\epsilon_{\frakF_1}, \norm{\cdot}_{X})
\le 
\coverNum(\frakF_1, \epsilon_{\frakF_1}, \norm{\cdot}_{X}).
\eea 
\end{lemma}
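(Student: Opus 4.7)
The plan is to construct the claimed covers explicitly by taking product-style combinations of minimal covers for the constituent classes, and then control the approximation error using the triangle inequality (for the sum) and the Lipschitz hypothesis (for the composition).

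For the addition bound, I would first fix minimal $\epsilon_{\frakF_1}$- and $\epsilon_{\frakF_2}$-covers $\mathcal{C}_1 \subset \frakF_1$ and $\mathcal{C}_2 \subset \frakF_2$ with respect to the empirical pseudo-metric $\norm{\cdot}_{X}$. Given any $f_1+f_2$ in the sum class, pick $g_1 \in \mathcal{C}_1$ with $\norm{f_1-g_1}_X \le \epsilon_{\frakF_1}$ and $g_2\in\mathcal{C}_2$ with $\norm{f_2-g_2}_X \le \epsilon_{\frakF_2}$. A one-line triangle inequality then yields $\norm{(f_1+f_2)-(g_1+g_2)}_X \le \epsilon_{\frakF_1}+\epsilon_{\frakF_2}$, so the product set $\{g_1+g_2 : g_1\in\mathcal{C}_1,\,g_2\in\mathcal{C}_2\}$ is a valid cover of size at most $|\mathcal{C}_1|\cdot|\mathcal{C}_2|$.

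For the composition bound, I would first take a minimal $\epsilon_{\frakF_1}$-cover $\mathcal{C}_1$ of $\frakF_1$ under $\norm{\cdot}_X$; then, for each anchor $g_1 \in \mathcal{C}_1$, take a minimal $\epsilon_{\frakF}$-cover $\mathcal{C}_{g_1}$ of $\frakF$ under the shifted pseudo-metric $\norm{\cdot}_{g_1(X)}$ (this is well-defined because $g_1(X)\in\mathcal{Y}^N$ is a fixed tuple once $g_1$ is fixed). For an arbitrary $f\circ f_1$, I would first choose $g_1\in\mathcal{C}_1$ close to $f_1$ in $\norm{\cdot}_X$ and then choose $g\in\mathcal{C}_{g_1}$ close to $f$ in $\norm{\cdot}_{g_1(X)}$. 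Splitting
\[
f\circ f_1 - g\circ g_1 = \underbrace{(f\circ f_1 - f\circ g_1)}_{\text{Lipschitz part}} + \underbrace{(f\circ g_1 - g\circ g_1)}_{\text{cover of }\frakF},
\]
and using that every $f\in\frakF$ is $c$-Lipschitz to handle the first summand coordinatewise, I obtain $\norm{f\circ f_1 - g\circ g_1}_X \le c\,\epsilon_{\frakF_1}+\epsilon_{\frakF}$. Counting pairs $(g_1,g)$ then gives a cover of cardinality $|\mathcal{C}_1|\cdot \sup_{f_1\in\frakF_1}\coverNum(\frakF,\epsilon_{\frakF},\norm{\cdot}_{f_1(X)})$, where the supremum appears because the cover of $\frakF$ depends on the underlying anchor $g_1$. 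The singleton case $\frakF=\{f\}$ follows immediately by setting $\epsilon_{\frakF}=0$ and noting $\coverNum(\{f\},0,\cdot)=1$.

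The only subtle point is the composition bound: because $\norm{\cdot}_{g_1(X)}$ genuinely depends on the anchor $g_1$, one cannot replace the supremum by a single uniform cover of $\frakF$ without additional hypotheses, and the Lipschitz constant $c$ must enter the total radius through a coordinatewise $|f(y)-f(y')|\le c\norm{y-y'}_\mathcal{Y}$ applied to $(y,y')=(f_1(\tx_i),g_1(\tx_i))$ and then aggregated into $\norm{\cdot}_X$. Everything else is a routine triangle-inequality bookkeeping.
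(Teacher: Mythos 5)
Your proof is correct. The paper itself gives no proof of this lemma---it is imported verbatim from Graf et al.\ (2022)---and your construction (product of covers plus triangle inequality for the sum class; an anchored cover of $\frakF$ at each element $g_1$ of an internal cover of $\frakF_1$, with the splitting $f\circ f_1-g\circ g_1=(f\circ f_1-f\circ g_1)+(f\circ g_1-g\circ g_1)$ and the $c$-Lipschitz bound applied coordinatewise) is exactly the standard argument behind the cited result. The two points you flag are indeed the only ones that need care, and you handle both: the cover of $\frakF_1$ must be internal (consistent with the paper's definition of a covering net) so that the anchor $g_1\in\frakF_1$ is dominated by the supremum $\sup_{f_1\in\frakF_1}\coverNum(\frakF,\epsilon_{\frakF},\norm{\cdot}_{f_1(X)})$, and the empirical pseudo-norm must aggregate pointwise bounds monotonically, which holds for the empirical $L_2$ norm used throughout the paper; for the singleton case it is cleanest to cover directly by $\{f\circ g_1: g_1\in\mathcal{C}_1\}$ rather than invoke a radius-zero cover, but the conclusion is the same.
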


\begin{lemma}[Simplifided Talagrand's concentration inequality \citep{Steinwart2008_Book_SVM}]
\label{lemma:appendix:TechnicalLemmas:TalarandConcentration}
Let $\frakF$ be a function class on $\mathcal{X}$ that is separable with respect to $L_\infty$-norm, and $\{\bx_i\}_{i=1}^N$ be \iid random variables in $\mathcal{X}$. Furthermore, suppose there exist constants $V\ge$ and $U\ge 0$ such that $V=\sup_{f\in\frakF}\E[(f-\E[f])^2]$ and $U=\sup_{f\in\frakF}\norm{f}_{L_\infty}$.  Letting $Z:=\sup_{f\in\frakF}|N^{-1}\sum_{i=1}^N f(\bx_i)-\E[f]|$, 
then it holds for all $t>0$ that 
\bea
\nonumber
\P\left[Z\ge 2\E[Z]+\sqrt{\frac{2Vt}{N}}+\frac{2Ut}{N}\right]\le e^{-t}.
\eea 
\end{lemma}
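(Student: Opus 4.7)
The plan is to derive this simplified Talagrand-type bound as a routine corollary of the sharp Bousquet–Talagrand concentration inequality for suprema of empirical processes, followed by two elementary algebraic manipulations. The sharp form, whose proof relies on the entropy method (a modified log-Sobolev inequality for the supremum functional combined with a Herbst-style argument), asserts that under the same hypotheses
\[
\P\left(Z \ge \E[Z] + \sqrt{\frac{2t V}{N} + \frac{4tU\,\E[Z]}{N}} + \frac{tU}{3N}\right) \le e^{-t}
\]
for every $t > 0$. I would invoke this as a black box (e.g.\ Theorem~2.1 of Bousquet (2002), or Theorem~6.12 of Steinwart and Christmann (2008) as already referenced in the statement), so the main work is in simplifying the confidence radius.

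To pass from this sharp form to the stated bound, I would apply two elementary inequalities. First, subadditivity of the square root $\sqrt{a+b} \le \sqrt{a}+\sqrt{b}$ separates the radical as $\sqrt{2tV/N} + 2\sqrt{tU\,\E[Z]/N}$. Second, the AM--GM inequality $2\sqrt{xy} \le x+y$, applied with $x=\E[Z]$ and $y=tU/N$, linearizes the self-referential cross term into $\E[Z] + tU/N$. Combining these with the crude bookkeeping bound $tU/(3N) + tU/N = 4tU/(3N) \le 2tU/N$ yields
\[
Z \le 2\E[Z] + \sqrt{\frac{2tV}{N}} + \frac{2tU}{N}
\]
on the same event of probability at least $1 - e^{-t}$, which is precisely the conclusion claimed.

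The real technical difficulty lies not in the simplification above (which is routine) but in establishing the sharp Bousquet form itself: proving the exponential-tail concentration of the supremum functional $Z$ requires either Ledoux's entropy method, in which one controls the moment generating function of $Z$ via a log-Sobolev inequality and the Herbst argument, or Massart's modification using bounded differences together with a decoupling step. Since the lemma is explicitly attributed to Steinwart and Christmann (2008), this sharp concentration piece is treated as established, and the proof reduces to the transparent derivation of the specific constants $2\E[Z]$, $\sqrt{2Vt/N}$, and $2Ut/N$ from Bousquet's inequality via subadditivity and AM--GM, as outlined above.
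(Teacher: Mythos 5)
The paper never proves this lemma --- it is imported verbatim from Steinwart and Christmann's book as an auxiliary tool --- so there is no internal argument to compare against; your job was to supply a derivation, and the route you chose is the standard and correct one. Taking Bousquet's refinement of Talagrand's inequality in normalized form as the black box and simplifying via $\sqrt{a+b}\le\sqrt{a}+\sqrt{b}$ followed by $2\sqrt{xy}\le x+y$ with $x=\E[Z]$, $y=tU/N$ is exactly how such "simplified" statements are produced, and your bookkeeping ($tU/(3N)+tU/N=4tU/(3N)\le 2tU/N$) is fine.

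Two points deserve an explicit sentence in a complete write-up. First, the $Z$ in the lemma is a two-sided supremum (absolute values), whereas Bousquet's theorem is stated for the one-sided supremum; this is repaired by applying it to the symmetrized class $\mathcal{F}\cup(-\mathcal{F})$, which leaves $V$, $U$, and $\E[Z]$ unchanged, so nothing is lost. Second, and more substantively for the constants: Bousquet's hypotheses require a sup-norm (in fact only a one-sided) bound on the \emph{centered} functions, while the $U$ in the lemma bounds the uncentered $f$. Since one only has $\|f-\E f\|_{\infty}\le 2U$ in general, the black-box inequality you quote "under the same hypotheses" is really valid with $U$ replaced by $2U$, and your simplification then delivers $Z\le 2\E[Z]+\sqrt{2Vt/N}+8Ut/(3N)$ rather than the stated $2Ut/N$, unless the class satisfies the one-sided bound $f-\E f\le U$ (as it does, e.g., for nonnegative function classes, or when $U$ is taken to bound the centered functions directly, which is how the paper actually uses the lemma in the proof of Theorem \ref{theorem:ApxLR:AGP:GeneralCase}). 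This is a constant-level slack, not a flaw in the approach, and it is immaterial downstream because the lemma only enters through unspecified absolute constants; but a careful proof should either state the Bousquet form for the centered class with bound $2U$ and accept the slightly larger constant, or add the hypothesis that makes the one-sided bound $U$ legitimate.
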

\subsection{Some Results for Analyzing t-NNs}
In this subsection, we present several fundamental
 statements described as lemmas for analyzing t-NNs.
First, we give Lemmas \ref{lemma:44}-\ref{lemma:appendix:t-NN:tLayerIsLips}, which are used in the analysis of the t-product layers.
\begin{lemma}  
\label{lemma:44}
Let $\sigma(\cdot): \Re\rightarrow\Re$ be a   $\lipAct$-Lipschitz function, i.e., $|\sigma(x)-\sigma(y)|\le \lipAct|x-y|, \forall x,y\in\Re$. If it is applied element-wisely to any two real vectors $\bx$ and $\by$, then it holds that 
\bea
\nonumber
\norm{\actFunc(\bx)-\actFunc(\by)}_{l_p}&\le \lipAct\norm{\bx-\by}_{l_p}.
\eea 
\end{lemma}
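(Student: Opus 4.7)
The plan is to reduce the vector inequality to a coordinate-wise application of the scalar Lipschitz hypothesis, which is immediate once the $l_p$-norm is written out entry by entry.

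First, I would treat the case $1 \le p < \infty$. By definition,
\[
\|\sigma(\bx)-\sigma(\by)\|_{l_p}^{p} \;=\; \sum_{i}\bigl|\sigma(x_i)-\sigma(y_i)\bigr|^{p}.
\]
Applying the scalar Lipschitz bound $|\sigma(x_i)-\sigma(y_i)|\le L_\sigma|x_i-y_i|$ coordinate-wise and raising to the $p$-th power preserves the inequality because $t\mapsto t^p$ is monotone on $[0,\infty)$, so each term is at most $L_\sigma^{p}|x_i-y_i|^p$. Summing and taking the $p$-th root yields $\|\sigma(\bx)-\sigma(\by)\|_{l_p}\le L_\sigma\|\bx-\by\|_{l_p}$, which is the claim.

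For the remaining case $p=\infty$, I would argue directly: $\|\sigma(\bx)-\sigma(\by)\|_{l_\infty}=\max_i|\sigma(x_i)-\sigma(y_i)|\le \max_i L_\sigma|x_i-y_i|=L_\sigma\|\bx-\by\|_{l_\infty}$, so the statement holds uniformly in $p\in[1,\infty]$.

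There is no real obstacle here; the only subtlety is making sure the argument is valid for the full range of $p$ used elsewhere in the paper (in particular the F-norm/$l_2$ case and any $l_\infty$ usage appearing in the adversarial attack analysis), which is why I separate the finite-$p$ and $p=\infty$ cases rather than relying on a single unified formula. No tensor-specific machinery is needed because the activation is applied entry-wise, and the vectorization $\opVec(\cdot)$ used throughout the paper preserves $l_p$-norms, so this lemma transfers verbatim to the tensor setting whenever it is invoked later.
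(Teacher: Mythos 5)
Your proof is correct: the coordinate-wise reduction to the scalar Lipschitz bound, with the finite-$p$ and $p=\infty$ cases handled separately, is exactly the standard argument, and the paper itself states this lemma without proof precisely because it regards this elementary verification as immediate. Nothing is missing, and your remark that the element-wise action makes the tensor/vectorized setting identical is consistent with how the lemma is used later in the paper.
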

\begin{lemma} 
\label{lemma:45}
The following inequalities hold:
	\bea
	\nonumber
	\tensorSpecNorm{\tT} \le \tensorFNorm{\tT},\quad \textnormal{and}~~\tensorFNorm{\tW\tprod\tx}\le\tensorSpecNorm{\tW}\tensorFNorm{\tx}\le\tensorFNorm{\tW}\tensorFNorm{\tx}.
	\eea 
\end{lemma}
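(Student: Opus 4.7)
The plan is to prove both inequalities by lifting everything to the $M$-block-diagonal representation and exploiting the orthogonality of the transform $\bM$. The key enabling observation is that since $\bM$ is orthogonal, the mode-3 multiplication $M(\cdot)=(\cdot)\times_3\bM$ preserves the tensor Frobenius norm, i.e., $\tensorFNorm{M(\tT)}=\tensorFNorm{\tT}$. Combined with the block-diagonal identity $\fnorm{\LBar{\bT}}^2=\sum_{i=1}^{\nChannel}\fnorm{M(\tT)_{:,:,i}}^2=\tensorFNorm{M(\tT)}^2$, this gives the bridge $\fnorm{\LBar{\bT}}=\tensorFNorm{\tT}$.

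For the first inequality $\tensorSpecNorm{\tT}\le\tensorFNorm{\tT}$, I would invoke Definition \ref{def:LTNN} to write $\tensorSpecNorm{\tT}=\specnorm{\LBar{\bT}}$, then apply the elementary matrix inequality $\specnorm{\bA}\le\fnorm{\bA}$ to the block-diagonal matrix $\LBar{\bT}$, and finally use the bridge above to identify $\fnorm{\LBar{\bT}}$ with $\tensorFNorm{\tT}$. This step is essentially three lines.

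For the bound $\tensorFNorm{\tW\tprod\tx}\le\tensorSpecNorm{\tW}\tensorFNorm{\tx}$, I would use the frontal-slice characterization of the t-product, namely $M(\tW\tprod\tx)_{:,:,i}=M(\tW)_{:,:,i}M(\tx)_{:,:,i}$, together with the isometry $\tensorFNorm{\cdot}=\tensorFNorm{M(\cdot)}$ to get
\begin{equation*}
\tensorFNorm{\tW\tprod\tx}^2=\sum_{i=1}^{\nChannel}\fnorm{M(\tW)_{:,:,i}M(\tx)_{:,:,i}}^2\le\sum_{i=1}^{\nChannel}\specnorm{M(\tW)_{:,:,i}}^2\fnorm{M(\tx)_{:,:,i}}^2.
\end{equation*}
Pulling out $\max_i\specnorm{M(\tW)_{:,:,i}}^2=\specnorm{\LBar{\bW}}^2=\tensorSpecNorm{\tW}^2$ and collapsing the remaining sum back to $\tensorFNorm{\tx}^2$ via the isometry finishes this inequality. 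The rightmost inequality $\tensorSpecNorm{\tW}\tensorFNorm{\tx}\le\tensorFNorm{\tW}\tensorFNorm{\tx}$ is then immediate from the first inequality applied to $\tW$.

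No step here is an obstacle; the only subtlety is making the $M$-block-diagonal/transform bookkeeping explicit so that each matrix-level inequality (spectral vs.\ Frobenius, submultiplicativity of spectral times Frobenius) can be invoked slicewise and then reassembled. I would therefore present the argument by first stating the two auxiliary identities ($\tensorFNorm{\tT}=\fnorm{\LBar{\bT}}=\tensorFNorm{M(\tT)}$ and the frontal-slice form of $\tprod$) as a short preliminary computation, and then derive each of the three inequalities in turn.
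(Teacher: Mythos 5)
Your proposal is correct and follows essentially the same route as the paper: both pass to the $M$-block-diagonal representation, use the orthogonality of $\bM$ to identify $\tensorFNorm{\tT}$ with $\fnorm{\LBar{\bT}}$, and then invoke the matrix inequalities $\specnorm{\bA}\le\fnorm{\bA}$ and $\fnorm{\bA\bB}\le\specnorm{\bA}\fnorm{\bB}$. The only cosmetic difference is that you apply the second matrix inequality slicewise and then take the maximum over frontal slices, whereas the paper applies it once to the full block-diagonal product $\LBar{\bW}\cdot\LBar{\bx}$; the two bookkeeping styles are equivalent.
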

\begin{proof}
According to the definition of the $M$ transform in Eq.~(\ref{eq:defL}) and the orthogonality of $\bM$, we have  $$\tensorSpecNorm{\tT} = \specnorm{\LBar{\bT}}\le \tensorFNorm{\LBar{\bT}}=\tensorFNorm{M(\tT)}=\tensorFNorm{\tT},$$
and
\bea
\nonumber
\tensorFNorm{\tW\tprod\tx}=\tensorFNorm{\LBar{\bW}\cdot\LBar{\bx}}
\overset{(i)}{\le} \specnorm{\LBar{\bW}}\tensorFNorm{\LBar{\bx}}
= \tensorSpecNorm{\bW}\tensorFNorm{\bx}
\le \tensorFNorm{\bW}\tensorFNorm{\bx},
\eea 
where inequality $(i)$ holds because $\tensorFNorm{\bA\bB}\le \specnorm{\bA}\tensorFNorm{\bB}$ for any matrices $\bA,\bB$ with appropriate dimensions. 
\end{proof}
\begin{lemma}[The t-product layer is Lipschitz continuous]
\label{lemma:appendix:t-NN:tLayerIsLips}
Suppose the activation function is $\lipAct$-Lipschitz, then a
layer of t-product layer 
$h(\tx)=\actFunc(\tW\tprod\tx)$ is at most $\lipAct\tensorFNorm{\tW}$-Lipschitz.
\end{lemma}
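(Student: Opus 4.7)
The plan is to verify the Lipschitz bound directly from the definition: given any two t-vectors $\tx_1, \tx_2$ of compatible size, I would estimate $\tensorFNorm{h(\tx_1) - h(\tx_2)}$ by peeling off the activation, then the t-product, using the two lemmas just established.

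First, since $\actFunc$ is applied entry-wisely and the tensor F-norm equals the $l_2$-norm of $\opVec(\cdot)$, I would invoke Lemma \ref{lemma:44} with $\bx = \opVec(\tW \tprod \tx_1)$ and $\by = \opVec(\tW \tprod \tx_2)$ to conclude
\[
\tensorFNorm{\actFunc(\tW \tprod \tx_1) - \actFunc(\tW \tprod \tx_2)} \le \lipAct \tensorFNorm{\tW \tprod \tx_1 - \tW \tprod \tx_2}.
\]
Next, the t-product is bilinear (this follows immediately from the definition $\tA \tprod \tB = M^{-1}(M(\tA) \odot M(\tB))$ together with the linearity of $M$ and $\odot$), so $\tW \tprod \tx_1 - \tW \tprod \tx_2 = \tW \tprod (\tx_1 - \tx_2)$. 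Finally, applying the second inequality of Lemma \ref{lemma:45} to $\tW$ and $\tx_1 - \tx_2$ yields
\[
\tensorFNorm{\tW \tprod (\tx_1 - \tx_2)} \le \tensorFNorm{\tW} \tensorFNorm{\tx_1 - \tx_2},
\]
and chaining the two estimates gives the claimed Lipschitz constant $\lipAct \tensorFNorm{\tW}$.

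There is no real obstacle here: the proof is a three-line chain, and all the ingredients (entrywise contraction of $\actFunc$, bilinearity of $\tprod$, submultiplicativity of the F-norm under t-product) are already proved. The only tiny point to handle carefully is the bilinearity of $\tprod$, which I would just state as an immediate consequence of its definition via the transform $M$; no separate lemma is needed. A slightly sharper bound $\lipAct \tensorSpecNorm{\tW}$ would follow from the intermediate inequality in Lemma \ref{lemma:45}, but I would state only the F-norm version since that is what the lemma claims.
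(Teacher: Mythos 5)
Your proof is correct and follows essentially the same route as the paper's: apply the entrywise Lipschitz estimate (Lemma \ref{lemma:44}) to peel off $\actFunc$, use linearity of the t-product to write $\tW\tprod\tx_1-\tW\tprod\tx_2=\tW\tprod(\tx_1-\tx_2)$, and finish with the norm inequality of Lemma \ref{lemma:45}. Your remark that the sharper constant $\lipAct\tensorSpecNorm{\tW}$ is available is a fine observation but, as you say, not needed for the stated bound.
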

\begin{proof} According to the Lipschitzness of the activation function, we have
\bea
\nonumber
\tensorFNorm{h(\tx_1)-h(\tx_2)}&=
\tensorFNorm{\actFunc(\tW\tprod\tx_1)-\actFunc(\tW\tprod\tx_2)}\\
&\le \lipAct\tensorFNorm{\tW\tprod\tx_1-\tW\tprod\tx_2}\\
&= \lipAct\tensorFNorm{\tW\tprod(\tx_1-\tx_2)}\\
&= \lipAct\tensorFNorm{\tW}\tensorFNorm{\tx_1-\tx_2}.
\eea
\end{proof}

We then present Lemma \ref{lemma:appendix:FNormBoundOnAdvExample}  and Lemma \ref{lemma:appendix:AGP:DiameterOftNNs} which are 
used in upper bounding the input and output of t-NNs in adversarial  settings.
\begin{lemma}
\label{lemma:appendix:FNormBoundOnAdvExample}
Given a fixed example $\tx\in\R{d\times 1\times \nChannel}$, if  an adversary $\tx'$ satisfies $\tensorLpNorm{\tx-\tx'}\le \xi$, then it holds that 
\bea
\nonumber
\tensorFNorm{\tx'}\le B_x+\xi\pertNormCompConstant.
\eea 
\end{lemma}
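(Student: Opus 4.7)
The statement is a routine consequence of the triangle inequality and the definition of the compatibility constant $\pertNormCompConstant$ from Assumption \ref{assumption:AdvAttack}. First I would split $\tx' = \tx + (\tx' - \tx)$ and apply the triangle inequality for the Frobenius norm $\tensorFNorm{\cdot}$ to reduce the task to controlling $\tensorFNorm{\tx}$ and $\tensorFNorm{\tx' - \tx}$ separately.

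Next, Assumption \ref{assumption:GAP:InputDataBoundness} delivers $\tensorFNorm{\tx} \le B_x$ for free. To handle the perturbation term, I would invoke the compatibility constant, which by definition relates the attack norm $\perNorm(\cdot)$ and the Frobenius norm $\tensorFNorm{\cdot}$, to obtain a bound of the form $\tensorFNorm{\tx' - \tx} \le \pertNormCompConstant\,\perNorm(\tx' - \tx)$ valid for every tensor in $\R{d\times 1 \times \nChannel}$. Combining this with the adversary's budget $\perNorm(\tx' - \tx) \le \xi$ from Assumption \ref{assumption:AdvAttack} yields $\tensorFNorm{\tx' - \tx} \le \xi\,\pertNormCompConstant$.

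Adding these two bounds via $\tensorFNorm{\tx'} \le \tensorFNorm{\tx} + \tensorFNorm{\tx' - \tx}$ completes the argument, giving the desired $\tensorFNorm{\tx'} \le B_x + \xi\,\pertNormCompConstant$. There is no substantive obstacle: the lemma is purely a bookkeeping fact that converts an attack-norm budget on the perturbation into a Frobenius-norm bound on the adversarial input, which is precisely the form needed to propagate through t-product layers via the Lipschitz estimate of Lemma \ref{lemma:appendix:t-NN:tLayerIsLips} when bounding outputs of t-NNs on adversarial examples.
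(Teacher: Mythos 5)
Your overall strategy---write $\tx'=\tx+(\tx'-\tx)$, apply the triangle inequality, use Assumption \ref{assumption:GAP:InputDataBoundness} for $\tensorFNorm{\tx}\le B_x$, and convert the attack budget into an F-norm bound on the perturbation---is exactly the intended one (the paper states this lemma without proof, treating precisely this bookkeeping as immediate). The one step you cannot get ``by definition,'' however, is $\tensorFNorm{\tx'-\tx}\le \pertNormCompConstant\,\perNorm(\tx'-\tx)$. Assumption \ref{assumption:AdvAttack} defines $\pertNormCompConstant:=\sup_{\tz\ne\bzero}\perNorm(\tz)/\tensorFNorm{\tz}$, and the inequality this definition yields is $\perNorm(\tz)\le\pertNormCompConstant\tensorFNorm{\tz}$, i.e.\ a \emph{lower} bound $\tensorFNorm{\tz}\ge\perNorm(\tz)/\pertNormCompConstant$, not the upper bound your argument needs. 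The direction you invoke corresponds to the reciprocal-type constant $\sup_{\tz\ne\bzero}\tensorFNorm{\tz}/\perNorm(\tz)$, which is in general a different quantity: for an $l_\infty$-attack, $\pertNormCompConstant=\sup_{\tz\ne\bzero}\infnorm{\opVec(\tz)}/\tensorFNorm{\tz}=1$, yet under $\infnorm{\opVec(\tz)}\le\xi$ one can have $\tensorFNorm{\tz}=\sqrt{d\nChannel}\,\xi$, so the inequality in your direction fails with the constant as literally defined. (In the paper's nuclear-norm example the conclusion happens to hold because there $\tensorFNorm{\cdot}\le\perNorm(\cdot)$ and $\pertNormCompConstant\ge 1$, but that is a feature of that example, not a consequence of the definition.)

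So the gap is not in your decomposition but in the justification of the norm-conversion step, and it is a gap you inherit from the statement itself: the lemma (and its downstream uses in Lemma \ref{lemma:appendix:AGP:DiameterOftNNs} and Theorem \ref{theorem:AGP-full}) implicitly requires a constant satisfying $\tensorFNorm{\tz}\le\pertNormCompConstant\,\perNorm(\tz)$ for all $\tz$. To make your proof rigorous you should either state this inequality explicitly as the operative property of $\pertNormCompConstant$ (equivalently, redefine it as $\sup_{\tz\ne\bzero}\tensorFNorm{\tz}/\perNorm(\tz)$), or add it as an assumption on the attack norm, rather than asserting it follows from the definition in Assumption \ref{assumption:AdvAttack}. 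Once that is fixed, the remaining steps (triangle inequality and $\tensorFNorm{\tx}\le B_x$) are correct and complete.
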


\begin{lemma} 
\label{lemma:appendix:AGP:DiameterOftNNs}
The $L_\infty$-norm of  any $f(\tx;\tW)\in\hypoClassAdv$ defined on the set of input examples $\calX$ is upper bounded by
\bea
\sup_{\tilde{f}\in\hypoClassAdv}\norm{\tilde{f}}_{L_\infty}
\le \boundFAdv:=\boundNormProduct\boundSignalError.
\eea 
The diameter of $\hypoClassAdv$ is can be upper bounded as follows
\bea
\diameterHypoClassAdv:=2\sup_{\tilde{f}\in\hypoClassAdv}\funcLTwoNormEmp{\tilde{f}}
\le 2 \boundNormProduct\boundSignalError. 
\eea 
\end{lemma}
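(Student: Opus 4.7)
The plan is to prove both claims by controlling the magnitude of $f(\tx')$ uniformly over the adversarial perturbation ball and then transferring that bound to $\tilde f$ and to the empirical $L_2$-diameter. I will treat the two assertions as consequences of a single pointwise uniform bound $\sup_{\tx':\,\perNorm(\tx'-\tx)\le\xi}|f(\tx')|\le \boundFAdv$, which immediately gives both the $L_\infty$ bound on $\tilde f$ and (via the trivial domination of the empirical $L_2$-norm by $L_\infty$) the diameter estimate.

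The first step is to peel off the $\min$ and the label. Since $\tilde f(\tx,y)=\min_{\perNorm(\tx'-\tx)\le\xi}yf(\tx')$ and $y\in\{\pm1\}$, I would observe
\[
|\tilde f(\tx,y)|\;\le\;\sup_{\perNorm(\tx'-\tx)\le\xi}|yf(\tx')|\;=\;\sup_{\perNorm(\tx'-\tx)\le\xi}|f(\tx')|,
\]
so the task reduces to uniformly bounding $|f(\tx')|$ over the adversarial ball. By Lemma~\ref{lemma:appendix:FNormBoundOnAdvExample}, every such $\tx'$ satisfies $\tensorFNorm{\tx'}\le B_x+\xi\pertNormCompConstant=\boundSignalError$, so it suffices to bound $|f(\tx')|$ in terms of $\tensorFNorm{\tx'}$.

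The second step is the layerwise propagation of norms. Writing $f(\tx')=\bw^\top\opVec(\bff^{(L)}(\tx'))$, Cauchy–Schwarz gives $|f(\tx')|\le\twonorm{\bw}\,\tensorFNorm{\bff^{(L)}(\tx')}$. Since $\sigma=\mathrm{ReLU}$ is $1$-Lipschitz with $\sigma(0)=0$, applying Lemma~\ref{lemma:appendix:t-NN:tLayerIsLips} (or directly the entrywise nonexpansiveness of $\sigma$) together with the submultiplicativity $\tensorFNorm{\tW^{(l)}\tprod\bff^{(l-1)}}\le\tensorFNorm{\tW^{(l)}}\tensorFNorm{\bff^{(l-1)}}$ from Lemma~\ref{lemma:45} yields the recursion $\tensorFNorm{\bff^{(l)}(\tx')}\le B_l\,\tensorFNorm{\bff^{(l-1)}(\tx')}$ whenever $f\in\hypoClassTnn$. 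Unrolling from $\bff^{(0)}=\tx'$ and combining with the weight-vector bound $\twonorm{\bw}\le B_w$ gives
\[
|f(\tx')|\;\le\;B_w\prod_{l=1}^L B_l\,\tensorFNorm{\tx'}\;\le\;\boundNormProduct\,\boundSignalError\;=\;\boundFAdv,
\]
uniformly in $\tx'$ with $\perNorm(\tx'-\tx)\le\xi$. Taking the supremum over $(\tx,y)$ then delivers $\sup_{\tilde f\in\hypoClassAdv}\norm{\tilde f}_{L_\infty}\le\boundFAdv$.

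The final step handles the diameter. By the definition of the empirical $L_2$-norm, $\funcLTwoNormEmp{\tilde f}^2=N^{-1}\sum_{i=1}^{N}\tilde f(\tx_i,y_i)^2\le\norm{\tilde f}_{L_\infty}^2\le\boundFAdv^2$, so $\sup_{\tilde f\in\hypoClassAdv}\funcLTwoNormEmp{\tilde f}\le\boundFAdv=\boundNormProduct\boundSignalError$ and the claimed diameter bound $\diameterHypoClassAdv\le 2\boundNormProduct\boundSignalError$ follows immediately. There is no real obstacle here; the only point requiring a little care is justifying the first reduction (passing from $\min$ over adversarial perturbations to a supremum bound on $|f(\tx')|$) and correctly carrying the $1$-Lipschitz/zero-preserving property of ReLU through the inductive step so that the recursion closes with no additive constants.
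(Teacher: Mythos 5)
Your proposal is correct and follows essentially the same route as the paper's own proof: bound $|\tilde f(\tx,y)|$ by the value of $|f|$ at an adversarially perturbed input (whose F-norm is at most $\boundSignalError$ by Lemma~\ref{lemma:appendix:FNormBoundOnAdvExample}), peel the layers using the $1$-Lipschitz, zero-preserving ReLU and $\tensorFNorm{\tW\tprod\cdot}\le\tensorFNorm{\tW}\tensorFNorm{\cdot}$, and then dominate the empirical $L_2$-norm by the $L_\infty$-norm for the diameter. The only cosmetic difference is that you bound the $\min$ by a supremum of $|f|$ over the whole perturbation ball, whereas the paper evaluates $f$ at the minimizer $\tx^*$ directly; both are valid and yield the identical bound $\boundFAdv=\boundNormProduct\boundSignalError$.
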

\begin{proof}
For any $f\in\hypoClassTnn$, given an example $(\tx,y)\in\calX\times\{\pm1\}$ , let $\tx^*\in\arginf{\tensorLpNorm{\tx-\tx'}\le \xi}yf(\tx')$ be one adversarial example. Then,  we have
\bea
\nonumber
|\tilde{f}(\tx,y)|
&=|\inf_{\tensorLpNorm{\tx-\tx'}\le \xi}y_if(\tx')|\\
&=|f(\tx^*)|\\
&=|\bw^{\top}\opVec\big(\bh^{(L)}(\tx^*)\big)|\\
&\le\norm{\bw}\tensorFNorm{\actFunc(\tWeightOfLayer{L}\tprod \bh^{(L-1)}(\tx^*_i))}\\
&=\norm{\bw}\tensorFNorm{\actFunc(\tWeightOfLayer{L}\tprod \bh^{(L-1)}(\tx^*))-\actFunc(\bzero)}\\
&\le\norm{\bw}\tensorFNorm{\tWeightOfLayer{L}\tprod \bh^{(L-1)}(\tx^*)}\\
&\le\norm{\bw}\tensorFNorm{\tWeightOfLayer{L}}\tensorFNorm{\bh^{(L-1)}(\tx^*)}\\
&\le\cdots\\
&\le {\bw}\prod_{l=1}^L B_l\tensorFNorm{ \bh^{(0)}(\tx^*)}\\
&\le B_{\bw}\prod_{l=1}^L B_l\boundSignalError\\
&=:\boundFAdv,
\eea 
which also implies
\bea
\nonumber
\diameterHypoClassAdv=2\max_{\tilde{f}\in\hypoClassAdv}\funcLTwoNormEmp{\tilde{f}}
\le 2 B_{\bw}\prod_{l=1}^L B_l\boundSignalError. 
\eea 
\end{proof}

Lemma \ref{lemma:appendix:SumOfPolynomiaDecay} helps upper bounding the F-norm of residuals after low-tubal-rank approximation of the weight tensors of t-NNs under Assumption \ref{assumption:ApxLR:SVDecaying}.
\begin{lemma}
\label{lemma:appendix:SumOfPolynomiaDecay}
Given constants $a>0, \alpha>1$, suppose a sequence $\{z_j\}_{j=1}^{\infty}$ satisfying polynomial decay $z_j\le a j^{-\alpha}$, then for any positive integer $n$, we have 
\bea
\nonumber
\sum_{j>n} z_j \le \frac{a n^{1-\alpha}}{\alpha-1}.
\eea 
\end{lemma}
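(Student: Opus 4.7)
The plan is to bound the tail sum by the corresponding improper integral, exploiting monotonicity of the upper envelope $j \mapsto a j^{-\alpha}$. First, I would replace each term $z_j$ by its pointwise upper bound $a j^{-\alpha}$, which reduces the problem to showing $\sum_{j=n+1}^{\infty} j^{-\alpha} \le n^{1-\alpha}/(\alpha-1)$.

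Next, since the function $x \mapsto x^{-\alpha}$ is strictly decreasing on $(0,\infty)$ for $\alpha > 1$, on each interval $[j-1, j]$ (with $j \ge n+1$) we have $j^{-\alpha} \le x^{-\alpha}$ for all $x \in [j-1, j]$. Integrating gives $j^{-\alpha} \le \int_{j-1}^{j} x^{-\alpha}\,\symbolDiff x$. Summing over $j \ge n+1$ then telescopes the intervals into $[n, \infty)$, yielding
\begin{equation*}
\sum_{j=n+1}^{\infty} j^{-\alpha} \le \int_{n}^{\infty} x^{-\alpha}\,\symbolDiff x = \frac{n^{1-\alpha}}{\alpha-1},
\end{equation*}
where the final equality uses $\alpha > 1$ to ensure the integral converges and evaluates to $[-x^{1-\alpha}/(\alpha-1)]_{n}^{\infty} = n^{1-\alpha}/(\alpha-1)$.

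Combining the two steps delivers the claim $\sum_{j>n} z_j \le a\, n^{1-\alpha}/(\alpha-1)$. There is no substantive obstacle here: the only things to double-check are (i) the direction of the inequality when comparing the sum to the integral (one must bound each term using the interval to its \emph{left}, not its right, so that the summation range $j \ge n+1$ matches the integration range $x \ge n$), and (ii) convergence, which is guaranteed by the hypothesis $\alpha > 1$. No additional machinery beyond the standard integral comparison test is required.
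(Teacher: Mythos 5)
Your proof is correct and follows essentially the same route as the paper: bound each $z_j$ by $a j^{-\alpha}$ and compare the tail sum with $\int_n^\infty a t^{-\alpha}\,\symbolDiff t = a n^{1-\alpha}/(\alpha-1)$. You merely make explicit the term-by-term comparison $j^{-\alpha}\le\int_{j-1}^{j}x^{-\alpha}\,\symbolDiff x$ that the paper leaves implicit, which is a fine (and slightly more careful) way to justify the same step.
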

\begin{proof} We compute the sum of the sequence using integration as follows:
\bea
\nonumber
\sum_{j>n} z_j 
\le \int_{n}^{\infty} a t^{-\alpha}\symbolDiff t
=\frac{a }{\alpha-1}t^{1-\alpha}\big|_{n}^{\infty}
\le \frac{a n^{1-\alpha}}{\alpha-1}.
\eea 
\end{proof}

\end{document}